\documentclass[twoside]{article}
\usepackage[accepted]{aistats2024}
\usepackage{amsfonts}       
\usepackage{amsmath}
\usepackage{amssymb}
\usepackage{amsthm}
\usepackage{algorithm}
\usepackage{algpseudocode}
\usepackage{appendix}
\usepackage{bm, bbm}
\usepackage{booktabs}
\usepackage{color}
\usepackage{xcolor}
\definecolor{darkblue}{rgb}{0.0, 0.0, 0.6}
\definecolor{darkred}{rgb}{0.7, 0.0, 0.0}
\usepackage{comment}
\usepackage{etoc}
\usepackage[T1]{fontenc}    
\usepackage[breaklinks]{hyperref}
\usepackage[utf8]{inputenc} 
\usepackage{nicefrac}       
\usepackage{mathtools}
\usepackage{microtype}      
\usepackage{multirow}
\usepackage{natbib}
\usepackage{tabularx}
\usepackage{txfonts}
\usepackage{graphicx}
\usepackage{pifont}
\usepackage{subfigure}
\usepackage{url}            
\usepackage{xcolor}         

\newtheorem{theorem}{Theorem}
\newtheorem{proposition}{Proposition}
\newtheorem{lemma}{Lemma}
\newtheorem{remark}{Remark}

\newcounter{assumption}%
\renewcommand{\theassumption}{\arabic{assumption}}

\providecommand{\customgenericname}{}
\newcommand{\newcustomtheorem}[2]{%
\newenvironment{#1}[1]
{%
 \renewcommand\customgenericname{#2}%
 \renewcommand\theinnercustomgeneric{##1}%
 \innercustomgeneric
}
{\endinnercustomgeneric}
}
\newcustomtheorem{assump}{Assumption}

\hypersetup{
  pdffitwindow=true,
  pdfstartview={FitH},
  pdfnewwindow=true,
  colorlinks,
  linktocpage=true,
  linkcolor=darkblue,
  urlcolor=darkblue,
  citecolor=darkblue
}

\begin{document}

\runningtitle{Accelerating Approximate Thompson Sampling with Underdamped Langevin Monte Carlo}

\twocolumn[
\aistatstitle{Accelerating Approximate Thompson Sampling \\ with Underdamped Langevin Monte Carlo}

\aistatsauthor{ Haoyang Zheng \And Wei Deng \And  Christian Moya \And Guang Lin }

\aistatsaddress{ Purdue University \And Morgan Stanley \And Purdue University \And Purdue University } ]

\begin{abstract}

Approximate Thompson sampling with Langevin Monte Carlo broadens its reach from Gaussian posterior sampling to encompass more general smooth posteriors. However, it still encounters scalability issues in high-dimensional problems when demanding high accuracy. To address this, we propose an approximate Thompson sampling strategy, utilizing underdamped Langevin Monte Carlo, where the latter is the go-to workhorse for simulations of high-dimensional posteriors. Based on the standard smoothness and log-concavity conditions, we study the accelerated posterior concentration and sampling using a specific potential function. This design improves the sample complexity for realizing logarithmic regrets from $\mathcal{\tilde O}(d)$ to $\mathcal{\tilde O}(\sqrt{d})$. The scalability and robustness of our algorithm are also empirically validated through synthetic experiments in high-dimensional bandit problems. 

\end{abstract}

\section{INTRODUCTION}

Multi-armed bandit (MAB) problem is a classic decision-making dilemma to find the best option among multiple arms, which has been applied in various domains such as recommendation systems \citep{may2012optimistic,chapelle2011empirical}, advertising \citep{graepel2010web-scale}, etc. 
A fundamental challenge in the MAB problem is to balance exploration and exploitation, and Thompson sampling (TS) has proven to be an effective mechanism for addressing this, particularly appealing due to its simplicity and strong empirical performance. In TS, each arm is affiliated with a posterior distribution that approximates the model parameters. Arms with higher uncertainty in their approximate posterior may be selected to facilitate exploration. Post-selection, the observed rewards serve to iteratively update the corresponding posterior distribution, thereby improving the estimation and reducing variance \citep{russo2014learning}. Thanks to recent extensive advancements in both empirical experimentation \citep{granmo2010solving} and theoretical formulations \citep{agrawal2012analysis, russo2016information}, TS gained widespread recognition. 

TS often models the posteriors with Laplace approximation for computational efficiency. However, such a simplification is computationally intensive and may inaccurately capture non-Gaussian posteriors \citep{huix2023tight}. Given the limitations of the exclusive use of distributions with Laplace approximations, researchers are increasingly turning to Langevin Monte Carlo (LMC) as a robust method for exploring and exploiting the posterior distributions.

Overdamped Langevin Monte Carlo (LMC) in machine learning originated from stochastic gradient Langevin dynamics (SGLD) \citep{Welling11}, which injects noise into stochastic gradient descent, evolving into a sampling algorithm as the learning rate decays. The theoretical guarantees on global optimization \citep{Maxim17, xu2018global, Yuchen17} and uncertainty estimation \citep{Teh16} were further established, which provides specific guidance on the tuning of temperatures and learning rates. Despite the elegance, it is inevitable to suffer from scalability issues w.r.t. the dimension, accuracy target, and condition number. To tackle this issue, the go-to framework is the underdamped Langevin Monte Carlo (ULMC) \citep{Chen14, cheng2018underdamped}, which can be viewed as a variant of Hamiltonian Monte Carlo (HMC) \citep{Neal12, no_U_turn_sampler, Mangoubi18_leapfrog}. The incorporation of momentum (or velocity) in the sampling process facilitates the exploration and results in more effective samples.

\textbf{Our Contribution}

In this work, we focus on enhancing the practicality and scalability of TS. The introduction of LMC to TS offers a potential route to avoid the Laplace approximation of the posterior distributions but falls short in high-dimensional settings when sample complexity is in high demand. Our primary contribution lies in the novel integration of ULMC into the TS framework to address this issue. 

We first analyze the posterior as the limiting distribution of stochastic differential equation (SDE) trajectories. Through a specific potential function, we delve into the analysis of SDE trajectories and provide a renewed perspective on posterior concentration rates. This analytical perspective further enables a more efficient posterior approximation and results in an enhancement of the sample complexity to realize logarithmic regrets from $\tilde O(d)$ to $\tilde O(\sqrt{d})$, where $d$ represents the dimension of model parameters. Synthetic experiments across varied scenarios provide empirical results to further validate our theoretical statements and the robustness of our proposed algorithms.

\section{RELATED WORKS}

\textbf{Thompson Sampling} began with a primary focus on experimental studies \citep{russo2018tutorial} due to the theoretical challenges we introduced earlier. In recent MAB works, an in-depth analysis of TS from a theoretical perspective has become more prevalent across various reward and prior assumptions \citep{russo2016information, riou2020bandit, baudry2021optimal}. Recent works further explored TS with problem-independent bounds \citep{agrawal2017near}, which was further improved to achieve minimax optimal performance \citep{jin2021mots, jin2022finite, jin2023thompson}. Its mechanism is straightforward when using closed-form posteriors, but in more general settings, the posterior sampling becomes challenging and requires further discussions. One challenge is the dependence of TS on prior quality, as improper priors may lead to insufficient explorations \citep{liu2016prior}. Another challenge is from approximate errors, specifically, small constant approximate errors may induce linear regrets. Drawing upon insights from \citep{phan2019thompson}, approximate TS with LMC achieves logarithmic regrets, with approximation errors diminishing over rounds \citep{mazumdar2020approximate}.

Beyond its well-known application in MABs, TS has broader implications in the domain of reinforcement learning. Specifically, it has been further validated in various contexts such as in combinatorial bandits \citep{wang2018thompson, perrault2020statistical}, contextual bandits \citep{agrawal2013thompson, xu2022langevin, chakraborty2023thompson}, linear Markov Decision Processes \citep{ishfaq2023provable}, and linear quadratic controls \citep{kargin2022thompson}, etc.

\textbf{Langevin Monte Carlo} becomes notable for convergence analysis in overdamped forms under strong log-concavity \citep{Alain17}, which achieves \(O(d/\epsilon^2)\) complexity for \(\epsilon\) error in 2-Wasserstein (W2) distance. This result was further extended to the stochastic gradient version \citep{dalalyan2019user}. A similar convergence result, but based on the Total Variation metric, was achieved later \citep{dk17}. Furthermore, LMC was shown to offer theoretical guarantees for handling multi-modal distributions in \citet{Maxim17}.

To expedite convergence, \citet{cheng2018underdamped} delved into the non-asymptotic convergence of the ULMC, revealing a computational complexity of $O(\sqrt{d}/\epsilon)$ to obtain $\epsilon$ error in W2. For the related Hamiltonian Monte Carlo, \citet{mangoubi2017rapid} demonstrated that the convergence rate depends quadratically on the condition number. This result was significantly improved by \citet{Chen_Vempala} using an elegant geometric approach, achieving a linear convergence rate. Further insights into the numerical analysis of the leapfrog scheme were provided by \citet{chen2020fast} and \citet{Mangoubi18_leapfrog}. Some works have concentrated on improving sampling efficiency and accuracy by utilizing control variates \citep{zou2018stochastic, zou2018subsampled, zou2019stochastic}. For simulations of multi-modal distributions, the replica exchange Langevin Monte Carlo \citep{chen2018accelerating, deng2020} ran multiple chains at different temperatures to balance exploration and exploitation; adaptive importance sampling algorithms \citep{awsgld, deng2022interacting} were studied to simulate from an adaptively modified landscape to address the local trapping issue and further employed the importance weights to correct the bias.

\section{PROBLEM SETTING}

MABs are classical frameworks for modeling the trade-off between exploration and exploitation in sequential decision-making problems. Formally, MABs consist of a set of arms $\mathcal A=\left\{1, 2, \cdots, K\right\}$, each associated with an unknown reward distribution. At each round $n$ ($n = 1, 2, \cdots, N$), the agent selects an arm $A_n\in \mathcal A$ and receives a random reward ${\mathcal R}_{a, n}\in \mathbb R$ sampled from the corresponding unknown reward distribution. Multiple rewards received from playing the same arm repeatedly are identically and independently distributed (i.i.d.) and are statistically independent of the rewards received from playing other arms.

Within $N$ rounds, the objective of the TS algorithm is to identify the optimal arm that most likely incurs the lowest regrets ${\mathfrak R}(N)$:
\begin{equation*}
    {\mathfrak R}(N) = N \bar{\mathcal R}_1 - \sum_{n=1}^{N} \mathbb{E}[\mathcal R_{A_n, n}],
\end{equation*}
where $\bar{\mathcal R}_1$ denotes the expected reward of the first arm, and without loss of generality, we name the first arm as the optimal one. In the classical setting, the true posteriors of the arms are unknown and must be estimated through sampling. With each arm play, we sample a reward from its true posterior and then update the approximate posterior (denoted as $\mu_a$) based on the given reward. The challenge lies in deciding which arm to play at each round, balancing the need to gather information about potentially suboptimal arms (exploration) against the desire to play the arm that currently seems best (exploitation). A general framework of TS for MABs is given in Algorithm \ref{algorithm_thompson_main}:
\begin{algorithm}[!htbp]
\caption{A general framework of TS for MABs.}\label{algorithm_thompson_main}
{\hspace*{\algorithmicindent} \textbf{Input} Bandit feature vectors $\alpha_{a}$ for $\forall a \in \mathcal{A}$.}
\begin{algorithmic}[1]
\For{{$n$=1 to $N$}}
\State {Sample $\left(x_{a, n}, v_{a, n}\right)\sim \mu_a\left[\rho_a\right]$ for $\forall a \in \mathcal{A}$.}
\State {Choose arm $A_n = \text{argmax}_{a\in\mathcal A} \left\langle \alpha_a, x_{a, n}\right\rangle$.}
\State {Play arm $A_n$ and receive reward ${\mathcal R}_{n}$.}
\State {Update posterior distribution of arm $A_n$: $\mu_{A_n}\left[\rho_{A_n}\right]$.}
\State {Calculate expected regret ${{\mathfrak R}}_n$.}
\EndFor
\end{algorithmic}
\hspace*{\algorithmicindent} {\textbf{Output}  Expected total regret $\sum_{n=1}^N {{\mathfrak R}}_n$.}
\end{algorithm}

\section{POSTERIOR ANALYSIS}\label{section_posterior_analysis}

The unique feature that distinguishes TS from frequentist methods \citep{sutton2018reinforcement} is its use of posterior distributions to guide decision-making. This probabilistic sampling inherently incorporates both the mean and the uncertainty (including but not limited to variance) of our current beliefs about each arm. Therefore, understanding how these posteriors evolve over time becomes fundamental not only for explaining the algorithm's empirical efficiency but also for establishing its theoretical properties.

For the purposes of this study, we make the assumption that each arm's reward distribution is parameterized by a set of fixed parameters $x_{a,*}\in \mathbb R^d$, and the reward distribution $\mathtt P_a({\mathcal R})$ is now parameterized: $\mathtt P_a({\mathcal R}) = \mathtt P_a({\mathcal R}, x_{a,*})$. To simplify our notation, we drop the arm-specific parameter $a$ when discussing settings uniformly across arms in this section. To facilitate our analysis, we introduce a series of standard assumptions on both the reward distributions and distributions related to arms:

\begin{assump}{1}[Local assumptions on the likelihood]
\label{assumption_likelihood}
For the function $\log \mathtt P\left({\mathcal R}|x\right)$, its gradient w.r.t. $x$ is $L$-Lipschitz and the function is $m$-strongly convex for constants $L \geq 0$ and $m > 0$. This means for all ${\mathcal R}\in \mathbb R$ and $x, x_* \in \mathbb R^d$, the following inequalities hold:
\begin{equation*}
\begin{array}{cc}
&\left\langle\nabla_x \log \mathtt P\left({\mathcal R}|x_*\right),x-x_*\right\rangle + \frac{m}{2}\left\|x-x_*\right\|_2^2 \vspace{0.05 in} \\
& \leq \log \mathtt P\left({\mathcal R}|x\right) - \log \mathtt P\left({\mathcal R}|x_*\right) \leq \vspace{0.05 in} \\
& \left\langle\nabla_x \log \mathtt P\left({\mathcal R}|x_*\right),x-x_*\right\rangle + \frac{L}{2}\left\|x-x_*\right\|_2^2.
\end{array}
\end{equation*}
\end{assump}

\begin{assump}{2}[Assumptions on the reward distribution]
\label{assumption_reward}
For $\log \mathtt P\left({\mathcal R}|x_*\right)$, its gradient concerning $\mathcal R$ is $L$-Lipschitz and the function is $\nu$-strongly convex, given constants $L \geq 0$ and $\nu > 0$. Consequently, For every ${\mathcal R}, {\mathcal R'}\in \mathbb R$ and $x_* \in \mathbb R^d$, we have the relationship:
\begin{equation*}
\begin{array}{cc}
&\left\langle\nabla_{\mathcal R} \log \mathtt P\left({\mathcal R'}|x_*\right),\mathcal R-\mathcal R'\right\rangle + \frac{\nu}{2}\left\|\mathcal R-\mathcal R'\right\|_2^2 \vspace{0.05 in} \\
& \leq \log \mathtt P\left({\mathcal R}|x_*\right) - \log \mathtt P\left({\mathcal R'}|x_*\right) \leq \vspace{0.05 in} \\
& \left\langle\nabla_{\mathcal R} \log \mathtt P\left({\mathcal R'}|x_*\right),\mathcal R-\mathcal R'\right\rangle + \frac{L}{2}\left\|\mathcal R-\mathcal R'\right\|_2^2.
\end{array}
\end{equation*}
\end{assump}
\begin{assump}{3}[Assumption on the prior]\label{assumption_prior}
For the gradient of $\pi\left({\mathcal R}|x\right)$, there exists a constant $L$ such that for $x, x'\in \mathbb R^d$:
\begin{equation*}
\left\|\nabla \pi\left(x\right)-\nabla \pi\left(x'\right)\right\|_2 \leq L \left\|x-x'\right\|_2.
\end{equation*}
\end{assump}

Ignoring the difference across arms, this section presents theoretical investigations of the general convergence properties of the TS posterior distributions. We will focus on understanding the dynamics of the posterior distributions, and how ULMC samples from the posterior and approximates it. The main contribution regarding the sample complexity is also discussed subsequently.

\subsection{Continuous-Time Diffusion Analysis}\label{section_posterior_concentration_main}

For the derivation of posterior concentration rates for parameters in terms of the accumulative number of likelihoods, we conduct an innovative analysis of the moments of a potential function along the trajectories of SDEs, where we consider the posterior as the limiting distribution given $n$ rewards:
\begin{equation*}
\begin{split}
&\mu[\rho] = \lim_{t\rightarrow \infty}\mathtt P\left((x_t, v_t)|\mathcal R_1, \mathcal R_2,\ldots, \mathcal R_n\right) \\
\propto & \exp\left(-\rho\left(nf_{n}(x)+n\|v\|_2^2/2u+\log\pi(x)\right)\right)
\end{split}
\end{equation*}
almost surely. Here $x_t,v_t\in \mathbb R^d$ are the position and velocity terms over time $t>0$, $f_{n}(x)=\frac{1}{n}\sum_{j=1}^{\mathcal L(n)}\log\mathtt P\left({\mathcal R_j}|x\right)$ represents the average log-likelihood function and the limiting distribution is a scaled posterior distribution $\mu\left[\rho\right]$ with the scale parameter $\rho$. We denote here $\mathcal L(n)$ is the total number of rewards we have up to round $n$. By studying the evolution of the limiting joint distribution of $x$ and $v$ according to the following SDEs:
\begin{equation}\label{eq_sde2}
  \begin{split}
      d v_t & =-\gamma v_t d t - u \nabla f_n\left(x_t\right)dt - \frac{u}{n} \nabla\log\pi(x_t) d t + 2\sqrt{\frac{\gamma  u}{n\rho}} d B_t, \\
      d x_t & =v_t d t
  \end{split}
\end{equation}
as $t$ goes to infinity, we are able to deduce the convergence rate of the scaled posterior distribution. Here $\gamma$ is the friction coefficient, and $u$ is the noise amplitude.

\textbf{Posterior Concentration Analysis}

Prior to presenting our findings on posterior concentration, we initially outline several fundamental points for achieving our results. Our derivation of the scaled posterior concentration $\mu[\rho]$ draws inspiration from \citet{cheng2018underdamped}, wherein a specific potential function is structured to infer the rates of posterior concentration. However, our focus is primarily on how the posterior concentrates around $x_*$, where $x_*$ itself does not conform to SDEs. Therefore, we achieve the concentration rates by an extension of Gr{\"o}nwall's inequality in \citet{dragomir2003some} rather than applying it directly. Moreover, the coefficients employed in constructing the potential function render $\gamma=2$ and $u=\frac{1}{L}$ as the optimal selections. Subsequently, we clarify another additional essential property, indicating that, with high probability, $\left\|\nabla_x f_n\left(x_*\right)\right\|_2$ and the gradient of $\left\|v_*\right\|_2^2$ concentrating around zero, given the data $\mathcal R_1, \ldots,\mathcal R_n$. Specifically, our findings demonstrate that $\left\|\nabla_x f_n\left(x_*\right)\right\|_2$ exhibits sub-Gaussian tails, and we designate $\nabla_v\left\|v_*\right\|_2^2$ as zero to maintain simplicity.

Building upon the above analysis, we construct the following potential function given $\alpha >0$:
$$V\left( x_t, v_t, t\right) = \frac{1}{2}e^{\alpha t} \left\|(x_t, x_t+v_t) - (x_*, x_* + v_*)\right\|^2_2,$$ 
which develops along the trajectories of SDEs \eqref{eq_sde2}. By delimiting the supremum of the given potential function, we obtain an upper bound for higher moments of $\left\|x - x_*\right\|$, where $x\sim \mu\left[\rho\right]$. These established moment bounds directly correspond to the posterior concentration rates of $x$ around $x_*$, as illustrated in the following theorem:
\begin{theorem}\label{theorem_posterior_concentration}
Suppose that Assumptions \ref{assumption_likelihood} and \ref{assumption_prior} hold, and suppose $x\in \mathbb R^d$ follows SDEs \eqref{eq_sde2}, then for $x_*\in \mathbb R^d$ and $\delta \in \left(0, e^{-0.5}\right)$, the posterior satisfies:
\begin{equation*}
  \mathbb{P}_{{x} \sim {\mu\left[\rho\right]}}\left(\left\|x - x_*\right\|_2\geq \sqrt{\frac{2e}{mn}\left(D+2\Omega\log \frac{1}{\delta}\right)}\right)\leq\delta,
\end{equation*}
with $D=\frac{8d}{\rho }+{2}\log B$, $\Omega=16\kappa^2d+\frac{256}{\rho}$, $\kappa=\frac{L}{m}$ is the condition number, and $B=\max_x \frac{\pi(x)}{\pi(x_*)}$ represents prior quality.
\end{theorem}

We proceed to offer further insight from the theorem. The first term $\frac{8d}{\rho }$ originates from squaring the $dB_t$ term in SDEs, which has a negligible influence on the results. The second term $2\log B$ is associated with the prior $\pi(x)$ and vanishes when $x_* = \text{argmax}\left[\pi(x)\right]$. Section \ref{experiments} also provides extensive explorations into how the choice of prior affects TS performance. Term $16\kappa^2d$ originates from the likelihood in SDEs, and the last term is associated with the noise present in SDEs, which substantially influences the contraction rate. Selecting a suitable value for $\rho$ enables control of the posterior concentration scale. A detailed proof can be found in Appendix \ref{section_posterior_concentration}, Theorem \ref{theorem_posterior_concentration2}.

\subsection{Discrete-Time Dynamics Analysis}

We continue to delineate the methodology for employing ULMC to approximate posterior distributions by sampling positions and velocities. In essence, they provide a more efficient framework to draw samples sequentially based on current steps. Algorithm \ref{algorithm_langevin_mcmc} outlines the execution of ULMC, detailing the sequential generation of samples for arm $a$. It initiates from the sample gathered at the last step of rounds $n-1$  and advances to the sample created for rounds $n$.

\begin{algorithm}[!htbp]
\caption{(Stochastic Gradient) Underdamped Langevin Monte Carlo at round $n$.}\label{algorithm_langevin_mcmc}
{\hspace*{\algorithmicindent} \textbf{Input} Data $\{{\mathcal R}_{1}, {\mathcal R}_{2}, \cdots, {\mathcal R}_{\mathcal L(n-1)}\}$;}\\
{\hspace*{\algorithmicindent} \textbf{Input} Sample $(x_{Ih^{(n-1)}}, v_{Ih^{(n-1)}})$ from last round;}
\begin{algorithmic}[1]
\State {Initialize $x_{0}=x_{Ih^{(n-1)}}$ and $v_{0}=v_{Ih^{(n-1)}}$.}
\For{$i=0,1,\cdots, I-1$}
\State {Subsample data set $ \{{\mathcal R}_{1}, \cdots, {\mathcal R}_{|\mathcal S|}\}\subseteq \mathcal{S}$. }
\State {Compute gradient estimate $\nabla{\hat U}(x_i)$.}
\State {Sample $\left(x_{i+1}, v_{i+1}\right)$ from $\left(x_{i}, v_{i}\right)$.}
\EndFor
\State $x_{I h^{(n)}} \sim \mathcal{N}\left( x_{I} , \frac{1}{nL\rho}\mathbf{I}_{d\times d}\right)$ and $v_{I h^{(n)}}=v_{I}$
\end{algorithmic}
\hspace*{\algorithmicindent} \textbf{Output} Sample $(x_{Ih^{(n)}}, v_{Ih^{(n)}})$ from current round.
\end{algorithm}
Within the ULMC framework, one can utilize either full gradients or stochastic gradients for the gradient estimation. The former is computed by summing all the gradients of likelihoods up to round $n$ combined with the prior:
\begin{equation*}
    \nabla U(x_{i}) = - \sum_{j=1}^{\mathcal L(n)} \nabla \log \mathtt P \left({{\mathcal R}_{j}|x_{i}}\right)  - \nabla \log \pi(x_{i}),
\end{equation*}
which provide high-precision information for updates but might be computationally intensive. On the other hand, stochastic gradients, approximated from a subset $\mathcal{S}$ of data:
$$\nabla \hat U(x_{i}) = - \frac{\mathcal L(n)}{|\mathcal{S}|} \sum_{\mathcal R_{k}\in\mathcal{S} } \nabla \log \mathtt P \left({{\mathcal R}_{k}|x_{i}}\right) - \nabla \log \pi(x_{i}),$$
offer computational efficiency at the cost of precision. The choice between full or stochastic gradients depends on the specific requirements and constraints of the problem at hand, balancing between computational feasibility and approximation accuracy.

To transit consecutive samples from the index $i$ to $i+1$, we followed by integrating the discrete underdamped Langevin dynamics up to $h$, which are given as follows:
\begin{equation}\label{eq_sample_ula2}
\begin{split}
	\begin{bmatrix}
			x_{i+1}\\
			v_{i+1}\\
	\end{bmatrix} \sim \mathcal{N}\left(
	\begin{bmatrix}
		\mathbb E\left[x_{i+1}\right] \\ 
		\mathbb E\left[v_{i+1}\right]
	\end{bmatrix}, \begin{bmatrix}
		\mathbb V_x 	&  \mathbb K_{x,v}\\ 
		\mathbb K_{v,x}	&  \mathbb V_v \\
	\end{bmatrix}\right).
\end{split}
\end{equation}
Here the expectation of the position $x_{i+1}$ and velocity $v_{i+1}$ at step $i+1$ are given by the following equations:
\begin{equation*}
	\begin{split}
	\mathbb E\left[{v_{i+1}}\right] &= v_{i} e^{-\gamma {h}} - \frac{u}{\gamma}(1-e^{-\gamma {h}}) {\nabla} U(x_{i}),\\
	\mathbb E\left[{x_{i+1}}\right] &= x_{i} + \frac{1}{\gamma}(1-e^{-\gamma {h}})v_{i} - \frac{u}{\gamma} \left( {h} - \frac{1}{\gamma}\left(1-e^{-\gamma {h}}\right) \right) {\nabla}  U(x_{i}),
	\end{split}   
\end{equation*}
where $h<1$ is the step size, and the choices of $\gamma=2$ and $u=\frac{1}{L}$ maintain consistency. Moreover, the variance around the expected position and velocity at the step $i+1$, as well as the covariance between them, are encapsulated by the following equations:
\begin{equation*}
	\begin{split}
	\mathbb V_x & = \frac{2u}{\gamma} \left[{h}-\frac{1}{2\gamma}e^{-2\gamma{h}}-\frac{3}{2\gamma}+\frac{2}{\gamma	}e^{-\gamma{h}}\right] \cdot \mathbf{I}_{d\times d}\\
	\mathbb V_v & = u(1-e^{-2\gamma  {h}})\cdot \mathbf{I}_{d\times d}\\
	\mathbb K_{x,v}=\mathbb K_{v,x} & = \frac{u}{\gamma} \left[1+e^{-2\gamma{h}}-2e^{-\gamma{h}}\right] \cdot \mathbf{I}_{d \times d}.
	\end{split}   
\end{equation*}
Following $I$th iterations of the sampling process, we yield $v_I$ directly. For $x_I$, we adopt a resampling strategy, adhering to a normal distribution with variance $\frac{1}{nL_a\rho_a}\mathbf{I}_{d\times d}$. A proper resampling step accelerates the mixing rate and facilitates our theoretical analysis. 

\textbf{Posterior Convergence Analysis}

Sampling from an approximate posterior in TS demands careful control of approximation errors, as an improper approximation may lead to linear regrets \citep{phan2019thompson}. In pursuit of sublinear regrets, we employ the strategy from \citet{mazumdar2020approximate}, which stipulates a $\tilde O\left(\frac{1}{n}\right)$ approximation error rate across rounds $n=1,2,\ldots, N$. To elaborate, W2 between the sample-generated measure $\hat\mu^{(n)}$ using ULMC and the posterior measure $\mu^{(n)}$ at each respective round should maintain $\tilde O\left(\frac{1}{n}\right)$. To attain such a rate, it is required to further assume that the log-likelihood is both globally strongly convex and Lipschitz smooth. 

\begin{assump}{4}[Global assumptions on the likelihood]
\label{assumption_lipschitz_global_main}
$\log \mathtt P\left({\mathcal R}|x\right)$ is characterized by its $L$-Lipschitz continuous gradient and its $m$-strong convexity, where $L \geq 0$ and $m > 0$. This means for all $x, y \in \mathbb R^d$, the following inequality holds:
\begin{equation*}
\begin{array}{cc}
&\left\langle\nabla_x \log \mathtt P\left({\mathcal R}|y\right),x-y\right\rangle + \frac{m}{2}\left\|x-y\right\|_2^2 \vspace{0.05 in} \\
& \leq \log \mathtt P\left({\mathcal R}|x\right) - \log \mathtt P\left({\mathcal R}|y\right) \leq \vspace{0.05 in} \\
& \left\langle\nabla_x \log \mathtt P\left({\mathcal R}|y\right),x-y\right\rangle + \frac{L}{2}\left\|x-y\right\|_2^2.
\end{array}
\end{equation*}
\end{assump}
These prerequisites are imperative for the efficacious application of ULMC. In scenarios where the algorithm employs stochastic gradients, an additional prerequisite is the joint Lipschitz smoothness of the likelihood function concerning both rewards and parameters. 
\begin{assump}{5}[Further assumption on the likelihood]
\label{assumption_sgld_lipschitz_main}
For the gradient of $\log \mathtt P_a\left({\mathcal R}|x\right)$, there exists a constant $L'\geq 0$\footnote{For simplicity, we let the Lipschitz constants $L=L'$ to be consistent in our paper.} such that for all $x, y \in \mathbb R^d$, ${\mathcal R}, {\mathcal R}'\in \mathbb R$, a stronger version of Lipschitz smooth condition holds:
\begin{equation*}
\left\|\nabla_x \log \mathtt P\left({\mathcal R}|x\right)-\nabla_x \log \mathtt P\left({\mathcal R}'|x\right)\right\|_2
\leq L\left\|x-y\right\|_2 + L'\left\|{\mathcal R}-{\mathcal R}'\right\|.
\end{equation*}
\end{assump}
With this assumption and by wisely selecting the batch size for the number of stochastic gradient estimates, the algorithm can accomplish logarithmic regrets with a sample complexity of $\tilde O(\sqrt{d})$, indicative of the algorithm's efficacy in solving high-dimensional problems.
\begin{theorem}[Convergence of Underdamped Langevin Monte Carlo]
\label{theorem_ula_stochastic_main}
Assume that the likelihoods, rewards, and the prior satisfy Assumptions \ref{assumption_reward}-\ref{assumption_sgld_lipschitz_main}. If we take step size $h = \tilde O\left(\frac{1}{\sqrt{d}}\right)$, number of steps $I=\tilde O\left(\sqrt{d}\right)$, and batch size $k=\tilde O\left(\kappa^2\right) $ in Algorithm \ref{algorithm_langevin_mcmc}, we have convergence of ULMC in W2 to the posterior $\mu^{(n)}$: 
$W_2\left(\hat \mu^{(n)}, \mu^{(n)}\right) \leq \frac{2}{\sqrt{n}} \hat{D}$, where $\hat{D}\geq 8\sqrt{\frac{d}{m}}$.
\end{theorem}
We focus on the convergence result for the stochastic gradient version, considering the full gradient approach as its more general counterpart. For deeper insights, readers can refer to Theorem \ref{theorem_ula_stochastic} in Appendix \ref{section_posterior_convergence}. Notably, the sample complexity $\tilde O(\sqrt{d})$ of our proposed algorithms outperforms the previous works with overdamped variations of $\tilde O(d)$, which is improved in terms of efficiency and reduced sample demands.

\textbf{Posterior Concentration Analysis}

The subsequent result provides guarantees for the convergence of samples generated from Algorithm \ref{algorithm_langevin_mcmc}. This theorem not only validates that the generated samples can approximate $\mu[\rho]$ well but also ensures the generated samples towards $x_*$. 

\begin{theorem}\label{theorem_empirical_concentrate}
Assume that the likelihood, rewards, and the prior satisfy Assumptions \ref{assumption_reward}-\ref{assumption_sgld_lipschitz_main}, and that arm $a$ has been chosen $\mathcal L_a(n)$ times up to rounds $n$. We further state that the step size $h^{(n)} = \tilde O\left(\frac{1}{\sqrt{d}}\right)$, number of steps $I = \tilde O\left(\sqrt{d}\right)$, and batch size $k=\tilde O\left(\kappa^2\right) $ in Algorithm \ref{algorithm_langevin_mcmc}. Then for $x\sim\hat\mu[\hat\rho]$ follows \eqref{eq_sample_ula2}, $x_*\in\mathbb R^d$, and $\delta \in \left(0, e^{-0.5}\right)$, the following concentration inequality holds:
\begin{equation*}
	\mathbb{P}_{x \sim \hat \mu^{(n)} [\hat \rho]} \left(\|x - x_*\|_2 \geq \sqrt{\frac{36e}{m n} \left( D + 4\hat\Omega\log{\frac{1}{\delta}} \right)} \right)\leq\delta,
\end{equation*}
where $D = 8d + 2\log B$, $\hat\Omega =16\kappa^2d+256+\frac{d}{36\kappa \hat\rho}$, and $\hat \mu^{(n)} [\hat \rho]$ serves as the scaled approximate posterior at round $n$, with the scale parameter $\hat \rho$, to approximate $\mu[\rho]$.
\end{theorem}
In contrast to our results with the posterior concentration rates from Theorem \ref{theorem_posterior_concentration}, two notable distinctions are worthy of further discussion. Firstly, the magnitude increases from $\sqrt{\frac{2e}{mn}}$ to $\sqrt{\frac{36e}{mn}}$, which is an outcome of introducing approximation error into the posterior concentration analysis. Additionally, the term $\frac{\kappa d}{18 \hat\rho}$ within $\hat\Omega$ serves to bound errors arising in the last resampling step of Algorithm \ref{algorithm_langevin_mcmc}. For an in-depth exploration, readers can be directed to \ref{section_approx_concentration} in the Appendix.

\section{REGRET ANALYSIS}

We now turn our attention to analyzing the regrets incurred by TS when the posterior follows SDEs \eqref{eq_sde2} or is approximated by ULMC \eqref{eq_sample_ula2}. For the sake of clarity in analysis, we adopt the convention of treating the first arm as optimal, resulting in zero regrets upon its selection, while pulling other arms would lead to an increase in the expected regrets, without any loss of generality. A further discussion of the unique optimal arm setting can refer to Appendix A of \citet{agrawal2012analysis}.

To facilitate our theoretical examination, we introduce several essential notations. Let $\mathcal L_a(n)$ represent the number of plays of the sub-optimal arm $a$ after round $n$. We then define $\mathcal F_{n}=\sigma\left(A_1, {\mathcal R}_1, A_2, {\mathcal R}_2, A_3, {\mathcal R}_3, \cdots, A_{n}, {\mathcal R}_{n}\right)$ is a $\sigma$-algebra generated by Algorithm \ref{algorithm_thompson_main} after playing arms $n$ times. We also denote an event $\mathcal E_a(n)=\left\{{\mathcal R}_{a, n}\geq \bar{{\mathcal R}}_1-\epsilon\right\}$ to indicate the estimated reward of arm $a$ at round $n$ exceeds the expected reward of optimal arm by at least a positive constant $\epsilon$, and its probability is denoted as $\mathcal G_{a, n} = \mathtt P\left(\mathcal E_a(n)|\mathcal F_{n-1}\right)$. Lastly, the reward difference between the optimal arm and arm $a$ is represented by $\Delta_a = \bar{\mathcal R}_1-\bar{\mathcal R}_a$.

To understand the algorithm's behavior, we first establish an upper bound on the expected number of times the algorithm plays sub-optimal arm $a$ ($a\in \mathcal{A}$, $a\neq 1$). It is clear from our setup that the regrets would increase only if we played suboptimal arms. As indicated by \citet{agrawal2012analysis, lattimore2020bandit}, after playing sub-optimal arms extensively, their posteriors become precisely estimated, and they will no longer be played with high probability. Consequently, we can set upper bounds on their regrets. Based on the above analysis, the expected plays of suboptimal arm $a$ are played after $N$ rounds can be represented as $\mathbb{E}\left[\mathcal L_a(N)\right]$, which can be further separated into two parts:
\begin{equation}\label{equation_sub_play_main}
\begin{split}
\mathbb{E}\left[\mathcal L_a(N)\right] & =\mathbb{E}\left[\sum_{n=1}^N \mathbb{I}\left(A_n=a, \mathcal E_a(n)\right) + \mathbb{I}\left(A_n=a, \mathcal E_a^c(n)\right)\right]\\
& = 1+ \mathbb{E}\left[\sum_{s=0}^{N-1} \mathbb{I}\left(\mathcal G_{a, s}>\frac{1}{N}\right)\right] + \mathbb{E}\left[\sum_{s=0}^{N-1} \left(\frac{1}{\mathcal G_{1, s}} - 1\right)\right].
\end{split}
\end{equation}
For a small positive $\epsilon$, $\mathbb{E}\left[\sum \frac{1}{\mathcal G_{1, s}} - 1\right]$ reflects how closely the sample from the first arm at round $n$ reach to its true posterior, which serves as a strong indicator favoring immediate exploitation of the first arm. However, the magnitude of this term diminishes with large variance, thereby suggesting a possible necessity for further exploration. The expectation of $\sum \mathbb{I}\left(\mathcal G_{a, s}>\frac{1}{N}\right)$, on the other hand, offers a contrasting perspective by evaluating the likelihood that the choice of arm $a$ is nearly as optimal as the first arm, which functions as an effective metric for assessing the similarity between the selected arm and the optimal arm. A low value for this term, especially with low perturbation variance, indicates that the selected arm may not be worth further exploration. Therefore, achieving an optimal equilibrium between these two terms would enable more enlightened decision-making and keep the balance between exploration and exploitation.

We subsequently integrate the concentration findings from Section \ref{section_posterior_analysis} with the exploration and exploitation terms in \eqref{equation_sub_play_main} to extract the relevant concentration guarantees.

\subsection{Regrets for Exact Thompson Sampling}\label{section_regret_exact}

We first analyze the regrets of exact TS, where the posterior distribution is described by limiting distributions governed by SDEs \eqref{eq_sde2}. Within rounds $n = 1, 2, \ldots, N$, we establish a lower boundary for $\mathcal G_{1, n}$, which serves to establish a minimum threshold for the probability of the first arm being optimistic:
\begin{equation*}\label{eq_exact_anti_concentration}
	\mathbb{E}\left[\frac{1}{\mathcal G_{1, n}}\right] \leq C \sqrt{\kappa_1 B_1}.
\end{equation*}
This bound is dependent on the prior quality $B_1$ and the condition number $\kappa_1$ of the optimal arm. While it yields an almost sure concentration of the first arm's unscaled posterior to true posteriors when $\mathcal L_1(n)$ is sufficiently large, it may fail when $\mathcal L_1(n)$ is small. Intuitively, when $\mathcal L_1(n)$ is large, it implies that the obtained reward, $r_{1,n}$, tends to center around $\bar r_1$; however, a small $\mathcal L_a(n)$ can yield a reward falling below $\bar r_1-\epsilon$. In our proof, 
setting $\rho_1$ as $\kappa_1^{-3}\left(8 d\right)^{-1}$ requires the SDEs \eqref{eq_sde2} to concentrate around a scaled posterior. This introduces a requisite variance in rewards to counterbalance any potential underestimation biases, thus yielding the highlighted anti-concentration result. The choice of $\rho_1$ is also constrained by the squaring term of the sub-Gaussian random variable $\left\|\nabla f_n(x_*)\right\|_2^2$. An overly large $\rho_1$ would also fail to derive the concentration bound.

Integrating the established anti-concentration bound with the conclusions in Theorem \ref{theorem_posterior_concentration} enables more problem-dependent bounds of \eqref{equation_sub_play_main}. Specifically, with the choice of $\rho_1=\kappa_1^{-3}\left(8 d\right)^{-1}$, we can further derive the following upper bounds in \eqref{equation_sub_play_main}:
\begin{equation*}\label{eq_exact_regret1_main}
\mathbb{E}\left[\sum_{s=0}^{N-1} \left(\frac{1}{\mathcal G_{1, s}} - 1\right)\right] \leq \left\lceil \frac{C_1 \sqrt{\kappa_1 B_1}}{m_1\Delta_a^2}\left(D_1+\Omega_1\right) \right\rceil +1,
\end{equation*}
\begin{equation*}\label{eq_exact_regret2_main}
\mathbb{E}\left[\sum_{s=0}^{N-1} \mathbb{I}\left(\mathcal G_{a, s}>\frac{1}{N}\right)\right]  \leq\frac{ C_2}{m_a\Delta_a^2}\left(D_a+\Omega_a\right) ,
\end{equation*}
where for all arms $a \in \mathcal{A}$, $D_a=\log B_a + d^2\kappa_a^3$, $\Omega_a=\kappa_a^2d+\kappa_a^3d$, and $C_1, C_2>0$ are constant which is not related to any parameter related to the algorithm. 

Based on the above results to constrain the expected plays for the sub-optimal arms, we can now derive a bound for the total expected regrets of the proposed exact TS algorithm.

\begin{theorem}\label{theorem_exact_regret_main}
  Suppose the likelihoods, rewards, and priors satisfy Assumptions \ref{assumption_likelihood}-\ref{assumption_prior}. With the choice of $\rho_a=\kappa_a^{-3}\left(8d\right)^{-1}$, we have that the total expected regrets after $N>0$ rounds of TS with exact sampling satisfies:
\begin{equation*}
  \begin{split}
      \mathbb{E}[{\mathfrak R}(N)] & \leq \sum_{a>1} \frac{C_{a}}{\Delta_a}\left(\log B_a + d^2 + d\log N\right) \\
      & \ \ \ \ + \frac{C_1\sqrt{B_1}}{\Delta_a}\left(\log B_1 + d^2\right) + 2\Delta_a ,
  \end{split}
\end{equation*}
where $C_1, C_a>0$ are universal constants and are unaffected by parameters specific to the problem.
\end{theorem}
According to the results, the first term evaluates the probability of arm $a$ being nearly optimal among all choices. Its logarithmic growth with increasing $N$ highlights there is no need for further exploration for large $N$. The second term, $\sqrt{B_1}\left(\log B_1 + d^2\right)/\Delta_a$, estimates how closely the sample from the first arm in the $n$th round aligns with its true posterior, consistent across rounds. With a sufficient number of rounds, the proposed algorithm is inclined to select the optimal arm and no further incur regrets. The proposed algorithms exhibit logarithmic regrets $\tilde O\left(\frac{\log N}{\Delta_a}\right)$, which is equivalent to the overdamped version of TS. Nonetheless, advancements in sample complexity denote an improvement over the previous works. Due to the considerable dependency of outcomes on the prior quality, further experimental analysis is conducted in Section \ref{experiments} for the reliance of results on prior quality. For the in-depth derivation of the proof, one can refer to Appendices \ref
{section_regret_general_ts} and \ref{section_regret_exact_ts}.

\subsection{Regrets for Approximate Thompson Sampling}

Building upon the regret analysis of exact TS and the concentration results for approximate posteriors produced by ULMC, we present the regret findings for approximate TS implemented with ULMC.

We first denote $\hat {\mathcal G}_{a,n}=\mathtt P\left(\mathcal E_a(n)|\mathcal F_{n-1}\right)$ as the distribution of samples from the approximate posterior $\hat \mu_a$ for arm $a$ after $n$ rounds. Consistent with the anti-concentration revelations of ${\mathcal G}_{a,n}$, we also provide concentration guarantees for $\hat{\mathcal G}_{a,n}$:
\begin{equation*}
	\mathbb{E}\left[ \frac{1}{\hat{\mathcal G}_{1,n}} \right] \leq C\sqrt{B_1},
\end{equation*}
which uniquely do not depend on the condition number $\kappa_1$. This characteristic is due to the fact that in $\hat{\mathcal G}_{a,n}$, the generated sample $x_{a, I h}$ conforms to a normal distribution, while in ${\mathcal G}_{a,n}$, a strongly-log concave characteristic is utilized for approximating normal distributions. As a result, the approximation introduces $\kappa_a$. Additionally, the choice of the scaled parameter, denoted as $\hat\rho_a=\left(8\kappa_a\Omega_a\right)^{-1}$, is strategically made to bound the moment generating function of the random variable $\left\|x_{a, I h}-x_{a,*}\right\|_2^2$, where $\left\|x_{a, I h}-x_{a,*}\right\|_2^2$ is denoted as the L2 norm between samples generated for TS and the point $x_{a,*}$. With the above information, we continue to derive a similar upper bound for the terms in \eqref{equation_sub_play_main}:
   \begin{equation*}\label{eq_approx_regret_main}
        \begin{split}
               \sum_{s=0}^{N-1}\mathbb{E}\left[ \frac{1}{\hat {\mathcal G}_{1,s}}-1\right] \leq  \left\lceil \frac{C_1\sqrt{B_1}}{m_1\Delta_a^2}\left(\log B_1+d\kappa_1^2 \log N +d^2\kappa_1^2 \right) \right\rceil +1\\
   \sum_{s=0}^{N-1} \mathbb{E}\left[ \mathbb{I}\left(\hat {\mathcal G}_{a,s}>\frac{1}{N}\right)\right]  \leq \frac{C_2}{m_a\Delta_a^2}\left(d+\log B_a+d^2\kappa_a^2\log N \right).
        \end{split}
   \end{equation*}
With these problem-dependent bounds, we reach the regret bounds for the proposed approximate TS algorithm:
\begin{theorem}
\label{theorem_approximate_ts_regret_main}
Suppose the likelihoods, rewards, and priors satisfy Assumptions \ref{assumption_reward}-\ref{assumption_sgld_lipschitz_main}. Given $\hat\rho_a=\left(8\kappa_a\Omega_a\right)^{-1}$ and sampling schemes specified in Theorem \ref{theorem_ula_stochastic_main}, then the total expected regrets after $N$ rounds of approximate TS conform to:
\begin{equation*}\begin{split}
      \mathbb{E}[\mathfrak R(N)] & \leq \sum_{a>1} \frac{\hat C_a}{ \Delta_a}\left( d+ {\log B_a}+d^2\kappa_a^2 \log N  \right)\\ &\ \ \ \  +\frac{\hat C_1 \sqrt{B_1} }{\Delta_a}\left( \log B_1+d\kappa_1^2\log N+d^2\kappa_1^2 \right) + 4\Delta_a.
\end{split}\end{equation*}    
  where $\hat C_1,\hat C_a>0$ are universal constants that are independent of problem-dependent parameters.
\end{theorem}

\begin{figure*}[hbtp]
\centering
\subfigure[Regrets with full or stochastic gradients.]{
\centering\includegraphics[width=2.15 in]{./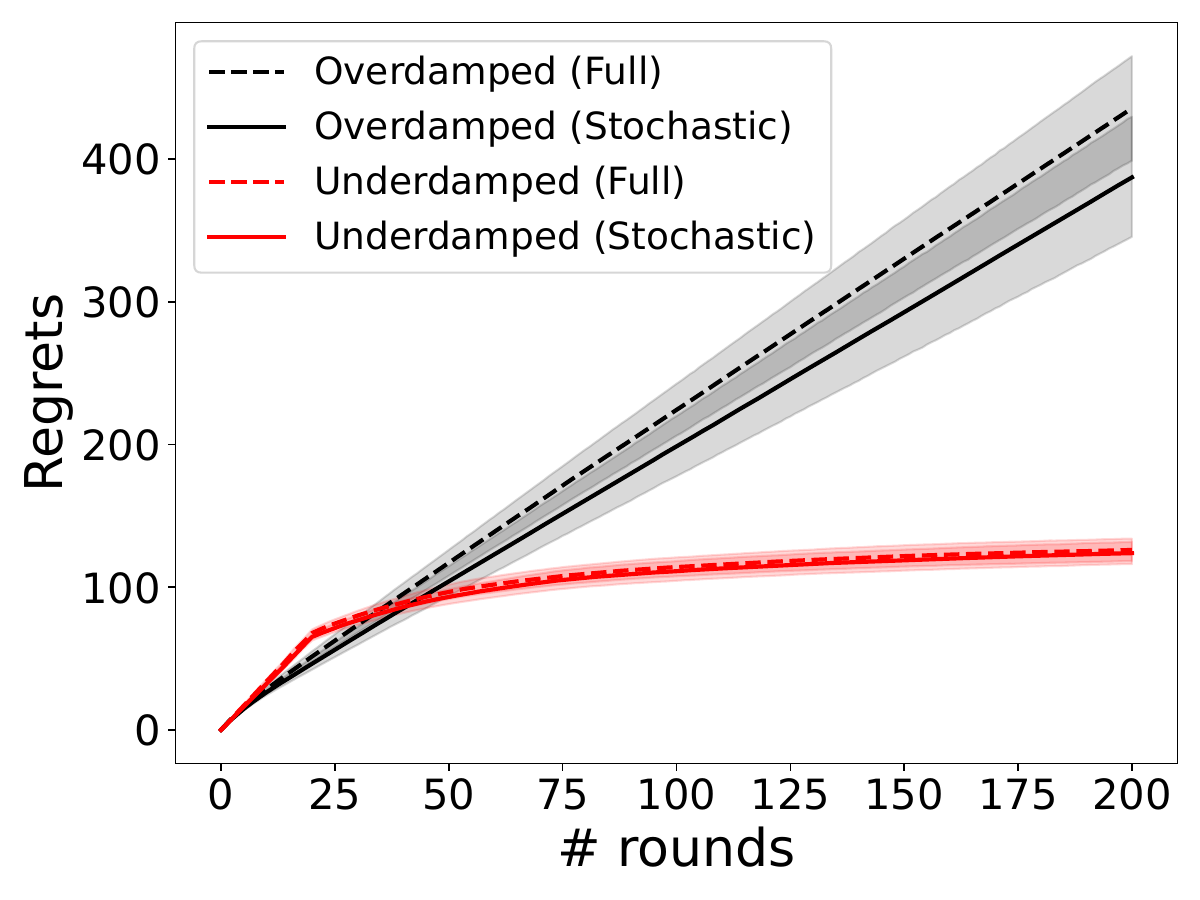}\label{fig_compare1}}
\subfigure[Regrets with $\mathcal{\tilde O}(\sqrt d)$ samples.]{
\centering\includegraphics[width=2.15 in]{./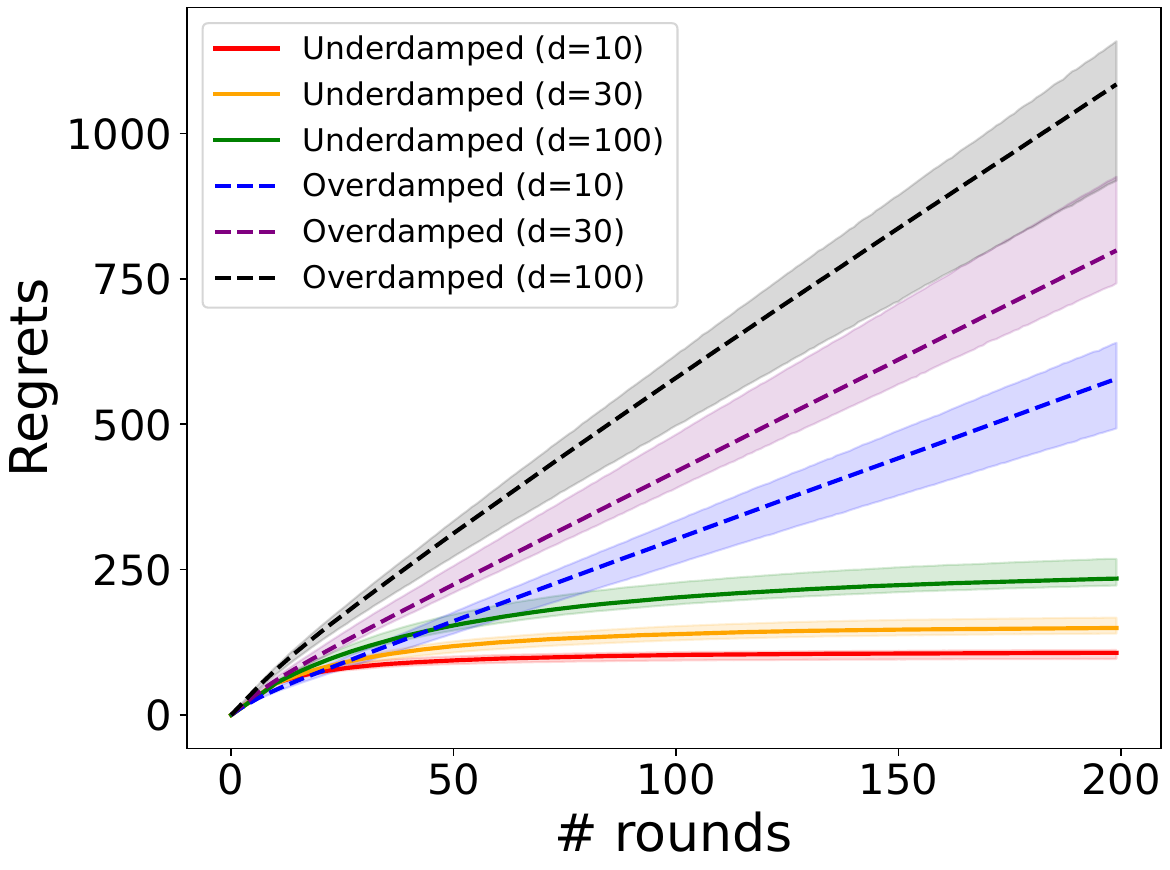}\label{fig_sample1}}
\subfigure[Regrets with $\mathcal{\tilde O}( d)$ samples.]{
\centering\includegraphics[width=2.15 in]{./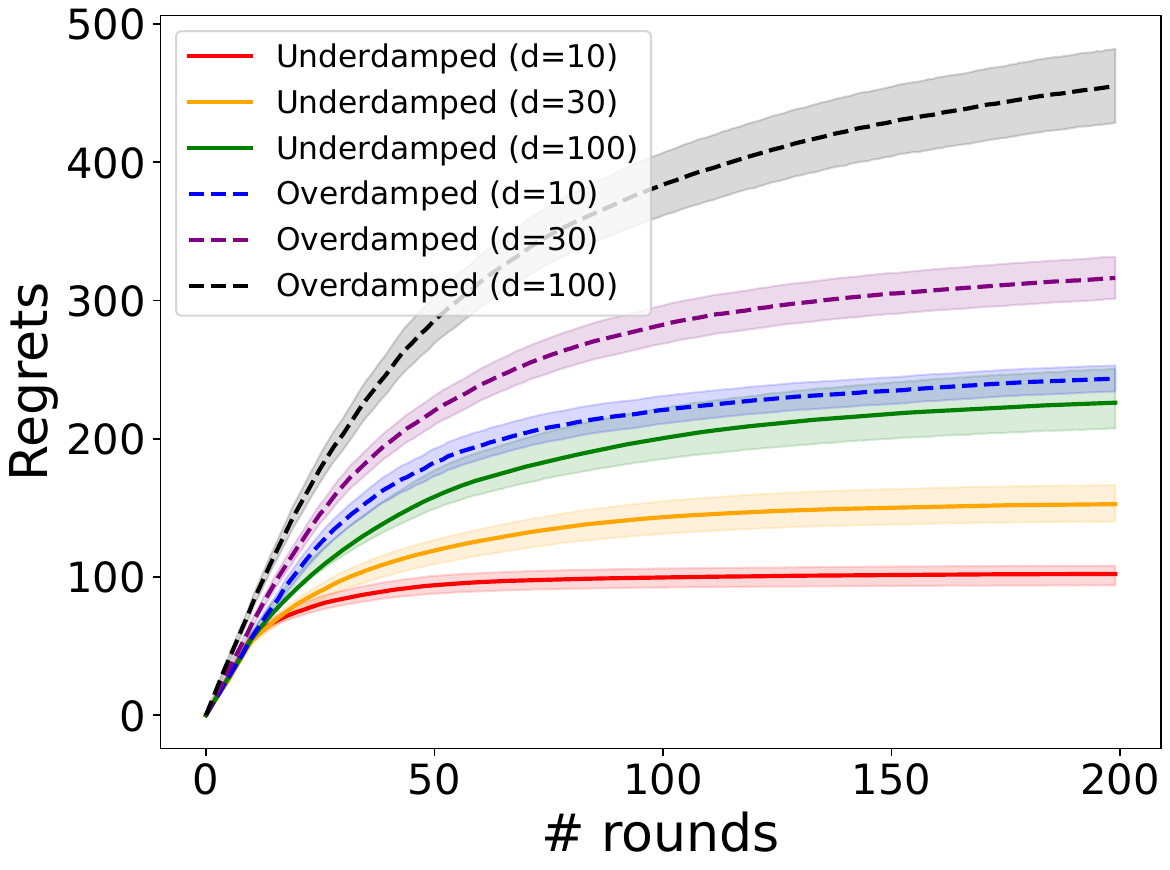}\label{fig_sample2}}

\subfigure[Regrets with different momentums $\gamma$.]{
\centering\includegraphics[width=2.15 in]{./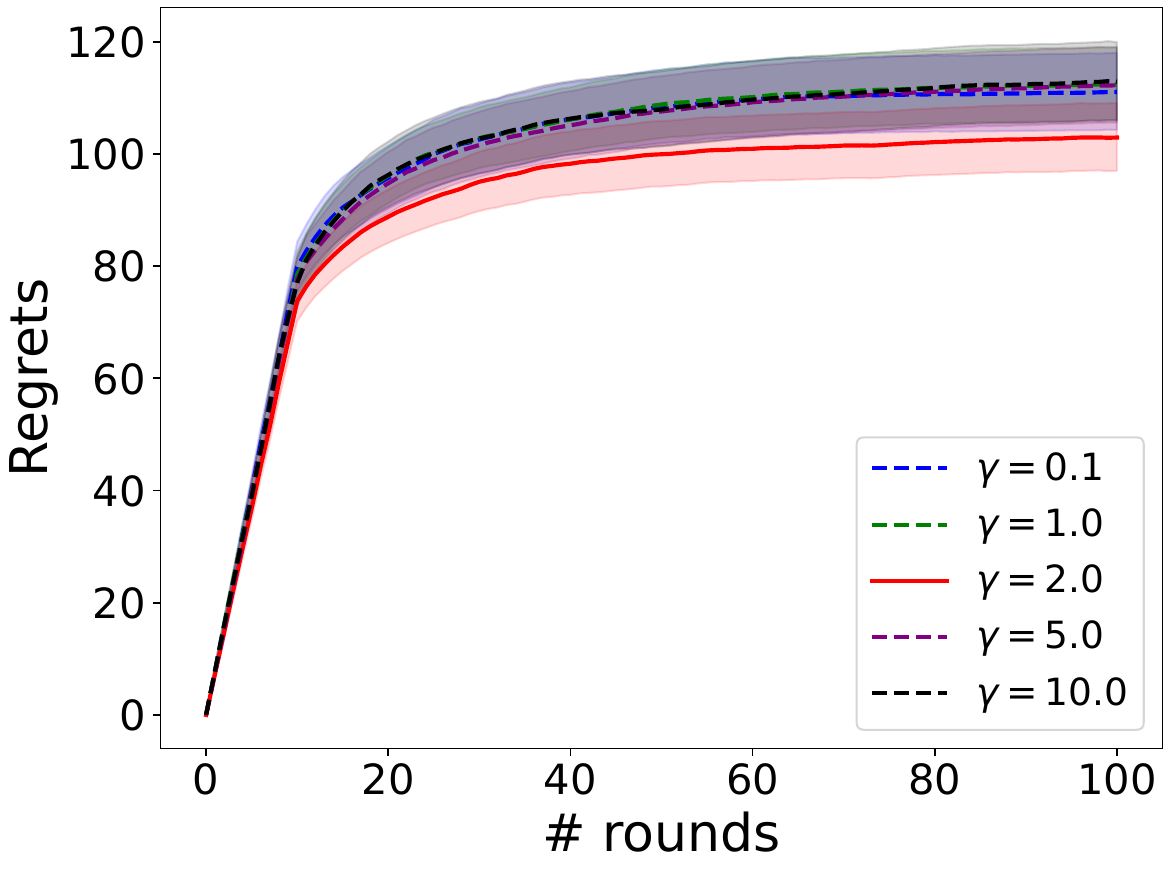}\label{fig_momentum}} 
\subfigure[Regrets with flat priors.]{
\centering\includegraphics[width=2.15 in]{./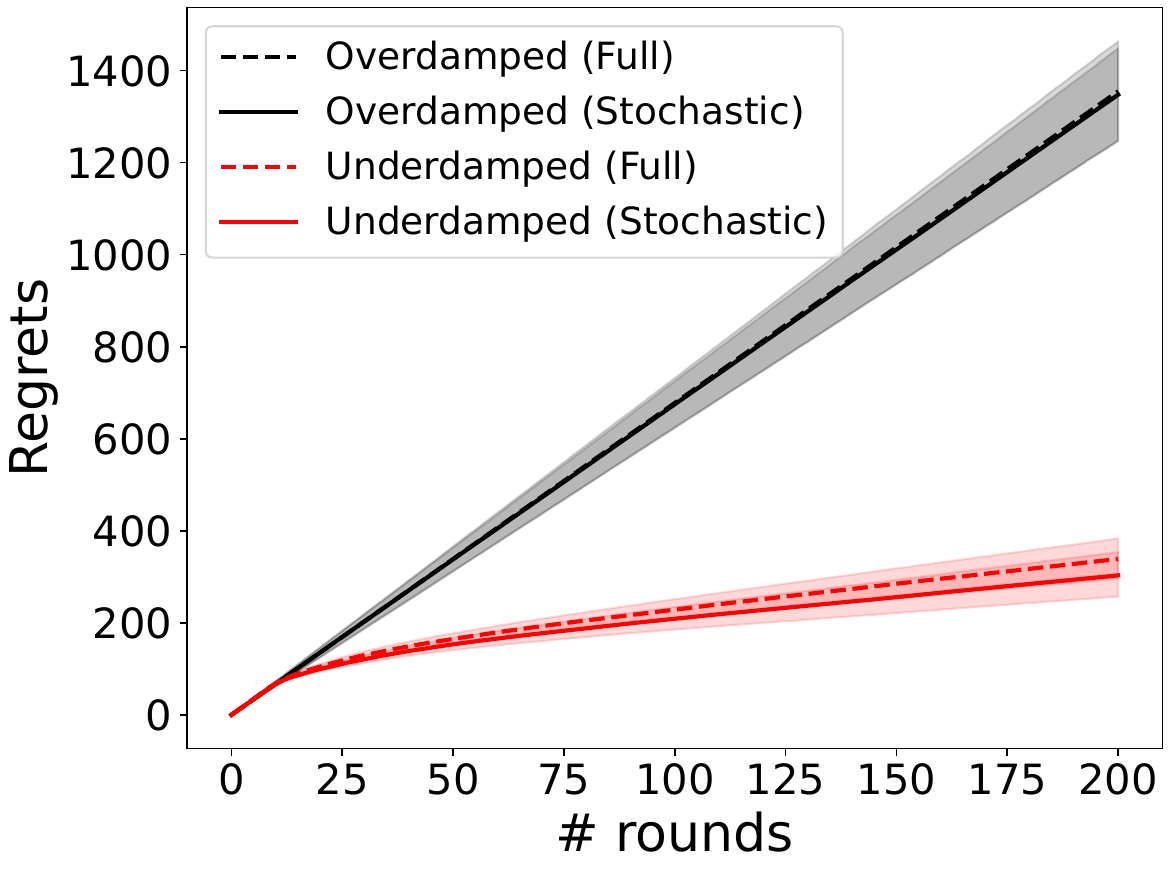}\label{fig_prior}} 
\subfigure[Regrets in high dimensional cases.]{
\centering\includegraphics[width=2.15 in]{./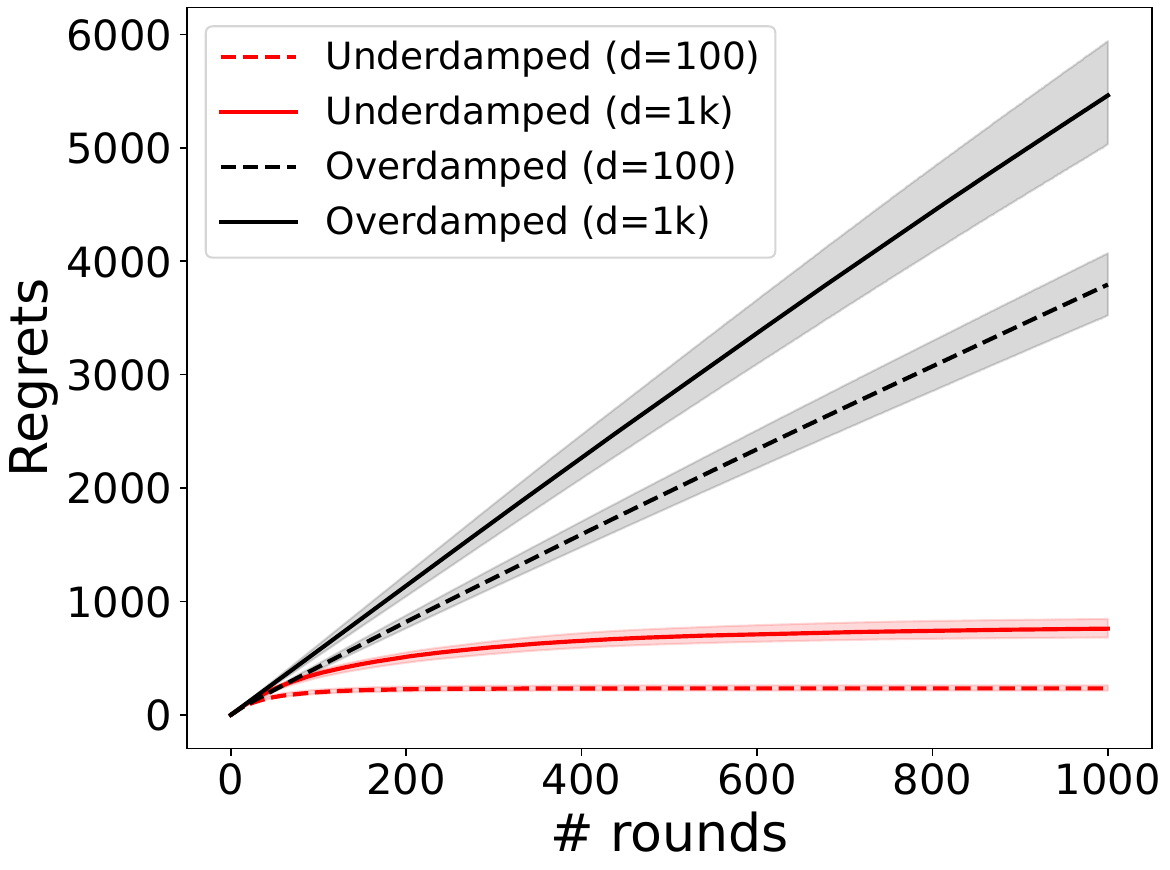}\label{fig_dimension2}}
\caption{Regret comparisons of TS among different settings.}\label{fig_regret}
\end{figure*}

Comparing the regrets provided in both exact and approximate TS, we observe that the first term still demonstrates a logarithmic rate at $\tilde O\left(\frac{\log N}{\Delta_a}\right)$. Interestingly, its dependence on the parameter dimensions increases from linear to quadratic. This change is attributed to the distance between the unscaled and scaled approximate posteriors when establishing the contraction results of the approximate posterior, which yields a dimension-related result. While the second term in exact TS is independent of $N$, the result in approximate TS behaves at a rate of $\tilde O\left(\frac{\sqrt{B_1}\log N}{\Delta_a}\right)$. This underscores the sensitivity of approximate TS to inferior prior quality. 

It should be noted that our proof achieves similar regrets as previous works \citep{mazumdar2020approximate} but under less stringent sample complexity assumptions. With the same \({\mathcal{\tilde O}(\sqrt{d})}\) samples, our method achieves logarithmic regrets, a guarantee absent in previous research. With \(\mathcal{\tilde O}(d)\) samples, our method accelerates posterior approximation and facilitates better convergence around the posterior mode, which indicates better regret bounds with the same \(\mathcal{\tilde O}(d)\) samples. Further discussion will be offered in the next section, and detailed theoretical derivations can be found in \ref{section_regret_approx_ts}.

\section{EXPERIMENTS}\label{experiments}

In this section, we present numerical experiments designed to validate the benefits of the proposed methods on MAB from different perspectives\footnote{Code is available at \href{https://github.com/haoyangzheng1996/ts_ulmc}{the GitHub repository}.}. We regard TS with LMC as a benchmark (both full gradient and stochastic gradient versions) across our tests. We first demonstrate general settings among all experiments without specific illustrations. For each round, the agent observes samples from ten-dimensional posterior distributions, and the agent has to choose action $A_n$ between ten arms and receives rewards $\mathcal R_n$ to update the posterior of arm $A_n$. We generated 500 trajectories to get the expected mean of each result and then applied Bootstrapping to obtain their 95\% confidence intervals. More details are provided in Appendix \ref{section_experiments}.

We first compare the performance of the proposed algorithms and the benchmarks in Figure \ref{fig_regret}. Across all figures, the x-axis is the number of rounds to play arms, and the y-axis is the total expected regrets generated by different algorithms among various settings. Each line represents the average regrets among the generated trajectories, accompanied by a 95\% confidence interval represented by a lighter shade. When we have the same step size and sample complexity $\mathcal{\tilde O}(\sqrt d)$, it is clear to see from Figure \ref{fig_compare1} that at a consistent sample complexity and step size, the proposed algorithm showcased logarithmic regrets while the overdamped algorithm incurred linear regrets. This disparity can be attributed to the overdamped version's inability to provide approximate error guarantees with large step sizes. Also, the figure indicates that a proper choice of batch size enables the proposed method to achieve logarithmic regrets relying only on stochastic gradients. 

We continue to explore the derived regrets under different sample complexity and different dimension settings. Figure \ref{fig_sample1} demonstrates the regrets when we employ $\mathcal{\tilde O}(\sqrt d)$ samples and Figure \ref{fig_sample2} employs $\mathcal{\tilde O}(d)$ samples. With $\mathcal{\tilde O}(\sqrt d)$ samples, it is clear that as the dimensions increase, there is a noticeable trend of slower regret convergence and higher regret magnitudes. Nevertheless, the algorithm persisted in delivering logarithmic regrets across all dimensional variations. On the contrary, the overdamped algorithm can only incur linear regrets. With $\mathcal{\tilde O}( d)$ samples, the overdamped algorithm succeeds in achieving logarithmic regrets, but with the underdamped algorithm, the incurred regrets bounds are much tighter than the overdamped one. 

We then further explore the stability of the proposed algorithm from different perspectives. We first explore the impact of various momentum (denoted by $\gamma$) values in Figure \ref{fig_momentum}. Across different momentum values, the ULMC consistently displayed logarithmic regrets, which also indicates that our algorithms are stable among different momentum settings. It is worth noting that a gamma value of $2.0$ incurs the smallest regrets, which aligns with our theoretical selection. 

As we mentioned earlier in our analysis, the regrets may be heavily dependent on prior quality, and Figure \ref{fig_prior} shows how the proposed method and benchmark behave when we consider flat priors. While the ULMC consistently outperformed the overdamped version, they are likely to incur linear regrets in smooth ascent. However, this milder linear regret growth hints at the algorithm's ability to avoid the worst decisions, even if it does not always identify the optimal ones. 

Moreover, the advantages of our proposed algorithm over the conventional TS with LMC become significant in high-dimensional challenges, as depicted in Figure \ref{fig_dimension2}. Our approach exhibits much smaller regrets when dimensions increase to 100 and 1,000, which emphasizes the need for such advancements.

\section{CONCLUSIONS AND DISCUSSIONS}

To address the intricate task of sampling multiple approximate posteriors and guiding sequential decisions toward the optimal posteriors, we introduced a novel strategy using TS with ULMC to improve the approximation accuracy. The main contribution is the novel posterior analysis in the use of a specific potential function, which offers new insights into posterior concentration rates in TS. Based on this, the proposed algorithm offers a more favorable sample complexity $\tilde O(\sqrt{d})$ relative to the overdamped counterpart with $\tilde O(d)$ -- a claim validated through both theoretical analysis and experimental validation. 

Our algorithm demonstrates a consistent performance by incurring logarithmic regrets with the sample complexity of both $\tilde O(\sqrt{d})$ and $\tilde O({d})$ comparable to that of the overdamped variant, which, by contrast, exhibits worse regret performance. Furthermore, the integration of variance reduction techniques, particularly control variates \citep{zou2018stochastic, zou2019stochastic}, into the approximate TS algorithm with stochastic gradients is worthy of additional discussion here. These techniques have proven to significantly diminish noise variance in stochastic gradients, which contributes to accelerating sampling convergence. Consistent with the prior theoretical framework outlined in \citet{mazumdar2020approximate}, our results also imply that, with variance reduction techniques, the sample complexity of approximate TS can be improved with a better constant, but not necessarily with a better order. 

The robustness of the proposed algorithm was further supported by its consistent performance across various experiments: Our algorithms consistently perform well across a wide range of momentum values, which is consistent with our theory. In the face of challenges from the curse of dimensionality, it tends towards near-optimal arms and maintains logarithmic regret convergence. Its consistent and excellent performance becomes significant when comparing it and TS with LMC in high-dimensional settings. We also provide experiments to compare algorithms considering non-informative priors, but for our theoretical analysis, we highlight that we assume sufficiently good priors to target logarithmic regrets. It should be noted that current literature on prior sensitivity focuses more on simple cases and does not align with our settings. The impact of considering non-informative priors is an important aspect we aim to explore in future work. 

This study has tentatively shown that our algorithms can align with theoretical regret bounds under weaker sample complexity assumptions and empirical evidence suggests that our algorithms could potentially derive tighter regret bounds. However, it is critical to recognize that we fall short of offering a theoretical guarantee for achieving consistently tighter regret bounds with equivalent sample complexities. Additionally, our method does not reach the minimax optimal regret benchmarks established in simpler TS settings \citep{jin2021mots}. Our subsequent research will aim to develop a tight, optimal regret bound specifically to the approximate TS algorithm for MAB problems.

Another interesting topic for approximate TS is the extension to non-convex scenarios, which is an important but non-trivial future direction. With the dissipative or log-Sobolev assumptions, we aim to achieve exponential convergence in continuous time. Achieving this would not only validate the approach theoretically but also enhance the practical deployment of approximate TS across more diverse settings, which marks a substantial leap forward in the domain of TS and MCMC.

\section*{Acknowledgments}
We thank the anonymous reviewers for their helpful comments. G. Lin acknowledges the support of the National Science Foundation (DMS-2053746, DMS-2134209, ECCS-2328241, and OAC-2311848), and U.S. Department of Energy (DOE) Office of Science Advanced Scientific Computing Research program DE-SC0023161, and DOE–Fusion Energy Science, under grant number: DE-SC0024583. 

\bibliographystyle{apalike}
\bibliography{refs}

\clearpage

\appendix
\onecolumn
\aistatstitle{
Supplementary Materials}
\vspace{-2 in}

\section{INTRODUCTION TO THOMPSON SAMPLING}

Before we move on to the detailed proof, we here remind the general framework for the Thompson sampling algorithm. Thompson sampling is a probabilistic approach used for solving the multi-armed bandit problem, emphasizing the trade-off between the exploration of less-known arms and the exploitation of the best-known arms. As outlined in Algorithm \ref{algorithm_thompson}, for each round, the algorithm samples from the scaled posterior distributions of each arm and then chooses the arm with the highest probability of getting optimal rewards. Once an arm is played and the reward is observed, the associated posterior distribution is updated. Through continuous updates and sampling, it enables the algorithm to quantify its performance in terms of total expected regrets.

\begin{algorithm*}[!htbp]
\caption{A general framework of Thompson sampling for multi-armed bandits.}\label{algorithm_thompson}
{\hspace*{\algorithmicindent} \textbf{Input} Bandit feature vectors $\alpha_{a}$ for $\forall a \in \mathcal{A}$.}\\
{\hspace*{\algorithmicindent} \textbf{Input} Scaled posterior distribution $\mu_a\left[\rho_a\right]$ for $\forall a \in \mathcal{A}$.}
\begin{algorithmic}[1]
\For{{$n$=1 to $N$}}
\State {Sample $\left(x_{a, n}, v_{a, n}\right)\sim \mu_a\left[\rho_a\right]$ for $\forall a \in \mathcal{A}$.}
\State {Choose arm $A_n = \text{argmax}_{a\in\mathcal A} \left\langle \alpha_a, x_{a, n}\right\rangle$.}
\State {Play arm $A_n$ and receive reward ${\mathcal R}_{n}$.}
\State {Update posterior distribution of arm $A_n$: $\mu_{A_n}\left[\rho_{A_n}\right]$.}
\State {Calculate expected regrets ${{\mathfrak R}}_n$.}
\EndFor
\end{algorithmic}
\hspace*{\algorithmicindent} {\textbf{Output}  Total expected regrets $\sum_{n=1}^N {{\mathfrak R}}_n$.}
\end{algorithm*}

\section{PROOF OUTLINE}
To make the reader easier to follow the proof, we outline the general idea of our proof as follows:

\begin{enumerate}
    \item Decompose regrets into exploration and exploitation terms ({Section \ref{section_regret_general_ts}}).
    \item Derive the posterior concentration rates ({Section \ref{section_posterior_concentration}}).
    \item Apply posterior concentration rates to bound regret terms in the first step ({Section \ref{section_regret_exact_ts}}).
    \item Determine sample size to attain \(\tilde O(1/n)\) approximate error rates ({Section \ref{section_posterior_convergence}}).
    \item With approximate error guarantee, derive the ULMC-generated sample concentration rates ({Section \ref{section_approx_concentration}}).
    \item Apply approximate posterior concentration rates to bound regret terms in the first step ({Section \ref{section_regret_approx_ts}}).
\end{enumerate}

\clearpage
\section{ANALYSIS OF EXACT THOMPSON SAMPLING}

\subsection{Notation}

Table \ref{table_notation} presents the key symbols used throughout our study on the posterior concentration analysis and the regret analysis of exact Thompson sampling, which will repeatedly appear in the following theoretical analysis. The choice of these notations helps clarity and consistency in our analyses and discussions. Readers are encouraged to refer to the table for a comprehensive understanding of the symbols employed.

\begin{table}[!htbp]
\caption{Notation summary in the analysis of exact Thompson sampling.}\label{table_notation}
\centering
\begin{tabular}{|c|c|}
\hline
Symbols & Explanations    \\ \hline  \hline

  $\llbracket A, B \rrbracket$ &   set of all integers from $A$ to $B$ ($A, B\in \mathbb Z$) \\ \hline 
  $d$             &  model parameter dimensions    \\ \hline
  $t$             &   time horizon $t\in [0, T)$    \\ \hline
  $n$			  &   number of rounds to play arm ($n\in \llbracket 1, N \rrbracket$)    \\ \hline
  $a$             &   arm index in the Thompson sampling ($a\in \mathcal A$)    \\ \hline
  $A_n$           &   action (the selected arm) in round $n$     \\ \hline
  ${{\mathcal R}}_{a, n}$      &   reward in round $n$ by pulling arm $a$     \\ \hline
  $\bar {{\mathcal R}}_a$      &   expected reward for arm $a$         \\  \hline 
  ${\mathfrak R}(n)$      	&   total expected regret in round $n$         \\  \hline \hline
  $x_*\ (v_*)$      		&   fixed position (velocity)    \\ \hline
  $x_a\ (v_a)$    &   inherent position (velocity) parameter for arm $a$    \\ \hline
  $x_{a, n}\ (v_{a, n})$  	&   sampled position (velocity) parameter in round $n$ for arm $a$         \\ \hline \hline

  $L_{a}$  		&   Lipschitz smooth constant for arm $a$         \\ \hline
  $m_{a}$  		&   strongly convex constant on the likelihood for arm $a$         \\ \hline
  $\nu_{a}$  		&   strongly convex constant on the reward for arm $a$         \\ \hline
  $\rho_{a}$  	&   parameter to scale posterior for arm $a$         \\ \hline
  $\kappa_a$  	&   condition number for the likelihood of arm $a$: $\kappa_a=\frac{L_a}{m_a}$        \\ \hline
  $B_a$			& 	prior quality of arm $a$: $B_a = \frac{\max_{x} \pi_a\left(x\right)}{\pi_a\left(x_*\right)}$ \\ \hline
  $\gamma$		& 	friction coefficient	\\ \hline
  $u$				& 	noise amplitude \\ \hline
  $\alpha_a$		& bandit inherent parameters for arm $a$ \\  \hline 
  $\omega_a$		& norm of $\alpha$ for arm $a$ \\ \hline
  $\Delta_a$ 	& \begin{tabular}{@{}c@{}}distance between expected rewards \\of optimal arm and arm $a$ \end{tabular}\\ \hline \hline
  $\mu_a^{(n)}$ & 	\begin{tabular}{@{}c@{}}probability measure of posterior distribution \\ of arm $a$ after $n$ rounds\end{tabular}		\\ \hline
  $\mu_a^{(n)}[\rho_a]$ & 	\begin{tabular}{@{}c@{}}probability measure of scaled posterior distribution \\ of arm $a$ after $n$ rounds\end{tabular}		\\ \hline
\end{tabular}
\end{table}

\subsection{Posterior Concentration Analysis}\label{section_posterior_concentration}

Suppose the posterior distribution {$\mu_a^{(n)}\left[\rho_a\right]\propto \exp\left(-\rho_a\left(nf_{n}(x_a)+\log\pi(x_a)+n\|v_a\|_2^2/2u\right)\right)$}, where $x_a, v_a\in \mathbb R^d$ represent the vector for position and velocity. Then the posterior distribution satisfies the following theorem.

\begin{theorem}\label{theorem_posterior_concentration2}
Suppose that the likelihood and the prior follow Assumptions \ref{assumption_likelihood} and \ref{assumption_prior} hold. Given rewards $\mathcal R_{a, 1}, \mathcal R_{a, 2},\ldots, \mathcal R_{a, n}$ then for $x_a, v_a\in \mathbb R^d$ and $\delta_1 \in \left(0, e^{-0.5}\right)$, the posterior distribution satisfies:

\begin{equation}
  \mathbb{P}_{{x_a} \sim {\mu_a\left[\rho^{(n)}_a\right]}}\left(\left\|x_a - x_*\right\|_2\geq \sqrt{\frac{2e}{m_an}\left(D_a+2\Omega_a\log 1 / \delta_1\right)}\right)\leq\delta_1
\end{equation}
with $D_a=\frac{8d}{\rho_a }+{2}\log B_a$, $\Omega_a=\frac{16L_a^2d}{m\nu_a}+\frac{256}{\rho_a}$. Here $x_* \in \mathbb R^{d}$ represents the fixed pair toward which the posterior distribution tends to concentrate.

\end{theorem}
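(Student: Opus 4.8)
The plan is to treat the position marginal of $\mu_a^{(n)}[\rho_a]$ as a strongly log-concave measure and combine a deterministic Laplace-type ratio estimate with a stochastic bound on the empirical score at $x_*$. First I would integrate out the velocity coordinate: since $v_a$ enters the energy only through the isotropic Gaussian term $n\|v_a\|_2^2/2u$, it factorizes and the event $\{\|x_a-x_*\|_2\ge r\}$ depends on the position marginal alone, namely $\mu_a^{(n)}(x)\propto \exp(-\rho_a(nf_n(x)+\log\pi(x)))$. Writing $U_n(x)=nf_n(x)+\log\pi(x)$, Assumption~\ref{assumption_likelihood_convex2} applied to each of the $n$ reward terms shows that $nf_n$ is $m_an$-strongly convex, so the marginal is $\rho_a m_a n$-strongly log-concave; this is the engine that forces concentration at the fast rate $1/(m_a n)$.

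Second, I would locate the posterior mode relative to $x_*$ by controlling the empirical score $\nabla f_n(x_*)=\tfrac1n\sum_{i=1}^n\nabla_x\log\mathtt P_a(\mathcal R_{a,i}|x_*)$, which is random through the rewards. Assumption~\ref{assumption_reward_convex2} makes each reward sub-Gaussian (strong log-concavity of the reward law with parameter $\nu_a$ yields a $1/\nu_a$ sub-Gaussian proxy), and Assumption~\ref{assumption_reward_lipschitz2} transfers this to the per-sample gradient, which is $L_a$-Lipschitz in $\mathcal R$ and hence sub-Gaussian with proxy of order $L_a^2/\nu_a$. A norm-sub-Gaussian/dimension estimate then gives $\|\nabla f_n(x_*)\|_2\lesssim \sqrt{L_a^2 d\log(1/\delta_1)/(\nu_a n)}$ with probability at least $1-\delta_1$; dividing by the curvature $m_a$ displaces the mode from $x_*$ by at most this amount, which is precisely the origin of the $\tfrac{16L_a^2d}{m\nu_a}$ contribution to $\Omega_a$.

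Third, on the deterministic side I would estimate the tail probability as the ratio $\int_{\|x-x_*\|_2\ge r}e^{-\rho_a U_n}\big/\int_{\mathbb R^d}e^{-\rho_a U_n}$. For the numerator I would invoke the strong-convexity lower bound $U_n(x)\ge U_n(x_*)+\langle\nabla U_n(x_*),x-x_*\rangle+\tfrac{m_an}{2}\|x-x_*\|_2^2$, complete the square to absorb the score term, and integrate the resulting Gaussian tail over $\|x-x_*\|_2\ge r$. For the denominator I would restrict to a small ball about $x_*$ and use the Lipschitz-gradient Assumptions~\ref{assumption_likelihood_lipschitz2} and~\ref{assumption_prior_lipschitz2} to upper bound $U_n$ by a quadratic, producing a Gaussian lower bound whose normalization contributes the $\tfrac{8d}{\rho_a}$ term; the prior's deviation from its value at $x_*$ is absorbed into $2\log B_a$ via the prior-quality ratio $B_a=\max_x\pi_a(x)/\pi_a(x_*)$, together yielding the full $D_a$. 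The residual constant $256/\rho_a$ accounts for slack in the ball-size and sub-Gaussian estimates.

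Finally I would combine the pieces: the ratio is bounded by a normalization factor giving $D_a$ times $\exp(-c\,\rho_a m_a n\, r^2)$ after inserting the high-probability score bound, set this $\le\delta_1$, and solve for $r$. The restriction $\delta_1\in(0,e^{-0.5})$ guarantees $\log(1/\delta_1)>\tfrac12$, which lets me absorb the stray additive constants into the $\log(1/\delta_1)$ multiplier and accounts for the factor $e$ in $\sqrt{2e/(m_an)}$; rearranging then delivers the stated radius. I expect the main obstacle to be the second step, namely converting the reward-level assumptions (\ref{assumption_reward_lipschitz2}, \ref{assumption_reward_convex2}) into a clean high-probability norm bound on the averaged score and correctly meshing the random mode shift with the deterministic concentration radius, since this is where the dimension factor $d$, the ratio $L_a^2/(m\nu_a)$, and the $\log(1/\delta_1)$ dependence must all line up with the claimed constants.
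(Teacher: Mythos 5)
Your route is genuinely different from the paper's. The paper never forms the ratio of integrals: it realizes $\mu_a^{(n)}[\rho_a]$ as the invariant measure of the underdamped Langevin SDE \eqref{eq_sde}, builds the exponentially weighted Lyapunov function $V(x,v,t)=\tfrac12 e^{\alpha t}\|(x,x+v)-(x_*,x_*+v_*)\|_2^2$, bounds its sup-moments via It\^o's lemma, the contraction estimate of Lemma \ref{lemma_contraction}, and the Burkholder--Davis--Gundy inequality, lets $T\to\infty$ by Fatou, and finishes with Markov's inequality at $p=2\log(1/\delta_1)$. Your static Laplace-type argument is more elementary (no stochastic calculus) and correctly isolates the same three ingredients: strong log-concavity at scale $\rho_a m_a n$, the sub-Gaussian score $\nabla f_n(x_*)$ (this is exactly Proposition \ref{prop_likelihood}, and your accounting of how it produces the $\tfrac{16L_a^2 d}{m\nu_a}$ piece of $\Omega_a$ is right), and the prior ratio $B_a$.

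There are, however, two concrete gaps. The main one is that the numerator/denominator sandwich you propose does not deliver the stated constants. Lower-bounding $U_n$ by an $m_a n$-quadratic in the numerator and upper-bounding it by an $L_a n$-quadratic in the denominator leaves a mismatch of Gaussian normalizations $\left(L_a/m_a\right)^{d/2}=\kappa_a^{d/2}$, which after taking logarithms forces an extra additive term of order $d\log\kappa_a/\rho_a$ into $D_a$; restricting the denominator to a small ball instead is even worse, costing a further $\Gamma(d/2+1)$ from the ball-volume versus Gaussian-normalization comparison. The theorem's $D_a=\tfrac{8d}{\rho_a}+2\log B_a$ carries no $\log\kappa_a$ factor: in the paper the dimension term enters only through the It\^o correction $\tfrac{2\gamma u d}{n\rho}$ (term $T2$ in \eqref{eq_ito_decompose}), never through a normalization comparison. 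To repair your argument you would replace the crude ratio by a log-Sobolev/Herbst concentration step (an $\rho_a m_a n$-strongly log-concave measure gives Lipschitz concentration of $\|x-x_*\|_2$ around its mean, with mean shift of order $\sqrt{d/(\rho_a m_a n)}$ plus the mode displacement), exactly in the spirit of Lemma \ref{lemma_posterior_concentrate}; that removes the $\kappa_a^{d/2}$.

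The second gap is the bookkeeping of the two sources of randomness. Your plan conditions on a $1-\delta_1$ event for the rewards (to control $\|\nabla f_n(x_*)\|_2$) and then bounds the posterior tail on that event; combined, this gives a guarantee of mass $2\delta_1$, or else requires controlling $\mathbb{E}\bigl[\exp\bigl(\rho_a n\|\nabla f_n(x_*)\|_2^2/2m_a\bigr)\bigr]$ over rewards, a sub-exponential MGF that is finite only when $\rho_a L_a^2 d/(m_a\nu_a)$ is small enough --- a restriction the theorem does not impose (the paper imposes it only later, in Lemma \ref{lemma_exact_smallT}). The paper sidesteps this by keeping everything as $p$-th moments over both randomness sources and applying Markov once, which is why its $\Omega_a$ is linear in $p$ with no MGF condition. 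A final minor point: Assumption \ref{assumption_prior_lipschitz2} does not make $\log\pi_a$ concave, so "the marginal is $\rho_a m_a n$-strongly log-concave" cannot simply be asserted; the prior must be split off and handled through $B_a$ (as the paper does via Proposition \ref{prop_prior}) before the log-concavity machinery is invoked.
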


\begin{proof}
The proof incorporates the strategies utilized in establishing Theorem 1 from \citet{mou2019diffusion} and Theorem 5 from \citet{cheng2018underdamped}. Throughout this posterior concentration proof, we generalize the conclusion across all arms. For simplicity and clarity, we omitted the arm-specific subscript in this proof. We first define the following stochastic differential equations (SDEs):
\begin{equation}\label{eq_sde}
  \begin{split}
      d v_t & =-\gamma v_t d t - u\nabla f\left(x_t\right)dt - \frac{u}{n} \nabla\log\pi(x_t) d t + 2\sqrt{\frac{\gamma  u}{n\rho}} d B_t, \\
      d x_t & =v_t d t,
  \end{split}
\end{equation}
where $x_t, v_t \in \mathbb R^d$ are position and velocity at time $t$, $f_{n}(x)\coloneqq\frac{1}{n}\sum_{j=1}^n\log \mathtt P(\mathcal R_{j}|x)$ is the log-likelihood function (we omit the subscript $n$ in the following parts for simplicity), $\log \pi(x)$ the prior, $\gamma$ is the friction coefficient, $u$ is the noise amplitude, and $B_t$ is a standard Wiener process (Brownian motion). Using a chosen value of $\alpha > 0$, we define a potential function $V:\mathbb R^d\times \mathbb R^d \times \mathbb R^+\rightarrow \mathbb R^+$:
\begin{equation}
  \begin{split}
      V\left( x, v, t\right) = \frac{1}{2}e^{\alpha t} \left\|(x, x+v) - (x_*, x_* + v_*)\right\|^2_2.
  \end{split}
\end{equation}
Importantly, as $t$ approaches infinity, the $p$-th moments of $V(x_t, v_t, t)$ can translate to the corresponding moments of $V(x, v, t)$, due to the convergence of $ (x_t, v_t) $ to $ (x, v) \sim \mu[\rho] $. Applying It${\rm{\hat o}}$'s Lemma \citep{pavliotis2014stochastic} to $V\left( x_t, v_t, t\right) $, we can decompose the potential function as follows:

\begin{equation}\label{eq_ito_decompose}
\begin{split}
  V\left( x_s, v_s, t\right) & = \frac{\alpha}{2}\int_0^t e^{\alpha s} \left\| x_s - x_*\right\|^2_2 + e^{\alpha s} \left\| {(x_s + v_s) - (x_* + v_*)}\right\|^2_2 ds \\
  & \ \ \ \ \ \ \ \ + \int_0^t e^{\alpha s} \left[\left\langle{ {\left(1 -\gamma\right)} {v_s} - u\nabla f\left( {{x_s}} \right)} - \frac{u}{n} \nabla\log\pi(x_t), {(x_s + v_s)-(x_* + v_*)}\right\rangle + \left\langle v_s, x_s-x_*\right\rangle\right] ds \\
  & \ \ \ \ \ \ \ \ + \frac{2\gamma  ud}{n\rho} \int_0^t e^{\alpha s} ds + 2\sqrt{\frac{\gamma  u}{n\rho}}\int_0^te^{\alpha s}\left\langle {(x_s + v_s)-(x_* + v_*)}, dB_s\right\rangle\\
  & \stackrel{(i)}{=} \frac{\alpha}{2}\int_0^t e^{\alpha s} \left\| x_s - x_*\right\|^2_2 + e^{\alpha s} \left\| {(x_s + v_s) - (x_* + v_*)}\right\|^2_2 ds \\
  & \ \ \ \ \ \ \ \ + \int_0^t e^{\alpha s} \underbrace{\left[ \left\langle{ \left(1 - \gamma\right)} ({v_s}-v_*) - u\left(\nabla f\left( {{x_s}} \right) - \nabla f\left( {{x_*}} \right)\right), {(x_s + v_s)-(x_* + v_*)} \right\rangle + \left\langle v_s-v_*, x_s-x_*\right\rangle \right]}_{T1} ds \\
  & \ \ \ \ \ \ \ \ +   \underbrace{\frac{2\gamma  ud}{n\rho} \int_0^t e^{\alpha s} ds}_{T2} + \underbrace{2\sqrt{\frac{\gamma  u}{n\rho}}\int_0^te^{\alpha s}\left\langle {(x_s + v_s)-(x_* + v_*)}, dB_s\right\rangle}_{T3} \\
  & \ \ \ \ \ \ \ \ + \underbrace{\int_0^t e^{\alpha s} \left\langle{\left(1 - \gamma\right)} v_* - u\nabla f\left( x_* \right), {(x_s + v_s)-(x_* + v_*)}\right\rangle ds}_{T4} \\
  & \ \ \ \ \ \ \ \ + \frac{u}{n} \underbrace{\int_0^t e^{\alpha s} \left\langle -\nabla\log\pi(x_t), {(x_s + v_s)-(x_* + v_*)}\right\rangle ds}_{T5},\\
\end{split} 
\end{equation}
where $(i)$ holds by adding and subtracting terms related to $\nabla f\left( x_* \right)$ and $v_*$. Next, we proceed to bound the terms in \eqref{eq_ito_decompose} step-by-step. We start by bounding $T1$ as follows:
\begin{equation}
  \begin{split}
      T1 & = \left\langle{ \left(1 - \gamma\right)} ({v_s}-v_*) - u\left(\nabla f\left( {{x_s}} \right) - \nabla f\left( {{x_*}} \right)\right), {(x_s+v_s)-(x_* + v_*)}\right\rangle + \left\langle v_s-v_*, x_s-x_*\right\rangle\\
      & \stackrel{(i)}{\leq} -\frac{m}{2L}\left(\left\|x_s - x_*\right\|^2_2 + \left\|(x_s+v_s) - (x_* + v_*)\right\|^2_2\right),
  \end{split}
\end{equation}
where the elaborate procedure to derive $(i)$ can be found in Lemma \ref{lemma_contraction}. $T2$ can be easily obtained by integration rules:
\begin{equation}
  \begin{split}
      T2 = \frac{2\gamma  ud}{n\rho}\int_0^t e^{\alpha s}ds & = \frac{2\gamma ud}{n\rho\alpha}\left(e^{\alpha t} - 1\right) \leq  \frac{2\gamma ud}{n\rho\alpha}e^{\alpha t} . \\
  \end{split}
\end{equation}

We proceed to bound $T3$. Suppose $M_t=\int_0^t {{e^{\alpha s}}\left\langle{(x_s + v_s)-(x_* + v_*)}, dB_s \right\rangle }$, and $T3$ can be rewritten as $2\sqrt {\frac{\gamma  u}{n\rho}}M_t $. Then we can derive a bound for $M_t$ as follows:
\begin{equation}\label{eq_mt}
  \begin{split}
      \mathbb E \left[\sup\limits_{0\leq t\leq T}{\left| {{M_t}} \right|^p}\right] & \stackrel{(i)}{\leq} \left(8p\right)^{\frac{p}{2}} \mathbb E\left[\left\langle M_t,M_t\right\rangle_T^{\frac{p}{2}}\right]\\
      & \stackrel{(ii)}{\leq} \left(8p\right)^{\frac{p}{2}} \mathbb E\left[\left(\int_0^T {{e^{2\alpha s}} \left\|(x_s + v_s)-(x_* + v_*)\right\|_2^2 ds  }\right)^{\frac{p}{2}}\right]\\
      & \stackrel{}{=} \left(8p\right)^{\frac{p}{2}} \mathbb E\left[\left(\int_0^T {{e^{2\alpha s}} \left\|(x_s + v_s)-(x_* + v_*)\right\|_2^2 ds  }\right)^{\frac{p}{2}}\right]\\
      & \stackrel{(iii)}{\leq} \left(8p\right)^{\frac{p}{2}} \mathbb E\left[\left(\sup\limits_{0\leq t\leq T}  { \frac{1}{2}e^{\alpha t}\left\|(x_t+v_t)-(x_* + v_*)\right\|_2^2 \int_0^T 2e^{\alpha s} ds  }\right)^{\frac{p}{2}}\right]\\
      & = \left(8p\right)^{\frac{p}{2}} \mathbb E\left[\left(\sup\limits_{0\leq t\leq T}  { \frac{1}{2}e^{\alpha t}\left\|(x_t+v_t)-(x_* + v_*)\right\|_2^2 \frac{2e^{\alpha T} - 2}{\alpha}  }\right)^{\frac{p}{2}}\right]\\
      & \stackrel{(iv)}{\leq} \left(\frac{16pe^{\alpha T}}{\alpha}\right)^{\frac{p}{2}} \mathbb E\left[\left(\sup\limits_{0\leq t\leq T}  { \frac{1}{2}e^{\alpha t}\left\|(x_t+v_t)-(x_* + v_*)\right\|_2^2}\right)^{\frac{p}{2}}\right],\\
      \end{split}
\end{equation}
where $(i)$ is bounded by Burkholder-Davis-Gundy inequalities (Theorem 2 in \citet{ren2008burkholder}); $(ii)$ follows quadratic variation of It${\rm{\hat o}}$'s Lemma; $(iii)$ proceeds by factoring the supremum out of the integral; $(iv)$ holds because of the linearity of the expectation and ${e^{\alpha T} - 1}<e^{\alpha T}$. For $T4$, we have:

\begin{equation}\label{eq_boundt4}
  \begin{split}
      T4 & = \int_0^t e^{\alpha s} \left\langle{ \left(1 - \gamma\right)} v_* - u\nabla f\left( x_* \right), {(x_s + v_s)-(x_* + v_*)}\right\rangle ds\\
      & \stackrel{(i)}{\leq} \int_0^t e^{\alpha s} \left\| \left(\gamma-1 \right) v_* + u\nabla f\left( x_* \right) \right\|_2\left\| {(x_s + v_s) - (x_* + v_*)}\right\|_2 ds \\
      & \stackrel{(ii)}{\leq} \int_0^t e^{\alpha s} \left\| u\nabla f\left( x_* \right) \right\|_2\left\| {(x_s + v_s) - (x_* + v_*)}\right\|_2 ds \\
      & \stackrel{(iii)}{\leq} \frac{u^2L}{m}\int_0^t e^{\alpha s} \left\| \nabla f\left( x_* \right) \right\|_2^2 + \frac{m}{4L}\int_0^t e^{\alpha s} \| {(x_s + v_s)-(x_* + v_*)}\|_2^2 ds \\
      & \leq \frac{u^2L}{m\alpha} e^{\alpha t} \left\| \nabla f\left( x_* \right) \right\|_2^2 ds + \frac{m}{4L}\int_0^t e^{\alpha s} \| {(x_s + v_s)-(x_* + v_*)}\|_2^2 ds \\
  \end{split}
\end{equation}
where $(i)$ is from Cauchy-Schwartz inequality and the selection of $\gamma$ ($\gamma \ge 1$) demonstrated in Lemma \ref{lemma_contraction}, $(ii)$ proceeds by the choice of $\left\|v_*\right\|_2=0$, and $(iii)$ is obtained by Young's inequality for products. We finally bound $T5$ as follows:

\begin{equation}\label{eq_boundt5}
  \begin{split}
      T5 & = \int_0^t e^{\alpha s} \left\langle -\nabla\log\pi(x_s), {(x_s + v_s)-(x_* + v_*)}\right\rangle ds\\
      & \stackrel{(i)}{\leq} \frac{\log B}{\alpha}\left(e^{\alpha t} - 1\right) \\
      & \stackrel{}{\leq} \frac{\log B}{\alpha}e^{\alpha t} \\
  \end{split}
\end{equation}
where $(i)$ is from Proposition \ref{prop_prior}. Incorporating the upper bound of $T1$-$T5$ into \eqref{eq_ito_decompose}, we can now express  the upper bound for $V\left( x_t, v_t, t\right)$ as:

\begin{equation}\label{eq_vt_derive}
  \begin{split}
      V\left( x_t, v_t, t\right) & = \frac{1}{2}e^{\alpha t} \left\|(x_t, x_t+v_t) - (x_*, x_* + v_*)\right\|^2_2\\
      & \leq \frac{\alpha}{2}\int_0^t e^{\alpha s} \left(\left\| x_s - x_*\right\|^2_2 +  \left\| {(x_s + v_s) - (x_* + v_*)}\right\|^2_2\right)  ds \\
      & \ \ \ \ \ \ \ \ -  \frac{m}{2L}\int_0^t e^{\alpha s}\left(\left\|x_s - x_*\right\|^2_2 + \left\|(x_s + v_s) - (x_* + v_*)\right\|^2_2\right) ds \\
      & \ \ \ \ \ \ \ \ +  \frac{2\gamma ud}{n\rho\alpha}e^{\alpha t}  + 2\sqrt {\frac{\gamma  u}{n\rho}}M_t + \frac{u^2L}{m\alpha} e^{\alpha t} \left\| \nabla f\left( x_* \right) \right\|_2^2 \\
      & \ \ \ \ \ \ \ \ + \frac{m}{4L}\int_0^t e^{\alpha s} \| {(x_s + v_s)-(x_* + v_*)}\|_2^2 ds  + \frac{u\log B}{n\alpha}e^{\alpha t} \\
      & \stackrel{(i)}{\leq} \frac{4\gamma udL}{n\rho m}e^{\alpha t}  + 2\sqrt {\frac{\gamma  u}{n\rho}}M_t + \frac{2u^2L^2}{m^2} e^{\alpha t} \left\| \nabla f\left( x_* \right) \right\|_2^2 + \frac{2uL\log B}{nm}e^{\alpha t} \\
      & \stackrel{}{=} \left(\frac{4\gamma udL}{n\rho m} + \frac{2u^2L^2}{m^2}\left\| \nabla f\left( x_* \right) \right\|_2^2 + \frac{2uL}{nm}\log B \right) e^{\alpha t}  + 2\sqrt {\frac{\gamma  u}{n\rho}}M_t,
  \end{split} 
\end{equation}

where $(i)$ holds by the selection of {$\alpha = \frac{m}{2L}$} and $\left\|x_t - x_*\right\|^2_2 \geq 0$. We now turn our attention to a detailed exploration of the moment of $V(x_t, v_t, t)$:

\begin{equation}\label{eq_moment_energy}
  \begin{split}
      \mathbb E \left[\left(\sup\limits_{0\leq t\leq T} V(x_t, v_t, t) \right)^p \right]^{\frac{1}{p}} & \stackrel{}{\leq}
      \mathbb E \left[\left(\sup\limits_{0\leq t\leq T} \frac{1}{2}e^{\alpha t} \left(\left\|x_t - x_*\right\|^2_2 + \left\|(x_t+v_t) - (x_* + v_*)\right\|^2_2 \right) \right)^p\right]^{\frac{1}{p}} \\
       & \leq \mathbb E \left[\left(\sup\limits_{0\leq t\leq T}\left(\frac{4\gamma udL}{n\rho m} + \frac{2u^2L^2}{m^2}\left\| \nabla f\left( x_* \right) \right\|_2^2 + \frac{2uL}{nm}\log B \right) e^{\alpha t}+2\sqrt{\frac{\gamma  u}{n\rho}}\left| {{M_t}} \right| \right)^p\right]^\frac{1}{p}\\
       & \stackrel{(i)}{\leq} \underbrace{\mathbb E \left[\sup\limits_{0\leq t\leq T}\left[\left(\frac{4\gamma udL}{n\rho m} + \frac{2uL}{nm}\log B \right)e^{\alpha t} \right]^p\right]^\frac{1}{p}}_{T1} + \underbrace{\mathbb E \left[\sup\limits_{0\leq t\leq T}\left[\frac{2u^2L^2}{m^2}\left\| \nabla f\left( x_* \right) \right\|_2^2 e^{\alpha t} \right]^p\right]^\frac{1}{p}}_{T2}\\
       & \ \ \ \ \ \ \ \ + \underbrace{\mathbb E \left[\left(\sup\limits_{0\leq t\leq T}2\sqrt {\frac{\gamma  u}{n\rho}}\left| {{M_t}} \right|\right)^p\right]^\frac{1}{p}}_{T3}, \\
  \end{split}
\end{equation}

where $(i)$ is from Minkowski inequality \citep{yosida1967functional}. We first bound $T1$:

\begin{equation}\label{eq_moment_const}
  \begin{split}
      T1 & \stackrel{}{=} \mathbb E \left[\sup\limits_{0\leq t\leq T}\left[\left(\frac{4\gamma udL}{n\rho m} + \frac{2uL}{nm}\log B \right)e^{\alpha t} \right]^p\right]^\frac{1}{p}\\
       & \leq \left(\frac{4\gamma udL}{n\rho m} + \frac{2uL}{nm}\log B \right)e^{\alpha T}.\\
  \end{split}
\end{equation}

We proceed to bound $T2$:
\begin{equation}\label{eq_moment_t2}
  \begin{split}
      T2 & = \mathbb E \left[\sup\limits_{0\leq t\leq T}\left[\left(\frac{2u^2L^2}{m^2} \|{\nabla f\left( x_* \right)}\|_2^2 \right)e^{\alpha t} \right]^p\right]^\frac{1}{p} \\
       & \leq \frac{2u^2L^2}{m^2} \mathbb E \left[\|{\nabla f\left( x_* \right)}\|_2^{2p} \right]^\frac{1}{p}e^{\alpha T}.\\
       & \stackrel{(i)}{\leq}  \frac{16u^2L^4dp}{m^2n\nu} e^{\alpha T}, \\
  \end{split}
\end{equation}
where $(i)$ proceeds because from Proposition \ref{prop_likelihood}, we know that $\|{\nabla f\left( x_* \right)}\|_2$ is a $L\sqrt{\frac{d}{n \nu }}$-sub-Gaussian vector, and its corresponding expectation follows
\begin{equation*}
  \mathbb E \left[\|{\nabla f\left( x_* \right)}\|_2^{2p}\right]^{\frac{1}{p}} \leq \left(2L\sqrt{\frac{2dp}{n\nu}}\right)^2.
\end{equation*}

We then bound $T3$:
\begin{equation}\label{eq_moment1}
  \begin{split}
      T3 & \stackrel{}{=} \mathbb E \left[\left(\sup\limits_{0\leq t\leq T}2\sqrt {\frac{\gamma  u}{n\rho}}\left| {{M_t}} \right|\right)^p\right]^\frac{1}{p} \\
      & \stackrel{(i)}{\leq} \mathbb E\left[\left(\frac{64\gamma upL}{n\rho m}e^{\alpha T}\right)^{\frac{p}{2}}\left(\sup\limits_{0\leq t\leq T}  { \frac{1}{2}e^{\alpha t}\left\|(x_t+v_t) - (x_* + v_*)\right\|_2^2}\right)^{\frac{p}{2}}\right]^{\frac{1}{p}}\\
       & \stackrel{(ii)}{\leq} \mathbb E\left[2^{p-2}\left(\frac{64\gamma upL}{n\rho m}e^{\alpha T}\right)^{{p}}+\frac{1}{2^{p}}\left(\sup\limits_{0\leq t\leq T}  { \frac{1}{2}e^{\alpha t}\left\|(x_t+v_t) - (x_* + v_*)\right\|_2^2}\right)^{{p}}\right]^{\frac{1}{p}}\\
       & \stackrel{(iii)}{\leq} 128\mathbb E\left[\left(\frac{\gamma upL}{n\rho m}e^{\alpha T}\right)^{{p}}\right]^{\frac{1}{p}} + \frac{1}{2}\mathbb E\left[\left(\sup\limits_{0\leq t\leq T}  { \frac{1}{2}e^{\alpha t}\left\|(x_t+v_t) - (x_* + v_*)\right\|_2^2}\right)^{{p}}\right]^{\frac{1}{p}}\\
       & \stackrel{}{=} \frac{128\gamma upL}{n\rho m}e^{\alpha T} + \frac{1}{2}\mathbb E\left[\left(\sup\limits_{0\leq t\leq T}  { \frac{1}{2}e^{\alpha t}\left\|(x_t+v_t) - (x_* + v_*)\right\|_2^2}\right)^{p}\right]^{\frac{1}{p}},\\
  \end{split}
\end{equation}
where $(i)$ is from \eqref{eq_mt}; inequality $(ii)$ follows Young's inequality; $(iii)$ proceeds by Minkowski inequality, the linearity of the expectation, and $2^{\frac{p-2}{p}}\leq 2$. By incorporating \eqref{eq_moment_const}-\eqref{eq_moment1} into \eqref{eq_moment_energy}, we can get

\begin{equation}\label{eq_moment_energy2}
  \begin{split}
      \mathbb E \left[\left(\sup\limits_{0\leq t\leq T} V\left( x_t, v_t, t\right) \right)^p \right]^{\frac{1}{p}} & \leq T1 + T2 + T3        \\
      & \stackrel{(i)}{\leq} \left(\frac{4\gamma udL}{n\rho m} + \frac{2uL}{nm}\log B + \frac{16u^2L^4dp}{m^2n\nu} +\frac{128\gamma upL}{n\rho m}\right)e^{\alpha T} \\
      & \ \ \ \ \ \ \ \  + \frac{1}{2}\mathbb E \left[\left( \sup\limits_{0\leq t\leq T}\frac{1}{2} {e^{\alpha t}} \left\| (x_t+v_t) - (x_* + v_*) \right\|_2^2 \right)^{p} \right]^{\frac{1}{p}} \\
      & \stackrel{(ii)}{\leq} \left(\frac{4\gamma udL}{n\rho m} + \frac{2uL}{nm}\log B + \frac{16u^2L^4dp}{m^2n\nu} +\frac{128\gamma upL}{n\rho m}\right)e^{\alpha T} \\
      & \ \ \ \ \ \ \ \  + \frac{1}{2}\mathbb E \left[\left( \sup\limits_{0\leq t\leq T}\frac{1}{2} {e^{\alpha t}} \left\|(x_t,x_t+v_t) - (x_*,x_* + v_*)\right\|_2^2 \right)^{p} \right]^{\frac{1}{p}} ,
  \end{split}
\end{equation}
where $(i)$ follows \eqref{eq_moment_const}-\eqref{eq_moment1}, $(ii)$ holds because $ {e^{\alpha t}} \left\|x_t - x_*\right\|_2^2 \geq 0$. From \eqref{eq_moment_energy} we know that the last term on the right-hand side of \eqref{eq_moment_energy2} holds the equality 
\begin{equation*}
	\frac{1}{2}\mathbb E \left[\left(\sup\limits_{0\leq t\leq T} \frac{1}{2}e^{\alpha t} \left(\left\|x_t - x_*\right\|^2_2 + \left\|(x_t+v_t) - (x_* + v_*)\right\|^2_2 \right) \right)^p\right]^{\frac{1}{p}}=\frac{1}{2}\mathbb E \left[\left(\sup\limits_{0\leq t\leq T} V(x_t, v_t, t) \right)^p \right]^{\frac{1}{p}},
\end{equation*}
and thus we derive the following bounds:

\begin{equation}\label{eq_moment_energy3}
  \begin{split}
      \mathbb E \left[\left(\sup\limits_{0\leq t\leq T} V\left( x_t, v_t, t\right) \right)^p \right]^{\frac{1}{p}} & \leq \left(\frac{8\gamma udL}{n\rho m} + \frac{4uL}{nm}\log B + \frac{32u^2L^4dp}{m^2n\nu} +\frac{256\gamma upL}{n\rho m}\right)e^{\alpha T}\\
      & \stackrel{(i)}{=} \left(\frac{16d}{n\rho m}+\frac{4}{nm}\log B+\frac{32L^2dp}{m^2n\nu}+\frac{512p}{n\rho m}\right)e^{\alpha T}\\
      & \stackrel{(ii)}{=} \frac{2}{mn}\left(D+\Omega p\right)e^{\alpha T},
  \end{split}
\end{equation}
where $(i)$ is by the choice of $\gamma=2$, $u=\frac{1}{L}$ from Lemma \ref{lemma_contraction}, and $(ii)$ is by defining $D=\frac{8d}{\rho }+{2}\log B$ and $\Omega=\frac{16L^2d}{m\nu}+\frac{256}{\rho}$. With the defined bound on the moments of $V\left(x_t, v_t, t\right)$'s supremum and considering the known expression for $V(x, v, t)$, we proceed to determine the $p$-th moments of $\left\|(x_T, x_t+v_t)-(x_*, x_* + v_*)\right\|$:
\begin{equation}
  \begin{split}
      \mathbb{E}\left[\left\|(x_T, x_T+ v_T)-(x_*, x_* + v_*)\right\|^p\right]^{\frac{1}{p}} & =\mathbb{E}\left[e^{-\frac{p \alpha T}{2}} V\left(x_T, v_T, T\right)^{\frac{p}{2}}\right]^{\frac{1}{p}} \\
      & \stackrel{(i)}{\leq} \mathbb{E}\left[e^{-\frac{p \alpha T}{2}}\left(\sup _{0 \leq t \leq T} V\left(x_t, v_t, t\right)\right)^{\frac{p}{2}}\right]^{\frac{1}{p}} \\
      & =e^{-\frac{\alpha T}{2}}\left(\mathbb{E}\left[\left(\sup _{0 \leq t \leq T} V\left(x_t, v_t, t\right)\right)^{\frac{p}{2}}\right]^{\frac{2}{p}}\right)^{\frac{1}{2}} \\
      & \stackrel{(ii)}{\leq} e^{-\frac{\alpha T}{2}}\left[\frac{2}{mn}\left(D+\Omega p\right)e^{\alpha T}\right]^{\frac{1}{2}} \\
      & =\sqrt{\frac{2}{mn}\left(D+\Omega p\right)},
  \end{split}
\end{equation}
where $(i)$ holds by taking supremum of $V(\cdot, \cdot, \cdot)$, and $(ii)$ is from \eqref{eq_moment_energy3}. Upon taking the limit for $T \rightarrow \infty $ and by referring to Fatou's Lemma \citep{brezis2011functional}, we can upper bound the moments of $\mathbb{E}\left[\left\|x - x_*\right\|^p\right]^{\frac{1}{p}}$ with a confidence of $1-\delta_1$:
\begin{equation}
  \begin{split}
      \mathbb{E}\left[\left\|x_T - x_*\right\|^p\right]^{\frac{1}{p}} & \leq \mathbb{E}\left[\left\|(x_T, x_T+ v_T)-(x_*, x_* + v_*)\right\|^p\right]^{\frac{1}{p}} \\ 
      & \leq \sqrt{\frac{2}{mn}\left(D+\Omega p\right)}.
  \end{split}
\end{equation}
Taking the limit as $T \rightarrow \infty$ and using Fatou's Lemma \citep{brezis2011functional}, we therefore have that the moments of $\mathbb{E}\left[\left\|x - x_*\right\|^p\right]^{\frac{1}{p}}$, with probability at least $1-\delta_1$:
\begin{equation}\label{eq_bound2}
  \begin{split}
      \mathbb{E}\left[\left\|x - x_*\right\|^p\right]^{\frac{1}{p}} & \leq \lim \inf _{T \rightarrow \infty} \mathbb{E}\left[\left\|x_T - x_*\right\|^p\right]^{\frac{1}{p}} \\
      & =\sqrt{\frac{2}{mn}\left(D+\Omega p\right)}.
  \end{split}
\end{equation}
By employing Markov's inequality \citep{boucheron2013concentration}, we further adapt our bound as follows:
\begin{equation}
  \begin{split}
      \mathbb{P}_{{x} \sim \mu_a^{(n)}}\left(\left\|x - x_*\right\|\geq\epsilon\right) & \leq \frac{\mathbb{E}\left[\left\|x - x_*\right\|^p\right]}{\epsilon^p} \\
      & \leq\left(\frac{1}{\epsilon}\sqrt{\frac{2}{mn}\left(D+\Omega p\right)}\right)^p .
  \end{split}
\end{equation}

By setting $p=2 \log 1 / \delta_1$ and defining $\epsilon=\sqrt{\frac{2e}{mn}\left(D+\Omega p\right)}$, we derive the required bound:

\begin{equation}\label{eq_posterior_derivation2}
  \mathbb{P}_{{x} \sim \mu_a^{(n)}\left[\rho_a\right]}\left(\left\|x - x_*\right\|_2\geq \sqrt{\frac{2e}{mn}\left(D+2\Omega \log 1 / \delta_1\right)}\right)\leq\delta_1,
\end{equation}

{for $\delta_1 \leq e^{-0.5}$.}

\end{proof}

\subsection{Regret Analysis of Thompson Sampling}\label{section_regret_general_ts}

After establishing the contraction results for the posterior distributions, we next delve into the analysis of regret, building upon the foundations laid by the concentration results. Specifically, under the assumptions we have made, we demonstrate that using Thompson sampling with samples drawn from the posteriors can achieve optimal regret guarantees within a finite time frame. To elucidate this, we employ a typical approach used in regret proofs associated with Thompson sampling \citep{lattimore2020bandit}. Our primary goal is to quantify the number of times, denoted as $\mathcal L_a(N)$, the sub-optimal arm is selected up to time $N$. 

For clarity and simplicity in our discourse, we assume, without the loss of generality, that the first arm is optimal. The filtration corresponding to an execution of the algorithm is denoted as $\mathcal F_{n} = \sigma\left(A_1, {\mathcal R}_1, A_2, {\mathcal R}_2, A_3, {\mathcal R}_3, \cdots, A_{n}, {\mathcal R}_{n}\right)$, where it is a $\sigma$-algebra generated by Algorithm \ref{algorithm_thompson} after playing $n$ times. We also denote an event $\mathcal E_a(n)=\left\{{\mathcal R}_{a, n}\geq \bar{{\mathcal R}}_1-\epsilon\right\}$ to indicate the estimated reward of arm $a$ at round $n$ exceeds the expected reward of optimal arm by at least a positive constant $\epsilon$. We also define its probability as $\mathtt P\left(\mathcal E_a(n)|\mathcal F_{n-1}\right) = \mathcal G_{a, n}$. Next, we analyze the expected number of times that suboptimal arms are played by breaking it down into two parts:

\begin{equation}\label{equation_sub_play}
\begin{split}
\mathbb{E}\left[\mathcal L_a(N)\right]&=\mathbb{E}\left[\sum_{n=1}^N \mathbb{I}\left(A_n=a\right)\right]\\
&=\mathbb{E}\left[\sum_{n=1}^N \mathbb{I}\left(A_n=a, \mathcal E_a(n)\right)\right]+\mathbb{E}\left[\sum_{n=1}^N \mathbb{I}\left(A_n=a, \mathcal E_a^c(n)\right)\right].
\end{split}
\end{equation}

We now turn our attention to the proof of the upper bounds for the terms appearing in \eqref{equation_sub_play}.

\begin{lemma}\label{lemma_suboptimal1}

For a sub-optimal arm $a\in \mathcal A$, we have an upper bound as outlined below:

\begin{equation}\label{eq_bound_optarm1}
\mathbb{E}\left[\sum_{n=1}^N \mathbb{I}\left(A_n=a, \mathcal E_a^c(n)\right)\right]\leq \mathbb{E}\left[\sum_{s=0}^{N-1} \left(\frac{1}{\mathcal G_{1, s}} - 1\right)\right],
\end{equation}
where $\mathcal G_{1, n}\coloneqq \mathtt P\left({{\mathcal R}}_{1, n}>\bar{{\mathcal R}}_1-\epsilon|\mathcal F_{n-1}\right)$, for some $\epsilon >0$.
\end{lemma}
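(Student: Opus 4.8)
The plan is to follow the classical interleaving argument for Thompson sampling (as in \cite{lattimore2020bandit}), exploiting that, conditioned on the filtration $\mathcal F_{n-1}$, the per-arm samples are drawn independently and that the optimal arm's posterior---hence its sampling distribution---is frozen between successive plays of arm $1$.

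First I would reduce the ``failure'' event to a statement about arm $1$ alone. On the event $\{A_n = a, \mathcal E_a^c(n)\}$, arm $a$ is the $\arg\max$, so its sampled value dominates that of arm $1$; combined with $\mathcal E_a^c(n) = \{\mathcal R_{a,n} < \bar{\mathcal R}_1 - \epsilon\}$ this forces $\mathcal R_{1,n} < \bar{\mathcal R}_1 - \epsilon$. Thus whenever a suboptimal arm is chosen under $\mathcal E_a^c(n)$, the optimal arm's own sample has fallen below the threshold and arm $1$ is not played in round $n$.

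The crux is a conditional comparison between the probabilities of playing $a$ and playing $1$. I would condition on $\mathcal F_{n-1}$ together with the samples of every arm other than $1$, and write $c := \max_{b\neq 1}\mathcal R_{b,n}$ for the best competing sample; given this information the decision is governed solely by the fresh draw $\mathcal R_{1,n}$, with $\{A_n=a\}\cap\{\mathcal R_{a,n}=c\}=\{\mathcal R_{1,n}<c\}$ and the corresponding event for arm $1$ being $\{\mathcal R_{1,n}>c\}$. On $\{\mathcal E_a^c(n),\,\mathcal R_{a,n}=c\}$ we have $c<\bar{\mathcal R}_1-\epsilon$, hence $\mathtt P(\mathcal R_{1,n}<c\mid\mathcal F_{n-1})\leq 1-\mathcal G_{1,n}$ while $\mathtt P(\mathcal R_{1,n}>c\mid\mathcal F_{n-1})\geq \mathcal G_{1,n}$; taking the ratio gives the pointwise bound
\[
\mathtt P\!\left(A_n=a,\mathcal E_a^c(n)\mid\mathcal F_{n-1}\right)\leq \frac{1-\mathcal G_{1,n}}{\mathcal G_{1,n}}\,\mathtt P\!\left(A_n=1\mid\mathcal F_{n-1}\right).
\]
Summing over $n$, applying the tower property, and using that $(1/\mathcal G_{1,n})-1$ is $\mathcal F_{n-1}$-measurable yields $\mathbb E[\sum_n \mathbb I(A_n=a,\mathcal E_a^c(n))]\leq \mathbb E[\sum_n (\tfrac{1}{\mathcal G_{1,n}}-1)\,\mathbb I(A_n=1)]$.

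Finally I would re-index the right-hand side by the number of prior pulls of arm $1$: since arm $1$'s posterior updates only when arm $1$ is played, $\mathcal G_{1,n}$ is constant across all rounds in which arm $1$ has been pulled the same number $s$ of times, so $\mathbb I(A_n=1)$ selects exactly one round for each $s=0,1,\ldots,\mathcal L_1(N)-1$. Using $\mathcal L_1(N)\leq N$ and $(1/\mathcal G_{1,s})-1\geq 0$ then extends the sum up to $s=N-1$, giving the stated bound. The main obstacle is the conditional comparison step: making the coupling on the other arms' samples rigorous---handling ties (a null event under continuous posteriors) and verifying that $\{\mathcal E_a^c(n),\mathcal R_{a,n}=c\}$ is measurable with respect to the enlarged conditioning---so that the ratio of arm $1$'s ``loss'' and ``win'' probabilities is cleanly dominated by $(1-\mathcal G_{1,n})/\mathcal G_{1,n}$.
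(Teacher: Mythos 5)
Your proposal is correct and follows essentially the same route as the paper's proof: both reduce $\{A_n=a,\mathcal E_a^c(n)\}$ to the event that arm $1$'s sample falls below $\bar{\mathcal R}_1-\epsilon$, exploit conditional independence of the per-arm samples given $\mathcal F_{n-1}$ to obtain the pointwise bound $\mathbb{P}\left(A_n=a,\mathcal E_a^c(n)\,\middle|\,\mathcal F_{n-1}\right)\leq \left(\tfrac{1}{\mathcal G_{1,n}}-1\right)\mathbb{P}\left(A_n=1\,\middle|\,\mathcal F_{n-1}\right)$, and then sum via the tower property and re-index by the pull count of arm $1$. Your conditioning on the competing arms' sampled values (with the threshold $c$) is just a disintegrated presentation of the paper's factorization through the auxiliary arm $A'_n$, so the two arguments coincide in substance.
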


\begin{proof}
It should be noted that $A_n=\text{argmax}_{a\in \mathcal A} \left\langle \alpha_a, x_{a, n}\right\rangle$ is the arm that has the highest sample reward at round $n$. Additionally, we set arm $A'_n=\text{argmax}_{a\in \mathcal A, a\neq 1} \left\langle \alpha_a, x_{a, n}\right\rangle$ to be the one that attains the maximum sample reward except for the optimal arm $A_n$. It is worth noting that the following inequality holds:

\begin{equation}\label{eq_bound_optarm2}
\begin{split}
\mathbb{P}\left(A_n=a, \mathcal E_a^c(n) | \mathcal{F}_{n-1}\right) & \stackrel{(i)}{\leq} \mathbb{P}\left(A'_n=a, \mathcal E_a^c(n) , {\mathcal R}_{1, n} < \bar{{\mathcal R}}_1-\epsilon | \mathcal{F}_{n-1}\right) \\
& \stackrel{(ii)}{=} \mathbb{P}\left(A'_n=a, \mathcal E_a^c(n) | \mathcal{F}_{n-1}\right)\mathbb{P}\left({\mathcal R}_{1, n} < \bar{{\mathcal R}}_1-\epsilon | \mathcal{F}_{n-1}\right) \\
& =\mathbb{P}\left(A_n^{\prime}=a, \mathcal E_a^c(n) | \mathcal{F}_{n-1}\right)\left(1-\mathbb{P}\left(\mathcal E_1(n) | \mathcal{F}_{n-1}\right)\right)\\
& \stackrel{(iii)}{\leq} \frac{\mathbb{P}\left(A_n=1, \mathcal E_a^c(n) | \mathcal{F}_{n-1}\right)}{\mathbb{P}\left(\mathcal E_1(n) | \mathcal{F}_{n-1}\right)}\left(1-\mathbb{P}\left(\mathcal E_1(n) | \mathcal{F}_{n-1}\right)\right) \\
& \stackrel{(iv)}{\leq} \mathbb{P}\left(A_n=1 | \mathcal{F}_{n-1}\right)\left(\frac{1}{\mathbb{P}\left(\mathcal E_1(n) | \mathcal{F}_{n-1}\right)}-1\right)\\
& = \mathbb{P}\left(A_n=1 | \mathcal{F}_{n-1}\right) \left(\frac{1}{\mathcal G_{1, n}}-1\right),
\end{split}
\end{equation}
where $(i)$ holds because event $\left\{A_n=a, \mathcal E_a^c(n)\right\} \subseteq \left\{A'_n=a, \mathcal E_a^c(n) , {\mathcal R}_{1, n}<\bar{{\mathcal R}}_1-\epsilon\right\}$ given $\mathcal{F}_{n-1}$, $(ii)$ is valid because events $\left\{A'_n=a, \mathcal E_a^c(n)\right\}$ and $\left\{{\mathcal R}_{1, n}<\bar{{\mathcal R}}_1-\epsilon\right\}$ are independent, $(iii)$ is from $\mathbb{P}\left(A'_n=a, \mathcal E_a^c(n) | \mathcal{F}_{n-1}\right) \mathbb{P}\left(\mathcal E_1(n) | \mathcal{F}_{n-1}\right) \leq \mathbb{P}\left(A_n=1, \mathcal E_a^c(n) | \mathcal{F}_{n-1}\right)$ since $\left\{A'_n=a, \mathcal E_a^c(n), \mathcal E_1^c(n)\right\}\subseteq \left\{A_n=1, \mathcal E_a^c(n), \mathcal E_1^c(n)\right\}$. The fact that $\left\{A_n=1, \mathcal E_a^c(n)\right\}\subseteq \left\{A_n=1\right\}$ leads to the inequality $(iv)$. Now by summing the result in \eqref{eq_bound_optarm2} from $n=1$ to $N$ we have:

\begin{equation}\label{eq_bound_optarm3}
\begin{split}
\mathbb{E}\left[\sum_{n=1}^N \mathbb{I}\left(A_n=a, \mathcal E_a^c(n)\right)\right] & \stackrel{(i)}{\leq} \mathbb{E}\left[\sum_{n=1}^N \mathbb{P}\left(A_n=1 | \mathcal{F}_{n-1}\right)\left(\frac{1}{\mathcal G_{1, \mathcal L_1(n-1)}}-1\right)\right] \\
& \stackrel{(ii)}{=} \mathbb{E}\left[\sum_{n=1}^N \mathbb{I}\left(A_n=1\right)\left(\frac{1}{\mathcal G_{1, \mathcal L_1(n-1)}}-1\right)\right] \\
& \stackrel{(iii)}{\leq} \mathbb{E}\left[\sum_{s=0}^{N-1} \left(\frac{1}{\mathcal G_{1, s}}-1\right)\right],
\end{split}
\end{equation}
where $(i)$ holds because of the law of total expectation and \eqref{eq_bound_optarm2}, $(ii)$ is also derived from the law of total expectation, and $(iii)$ is valid due to the constraint that, for a given $s\in \llbracket 1, N \rrbracket$, both $\mathbb{I}\left(A_n=1\right)$ and $\mathbb{I}\left(\mathcal L_1(n)=s\right)$ can simultaneously hold true at most once.

\end{proof} 

\begin{lemma}\label{lemma_suboptimal2}

Considering a sub-optimal arm $a\in \mathcal A$, we derive an upper bound expressed as

\begin{equation}\label{eq_bound_optarm4}
\mathbb{E}\left[\sum_{n=1}^N \mathbb{I}\left(A_n=a, \mathcal E_a(n)\right)\right]\leq 1+ \mathbb{E}\left[\sum_{s=0}^{N-1} \mathbb{I}\left(\mathcal G_{a, s}>\frac{1}{N}\right)\right],
\end{equation}
where $\mathcal G_{a, n}\coloneqq \mathtt P\left({{\mathcal R}}_{a, n}>\bar{{\mathcal R}}_1-\epsilon|\mathcal F_{n-1}\right)$, for a certain $\epsilon >0$.
\end{lemma}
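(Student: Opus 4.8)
The plan is to reduce the round-indexed sum to a sum over the number of pulls $s = \mathcal L_a(n-1)$ of arm $a$, and then to split according to whether the conditional success probability at that stage is large or small relative to $1/N$. The first thing I would do is apply the tower property (law of total expectation), writing $\mathbb{E}\left[\sum_{n=1}^N \mathbb{I}(A_n=a,\mathcal E_a(n))\right] = \mathbb{E}\left[\sum_{n=1}^N \mathbb{P}(A_n=a,\mathcal E_a(n)\mid \mathcal F_{n-1})\right]$, and I would record the two elementary facts that $\mathbb{P}(A_n=a,\mathcal E_a(n)\mid\mathcal F_{n-1})\le \mathbb{P}(\mathcal E_a(n)\mid\mathcal F_{n-1}) = \mathcal G_{a,n}$ and that $\mathcal G_{a,n}$ is $\mathcal F_{n-1}$-measurable. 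The crucial structural observation, which I would state explicitly, is that the sampled reward of arm $a$ is drawn from a posterior depending on the history only through the $s=\mathcal L_a(n-1)$ rewards already collected for arm $a$, so that $\mathcal G_{a,n} = \mathcal G_{a,\mathcal L_a(n-1)}$; this identification is what lets me pass from the round index $n$ to the pull index $s$ and makes the right-hand side of the claim meaningful.

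Next I would insert the partition $1 = \mathbb{I}(\mathcal G_{a,\mathcal L_a(n-1)}>1/N) + \mathbb{I}(\mathcal G_{a,\mathcal L_a(n-1)}\le 1/N)$ inside the sum and treat the two resulting pieces separately. For the large piece I would drop the factor $\mathcal E_a(n)$ and use the fact that, while $\mathcal L_a(n-1)$ stays equal to a fixed $s$, arm $a$ can be selected at most once (a selection increments $\mathcal L_a$); hence $\sum_n \mathbb{I}(A_n=a,\mathcal L_a(n-1)=s)\le 1$, so summing the large piece pathwise over $n$ is bounded by $\sum_{s=0}^{N-1}\mathbb{I}(\mathcal G_{a,s}>1/N)$, and taking expectations gives the stated second term. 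For the small piece I would instead drop the factor $\mathbb{I}(A_n=a)$, take the conditional expectation to turn $\mathbb{I}(\mathcal E_a(n))$ into $\mathcal G_{a,n}=\mathcal G_{a,\mathcal L_a(n-1)}$ (legitimate because $\mathbb{I}(\mathcal G_{a,\mathcal L_a(n-1)}\le 1/N)$ is $\mathcal F_{n-1}$-measurable), and then use the constraint $\mathcal G_{a,\mathcal L_a(n-1)}\le 1/N$ to bound each term by $1/N$; since there are at most $N$ rounds this piece contributes at most $N\cdot(1/N)=1$. Adding the two contributions yields $1+\mathbb{E}\left[\sum_{s=0}^{N-1}\mathbb{I}(\mathcal G_{a,s}>1/N)\right]$, as claimed.

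I expect the main obstacle to be the bookkeeping in the reindexing rather than any analytic estimate. Specifically, one must justify rigorously that the conditional probability $\mathcal G_{a,n}$ depends on the past only through $\mathcal L_a(n-1)$, so that the pull-indexed quantities $\mathcal G_{a,s}$ are well defined and measurable at the right stage, and that for each fixed $s$ the event $\{A_n=a,\mathcal L_a(n-1)=s\}$ occurs for at most one $n$, which is what collapses the round sum into the clean sum over $s$ in the large-probability case. The analytic content — the $1/N$ threshold producing the additive constant $1$ — is routine once these measurability and counting statements are in place, so the measurability/independence step is where the care is needed.
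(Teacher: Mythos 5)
Your proposal is correct and follows essentially the same route as the paper's proof: the partition by whether $\mathcal G_{a,\mathcal L_a(n-1)}$ exceeds $1/N$ is exactly the paper's decomposition via the set $\mathcal N=\{n:\mathcal G_{a,\mathcal L_a(n)}>\frac{1}{N}\}$, your counting argument (arm $a$ selected at most once while $\mathcal L_a(n-1)=s$) is the paper's bound on $T1$, and your conditional-expectation step using $\mathcal F_{n-1}$-measurability to bound the small piece by $N\cdot\frac{1}{N}=1$ is the paper's bound on $T2$. No gaps; the measurability and reindexing points you flag as needing care are handled exactly as you describe.
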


\begin{proof}
The derivation of the upper bound for Lemma \ref{lemma_suboptimal2} is identical to that presented in \citet{agrawal2012analysis,lattimore2020bandit}, and we revisit this proof herein for comprehensive exposition. With a defined set $\mathcal N=\left\{n: \mathcal G_{a, \mathcal L_{a}(n)}>\frac{1}{N}\right\}$, we decompose the expression in \eqref{eq_bound_optarm4} into two constituent terms:
\begin{equation}\label{eq_suboptimal_lemma2}
\begin{split}
  \mathbb{E}\left[\sum_{n=1}^N \mathbb{I}\left(A_n=a, \mathcal E_a(n)\right)\right]& \leq \underbrace{\mathbb{E}\left[\sum_{n\in\mathcal N} \mathbb{I}\left(A_n=a\right)\right]}_{T1} + \underbrace{\mathbb{E}\left[\sum_{n\notin\mathcal N} \mathbb{I}\left( \mathcal E_a(n)\right)\right]}_{T2}\\
\end{split}
\end{equation}

For term $T1$ , we have:
\begin{equation*}
	\begin{split}
		T1 & = \sum_{n=1}^N \mathbb{I}\left(A_n=a\right) \\
		& \stackrel{(i)}{\leq} \sum_{n=1}^N \sum_{s=1}^N \mathbb{I}\left(\mathcal L_a(n)=s, \mathcal L_a(n-1)=s-1, \mathcal G_{a, \mathcal L_{a}(n-1)}>\frac{1}{N}\right) \\
		& =\sum_{s=1}^N \mathbb{I}\left(\mathcal G_{a, s-1}>\frac{1}{N}\right) \sum_{n=1}^N \mathbb{I}\left(\mathcal L_a(n)=s, \mathcal L_a(n-1)=s-1\right) \\
		& \stackrel{(ii)}{=} \sum_{s=1}^N \mathbb{I}\left(\mathcal G_{a, s-1}>\frac{1}{N}\right),
	\end{split}
\end{equation*}

where $(i)$ uses the fact that when $A_n=a, \mathcal L_a(n)=s$ and $\mathcal L_a(n-1)=s-1$ for some $s \in \llbracket 1, N \rrbracket$ and $n \in \llbracket 1, N \rrbracket$ implies that $G_{a, s-1}>1 / N$, then $(ii)$ holds because for any $s \in \llbracket 1, N \rrbracket$, there is at most one time point $n \in \llbracket 1, N \rrbracket$ such that both $\mathcal L_a(n)=s$ and $\mathcal L_a(n-1)=s-1$ hold true. For the next inequality, note that
\begin{equation*}
	\begin{split}
		\mathbb{E}\left[\sum_{n \notin \mathcal{N}} \mathbb{I}\left(\mathcal E_a(n)\right)\right] & =
			\sum_{n \notin \mathcal{N}} \mathbb{E}\left[\mathbb{E}\left[\mathbb{I}\left(\mathcal E_a(n), \mathcal G_{a,\mathcal L_a(n-1)} \leq \frac{1}{N}\right) | \mathcal{F}_{n-1}\right]\right] \\
		& \stackrel{(i)}{=} \sum_{n \notin \mathcal{N}} \mathbb{E}\left[\mathcal G_{a,\mathcal L_a(n-1)} \mathbb{I}\left(\mathcal G_{a,\mathcal L_a(n-1)}  \leq \frac{1}{N}\right) \right] \\
		& \leq \sum_{n \notin \mathcal{N}} \mathbb{E}\left[\frac{1}{N}\mathbb{I}\left(\mathcal G_{a,\mathcal L_a(n-1)}  \leq \frac{1}{N}\right)\right] \\
		& \leq \mathbb{E}\left[\sum_{n \notin \mathcal{N}} \frac{1}{N}\right] \\
		& \leq 1,
	\end{split}
\end{equation*}
where $(i)$ stems from the fact that $\mathbb{I}\left(\mathcal G_{a,\mathcal L_a(n-1)} \leq \frac{1}{N}\right)$ is $\mathcal{F}_{n-1}$-measurable and the following equality holds:
\begin{equation*}
	\mathbb{E}\left[\mathbb{I}\left(\mathcal E_a(n)\right) | \mathcal{F}_{n-1}\right]=1-\mathbb{P}\left({\mathcal R}_{a, n} < \bar {{\mathcal R}}_1-\epsilon | \mathcal{F}_{n-1}\right)=\mathcal G_{a, \mathcal L_a(n-1)}.
\end{equation*}

By employing the upper bounds established for terms $T1$ and $T2$ as delineated in \eqref{eq_suboptimal_lemma2}, we achieve the targeted result in \eqref{eq_bound_optarm4}.
\end{proof}

\subsection{Regret Analysis of Exact Thompson Sampling}\label{section_regret_exact_ts}

Drawing from insights in Lemmas \ref{lemma_suboptimal1} and \ref{lemma_suboptimal2}, we delve deeper to derive the upper bound in the context of exact Thompson sampling. To derive the bound, we introduce two fundamental lemmas, which are crucial for the definitive proof of total expected regrets. Notably, our first lemma prescribes a probability threshold for an arm being thoroughly explored, as a function of prior quality.

\begin{lemma}\label{lemma_exact_smallT}
Suppose the likelihood satisfies Assumption \ref{assumption_likelihood}, and the prior satisfies Assumption \ref{assumption_prior}. Then for every $n=1, \cdots, N$ and with the selection of $\rho_1=\frac{\nu_1 m_1^2}{8 d L_1^3}$, we will have:
\begin{equation*}
	\mathbb{E}\left[\frac{1}{\mathcal G_{1, n}}\right] \leq 16 \sqrt{\kappa_1 B_1}.
\end{equation*}

\end{lemma}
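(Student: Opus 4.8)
The plan is to read $\mathcal G_{1,n}$ as the mass that the scaled posterior $\mu_1^{(n)}[\rho_1]$ assigns to the optimistic half-space and then to lower-bound that mass before averaging over the history. Writing $\bar{\mathcal R}_1 = \langle \alpha_1, x_*\rangle$ for the optimal expected reward at the concentration point, the optimism event is $\{x : \langle \alpha_1, x - x_*\rangle \geq -\epsilon\}$, so that $\mathcal G_{1,n} = \mathbb{P}_{x \sim \mu_1^{(n)}[\rho_1]}\!\left(\langle \alpha_1, x - x_*\rangle \geq -\epsilon \mid \mathcal F_{n-1}\right)$. Since $1/\mathcal G_{1,n}$ is $\mathcal F_{n-1}$-measurable, the quantity to control is $\mathbb{E}[1/\mathcal G_{1,n}]$, where the outer expectation runs over the data history that fixes the posterior's center. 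The whole difficulty is that this center can drift below $x_*$ along $\alpha_1$, making $\mathcal G_{1,n}$ small; the bound $16\sqrt{\kappa_1 B_1}$ says this drift is controlled by the condition number and the prior quality.

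First I would split the history into a good event $\mathcal H$ and its complement. On $\mathcal H$ the posterior is well-centered: combining the moment bound from Theorem \ref{theorem_posterior_concentration2} with the sub-Gaussianity of $\nabla f(x_*)$ (Proposition \ref{prop_likelihood}) places the posterior, and its minimizer, within an $O(1/\sqrt{n})$ neighborhood of $x_*$, in particular along the direction $\alpha_1$. On $\mathcal H$ I would then lower-bound $\mathcal G_{1,n}$ by a constant: because the posterior is strongly log-concave (Assumption \ref{assumption_likelihood_convex2}) and smooth (Assumption \ref{assumption_likelihood_lipschitz2}), its one-dimensional marginal along $\alpha_1$ is sandwiched between two Gaussians whose standard deviations differ by the factor $\sqrt{\kappa_1}$, while the prior inflates density ratios by at most its quality $B_1$ (Assumption \ref{assumption_prior_lipschitz2}). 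Anti-concentration of this log-concave marginal then forces a fixed fraction of mass — of order $1/\sqrt{\kappa_1 B_1}$ — onto the optimistic side, giving $1/\mathcal G_{1,n} \lesssim \sqrt{\kappa_1 B_1}$ on $\mathcal H$.

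Next I would dispose of $\mathcal H^c$. There $\mathcal G_{1,n}$ remains strictly positive, so $1/\mathcal G_{1,n}$ is finite, and $\mathbb{P}(\mathcal H^c)$ is exponentially small in $n$ by the tail form of Theorem \ref{theorem_posterior_concentration2}; hence $\mathbb{E}[\mathbb{I}(\mathcal H^c)/\mathcal G_{1,n}]$ is negligible against the $\mathcal H$-contribution. Equivalently, one can write $\mathbb{E}[1/\mathcal G_{1,n}] = \int_0^\infty \mathbb{P}(\mathcal G_{1,n} < 1/s)\,ds$ and bound the integrand $\mathbb{P}(\mathcal G_{1,n} < u)$ directly through the posterior tail. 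The prescribed choice $\rho_1 = \frac{\nu_1 m_1^2}{8 d L_1^3}$ enters precisely here: it is calibrated so that the dimension- and variance-dependent quantities $D_1$ and $\Omega_1$ of Theorem \ref{theorem_posterior_concentration2} collapse to pure constants, after which summing the two contributions produces the clean factor $16\sqrt{\kappa_1 B_1}$.

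The main obstacle is the anti-concentration step: showing that the non-Gaussian, strongly log-concave posterior keeps a constant-order fraction of its mass in the optimistic half-space, with the loss captured exactly by $\sqrt{\kappa_1 B_1}$. This needs a Gaussian sandwich that uses smoothness and strong convexity simultaneously, tracks the one-dimensional projection onto $\alpha_1$, and isolates the prior's contribution as $B_1$; making the lower bound hold off a negligible-probability event, while uniformly controlling the displacement of the posterior center from $x_*$, is the delicate part of the argument.
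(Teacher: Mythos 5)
Your skeleton shares the paper's two main ingredients: controlling the displacement of the posterior center from $x_*$ through $\left\|\nabla f(x_*)\right\|_2$, and lower-bounding the optimistic mass by comparing the smooth, strongly log-concave marginal along $\alpha_1$ to a Gaussian (the paper invokes Theorem 3.8 of \cite{saumard2014log}, which gives exactly the $\sqrt{\kappa_1}$-type ratio you describe). The genuine gap is in how you average over the history. Your plan is to bound $1/\mathcal G_{1,n}\lesssim\sqrt{\kappa_1 B_1}$ on a good event $\mathcal H$ and to dismiss $\mathcal H^c$ because $1/\mathcal G_{1,n}$ is finite there while $\mathbb{P}(\mathcal H^c)$ is exponentially small. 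Finiteness is not integrability: the Gaussian-tail lower bound gives
\begin{equation*}
\frac{1}{\mathcal G_{1,n}} \lesssim \sqrt{\kappa_1}\left(\frac{t}{\sigma}+1\right)e^{t^2/2\sigma^2},\qquad t^2 \supseteq \omega_1^2u^2\kappa_1^2\left\|\nabla f(x_*)\right\|_2^2,\quad \sigma^2=\frac{\omega_1^2}{\rho_1(n+1)L_1},
\end{equation*}
so on the bad event the integrand grows like $\exp\left(\frac{\rho_1(n+1)L_1}{2m_1^2}\left\|\nabla f(x_*)\right\|_2^2\right)$, i.e.\ exponentially in the squared deviation, while the tail probability of $\left\|\nabla f(x_*)\right\|_2^2$ (sub-exponential, with MGF finite only for $\lambda<\frac{n\nu_1}{4dL_1^2}$) decays exponentially at a rate set by $\frac{n\nu_1}{dL_1^2}$. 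Whether the product is small is a race of exponents, not a triviality; it is won only because $\rho_1=\frac{\nu_1 m_1^2}{8dL_1^3}$ places the coefficient $\frac{\rho_1(n+1)L_1}{2m_1^2}=\frac{(n+1)\nu_1}{16dL_1^2}$ below the sub-exponential threshold, giving $\mathbb{E}\left[e^{\lambda\left\|\nabla f(x_*)\right\|_2^2}\right]\leq e$. This MGF computation \emph{is} the heart of the paper's proof of the lemma; without it, a larger $\rho_1$ would make $\mathbb{E}\left[1/\mathcal G_{1,n}\right]$ infinite, so no good-event/bad-event splitting can sidestep it.

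Relatedly, the role you assign to $\rho_1$ is incorrect: with $\rho_1=\frac{\nu_1 m_1^2}{8dL_1^3}$ one gets $D_1=\frac{8d}{\rho_1}+2\log B_1=\frac{64d^2L_1^3}{\nu_1 m_1^2}+2\log B_1$, and $\Omega_1$ likewise retains terms of order $d$ and $d^2$ --- these quantities do not collapse to pure constants (which is why the final regret bound in Theorem \ref{theorem_exact_regret} carries $d^2$ factors). The purpose of the prescribed $\rho_1$ is the tempering described above, namely keeping the exponential moment of the random tail threshold integrable; this is also where the $B_1^{1/4}\sqrt{\log B_1}$-type factors arise in the paper (from exponentiating the mode-displacement term $\frac{2uL_1}{m_1 n}\log B_1$), before being absorbed into the final $16\sqrt{\kappa_1 B_1}$.
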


\begin{proof}
For notational simplicity within this proof, we have chosen to exclude the arm-specific subscript, except when explicitly required. Our primary focus is directed towards $\left\|(x_*, v_*)-(x_u, v_u)\right\|_2^2$. In this context, $(x_u, v_u)$ serves as the mode of the posterior for the first arm once it has received $n$ samples in alignment with the following condition:
\begin{equation*}
\left(\gamma-1\right)v_u+u\nabla f\left(x_u\right)+\frac{u}{n}\nabla\log\pi(x_u)=0
\end{equation*}

Using the provided definition and taking $\bar{x}=x_u-x_*$ and $\bar{v}=v_u-v_*$, we have that:
\begin{equation}\label{eq_mode_fixed1}
\begin{split}
\left(\gamma-1\right)\bar v + u\left(\nabla f(x_u)-\nabla f(x_*)\right) & = -\left(\gamma-1\right) v_* - u\nabla f(x_*)-\frac{u}{n}\nabla\log\pi(x_u) \\
-\left\langle \bar x, \bar v \right\rangle - \left\langle\bar x + \bar v, \left(\gamma-1\right)\bar v + u\left(\nabla f(x_u)-\nabla f(x_*)\right)\right\rangle & =-\left\langle \bar x, \bar v \right\rangle - \left\langle \bar x + \bar v, \left(\gamma-1\right) v_* + u\nabla f(x_*)+\frac{u}{n}\nabla\log\pi(x_u)\right\rangle. \\
\end{split}
\end{equation}

From the above equation, we try to upper bound the distance between fixed points $x_*$ and the posterior mode $x_u$:
\begin{equation*}
\begin{split}
\left\|x_*-x_u\right\|_2^2 & \stackrel{}{\leq} \|\bar x\|_2^2 + \|\bar x+\bar v\|_2^2 \\
& \stackrel{(i)}{\leq} -\frac{2L}{m}\left(\left\langle \bar x, \bar v \right\rangle + \left\langle\bar x + \bar v, (\gamma-1)\bar v + u\left(\nabla f(x_u)-\nabla f(x_*)\right)\right\rangle\right)\\
& \stackrel{(ii)}{=} -\frac{2L}{m}\left(\left\langle \bar x, \bar v \right\rangle + \left\langle \bar x + \bar v, \left(\gamma-1\right) v_* + u\nabla f(x_*)+\frac{u}{n}\nabla\log\pi(x_u)\right\rangle\right)\\
  & \stackrel{(iii)}{\leq} -\frac{2L}{m}\left\langle \bar x, \bar v \right\rangle + \|\bar x + \bar v\|_2^2 + \frac{u^2L^2}{m^2}\|\nabla f(x_*)\|_2^2 + \frac{2uL}{mn} \log B\\
& \stackrel{(iv)}{\leq} \frac{2L}{m}\varepsilon + \|\bar x + \bar v\|_2^2 + \frac{u^2L^2}{m^2}\|\nabla f(x_*)\|_2^2 + \frac{2uL}{mn} \log B,\\
\end{split}
\end{equation*}
where we derive $(i)$ from Lemma \ref{lemma_contraction}. we refer to \eqref{eq_mode_fixed1} to derive $(ii)$, and $(iii)$ is from the application of Young's inequality and the insights of \eqref{eq_boundt4}. In $(iv)$, we adopt the constraint that $0\leq \left|\left\langle \bar x, \bar v \right\rangle\right|\leq \varepsilon$. Building upon the definition $\kappa=L / m$, we then proceed to derive the subsequent inequality:
\begin{equation*}
  \begin{split}
      \left\|x_*-x_u\right\|_2^2 & \leq \frac{2L}{m}\varepsilon + \frac{2uL}{mn} \log B + \frac{u^2L^2}{m^2}\|\nabla f(x_*)\|_2^2\\
      & \stackrel{}{=} 2\kappa\varepsilon + \frac{2u\kappa}{n} \log B + u^2\kappa^2\|\nabla f(x_*)\|_2^2.\\
  \end{split}
\end{equation*}

Noting that $\left|\left\langle \alpha, \bar x\right\rangle \right| \leq \sqrt{\omega^2\|x_* - x_u\|_2^2}$ ($\omega$ is the norm of $\alpha$) we find that:

\begin{equation*}
	\begin{split}
{\mathcal G}_{1, s} & ={\mathtt P}\left(\left\langle \alpha_1, x - x_u\right\rangle \geq \left\langle \alpha_1, x_* - x_u\right\rangle-\epsilon\right) \\
& \geq {\mathtt P}\left(\left\langle \alpha_1, x - x_u\right\rangle \geq \underbrace{\sqrt{2\omega_1^2\kappa_1\varepsilon + \frac{2\omega_1^2u\kappa}{n} \log B_1 + \omega_1^2u^2\kappa_1^2\|\nabla f(x_*)\|_2^2}}_{=t}\right).
\end{split}
\end{equation*}
Based on the information that the posterior over $(x, v)$ aligns proportionally to the marginal distribution described by $\exp\left(-\rho_1\left(nf(x)+\pi(x)\right)\right)$, and given its attributes as being $\rho_1 L_1(n+1)$ Lipschitz smooth and $\rho_1 m_1n$ strongly log-concave with mode $x_u$, we can infer from Theorem 3.8 in \citet{saumard2014log} that the marginal density of $\langle \alpha_1, x\rangle$ exhibits the same smoothness and log-concavity. Then we can derive the following relationship:
\begin{equation*}
	{\mathtt P}\left(\left\langle \alpha_1, x - x_u\right\rangle \geq t\right) \geq \sqrt{\frac{nm_1}{(n+1)L_1}} {\mathtt P}(\mathcal X \geq t)
\end{equation*}
where $\mathcal X \sim \mathcal{N}\left(0, \frac{\omega_1^2}{\rho_1(n+1)L_1}\right)$. Upon applying a lower bound derived from the cumulative density function of a Gaussian random variable, we ascertain that, for $\sigma^2=\frac{\omega_1^2}{\rho_1(n+1)L_1}$:
\begin{equation*}
	{\mathcal G}_{1, s} \geq \sqrt{\frac{n m_1}{2 \pi(n+1)L_1}}  
	\begin{cases}
	\frac{\sigma t}{t^2+\sigma^2} e^{-\frac{t^2}{2 \sigma^2}} & : t> \frac{\omega_1}{\sqrt{\rho_1(n+1)L_1}} \\ 
	0.34 & : t \leq \frac{\omega_1}{\sqrt{\rho_1(n+1)L_1}}
	\end{cases}
\end{equation*}

Thus we have that:

\begin{equation*}
	\begin{split}
		\frac{1}{{\mathcal G}_{1, s}} & \leq \sqrt{\frac{2 \pi(n+1)L_1}{n m_1}} 
		\begin{cases}
		\frac{t^2+\sigma^2}{\sigma t} e^{\frac{t^2}{2 \sigma^2}} & : t>\frac{\omega_1}{\sqrt{\rho_1(n+1)L_1}} \\
		\frac{1}{0.34} & : t \leq \frac{\omega_1}{\sqrt{\rho_1(n+1)L_1}}
		\end{cases} \\
		& \leq \sqrt{\frac{2 \pi(n+1)L_1}{n m_1}} 
		\begin{cases}\left(\frac{t}{\sigma}+1\right) e^{\frac{t^2}{2 \sigma^2}} & : t>\frac{\omega_1}{\sqrt{\rho_1(n+1)L_1}} \\
		3 & : t \leq \frac{\omega_1}{\sqrt{\rho_1(n+1)L_1}}
		\end{cases}
	\end{split}
\end{equation*}
After determining the expectation on both sides considering the samples $\mathcal R_1, \ldots, \mathcal R_n$, we deduce that:

\begin{equation*}
  \begin{split}
      \mathbb{E}\left[\frac{1}{{\mathcal G}_{1, s}}\right] & \leq 3 \sqrt{\frac{2 \pi(n+1)L_1}{n m_1}} + \sqrt{\frac{2 \pi(n+1)L_1}{n m_1}} \mathbb E\left[ \left(\frac{t}{\sigma}+1\right) e^{\frac{t^2}{2 \sigma^2}} \right]\\
      & \stackrel{(i)}{\leq} 6 \sqrt{2\pi \kappa_1}+ 2 \sqrt{2\pi \kappa_1} \mathbb E\left[ \left(\frac{t}{\sigma}+1\right) e^{\frac{t^2}{2 \sigma^2}} \right],\\
  \end{split}
\end{equation*}

where $(i)$ proceeds by $0<\frac{(n+1)L_1}{nm_1}\leq \frac{2L_1}{m_1}=2\kappa_1$. Our attention now turns to firstly bounding the term $\mathbb E\left[ \left(\frac{t}{\sigma}+1\right) e^{\frac{t^2}{2 \sigma^2}} \right]$, and then we proceed to derive an upper bound for $\mathbb{E}\left[\frac{1}{{\mathcal G}_{1, s}}\right]$ (we drop arm-specific parameters here to simplify the notation).
\begin{equation}\label{eq_bound_p1s}
  \begin{split}
      \mathbb E\left[ \left(\frac{t}{\sigma}+1\right) e^{\frac{t^2}{2 \sigma^2}} \right] & = \mathbb E \left[ \left(\sqrt{{\rho(n+1)L}\left(2\kappa\varepsilon + \frac{2u\kappa}{n} \log B + u^2\kappa^2\|\nabla f(x_*)\|_2^2\right)}+1\right)e^{\frac{\rho(n+1)L}{2}\left(2\kappa\varepsilon + \frac{2u\kappa}{n} \log B + u^2\kappa^2\|\nabla f(x_*)\|_2^2\right)}\right]\\
      & \stackrel{(i)}{=} \mathbb E \left[ \left(\sqrt{\rho(n+1)L\left(\frac{2L}{m}\varepsilon + \frac{2\log B}{mn} + \frac{\left\|\nabla f(x_*)\right\|_2^2}{m^2}\right)}+1\right)e^{\frac{\rho(n+1)L}{2}\left(\frac{2L}{m}\varepsilon + \frac{2}{mn}\log B + \frac{1}{m^2}\|\nabla f(x_*)\|_2^2\right)}\right]\\
      & \stackrel{(ii)}{\leq} e^{\frac{\rho(n+1)L}{mn}\left(Ln\varepsilon+\log B\right)}\left(\sqrt{\frac{2\rho(n+1)L\left(Ln\varepsilon+\log B\right)}{mn}}+1\right) \mathbb E \left[e^{\frac{\rho(n+1)L}{2m^2}\|\nabla f(x_*)\|_2^2}\right]\\
      & \ \ \ \ \ \ \ \ + e^{\frac{\rho(n+1)L}{mn}\left(Ln\varepsilon+\log B\right)}\sqrt{\frac{\rho (n+1)L}{m^2}}\sqrt{\mathbb E\left[\left\|\nabla f(x_*)\right\|_2^2\right]} \mathbb E \left[e^{\frac{\rho(n+1)L}{2m^2}\|\nabla f(x_*)\|_2^2}\right]\\
      & \stackrel{(iii)}{\leq} e^{\frac{\rho(n+1)L}{mn}\left(Ln\varepsilon+\log B\right) + \frac{1}{2}}\left(\sqrt{\frac{2\rho(n+1)L}{mn}\left(Ln\varepsilon+\log B\right)} + 1 + \sqrt{\frac{2\rho L^3 d}{m^2\nu}\frac{n+1}{n}}\right), \\
      & \stackrel{(iv)}{\leq} \sqrt e \left(B\right)^{1/4}\left(\sqrt{\log B} + 1 + \frac{\sqrt 2}{2}\right), \\
  \end{split}
\end{equation}
where from the selections of \(u = \frac{1}{L_1}\) and \(\kappa = \frac{L_1}{m_1}\), we obtain $(i)$. $(ii)$ is further extracted from an intrinsic inequality:
\begin{equation*}
\sqrt{\frac{2L_1}{m_1}\varepsilon + \frac{2\log B_1}{m_1n} + \frac{\left\|\nabla f(x_*)\right\|_2^2}{m_1^2}} \leq \sqrt{\frac{2L_1}{m_1}\varepsilon + \frac{2\log B_1}{m_1n}} + \frac{\left\|\nabla f(x_*)\right\|_2}{m_1}.
\end{equation*}
$(iii)$ holds because $\left\|\nabla f(x_*)\right\|_2$ is $L_1\sqrt{\frac{d}{n \nu_1}}$ sub-Gaussian and its squared $\left\|\nabla f(x_*)\right\|_2^2$ form as sub-exponential, then by selecting $\lambda < \frac{n\nu_1}{4dL_1^2}$ and $\rho_1 = \frac{m_1^2\nu_1}{8L_1^3d}$, we can bound the following random variables as:
\begin{equation*}
  \begin{split}
      \mathbb{E}\left[ e^{\lambda \left\|\nabla f(x_*)\right\|_2^2}\right] & \leq e, \\
      \mathbb{E}\left[\left\|\nabla f(x_*)\right\|_2^2\right ] & \leq  2 \frac{L_1^2d}{\nu_1 n} .
  \end{split}
\end{equation*}
We finally obtain $(iv)$ with the choice of $\varepsilon\leq \frac{\log B_1}{L_1n}$, $\frac{m_1}{L_1}\leq 1$, $\frac{1}{d}\leq 1$, $\frac{n+1}{n}\leq 2$, and $\frac{\nu_1}{L_1}\leq 1$ without loss of generality. Then we turn our attention to bound the expected value of $\frac{1}{{\mathcal G}_{1, s}}$:
\begin{equation*}
  \begin{split}
      \mathbb{E}\left[\frac{1}{{\mathcal G}_{1, s}}\right] & \stackrel{}{\leq} 3 \sqrt{2\pi \kappa_1}+ \sqrt{2\pi \kappa_1} \mathbb E\left[ \left(\frac{t}{\sigma}+1\right) e^{\frac{t^2}{2 \sigma^2}} \right]\\
      & \stackrel{(i)}{\leq} 3 \sqrt{2\pi \kappa_1} + \sqrt{2\pi \kappa_1 e} \left(B_1\right)^{1/4}\left(\sqrt{\log B_1} + 1 + \frac{\sqrt 2}{2}\right) \\
      & \stackrel{}{\leq}  \sqrt{2\pi \kappa_1 e} \left(B_1\right)^{1/4}\left(\sqrt{\log B_1} + 1+\frac{\sqrt 2}{2}+\frac{3}{\sqrt e}\right) \\
      & \stackrel{(ii)}{<} 16\sqrt{ \kappa_1 B_1},
  \end{split}
\end{equation*}
where $(i)$ is from \eqref{eq_bound_p1s}, and $(ii)$ uses the fact that 
\begin{equation*}
	\sqrt{2\pi e} x^{1/4}\left(\sqrt{\log x}+1+\frac{\sqrt 2}{2}+\frac{3}{\sqrt e}\right)< 16\sqrt{x}
\end{equation*}
when $x\geq 1$. We then derive the final upper bound.

\end{proof}

We introduce the next technical lemma below, which provides problem-specific upper-upper bounds for the terms described in Lemmas \ref{lemma_suboptimal1} and \ref{lemma_suboptimal2}.

\begin{lemma}
\label{lemma_exact_ts2}
	Suppose the likelihood, rewards, and priors satisfy Assumptions \ref{assumption_likelihood}-\ref{assumption_prior}, then for $a\in\mathcal A$ and $\rho_a=\frac{\nu_a m_a^2}{8dL_a^3}$:
	\begin{equation}\label{eq_exact_regret1}
	\sum_{s=0}^{N-1}\mathbb{E}\left[ \frac{1}{{\mathcal G}_{1,s}}-1\right] \leq 16\sqrt{\frac{L_1}{m_1}B_1} \left\lceil \frac{8e\omega_1^2}{m_1\Delta_a^2}(D_1+2\Omega_1 ) \right\rceil +1,
	\end{equation}
	\begin{equation}\label{eq_exact_regret2}
	\sum_{s=0}^{N-1} \mathbb{E}\left[ \mathbb{I}\left({\mathcal G}_{a,s}>\frac{1}{N}\right)\right]  \leq \frac{8 e \omega_a^2}{m_a\Delta_a^2}\left(D_a+2\Omega_a\log N\right) ,
	\end{equation}
	where from the preceding derivation, it follows that for all arms $a \in \mathcal{A}$,  $D_a$ and $\Omega_a$ are specified as $D_a={2}\log B_a + \frac{8d}{\rho_a}$ and $\Omega_a=\frac{16L_a^2d}{m_a\nu_a}+\frac{256}{\rho_a}$ respectively.
\end{lemma}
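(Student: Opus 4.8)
The plan is to convert the posterior concentration of Theorem~\ref{theorem_posterior_concentration2} into concentration of the sampled reward $\langle \alpha_a, x_{a,s}\rangle$ about $\bar{\mathcal R}_a=\langle\alpha_a,x_*\rangle$ through the Cauchy--Schwarz bound $|\langle\alpha_a,x_{a,s}-x_*\rangle|\le \omega_a\|x_{a,s}-x_*\|_2$, and then to run the standard counting argument with the optimism gap chosen as $\epsilon=\Delta_a/2$. For both displays the pivotal computation is to invert the concentration inequality: demanding $\sqrt{\tfrac{2e}{m_a s}(D_a+2\Omega_a\log 1/\delta)}=\tfrac{\Delta_a}{2\omega_a}$ yields the threshold $s(\delta)=\tfrac{8e\omega_a^2}{m_a\Delta_a^2}(D_a+2\Omega_a\log 1/\delta)$, which is exactly the expression appearing on the two right-hand sides once $\delta$ is specialised.

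I would dispatch \eqref{eq_exact_regret2} first, as it is the cleaner of the two. For a sub-optimal arm the threshold $\bar{\mathcal R}_1-\epsilon$ equals $\bar{\mathcal R}_a+\Delta_a/2$, so $\mathcal G_{a,s}=\mathtt P(\langle\alpha_a,x_{a,s}-x_*\rangle\ge \Delta_a/2\mid\mathcal F)$, and Theorem~\ref{theorem_posterior_concentration2} gives $\mathcal G_{a,s}\le\delta$ whenever $s\ge s(\delta)$. Specialising $\delta=1/N$ forces $\mathcal G_{a,s}\le 1/N$, hence $\mathbb{I}(\mathcal G_{a,s}>1/N)=0$, for every $s\ge s_1:=\tfrac{8e\omega_a^2}{m_a\Delta_a^2}(D_a+2\Omega_a\log N)$. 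Thus at most $s_1$ summands survive, each at most $1$, which is precisely \eqref{eq_exact_regret2}.

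For \eqref{eq_exact_regret1} I would split the range at $s_0=\lceil\tfrac{8e\omega_1^2}{m_1\Delta_a^2}(D_1+4\Omega_1\log 2)\rceil$, the threshold obtained from the same inversion with $\delta=1/4$ (so $2\Omega_1\log 1/\delta=4\Omega_1\log 2$). On the initial block $s\le s_0$ I would invoke Lemma~\ref{lemma_exact_smallT}, which bounds each term by $\mathbb{E}[1/\mathcal G_{1,s}-1]\le \mathbb{E}[1/\mathcal G_{1,s}]\le 16\sqrt{\kappa_1 B_1}=16\sqrt{\tfrac{L_1}{m_1}B_1}$, producing the leading term $16\sqrt{\tfrac{L_1}{m_1}B_1}\,s_0$. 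On the tail $s>s_0$ I would use Theorem~\ref{theorem_posterior_concentration2} in the opposite direction: for the optimal arm $1-\mathcal G_{1,s}=\mathtt P(\langle\alpha_1,x_{1,s}-x_*\rangle\le-\Delta_a/2)\le\delta(s)$ with $\delta(s)\le\delta(s_0)=1/4$, so $\mathcal G_{1,s}\ge 3/4$ and $\tfrac{1}{\mathcal G_{1,s}}-1\le\tfrac43(1-\mathcal G_{1,s})\le\tfrac43\delta(s)$; since $\delta(s)=\exp(-\tfrac{m_1\Delta_a^2}{16e\omega_1^2\Omega_1}s+\tfrac{D_1}{2\Omega_1})$ decays geometrically past $s_0$, the remaining sum would be absorbed into the additive constant $1$.

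The main obstacle is precisely this tail estimate. The quantity $1/\mathcal G_{1,s}$ is dangerous because it blows up on the rare reward histories for which the optimal arm's posterior is pushed below $\bar{\mathcal R}_1-\Delta_a/2$, and a first-moment control of $1-\mathcal G_{1,s}$ alone does not bound $\mathbb{E}[1/\mathcal G_{1,s}-1]$. I therefore expect the delicate point to be combining the lower-tail machinery behind Lemma~\ref{lemma_exact_smallT} (to tame the blow-up on those atypical histories) with the geometric decay of $\delta(s)$ supplied by the concentration inequality, and then verifying carefully that the resulting series over $s>s_0$ sums to at most $1$; the slow geometric ratio $e^{-m_1\Delta_a^2/(16e\omega_1^2\Omega_1)}$ when $\Delta_a$ is small is the place where the summation must be handled with the most care.
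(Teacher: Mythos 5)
Your treatment of \eqref{eq_exact_regret2} is exactly the paper's argument: with $\epsilon=\Delta_a/2$, Cauchy--Schwarz reduces $\mathcal G_{a,s}$ to the posterior tail $\mathbb{P}_{x\sim\mu_a^{(s)}}\left(\|x-x_*\|>\frac{\Delta_a}{2\omega_a}\right)$, and inverting Theorem \ref{theorem_posterior_concentration2} at $\delta=1/N$ makes every indicator vanish past your $s_1$, leaving at most $s_1$ surviving terms. Your plan for \eqref{eq_exact_regret1} also has the same skeleton as the paper's proof (an initial block bounded termwise by Lemma \ref{lemma_exact_smallT}, a tail controlled through the concentration inequality), but the tail estimate as you state it fails, and you half-suspected this yourself.

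The gap: you claim that the series $\sum_{s>s_0}\frac{4}{3}\delta(s)$ is ``absorbed into the additive constant $1$.'' It is not. The geometric ratio is $e^{-c}$ with $c=\frac{m_1\Delta_a^2}{16e\omega_1^2\Omega_1}$, and with $\delta(s_0)\le 1/4$ the tail sums to at most $\frac{1}{3(e^c-1)}\approx\frac{1}{3c}=\frac{16e\omega_1^2\Omega_1}{3m_1\Delta_a^2}$, which diverges as $\Delta_a\to 0$; it is of the same order as the head term, not $O(1)$. No amount of care in the summation fixes this, because the claim is simply false for small gaps. The paper's resolution is structural rather than analytic: it splits at the \emph{smaller} threshold $\ell=\left\lceil \frac{8e\omega_1^2}{m_1\Delta_a^2}(D_1+2\Omega_1\log 2)\right\rceil$ (i.e.\ $\delta=1/2$), bounds the tail by an integral comparison as $\frac{16e\Omega_1\omega_1^2}{m_1\Delta_a^2}\log 2+1$, and then absorbs the non-constant part of this tail into the head term using $16\sqrt{\kappa_1 B_1}\ge 1$ --- which is precisely why the final bound carries $4\Omega_1\log 2$ while the split point only carries $2\Omega_1\log 2$. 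Your split point already spends the entire $4\Omega_1\log 2$ budget on the head, so even after computing the tail correctly you would have nothing left to absorb it into, and you would land on a bound strictly larger than the one claimed. The fix is to re-balance: split at $2\Omega_1\log 2$, accept a tail of order $\Omega_1\omega_1^2/(m_1\Delta_a^2)$, and fold that tail into the ceiling term via the factor $16\sqrt{\kappa_1 B_1}$.
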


\begin{proof}
We begin by showing that for $a\in \mathcal A$, the lower bound for ${\mathcal G}_{a,s}$:
\begin{equation}\label{eq_pas_lower}
  \begin{split}
          {\mathcal G}_{a,s} 	& =\mathbb{P}({{\mathcal R}}_{a,s}>\bar{{\mathcal R}}_a-\epsilon |\mathcal{F}_{n-1})\\
			            & = 1-\mathbb{P}({{\mathcal R}}_{a,s}-\bar{{\mathcal R}}_a\leq-\epsilon |\mathcal{F}_{n-1})\\
			            & \geq 1-\mathbb{P}(|{{\mathcal R}}_{a,s}-\bar{{\mathcal R}}_a|\geq\epsilon |\mathcal{F}_{n-1})\\
			            & \stackrel{(i)}{\geq} 1-\mathbb{P}_{x \sim \mu^{(s)}_{a}}\left(\|x - x_*\| \geq \frac{\epsilon}{\omega_a} \right),\\
  \end{split}
\end{equation}
where $(i)$ lies in the $\omega_a$-Lipschitz continuity of both ${{\mathcal R}}_{a,s}$ and $\bar{{\mathcal R}}_a$ w.r.t. $x \sim \mu^{(s)}_a$ and $x_*$ respectively. We then use the above result to upper bound $\sum_s \mathbb{E}\left[ \frac{1}{{\mathcal G}_{1,s}}-1\right]$ and $\sum_s \mathbb{E}\left[ \mathbb{I}\left({\mathcal G}_{a,s}>\frac{1}{N}\right)\right]$. For \eqref{eq_exact_regret1}, we have:
\begin{equation}\label{eq_g1s_inverse1}
\begin{split}
    \sum_{s=0}^{N-1}\mathbb{E}\left[ \frac{1}{{\mathcal G}_{1,s}}-1 \right]
    & = \sum_{s=0}^{\mathbb \ell-1}\mathbb{E}\left[ \frac{1}{{\mathcal G}_{1,s}}-1 \right] + \sum_{s=\ell}^{N-1}\mathbb{E}\left[ \frac{1}{{\mathcal G}_{1,s}}-1 \right]\\
    & \stackrel{(i)}{\leq} \sum_{s=0}^{\mathbb \ell-1}\mathbb{E}\left[ \frac{1}{{\mathcal G}_{1,s}}-1 \right] + \sum_{s=\mathbb \ell}^{N-1} \left[ \frac{1}{1-\mathbb{P}_{x \sim \mu^{(s)}_{1}}\left(\|x - x_*\| \geq {\epsilon}/{\omega_1} \right)} -1 \right]\\
    &\leq \sum_{s=0}^{\mathbb \ell-1}\mathbb{E}\left[ \frac{1}{{\mathcal G}_{1,s}}-1 \right] + \int_{s=\mathbb \ell}^{\infty} \left[\frac{1}{1-\mathbb{P}_{x \sim \mu^{(s)}_{1}}\left(\|x - x_*\| \geq {\epsilon}/{\omega_1} \right)}-1\right] ds \\
    & \stackrel{(ii)}{\leq} \sum_{s=0}^{\mathbb \ell-1}\mathbb{E}\left[ \frac{1}{{\mathcal G}_{1,s}}-1 \right] + \int_{s=1}^{\infty} \left[\frac{1}{1-\frac{1}{2}\exp\left(-\frac{m_1\epsilon^2}{4e\Omega_1 \omega_1^2}s\right)}-1\right] ds \\
    & \stackrel{(iii)}{\leq} \sum_{s=0}^{\mathbb \ell-1}\mathbb{E}\left[ \frac{1}{{\mathcal G}_{1,s}}-1 \right] + \frac{16e \Omega_1\omega_1^2}{m_1\Delta_a^2}\log{2} +1 \\
    & \stackrel{(iv)}{\leq} 16\sqrt{\frac{L_1}{m_1}B_1} \left\lceil \frac{8e\omega_1^2}{m_1\Delta_a^2}(D_1+2\Omega_1) \right\rceil +1\\
\end{split}
\end{equation}
where $(i)$ is by inducing \eqref{eq_pas_lower}. $(ii)$ is because from Theorem \ref{theorem_posterior_concentration2} we have the following posterior concentration inequality:
\begin{equation}\label{eq_concentration2}
\begin{split}
 \mathbb{P}_{x \sim \mu^{(s)}_{1}}\left(\|x - x_*\|> \frac{\epsilon}{A_1} \right) \leq \exp\left(-\frac{1}{2\Omega_1}\left( \frac{m_1 n \epsilon^2}{2e \omega_1^2}-D_1\right) \right),
\end{split}\end{equation}
where the given bound holds substantive value only if $n>\frac{2e\omega_1^2}{\epsilon^2m_1}D_1$, and thus we specify that the lower bound of the definite integral in $(ii)$ is greater than $n$ ($\mathcal \ell\geq n>\frac{2e\omega_1^2}{\epsilon^2m_1}D_1$). Then $(iii)$ holds by the selection of $\epsilon=(\bar{{\mathcal R}}_1 -\bar{{\mathcal R}}_a)/2=\Delta_a/2$ and $\mathbb \ell$:
\begin{equation}\label{eq_ell}
\mathbb \ell=\left\lceil \frac{8e\omega_1^2}{m_1\Delta_a^2}(D_1+2\Omega_1 \log{2} ) \right\rceil,
\end{equation}
and by performing elementary manipulations on the definite integral:
\begin{equation*}
  \int_{s=1}^{\infty} \frac{1}{1-\frac{1}{2}g(s)}-1 ds = \frac{\log{2}- \log{(2 e^{c}-1)}}{c}+1 \leq \frac{\log{2}}{c}+1 = \frac{16e \Omega_1\omega_1^2}{m_1\Delta_a^2}\log{2} +1.
\end{equation*}
Here function $g(\cdot)$ is defined as $g(s)=\exp\left(-\frac{m_1\epsilon^2}{4e \Omega_1\omega_1^2}s\right)$ for $s\geq \left\lceil \frac{8e\omega_1^2}{m_1\Delta_a^2}\left(D_1+2\Omega_1 \log{2} \right) \right\rceil$ and $c=\frac{m_1\Delta_a^2}{16e \Omega_1\omega_1^2}$. Then the final step $(iv)$ proceeds by the choice of $\mathbb \ell$ mentioned in \eqref{eq_ell}, Lemma \ref{lemma_exact_smallT}, and some simple mathematical operations.

The verification of \eqref{eq_exact_regret2} is achieved through deriving a similar from as \eqref{eq_concentration2}:
\begin{equation*}\begin{split}
  \sum_{s=0}^{N-1} \mathbb{E}\left[ \mathbb{I}\left({\mathcal G}_{a,s}>\frac{1}{N}\right)\right] &= \sum_{s=0}^{N-1} \mathbb{E}\left[ \mathbb{I}\left(\mathbb{P}\left({{\mathcal R}}_{a,s}-\bar{{\mathcal R}}_a>\Delta_a-\epsilon \bigg|\mathcal{F}_{n-1}\right) >\frac{1}{N}\right)\right] \\
  & \stackrel{(i)}{=} \sum_{s=0}^{N-1} \mathbb{E}\left[ \mathbb{I}\left(\mathbb{P}\left({{\mathcal R}}_{a,s}-\bar{{\mathcal R}}_a>\frac{\Delta_a}{2} \bigg|\mathcal{F}_{n-1}\right) >\frac{1}{N}\right)\right] \\
  &\leq \sum_{s=0}^{N-1} \mathbb{E}\left[ \mathbb{I}\left(\mathbb{P}\left(|{{\mathcal R}}_{a,s}-\bar{{\mathcal R}}_a|>\frac{\Delta_a}{2}\bigg|\mathcal{F}_{n-1}\right) >\frac{1}{N}\right)\right] \\
   &\leq \sum_{s=0}^{N-1} \mathbb{E}\left[ \mathbb{I}\left(\mathbb{P}_{x \sim \mu^{(s)}_{a}[\rho_a]}\left(\|x - x_*\|>\frac{\Delta_a}{2\omega_a}\right) >\frac{1}{N}\right)\right] \\
   & \stackrel{(ii)}{\leq} \frac{8 e \omega_a^2}{m_a\Delta_a^2}\left(D_a+2\Omega_a\log N \right),
\end{split}\end{equation*}
where $(i)$ is from our previous defined $\epsilon=\Delta_a/2$, and $(ii)$ proceeds by inducing \eqref{eq_concentration2} with the choice of 
\begin{equation*}
n\ge \left\lceil \frac{8e\omega_1^2}{m_1\Delta_a^2}\left(D_1+2\Omega_1 \log N \right) \right\rceil,
\end{equation*}
which completes the proof.

\end{proof}
With the support of Lemma \ref{lemma_exact_smallT} and Lemma \ref{lemma_exact_ts2}, the proof of Theorem \ref{theorem_exact_regret} becomes straightforward and is detailed below.

\begin{theorem}[Regret of exact Thompson sampling]\label{theorem_exact_regret}
  For likelihood, reward distributions, and priors that meet Assumptions \ref{assumption_likelihood}-\ref{assumption_prior}, and given that $\rho_a=\frac{\nu_am_a^2}{8dL_a^3}$ holds for every $a\in\mathcal A$, the Thompson sampling with exact sampling yields the following total expected regrets after $N>0$ rounds:
  \begin{equation*}\begin{split}
      \mathbb{E}[{\mathfrak R}(N)]&\leq \sum_{a>1} \left[\frac{C_{a}}{\Delta_a}\left(\log B_a + d^2 + d\log N\right) + \frac{C_1}{\Delta_a}\sqrt{B_1}\left(\log B_1 + d^2\right) + 2\Delta_a \right],
  \end{split}\end{equation*}
  where $C_1$ and $C_a$ are universal constants and are unaffected by parameters specific to the problem.
\end{theorem}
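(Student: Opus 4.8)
The plan is to reduce the total regret to a gap-weighted count of suboptimal pulls and then feed in the two decomposition lemmas together with the problem-specific estimates of Lemma~\ref{lemma_exact_ts2}. Since arm $1$ is assumed optimal without loss of generality, each pull of a suboptimal arm $a>1$ costs exactly $\Delta_a$ in expected regret, so the total expected regret factorizes as $\mathbb{E}[{\mathfrak R}(N)]=\sum_{a>1}\Delta_a\,\mathbb{E}[\mathcal{L}_a(N)]$, where $\mathcal{L}_a(N)$ counts the pulls of arm $a$. The whole task therefore collapses to bounding $\mathbb{E}[\mathcal{L}_a(N)]$ for each fixed suboptimal arm and summing against the gaps.

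First I would invoke the decomposition \eqref{equation_sub_play}, which splits $\mathbb{E}[\mathcal{L}_a(N)]$ into the optimistic part $\mathbb{E}[\sum_n \mathbb{I}(A_n=a,\mathcal{E}_a(n))]$ and the pessimistic part $\mathbb{E}[\sum_n \mathbb{I}(A_n=a,\mathcal{E}_a^c(n))]$. The pessimistic part is controlled by Lemma~\ref{lemma_suboptimal1}, which replaces it by $\mathbb{E}[\sum_{s=0}^{N-1}(1/\mathcal{G}_{1,s}-1)]$, a quantity depending only on the optimal arm's optimism probability; the optimistic part is controlled by Lemma~\ref{lemma_suboptimal2}, yielding $1+\mathbb{E}[\sum_{s=0}^{N-1}\mathbb{I}(\mathcal{G}_{a,s}>1/N)]$. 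At this point Lemma~\ref{lemma_exact_ts2} supplies the two explicit estimates \eqref{eq_exact_regret1} and \eqref{eq_exact_regret2}: the former bounds the pessimistic sum by $16\sqrt{(L_1/m_1)B_1}\lceil \tfrac{8e\omega_1^2}{m_1\Delta_a^2}(D_1+4\Omega_1\log 2)\rceil+1$, and the latter bounds the optimistic sum by $\tfrac{8e\omega_a^2}{m_a\Delta_a^2}(D_a+2\Omega_a\log N)$. Adding these, plus the stray $+1$ from Lemma~\ref{lemma_suboptimal2}, gives a closed-form upper bound on $\mathbb{E}[\mathcal{L}_a(N)]$.

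The remaining work is bookkeeping: multiply the bound on $\mathbb{E}[\mathcal{L}_a(N)]$ by $\Delta_a$, sum over $a>1$, and simplify $D_a$ and $\Omega_a$ using the prescribed scaling $\rho_a=\nu_a m_a^2/(8dL_a^3)$. Substituting $\rho_a$ into $D_a=2\log B_a+8d/\rho_a$ shows $D_a$ scales as $\log B_a+d^2$ (since $8d/\rho_a=64d^2L_a^3/(\nu_am_a^2)$), while $\Omega_a=16L_a^2 d/(m_a\nu_a)+256/\rho_a$ scales linearly in $d$; hence $D_a+2\Omega_a\log N$ collapses to the $\log B_a+d^2+d\log N$ pattern, and the surviving factor $1/\Delta_a$ (from $\Delta_a\cdot\Delta_a^{-2}$) produces the first summand. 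The pessimistic contribution carries the extra $\sqrt{B_1}$ factor and, because it uses $\log 2$ rather than $\log N$, reduces to $\tfrac{C_1}{\Delta_a}\sqrt{B_1}(\log B_1+d^2)$; the leftover additive terms, scaled by $\Delta_a$, become the $2\Delta_a$ slack.

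The main obstacle is precisely this constant accounting, rather than any new inequality: the heavy analytic work is already discharged inside the posterior-concentration theorem and Lemmas~\ref{lemma_suboptimal1}--\ref{lemma_exact_ts2}. Concretely, I would need to absorb the arm-dependent prefactors $8e\omega_a^2/m_a$, $16\sqrt{L_1/m_1}$, and the $L_a,\nu_a$ ratios hidden in $D_a,\Omega_a$ into the universal constants $C_a,C_1$, while verifying that no spurious $\log N$ leaks into the $\sqrt{B_1}$ term and that the ceiling $\lceil\cdot\rceil\le(\cdot)+1$ contributes only to the $\Delta_a$ slack and not to the leading order.
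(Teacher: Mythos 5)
Your proposal is correct and follows essentially the same route as the paper's own proof: regret decomposed as $\sum_{a>1}\Delta_a\,\mathbb{E}[\mathcal{L}_a(N)]$, split via \eqref{equation_sub_play} and Lemmas~\ref{lemma_suboptimal1}--\ref{lemma_suboptimal2}, then bounded with \eqref{eq_exact_regret1}--\eqref{eq_exact_regret2} from Lemma~\ref{lemma_exact_ts2} and simplified under $\rho_a=\nu_a m_a^2/(8dL_a^3)$. Your constant accounting (the $d^2$ from $8d/\rho_a$, the $d\log N$ from $\Omega_a\log N$, the $\sqrt{B_1}$ term carrying $\log 2$ rather than $\log N$, and the two stray $+1$'s yielding the $2\Delta_a$ slack) matches the paper's derivation, and is in fact slightly more careful, since the paper's first displayed line omits the factor $\Delta_a$ that it silently restores in the subsequent inequalities.
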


\begin{proof}
From the above derivation, the total expected regrets by performing the exact Thompson sampling algorithm can be upper bounded by:
\begin{equation*}\begin{split}
  \mathbb{E}[{\mathfrak R}(N)] & = \sum_{a>1} \mathbb{E}\left[ \mathcal L_a(N) \right]\\
  & \stackrel{(i)}{\leq} \sum_{a>1} \left\{\sum_{s=0}^{N-1}\mathbb{E}\left[ \frac{1}{{\mathcal G}_{1,s}}-1 \right] + \Delta_a + \sum_{s=0}^{N-1} \mathbb{E}\left[ \mathbb{I}\left(1-{\mathcal G}_{a,s}>\frac{1}{N}\right)\right]\right\} \\
  & \stackrel{(ii)}{\leq} \sum_{a>1} \left[ \frac{8 e \omega_a^2}{m_a\Delta_a}\left(D_a+2\Omega_a\log N \right) +\sqrt{\kappa_1 B_1}\frac{128 e \omega_1^2}{m_1\Delta_a} \left(D_1+2\Omega_1 \right) + 2\Delta_a \right]\\
  & = \sum_{a>1}  \frac{16 e \omega_a^2}{m_a\Delta_a}\left(\log B_a + \frac{4d}{\rho_a}+16\left(\frac{L_a^2d}{m_a\nu_a}+\frac{32}{\rho_a}\right)\log N \right) \\
  & \ \ \ \ \ \ \ \ +\sqrt{\kappa_1 B_1}\frac{256 e \omega_1^2}{m_1\Delta_a} \left(\log B_1 + \frac{4d}{\rho_1}+16\left(\frac{L_1^2d}{m_1\nu_1}+\frac{32}{\rho_1}\right) \right) + 2\Delta_a  \\
  & \stackrel{}{\leq} \sum_{a>1} \left[\frac{C_{a}}{\Delta_a}\left(\log B_a + d^2 + d\log N\right) + \frac{C_1}{\Delta_a}\sqrt{B_1}\left(\log B_1 + d^2\right) + 2\Delta_a \right]
\end{split}\end{equation*}
where $(i)$ holds by Lemma \ref{lemma_suboptimal1} and Lemma \ref{lemma_suboptimal2}, the derivation of $(ii)$ draws directly from the result established in Lemma \ref{lemma_exact_ts2}. Through the appropriate selection of constants $C_1$ and $C_a$, the desired regret bound is consequently derived.
\end{proof}

\subsection{Supporting Proofs for Exact Thompson Sampling}

We start by presenting some propositions pertaining to the prior $\log \pi\left(x\right)$ and log-likelihood function $\log \mathtt P\left({\mathcal R}|x\right)$, which serve as foundational elements for the previous proof of Theorem \ref{theorem_posterior_concentration2}.

\begin{proposition}[Proposition 2 in \citet{mazumdar2020approximate}]\label{prop_prior}
  If the prior distribution over ${x}\in \mathbb R^d$ satisfies Assumption \ref{assumption_prior}, then the following statement holds for all $x$:
  \begin{equation*}
    \sup_{x} \left\langle\nabla \log \pi\left({x}\right), {x}-x_*\right\rangle \leq \max_{x}  \left[\log \pi\left({x}\right)\right] - \log \pi\left(x_*\right).
  \end{equation*}
  \end{proposition}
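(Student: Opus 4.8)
The plan is to reduce the statement to the first-order (supporting-hyperplane) inequality for concave functions. First I would observe that the right-hand side is exactly $\log B_a$, since $\max_x \log \pi_a(x) - \log \pi_a(x_*) = \log\big(\max_x \pi_a(x)/\pi_a(x_*)\big)$; so the target is a clean $\log B_a$ bound, which is precisely what later feeds the estimate of $T5$ in the proof of Theorem~\ref{theorem_posterior_concentration2}. Writing $p := \log \pi_a$, the whole proposition hinges on the concavity of $p$ (i.e.\ log-concavity of the prior), so my first task is to record that, under the prior model used here, $p$ is concave and, by Assumption~\ref{assumption_prior_lipschitz2}, continuously differentiable, so that $\nabla p$ is defined everywhere and the gradient inequality is meaningful.

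Granting concavity, the argument is short. For a differentiable concave $p$ the supporting-hyperplane inequality gives $p(y) \le p(x) + \langle \nabla p(x), y - x\rangle$ for all $x,y \in \mathbb{R}^d$. I would specialize this to $y = x_*$, obtaining $p(x_*) \le p(x) + \langle \nabla p(x), x_* - x\rangle$, and rearrange to
\begin{equation*}
\langle \nabla \log \pi_a(x), x - x_* \rangle \le \log \pi_a(x) - \log \pi_a(x_*).
\end{equation*}
Since $\log \pi_a(x) \le \max_z \log \pi_a(z)$ for every $x$, the right-hand side is at most $\max_z \log \pi_a(z) - \log \pi_a(x_*)$, a bound uniform in $x$; taking the supremum over $x$ on the left then yields the claim. (Implicitly I use that the maximizer is attained, or simply replace $\max$ by $\sup$ throughout.)

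The main obstacle is justifying the concavity of $p = \log \pi_a$, because Assumption~\ref{assumption_prior_lipschitz2} as stated supplies only Lipschitz continuity of $\nabla \pi_a$, i.e.\ smoothness, and not log-concavity. I would either import log-concavity as part of the prior model (as in the Gaussian-type priors for which these bounds are intended, following \cite{mazumdar2020thompso}) or state it as a standing hypothesis. It is worth stressing that smoothness alone does not suffice: the $L$-smooth ``gradient domination'' estimate $\|\nabla p(x)\|_2^2 \le 2L\big(\max_z p(z) - p(x)\big)$ (obtained by evaluating the descent inequality at $x + \tfrac{1}{L}\nabla p(x)$) only gives $\langle \nabla p(x), x - x_*\rangle \le \sqrt{2L(\max_z p(z) - p(x))}\,\|x - x_*\|_2$, which retains a spurious $\|x - x_*\|_2$ factor and cannot be collapsed to $\log B_a$ without concavity. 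Hence the concavity step is both the crux and the only nonroutine ingredient; everything after it is a one-line rearrangement.
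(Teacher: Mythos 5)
Your proof is essentially the paper's own: the paper's entire argument consists of asserting the display
\begin{equation*}
\left\langle\nabla \log \pi_a\left({x}\right), {x}-x_*\right\rangle \leq \log \pi_a\left({x}\right) - \log \pi_a\left(x_*\right)
\end{equation*}
``from Assumption \ref{assumption_prior_lipschitz2}'' and then taking suprema, which is exactly your supporting-hyperplane step plus the final rearrangement. Your caveat about concavity is well founded and is in fact a defect in the paper rather than in your argument: the inequality above is the first-order characterization of concavity of $\log \pi_a$ (as you note, Lipschitz continuity of $\nabla \pi_a$ alone cannot yield it), so the paper is implicitly using log-concavity of the prior --- an assumption present in the cited source \cite{mazumdar2020thompso} but omitted from Assumption \ref{assumption_prior_lipschitz2} as stated here --- and your decision to import it explicitly as a standing hypothesis is the correct repair.
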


\begin{proof}

From Assumption \ref{assumption_prior} we have: 
\begin{equation*}
  \left\langle\nabla \log \pi\left({x}\right), {x}-x_*\right\rangle \leq \log \pi\left({x}\right) - \log \pi\left(x_*\right).
\end{equation*}
Taking the supremum of both sides completes the proof. 

\end{proof}
\begin{remark}
We here define $\log B\coloneqq \max_{x}  \left[\log \pi\left({x}\right)\right] - \log \pi\left(x_*\right)$. When the prior is ideally centered around the point $x_*$, the implication is that $\log B=0$. Consequently, $B$ becomes a key parameter affecting our posterior concentration rates.
\end{remark}

We proceed to demonstrate that the empirical likelihood function evaluated at $x_*$ can be characterized as a sub-Gaussian random variable.
\begin{proposition}[Proposition 3 in \citet{mazumdar2020approximate}]\label{prop_likelihood}
  Suppose $f_{n}=\frac{1}{n}\sum_{j=1}^n \log \mathtt P({\mathcal R}_j|x)$, where $\log \mathtt P({\mathcal R}_j|x)$ follows Assumption \ref{assumption_reward}. Then $\left\|\nabla f_{ n}\left(x_*\right)\right\|_2$ is $L \sqrt{\frac{d}{n \nu}}$-sub-Gaussian random variable.
\end{proposition}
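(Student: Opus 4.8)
The plan is to write the empirical score as the average $\nabla f_{a,n}(x_*) = \frac1n\sum_{i=1}^n g(\mathcal R_i)$, where $g(\mathcal R) := \nabla_x \log \mathtt P_a(\mathcal R\,|\,x_*) \in \mathbb R^d$, and then to establish sub-Gaussianity by composing two ingredients: the rewards $\mathcal R_i$ concentrate (via Assumption~\ref{assumption_reward_convex2}), and the score depends on the reward in a Lipschitz way (via Assumption~\ref{assumption_reward_lipschitz2}).

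First I would exploit Assumption~\ref{assumption_reward_convex2}: the strong convexity in $\mathcal R$ makes the reward law $\mathtt P_a(\cdot\,|\,x_*)$ $\nu_a$-strongly log-concave, so each $\mathcal R_i$ is $1/\nu_a$-sub-Gaussian and, crucially, satisfies a log-Sobolev inequality with constant $\nu_a$. By the Herbst argument this upgrades to the statement that every $1$-Lipschitz functional of $\mathcal R_i$ is sub-Gaussian with variance proxy $1/\nu_a$. Next, Assumption~\ref{assumption_reward_lipschitz2} gives that $\mathcal R \mapsto g(\mathcal R)$ is $L_a$-Lipschitz into $\mathbb R^d$, so each coordinate $g_j$ is $L_a$-Lipschitz; combined with independence of the $\mathcal R_i$, the product law is $\nu_a$-strongly log-concave on $\mathbb R^n$ (tensorization), and the map $\mathbf R = (\mathcal R_1,\dots,\mathcal R_n) \mapsto \|\frac1n\sum_i g(\mathcal R_i)\|_2$ is $L_a/\sqrt n$-Lipschitz (triangle inequality plus Cauchy--Schwarz). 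Hence $\|\nabla f_{a,n}(x_*)\|_2$ concentrates about its mean with variance proxy $L_a^2/(n\nu_a)$.

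The dimension then enters only through the mean: using $\mathbb E\|\nabla f_{a,n}(x_*)\|_2 \le (\mathbb E\|\nabla f_{a,n}(x_*)\|_2^2)^{1/2}$ and the first-order optimality $\mathbb E[g(\mathcal R_i)]=0$ at $x_*$, one sums the $d$ per-coordinate second moments (each $O(L_a^2/(n\nu_a))$) to obtain $\mathbb E\|\nabla f_{a,n}(x_*)\|_2 = O(L_a\sqrt{d/(n\nu_a)})$, which is where the factor $\sqrt d$ appears. Packaging this mean scale together with the $L_a/\sqrt{n\nu_a}$ fluctuation scale produces a single $L_a\sqrt{d/(n\nu_a)}$-sub-Gaussian envelope, i.e. the moment bounds of the stated form.

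The main obstacle is the passage from the convexity hypothesis to a genuine concentration inequality: neither the log-Sobolev inequality nor the Herbst argument is furnished by the assumptions, so this step must be justified through the Bakry--\'Emery criterion for strongly log-concave measures, or else sidestepped by a direct moment computation $\mathbb E|\mathcal R - \mathcal R_0|^{2p} \le (C\sqrt{p/\nu_a})^{2p}$ followed by $\mathbb E\|g(\mathcal R)-g(\mathcal R_0)\|^{2p} \le L_a^{2p}\,\mathbb E|\mathcal R-\mathcal R_0|^{2p}$. A secondary but essential bookkeeping point is to cleanly separate the $\sqrt d$ from the $d$-dimensional score mean, the $1/\sqrt n$ from averaging $n$ independent samples, and the $1/\sqrt{\nu_a}$ from the reward concentration, so that the final variance proxy is exactly $L_a\sqrt{d/(n\nu_a)}$.
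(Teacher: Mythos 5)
Your proposal is correct, and its first half coincides with the paper's own argument: the paper likewise combines the reward-Lipschitz property of the score (Assumption \ref{assumption_reward_lipschitz2}) with the strong log-concavity coming from Assumption \ref{assumption_reward_convex2}, invoking Proposition 2.18 of \cite{ledoux2001concentration} --- which is exactly the Bakry--\'Emery/Herbst machinery you name --- to conclude that each projection $\left\langle\nabla\log\mathtt P_a\left(\mathcal R|x_*\right),\mathtt u\right\rangle$ is $\frac{L_a}{\sqrt{\nu_a}}$-sub-Gaussian. Where you genuinely diverge is in how the dimension enters. The paper stays at the level of vector sub-Gaussianity: it averages the $n$ i.i.d.\ scores to obtain an $\frac{L_a}{\sqrt{n\nu_a}}$-sub-Gaussian vector and then cites Lemma 1 of \cite{jin2019short} as a black box to pass from projective sub-Gaussianity to norm sub-Gaussianity, which is where the factor $\sqrt d$ appears. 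You instead decompose the norm into mean plus fluctuation: tensorization of strong log-concavity over the product law on $\mathbb R^n$, together with the $\frac{L_a}{\sqrt n}$-Lipschitzness of $\mathbf R\mapsto\left\|\frac{1}{n}\sum_{i} g(\mathcal R_i)\right\|_2$, gives dimension-free concentration of the norm about its mean, and $\sqrt d$ arises only from the second-moment bound on that mean, which in turn requires the score identity $\mathbb E\left[g(\mathcal R_i)\right]=0$. Your route is more self-contained (it effectively re-proves the Jin et al.\ lemma inline) and it exposes a slightly sharper structure, namely that the fluctuations of the norm are dimension-free; its cost is the explicit reliance on the zero-mean score property, which the paper never states. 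Note, however, that the paper implicitly needs that property as well: the uncentered moment bound $\mathbb E\left[\|\nabla f(x_*)\|_2^{2p}\right]^{1/p}\leq \left(2L\sqrt{2dp/(n\nu)}\right)^2$ used in the proof of Theorem \ref{theorem_posterior_concentration2} fails unless $\left\|\mathbb E\,\nabla f_{a,n}(x_*)\right\|_2$ vanishes (or is of the same order), so if you write your argument up you should justify the identity carefully, e.g.\ by differentiation under the integral sign at the true parameter $x_*$.
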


\begin{proof}
We start by showing that $\nabla \log \mathtt P\left({\mathcal R} | x_*\right)$ is $\frac{L}{\sqrt{\nu}}$ sub-Gaussian random variable. Let $\mathtt{u} \in \mathbb{S}_{d}$ be an arbitrary point in the $d$-dimensional sphere. Then we have:
\begin{equation*}
  \begin{split}
    \left|\left\langle\nabla \log \mathtt P\left({\mathcal R}_1 | x_*\right), \mathtt{u}\right\rangle -\left\langle\nabla \log \mathtt P\left({\mathcal R}_2 | x_*\right), \mathtt{u}\right\rangle \right| & = \left|\left\langle\nabla \log \mathtt P\left({\mathcal R}_1 | x_*\right) - \nabla \log \mathtt P\left({\mathcal R}_2 | x_*\right), \mathtt{u}\right\rangle\right| \\
    & \stackrel{(i)}{\leq} \left\| \nabla \log \mathtt P\left({\mathcal R}_1 | x_*\right) - \nabla \log \mathtt P\left({\mathcal R}_2 | x_*\right)\right\|_2\|\mathtt{u}\|_2 \\
    & =\left\|\nabla \log \mathtt P\left({\mathcal R}_1 | x_*\right) - \nabla \log \mathtt P\left({\mathcal R}_2 | x_*\right)\right\|_2 \\
    & \stackrel{(ii)}{\leq} L\left\|{\mathcal R}_1 - {\mathcal R}_2\right\|
  \end{split}
\end{equation*}
where $(i)$ follows by Cauchy-Schwartz inequality, and $(ii)$ by Assumption \ref{assumption_reward}. Following an elementary invocation of Proposition 2.18 in \citet{ledoux2001concentration}, we arrive at the conclusion that $\left\langle\nabla \log \mathtt P\left({\mathcal R} | x_*\right), \mathtt{u}\right\rangle$ is sub-Gaussian with parameter $\frac{L}{\sqrt{\nu}}$.

Given that the projection of $\nabla \log \mathtt P\left({\mathcal R} | x_*\right)$ onto an arbitrary unit vector $\mathtt{u}$ adheres to sub-Gaussian properties with a $\frac{L}{\sqrt{\nu}}$-independent parameter, we deduce that the random vector $\nabla \log \mathtt P\left({\mathcal R} | x_*\right)$ is also sub-Gaussian, characterized by the same parameter \( \frac{L}{\sqrt{\nu}} \). It follows that $\nabla f_{a, n}\left(x_*\right)$, which is the mean of $n$ independent and identically distributed (i.i.d.) sub-Gaussian vectors, remains sub-Gaussian with parameter $\frac{L}{\sqrt{n \nu}}$. Therefore, as a direct application of Lemma 1 from \citet{jin2019short}, the vector $\nabla f_{a, n}\left(x_*\right)$ can be further identified as norm sub-Gaussian with parameter $L\sqrt{\frac{d}{n \nu}}$, which completes the proof.
\end{proof}

\begin{lemma}\label{lemma_contraction}
Let pairs $(x_t, v_t)$ and $(x_*, v_*)$ are both in $\mathbb R^{2d}$ and suppose $f(x): \mathbb R^d\rightarrow \mathbb R$ fulfills Assumption \ref{assumption_likelihood}, then the following inequality holds:

\begin{equation}
  \left\langle{ \left(\gamma-1\right)} \bar v + u\left(\nabla f\left( {{x_t}} \right) - \nabla f\left( {{x_*}} \right)\right), {\bar x + \bar v}\right\rangle - \left\langle \bar v, \bar x\right\rangle
   {\geq} \frac{m}{2L}\left(\left\|\bar x\right\|^2_2 + \left\|\bar x + \bar v\right\|^2_2\right),
\end{equation}
where $\bar x = x_t - x_*$ and $\bar v= {v_t}-v_*$.
\end{lemma}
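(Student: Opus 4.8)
The plan is to collapse this vector inequality to a single scalar inequality in the three quantities $\langle g,\bar x\rangle$, $\|g\|_2^2$, and $\|\bar x\|_2^2$, where I abbreviate $g:=\nabla f(x_t)-\nabla f(x_*)$, $a:=\bar x$, $b:=\bar v$, and then close it with a strong-convexity/smoothness interpolation bound. First I would fix the parameters the surrounding analysis relies on, namely $\gamma=2$ and $u=1/L$ (these are precisely the values this lemma is meant to certify, as used in Theorem \ref{theorem_posterior_concentration2}). Expanding $\langle(\gamma-1)b+ug,\,a+b\rangle-\langle b,a\rangle$ gives $(\gamma-2)\langle a,b\rangle+(\gamma-1)\|b\|_2^2+u\langle g,a+b\rangle$, and the choice $\gamma=2$ annihilates the $\langle a,b\rangle$ term, leaving the clean expression $\|b\|_2^2+\tfrac1L\langle g,a+b\rangle$. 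The target therefore becomes $\|b\|_2^2+\tfrac1L\langle g,a+b\rangle\ \geq\ \tfrac{m}{2L}\bigl(\|a\|_2^2+\|a+b\|_2^2\bigr)$.

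Next I would regard the difference of the two sides as a quadratic form in the free vector $b$ (with $a$, and hence $g$, fixed) and minimize over $b$. The coefficient of $\|b\|_2^2$ is $1-\tfrac{m}{2L}>0$, so the minimization is well posed; completing the square shows that the minimum over $b$ is nonnegative exactly when
\[
4L\langle g,a\rangle\ \geq\ \|g\|_2^2+(4mL-m^2)\|a\|_2^2.\qquad(\dagger)
\]
Thus the whole lemma reduces to $(\dagger)$. To prove it I would invoke the standard consequences of Assumptions \ref{assumption_likelihood_lipschitz2} and \ref{assumption_likelihood_convex2} applied to the pair $(x_t,x_*)$: strong convexity $\langle g,a\rangle\geq m\|a\|_2^2$ (which in particular forces $\|g\|_2\geq m\|a\|_2$), co-coercivity $\langle g,a\rangle\geq\tfrac1L\|g\|_2^2$, and their sharpened combination, the interpolation inequality $\langle g,a\rangle\geq\tfrac{mL}{m+L}\|a\|_2^2+\tfrac1{m+L}\|g\|_2^2$.

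To finish $(\dagger)$, I would lower-bound $4L\langle g,a\rangle$ with the interpolation inequality; this produces the term $\tfrac{3L-m}{m+L}\|g\|_2^2$ with a strictly positive coefficient (since $L\geq m$) together with a deficit $\tfrac{m^2(m-3L)}{m+L}\|a\|_2^2$ that is negative. I would then spend the remaining $\|g\|_2^2$ through $\|g\|_2^2\geq m^2\|a\|_2^2$, whereupon the two $\|a\|_2^2$ contributions cancel exactly and $(\dagger)$ follows. The main obstacle is precisely this last maneuver: a naive convex combination of strong convexity and co-coercivity only yields $4L\langle g,a\rangle-\|g\|_2^2\geq 3mL\|a\|_2^2$, which is short of the required $m(4L-m)\|a\|_2^2$ whenever $m<L$, so the sharper interpolation inequality is essential, not merely convenient. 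A secondary point to keep honest is the square-completion bookkeeping and the verification that the \emph{local} assumptions around $x_*$ genuinely deliver all three gradient inequalities at the single pair $(x_t,x_*)$.
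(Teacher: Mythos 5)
Your proposal is correct, but it takes a genuinely different route from the paper's proof of Lemma \ref{lemma_contraction}. The paper argues spectrally: it writes $\nabla f(x_t)-\nabla f(x_*)=\psi_t\,\bar x$ with $\psi_t=\int_0^1\nabla^2 f\left(x_t+h(x_*-x_t)\right)dh$, recasts the left-hand side as a quadratic form in $(\bar x,\ \bar x+\bar v)$ with a $2d\times 2d$ block matrix ${\rm P}_t$, symmetrizes, and lower-bounds the spectrum of $({\rm P}_t+{\rm P}_t^\intercal)/2$ via the block-determinant identity of Lemma \ref{lemma_matirix}; with $\gamma=2$, $u=1/L$ the eigenvalues are $1\pm\left(1-\tilde\Lambda_j/2L\right)$ with $\tilde\Lambda_j\in[m,L]$, hence at least $m/2L$. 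You instead exploit the fact that $\bar v$ is a free vector while $g:=\nabla f(x_t)-\nabla f(x_*)$ is tied to $\bar x$ alone: completing the square in $\bar v$ reduces the lemma exactly to your scalar inequality $(\dagger)$, namely $4L\langle g,\bar x\rangle\geq\|g\|_2^2+(4mL-m^2)\|\bar x\|_2^2$, which you close with the interpolation inequality $\langle g,\bar x\rangle\geq\frac{mL}{m+L}\|\bar x\|_2^2+\frac{1}{m+L}\|g\|_2^2$ together with $\|g\|_2\geq m\|\bar x\|_2$. I checked the bookkeeping: the reduction to $(\dagger)$ is exact, the interpolation step leaves surplus $\frac{3L-m}{m+L}\|g\|_2^2$ against deficit $\frac{m^2(3L-m)}{m+L}\|\bar x\|_2^2$, and these cancel precisely after applying $\|g\|_2^2\geq m^2\|\bar x\|_2^2$; your observation that a bare convex combination of strong monotonicity and co-coercivity tops out at $3mL\|\bar x\|_2^2$ and therefore cannot reach $(4mL-m^2)\|\bar x\|_2^2$ when $m<L$ is also accurate, so the sharper interpolation inequality is doing real work. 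As for what each approach buys: the paper's eigenvalue computation explains where the choices $\gamma=2$, $u=1/L$ come from and controls the quadratic form in every direction at once, but it requires $f$ to be twice differentiable with $m\mathbf{I}\preceq\nabla^2 f\preceq L\mathbf{I}$ along the whole segment between $x_*$ and $x_t$; your argument is purely first-order, avoiding the Hessian representation and Lemma \ref{lemma_matirix} entirely. Regarding your closing caveat: you are on the same footing as the paper, since the anchored-at-$x_*$ forms of Assumptions \ref{assumption_likelihood_lipschitz2} and \ref{assumption_likelihood_convex2} literally deliver neither your three two-point gradient inequalities nor the paper's segment-wise Hessian bounds; both proofs implicitly read the assumptions in their global two-point form (i.e., Assumptions \ref{assumption_lipschitz_global} and \ref{assumption_strongconvex_global}, which the paper invokes later anyway), and under that reading all three inequalities you need are standard.
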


\begin{proof}

According to the Mean Value Theorem for Integrals, we derive the relationship between the gradient of $ f(\cdot)$ and the Hessian matrix of $ f(\cdot)$:
\begin{equation*}
  \begin{split}
      \nabla f\left(x_t\right)-\nabla f\left(x_*\right)&=\int_0^1 \frac{d}{dh}\nabla f\left(h x_t+(1-h)x_*\right)\\
      &=\underbrace{\left[\int_0^1 \nabla^2 f\left(x_t+h\left(x_*-x_t\right)\right) d h\right]}_{\coloneqq \psi_t} (x_t-x_*).
  \end{split}
\end{equation*}

Under the given Assumption \ref{assumption_likelihood}, the Hessian matrix of $f(\cdot)$ is positive definite over the domain, with the inequality $m \mathbf{I}_{d \times d} \preceq \nabla^2 f(\cdot) \preceq L \mathbf{I}_{d \times d}$ holds. Next we follow the equations given in \eqref{eq_sde} to derive:
\begin{equation}\label{eq_sde_derivative}
  \begin{split}
    &\left\langle{ \left(\gamma-1\right)} ({v_t}-v_*) + u \left(\nabla f\left( {{x_t}} \right) - \nabla f\left( {{x_*}} \right)\right), {(x_t+v_t)-(x_* + v_*)}\right\rangle - \left\langle x_t-x_*, v_t-v_*\right\rangle \\
    \stackrel{(i)}{=}& \left[ {(x_t - x_*)^\intercal,(x_t - x_*)^\intercal + (v_t - v_*)^\intercal} \right]
    \underbrace{\begin{bmatrix}
      {  {\mathbf{I}_{d \times d}}}&{{-\mathbf{I}_{d \times d}}}\\
      {(1-\gamma)\mathbf{I}_{d \times d} + u\psi_t}&{(\gamma -1){\mathbf{I}_{d \times d}}}
    \end{bmatrix}}_{\coloneqq {\rm P}_t}
    \begin{bmatrix}
      {x_t - x_*}\\
      {(x_t - x_*) + (v_t - v_*)}
      \end{bmatrix}\\
    \stackrel{(ii)}{=}& \left[ {(x_t - x_*)^\intercal,(x_t - x_*)^\intercal + (v_t - v_*)^\intercal} \right]\left(\frac{{\rm P}_t+{\rm P}_t^\intercal}{2}\right)
    \begin{bmatrix}
      {x_t - x_*}\\
      {(x_t - x_*) + (v_t - v_*)}
      \end{bmatrix},
  \end{split}
\end{equation}
where $(i)$ can be obtained through simple matrix operations, and $(ii)$ is because for any vector $(x, v) \in \mathbb R^{2d}$ and matrix ${\rm P}\in \mathbb R^{2d\times 2d}$, the quadratic form $(x, v)^\intercal {\rm P} (x, v)$ is equal to the from with a symmetric matrix $(x, v)^\intercal ({\rm P}+{\rm P}^\intercal) (x, v) / 2$. We now focus on the analysis of the eigenvalues of the matrix $({\rm P}_t+{\rm P}_t^\intercal)/2$. Taking the determinant of $({\rm P}_t+{\rm P}_t^\intercal)/2-\Lambda \mathbf{I}_{2d \times 2d}$ to be 0, we have:
\begin{equation*}
  \text{det}\begin{bmatrix}
      { (1-\Lambda) {\mathbf{I}_{d \times d}}}&{\frac{u\psi_t-\gamma \mathbf{I}_{d \times d}}{2}}\\
      {\frac{u\psi_t-\gamma \mathbf{I}_{d \times d}}{2}}&{(\gamma-1-\Lambda ){\mathbf{I}_{d \times d}}}
  \end{bmatrix}=0.
\end{equation*}
From Lemma \ref{lemma_matirix}, we can rewrite the above equation as:
\begin{equation*}
  \text{det}\begin{pmatrix}(1-\Lambda)(\gamma-1-\Lambda )-\frac{1}{4}\left(\gamma - u\tilde\Lambda_j\right)^2
  \end{pmatrix}=0.
\end{equation*}
where $\tilde\Lambda_j$ ($j=1,2,\cdots,d$) are the eigenvalues of the matrix $\psi_t$. According to Assumption \ref{assumption_likelihood}, we know that for any $j$ the inequality $0<m\leq \tilde\Lambda_j \leq L$ holds. By the selection of $\gamma=2$ and $u=\frac{1}{L}$ we have the solution to each eigenvalue $\Lambda_j$:
\begin{equation*}
  \Lambda_j = 1 \pm \left(1-\frac{\tilde\Lambda_j}{2L}\right).
\end{equation*}
As the value of $\tilde\Lambda_j$ is between $m$ and $L$, it can be concluded that the minimum eigenvalue among $\Lambda_j$ is guaranteed to be greater than or equal to $\frac{m}{2L}$. Therefore, we have
\begin{equation*}
  \begin{split}
    & \left[ {(x_t - x_*)^\intercal,(x_t - x_*)^\intercal + (v_t - x_*)^\intercal} \right]
    \begin{bmatrix}
      {  {\mathbf{I}_{d \times d}}}&{{-\mathbf{I}_{d \times d}}}\\
      {(1-\gamma)\mathbf{I}_{d \times d} + u\psi_t}&{(\gamma -1){\mathbf{I}_{d \times d}}}
    \end{bmatrix}
    \begin{bmatrix}
      {x_t - x_*}\\
      {(x_t - x_*) + (v_t - x_*)}
      \end{bmatrix}\\
    \stackrel{}{=}& \left[ {(x_t - x_*)^\intercal,(x_t - x_*)^\intercal + (v_t - x_*)^\intercal} \right]\left(\frac{{\rm P}_t+{\rm P}_t^\intercal}{2}\right)
    \begin{bmatrix}
      {x_t - x_*}\\
      {(x_t - x_*) + (v_t - x_*)}
      \end{bmatrix}\\
      \geq & \frac{m}{2L} \left[ {(x_t - x_*)^\intercal,(x_t - x_*)^\intercal + (v_t - x_*)^\intercal} \right]\begin{bmatrix}
      {x_t - x_*}\\
      {(x_t - x_*) + (v_t - x_*)}
      \end{bmatrix}\\
      =&\frac{m}{2L}\left(\left\|x_t - x_*\right\|^2_2 + \left\|(x_t+v_t) - (x_* + v_*)\right\|^2_2\right).
  \end{split}
\end{equation*}

By incorporating the findings from \eqref{eq_sde_derivative}, we deduce the conclusive inequality.
\end{proof}

\begin{lemma}[Theorem 3 in \citet{silvester2000determinants}]\label{lemma_matirix}
Considering square matrices ${\rm P}_1, {\rm P}_2, {\rm P}_3$ and ${\rm P}_4$ with dimension $d$, and given the commutativity of ${\rm P}_3$ and ${\rm P}_4$, we can the following results:
\begin{equation*}
	\operatorname{det}\left(
	\begin{bmatrix}
		{\rm P}_1 & {\rm P}_2 \\
		{\rm P}_3 & {\rm P}_4
	\end{bmatrix}\right)=\operatorname{det}({\rm P}_1 {\rm P}_4-{\rm P}_2 {\rm P}_3)
\end{equation*}
\end{lemma}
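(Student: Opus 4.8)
The plan is to reduce this block-determinant identity to the ordinary multiplicativity of the determinant by means of a Schur-complement factorization, and to dispose of the singular case by a polynomial-identity (density) argument at the very end.

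First I would treat the case where ${\rm P}_4$ is invertible. The block factorization
\[
\begin{bmatrix} {\rm P}_1 & {\rm P}_2 \\ {\rm P}_3 & {\rm P}_4 \end{bmatrix}
=
\begin{bmatrix} \mathbf I & {\rm P}_2 {\rm P}_4^{-1} \\ \mathbf 0 & \mathbf I \end{bmatrix}
\begin{bmatrix} {\rm P}_1 - {\rm P}_2 {\rm P}_4^{-1} {\rm P}_3 & \mathbf 0 \\ {\rm P}_3 & {\rm P}_4 \end{bmatrix}
\]
has a unit block-upper-triangular first factor and a block-lower-triangular second factor, so taking determinants gives $\det\!\begin{bmatrix} {\rm P}_1 & {\rm P}_2 \\ {\rm P}_3 & {\rm P}_4 \end{bmatrix} = \det({\rm P}_4)\,\det({\rm P}_1 - {\rm P}_2 {\rm P}_4^{-1} {\rm P}_3)$. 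Folding the factor $\det({\rm P}_4)$ inside the other determinant yields $\det\!\big(({\rm P}_1 - {\rm P}_2 {\rm P}_4^{-1} {\rm P}_3){\rm P}_4\big) = \det({\rm P}_1 {\rm P}_4 - {\rm P}_2 {\rm P}_4^{-1} {\rm P}_3 {\rm P}_4)$.

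The commutativity hypothesis enters precisely at this point: since ${\rm P}_3 {\rm P}_4 = {\rm P}_4 {\rm P}_3$, the middle factor collapses, ${\rm P}_2 {\rm P}_4^{-1} {\rm P}_3 {\rm P}_4 = {\rm P}_2 {\rm P}_4^{-1} {\rm P}_4 {\rm P}_3 = {\rm P}_2 {\rm P}_3$, so the expression becomes $\det({\rm P}_1 {\rm P}_4 - {\rm P}_2 {\rm P}_3)$, as claimed. I would emphasize that without commutativity this cancellation fails and the identity is in general false, so the hypothesis is doing real work rather than being decorative.

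The main obstacle is the singular case, where ${\rm P}_4^{-1}$ is unavailable; I would remove it by a density argument. For a scalar parameter $\epsilon$, replace ${\rm P}_4$ by ${\rm P}_4 + \epsilon \mathbf I$, which still commutes with ${\rm P}_3$ and which is invertible for all but the finitely many $\epsilon$ equal to $-\tilde\Lambda$ for $\tilde\Lambda$ an eigenvalue of ${\rm P}_4$. Both maps $\epsilon \mapsto \det\!\begin{bmatrix} {\rm P}_1 & {\rm P}_2 \\ {\rm P}_3 & {\rm P}_4 + \epsilon \mathbf I \end{bmatrix}$ and $\epsilon \mapsto \det\!\big({\rm P}_1({\rm P}_4 + \epsilon \mathbf I) - {\rm P}_2 {\rm P}_3\big)$ are polynomials in $\epsilon$, and by the invertible case they agree on a cofinite set, hence coincide identically; evaluating at $\epsilon = 0$ delivers the general statement. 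Finally I would observe that in the application to Lemma \ref{lemma_contraction} the hypothesis is automatic, since there the bottom-right block $(\gamma-1-\Lambda)\mathbf I$ is a scalar matrix and therefore commutes with the bottom-left block $\tfrac{1}{2}(u\psi_t - \gamma \mathbf I)$.
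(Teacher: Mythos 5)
Your proposal is correct, but there is nothing in the paper to compare it against: the paper states this lemma purely as an import, citing Theorem~3 of \cite{silvester2000determinants}, and supplies no proof of its own. Your argument is a valid, self-contained reconstruction, and all steps check out: the Schur-complement factorization is exact, the cancellation ${\rm P}_2{\rm P}_4^{-1}{\rm P}_3{\rm P}_4={\rm P}_2{\rm P}_3$ is precisely where the commutativity hypothesis does its work, ${\rm P}_4+\epsilon\mathbf{I}$ still commutes with ${\rm P}_3$, and the two sides are polynomials in $\epsilon$ agreeing on a cofinite set, so evaluation at $\epsilon=0$ closes the singular case. For the record, the proof in the cited source reaches the same conclusion by a slightly different algebraic device: instead of forming a Schur complement, Silvester right-multiplies by a block matrix,
\begin{equation*}
	\begin{bmatrix}
		{\rm P}_1 & {\rm P}_2 \\
		{\rm P}_3 & {\rm P}_4
	\end{bmatrix}
	\begin{bmatrix}
		{\rm P}_4 & \mathbf{0} \\
		-{\rm P}_3 & \mathbf{I}
	\end{bmatrix}
	=
	\begin{bmatrix}
		{\rm P}_1{\rm P}_4-{\rm P}_2{\rm P}_3 & {\rm P}_2 \\
		{\rm P}_3{\rm P}_4-{\rm P}_4{\rm P}_3 & {\rm P}_4
	\end{bmatrix},
\end{equation*}
where the lower-left block vanishes exactly by commutativity, giving $\det(M)\det({\rm P}_4)=\det({\rm P}_1{\rm P}_4-{\rm P}_2{\rm P}_3)\det({\rm P}_4)$; the factor $\det({\rm P}_4)$ is cancelled when ${\rm P}_4$ is invertible and removed in general by the same perturbation argument you use. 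The two routes are equivalent in substance; yours needs ${\rm P}_4^{-1}$ one step earlier, Silvester's defers invertibility to the cancellation step. Your closing observation is also the relevant one for this paper: in Lemma~\ref{lemma_contraction} the lower-right block is the scalar matrix $(\gamma-1-\Lambda)\mathbf{I}_{d\times d}$, so the commutativity hypothesis holds automatically in the only place the lemma is invoked.
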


\begin{lemma}[Sandwich Inequality]\label{lemma_sandwich}
Suppose $x, x_*\in \mathbb R^d$ and $v, v_*\in \mathbb R^d$ are position and velocity terms correspondingly. Then with the Euclidean norm, the following relationship holds:
   \begin{equation}\label{eq_sandwich}
      \left\|(x, v)-(x_*, v_*)\right\|_2 \leq 2\left\|(x, x + v)-(x_*, x_* + v_*)\right\|_2 \leq 4\left\|(x, v)-(x_*, v_*)\right\|_2.
  \end{equation}
\end{lemma}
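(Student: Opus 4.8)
The plan is to reduce the vector inequality to a pair of scalar estimates by introducing the shorthand $\bar x = x - x_*$ and $\bar v = v - v_*$, and then observing that the two norms appearing in \eqref{eq_sandwich} have the squared forms
\begin{equation*}
\left\|(x, v)-(x_*, v_*)\right\|_2^2 = \left\|\bar x\right\|_2^2 + \left\|\bar v\right\|_2^2, \qquad \left\|(x, x+v)-(x_*, x_*+v_*)\right\|_2^2 = \left\|\bar x\right\|_2^2 + \left\|\bar x + \bar v\right\|_2^2.
\end{equation*}
Writing $a = \left\|\bar x\right\|_2$, $b = \left\|\bar v\right\|_2$, and $c = \left\|\bar x + \bar v\right\|_2$, both halves of \eqref{eq_sandwich} follow, after squaring, from the two scalar estimates $a^2 + c^2 \le 4(a^2 + b^2)$ and $a^2 + b^2 \le 4(a^2 + c^2)$.

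To establish the right-hand inequality of \eqref{eq_sandwich}, I would bound $c$ by the triangle inequality, $c = \left\|\bar x + \bar v\right\|_2 \le a + b$, and then apply Young's inequality $2ab \le a^2 + b^2$ to obtain $c^2 \le (a+b)^2 \le 2a^2 + 2b^2$. Consequently $a^2 + c^2 \le 3a^2 + 2b^2 \le 4a^2 + 4b^2$, which is exactly the squared form of the desired inequality $2\sqrt{a^2+c^2} \le 4\sqrt{a^2+b^2}$. The left-hand inequality is entirely symmetric: writing $\bar v = (\bar x + \bar v) - \bar x$ and applying the triangle inequality gives $b \le c + a$, and the same Young's inequality step yields $b^2 \le 2a^2 + 2c^2$, hence $a^2 + b^2 \le 3a^2 + 2c^2 \le 4a^2 + 4c^2$.

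Taking square roots in both scalar estimates then recovers \eqref{eq_sandwich}. No step presents a genuine obstacle; the only mild subtlety is that one must take norms of the combined vector $\bar x + \bar v$ rather than of $\bar v$ alone, which is precisely what makes the mixed coordinate $(x, x+v)$ comparable to the original coordinate $(x, v)$. The constants $2$ and $4$ are dictated by the slack in the Young's inequality step and are not tight, but they suffice for every place the lemma is invoked.
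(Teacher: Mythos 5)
Your proposal is correct and follows essentially the same route as the paper's proof: both reduce each half of \eqref{eq_sandwich} to the elementary bound $\left\|u+w\right\|_2^2 \leq 2\left\|u\right\|_2^2 + 2\left\|w\right\|_2^2$ (your triangle-plus-Young step is just the scalar form of the Young's inequality the paper applies directly to the vectors $\bar v = (\bar x + \bar v) - \bar x$ and $\bar x + \bar v$), arriving at the identical intermediate bounds $a^2+b^2 \leq 3a^2+2c^2$ and $a^2+c^2 \leq 3a^2+2b^2$ and hence the same constants.
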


\begin{proof}
We begin our analysis by the derivation of the first inequality:
\begin{equation*}
	\begin{split}
		\left\|\left(x, v\right)-\left(x_*, v_*\right)\right\|_2^2 & = \left\|x-x_*\right\|_2^2 + \left\|v - v_*\right\|_2^2 \\
		& = \left\|x-x_*\right\|_2^2 + \left\|\left(x+v-x\right) - \left(x_*+v_*-x_*\right)\right\|_2^2 \\
		& \stackrel{(i)}{\leq} \left\|x-x_*\right\|_2^2 + 2\left\|\left(x+v\right) - \left(x_*+v_*\right)\right\|_2^2 + 2\left\|x-x_*\right\|_2^2\\
		& \leq 4\left\|\left(x, x+v\right)-\left(x_*, x_*+v_*\right)\right\|_2^2,
	\end{split}
\end{equation*}
where $(i)$ proceeds by Young's inequality. Then we move our attention to the second inequality:
\begin{equation*}
	\begin{split}
		\left\|\left(x, x+v\right)-\left(x_*, x_*+v_*\right)\right\|_2^2 & = \left\|x-x_*\right\|_2^2 + \left\|\left(x+v\right) - \left(x_*+v_*\right)\right\|_2^2 \\
		& \stackrel{(i)}{\leq} \left\|x-x_*\right\|_2^2 + 2\left\|x-x_*\right\|_2^2 + 2\left\|v - v_*\right\|_2^2 \\
		& \leq 4\left\|\left(x, v\right)-\left(x_*, v_*\right)\right\|_2^2
	\end{split}
\end{equation*}
where $(i)$ is another application of Young's inequality. Taking square roots from the derived inequalities gives us the required results.
\end{proof}

\clearpage
\section{INTRODUCTION TO UNDERDAMPED LANGEVIN MONTE CARLO}\label{section_ULMC}

We first construct a continuous-time Langevin dynamics to target the posterior concentration of $\mu_a^{(n)}$. The continuous-time underdamped Langevin dynamics is derived from the subsequent stochastic differential equations:
\begin{equation}\label{eq_diffusion2}
  \begin{split}
      d v_t & =-\gamma v_t d t-u \nabla U\left(x_t\right) d t+\sqrt{2 \gamma u} d B_t, \\
      d x_t & =v_t d t.
  \end{split}
\end{equation}
Suppose $t$ is the time between rounds $n$ and $n+1$, and we have a set of rewards up to round $n$: $\left\{\mathcal R_1, \mathcal R_1, \ldots, \mathcal R_n\right\}$, then we can express $U\left(x\right)$ using the following equation:
\begin{equation}\label{eq_potential}
	\begin{split}
		U(x_t) & = \sum_{j=1}^n \mathbb{I}\left(A_j=a\right)\log \mathtt P_a ({{\mathcal R}_{j}|x_t}) + \log \pi_a(x_t),
	\end{split}
\end{equation}
where $\mathbb{I}\left(\cdot\right)$ is the indicator function. Underdamped Langevin dynamics is characterized by its invariant distribution, notably proportional to $e^{-U(x_t)}$, which enables the sampling of the unscaled posterior distribution $\mu_a^{(n)}$. By designating the potential function as \eqref{eq_potential}, we obtain a continuous-time dynamic that guarantees trajectories converging rapidly toward the posterior distribution  \(\mu_a^{(n)}\). To transition from theoretical continuous-time dynamics to a practical algorithm, we integrate the discrete underdamped Langevin dynamics to derive \eqref{eq_diffusion2}. Namely, for time $t$ between round $n$ and round $n+1$, we uniformly discretize the time as $I$ segments, where $i\in \llbracket 1, I \rrbracket$ denote the step-index between rounds $n$ and $n+1$, which leads to the following underdamped Langevin Monte Carlo:
\begin{equation}\label{eq_sample_ula3}
\begin{split}
	\begin{bmatrix}
			x_{i+1}\\
			v_{i+1}\\
	\end{bmatrix} \sim \mathcal{N}\left(
	\begin{bmatrix}
		\mathbb E\left[x_{i+1}\right] \\ 
		\mathbb E\left[v_{i+1}\right]
	\end{bmatrix}, \begin{bmatrix}
		\mathbb V\left(x_{i+1}\right) 			&  \mathbb K\left(x_{i+1}, v_{i+1}\right)\\ 
		\mathbb K\left(v_{i+1}, x_{i+1}\right)	&  \mathbb V\left(v_{i+1}\right) \\
	\end{bmatrix}\right),
\end{split}
\end{equation}
where $x_i$, $v_i$ are positions and velocities at step $i$, $\mathbb E\left[x_{i+1}\right]$, $\mathbb E\left[v_{i+1}\right]$, $\mathbb V\left(x_{i+1}\right)$, $\mathbb V\left(v_{i+1}\right)$, and $\mathbb K\left(v_{i+1}, x_{i+1}\right)$ are obtained from the following computations:

\begin{equation*}
	\begin{split}
	\mathbb E\left[{v_{i+1}}\right] &= v_{i} e^{-\gamma {h}} - \frac{u}{\gamma}(1-e^{-\gamma {h}}) {\nabla} U(x_{i})\\
	\mathbb E\left[{x_{i+1}}\right] &= x_{i} + \frac{1}{\gamma}(1-e^{-\gamma {h}})v_{i} - \frac{u}{\gamma} \left( {h} - \frac{1}{\gamma}\left(1-e^{-\gamma {h}}\right) \right) {\nabla}  U(x_{i})\\
	\mathbb V(x_{i+1}) & = \mathbb E\left[{\left(x_{i+1} - \mathbb E\left[{x_{i+1}}\right]\right) \left(x_{i+1} - \mathbb E\left[{x_{i+1}}\right]\right)^{\top}}\right] = \frac{2u}{\gamma} \left[{h}-\frac{1}{2\gamma}e^{-2\gamma{h}}-\frac{3}{2\gamma}+\frac{2}{\gamma	}e^{-\gamma{h}}\right] \cdot \mathbf{I}_{d\times d}\\
	\mathbb V(v_{i+1}) & = \mathbb E\left[{\left(v_{i+1} - \mathbb E\left[{v_{i+1}}\right]\right) \left(v_{i+1} - \mathbb E\left[{v_{i+1}}\right]\right)^{\top}}\right] = u(1-e^{-2\gamma  {h}})\cdot \mathbf{I}_{d\times d}\\
	\mathbb K\left({x_{i+1}}, {v_{i+1}}\right) & = \mathbb E\left[{\left(x_{i+1} - \mathbb E\left[{x_{i+1}}\right]\right) \left(v_{i+1} - \mathbb E\left[{v_{i+1}}\right]\right)^{\top}}\right] = \frac{u}{\gamma} \left[1+e^{-2\gamma{h}}-2e^{-\gamma{h}}\right] \cdot \mathbf{I}_{d \times d},
	\end{split}   
\end{equation*}

where ${h}$ is the step size of the algorithm. A detailed proof can be found in Appendix A of \citet{cheng2018underdamped}. Within this update rule, the gradient of the potential function, $\nabla U(x_{i})$, is proportional to the dataset size $n$:
\begin{equation}\label{eq_gradient_u}
	\nabla U(x_{i}) = \sum_{j=1}^{\mathcal L_a(n)} \nabla \log \mathtt P_a \left({{\mathcal R}_{a,j}|x_{i}}\right)  + \nabla \log \pi_a(x_{i}).
\end{equation}

To address the increasing terms in $\nabla U(x_{i})$, we adopt stochastic gradient methods. Specifically, we define the calculation of the stochastic gradient, $\hat U(x_{i})$, as:
\begin{equation}\label{eq_gradient_hatu}
	\nabla \hat U(x_{i}) = \frac{\mathcal L_a(n)}{|\mathcal{S}|} \sum_{\mathcal R_{a,j}\in\mathcal{S} } \nabla \log \mathtt P_a \left({{\mathcal R}_{a,j}|x_{i}}\right) + \nabla \log \pi_a(x_{i}),
\end{equation}
with $\mathcal{S}$ representing a subset of the dataset. Typically, $\mathcal{S}$ is obtained through subsampling from $\{{\mathcal R}_{a,1},\cdots,{\mathcal R}_{a,j}\cdots,{\mathcal R}_{a,\mathcal L_a(n)}\}$. We further clarify that the cardinality of $\mathcal{S}$, denoted as $|\mathcal S|$ or $k$, is the batch size for the stochastic gradient estimate. It is noteworthy that, early in the Thompson sampling algorithm, the round $n$ being played might be less than the designated batch size $k$. Consequently, we redefine the batch size to be $|\mathcal S|=\min \left\{\mathcal L_a(n), k\right\}$. Incorporating the stochastic gradient $\nabla \hat{U}$ as a replacement for the full gradient $\nabla U$ within our update procedures yields the formulation of the Stochastic Gradient underdamped Langevin Monte Carlo, as is shown in Algorithm \ref{algorithm_langevin_mcmc2}:
\begin{algorithm*}[!htbp]
\caption{(Stochastic Gradient) underdamped Langevin Monte Carlo for arm $a$ at round $n$.}\label{algorithm_langevin_mcmc2}
{\hspace*{\algorithmicindent} \textbf{Input} Data $\{{\mathcal R}_{a,1}, {\mathcal R}_{a,2}, \cdots, {\mathcal R}_{a,\mathcal L_a(n-1)}\}$;}\\
{\hspace*{\algorithmicindent} \textbf{Input} Sample $(x_{a, Ih^{(n-1)}}, v_{a, Ih^{(n-1)}})$ from last round;}
\begin{algorithmic}[1]
\State {Initialize $x_{0}=x_{a,Ih^{(n-1)}}$ and $v_{0}=v_{a,Ih^{(n-1)}}$.}
\For{$i=0,1,\cdots, I-1$}
\State {Uniformly subsample data set $\mathcal{S} \subseteq \{{\mathcal R}_{a,1}, {\mathcal R}_{a,2}, \cdots, {\mathcal R}_{a,\mathcal L_a(n-1)}\}$. }
\State {Compute $\nabla{U}(x_i)$ with \eqref{eq_gradient_u} (or $\nabla{\hat U}(x_i)$ with \eqref{eq_gradient_hatu}).}
\State {Sample $\left(x_{i+1}, v_{i+1}\right)$ with \eqref{eq_sample_ula3} based on $\nabla{U}(x_i)$ ($\nabla{\hat U}(x_i)$) and $(x_i, v_i)$.}
\EndFor
\State $x_{a,I h^{(n)}} \sim \mathcal{N}\left( x_{I} , \frac{1}{nL_a\rho_a}\mathbf{I}_{d\times d}\right)$ and $v_{a,I h^{(n)}}=v_{I}$
\end{algorithmic}
\hspace*{\algorithmicindent} \textbf{Output} Sample $(x_{a, Ih^{(n)}}, v_{a, Ih^{(n)}})$ from current round;\\
\end{algorithm*}

\clearpage
\section{ANALYSIS OF APPROXIMATE THOMPSON SAMPLING}

This section initially investigates the potential for underdamped Langevin Monte Carlo to accelerate the convergence rate in Thompson sampling, incorporating a quantitative examination of the required sample complexity for effective posterior approximation. Next, we evaluate the concentration properties of approximate samples yielded by underdamped Langevin Monte Carlo, covering both full and stochastic gradient cases. At the end of this section, we provide the regret analysis for the proposed Thompson sampling with underdamped Langevin Monte Carlo, under the condition of effective posterior sample approximation to reach sub-linear regrets.

Similar to our earlier analysis on exact Thompson sampling, we begin by providing a summary table outlining the notation that will be invoked in the following sections.

\subsection{Notation}

To further our investigations into approximate Thompson sampling, we have to introduce additional notations, as shown in Table \ref{table_notation2}. This table elaborates the symbols in the analysis of underdamped Langevin Monte Carlo and the regret analysis of approximate Thompson sampling. As delineated in Section \ref{section_ULMC}, underdamped Langevin Monte Carlo encompasses two algorithmic versions based on gradient estimation: the full gradient and the stochastic gradient versions. The symbols in full gradient version are represented using the ``$\ \tilde{}\ $'' notation (as in $\tilde x$, $\tilde v$, $\tilde \mu$), and the stochastic gradient version adopts the ``$\ \hat{}\ $'' notation. It should be noted that, for regret analysis of approximate Thompson sampling, we ensure a congruent posterior concentration rate across both versions, facilitated by an informed choice of step size and number of samples.

\begin{table}[!htbp]
\centering
\caption{Additional notation in the analysis of approximate Thompson sampling.}\label{table_notation2}
\begin{tabular}{|c|c|}
\hline
Symbols & Explanations   \\ \hline  \hline
  $i$       &   number of steps for the current round $n$ $(i\in\llbracket 1, I \rrbracket)$  \\ \hline 
  $k$       &   batch size for the stochastic gradient estimate  \\ \hline 
   $\zeta$    & optimal couplings between two measures \\ \hline
  $\bar\rho_{a}$    &   parameter to scale approximate posterior for arm $a$         \\ \hline
  $h^{(n)}$   &   \begin{tabular}{@{}c@{}}step size of the approximate sampling algorithm \\ for round $n$ from one of the arm  \end{tabular} \\ \hline 
  $\delta\left(\cdot\right)$       &   Dirac delta distribution  \\ \hline\hline
   $\tilde x_a\ (\tilde v_a)$          &  \begin{tabular}{@{}c@{}}sampled position (velocity) of arm $a$ \\ follows underdamped Langevin Monte Carlo with full gradient \end{tabular}    \\ \hline
  $\hat x_a\ (\hat v_a)$    &   \begin{tabular}{@{}c@{}}sampled position (velocity) of arm $a$ \\ follows underdamped Langevin Monte Carlo with stochastic gradient \end{tabular}  \\ \hline
   $\tilde\mu_a^{(n)} \ \left(\hat\mu_a^{(n)}\right)$  &  \begin{tabular}{@{}c@{}}probability measure of posterior distribution after $n$ rounds \\ approximated by underdamped Langevin Monte Carlo\end{tabular}		\\ \hline
   $\tilde\mu_{ih^{(n)}}\ \left(\hat\mu_{ih^{(n)}}\right)$ & \begin{tabular}{@{}c@{}}probability measure of posterior distribution at $i$th step of round $n$ \\ approximated by underdamped Langevin Monte Carlo\end{tabular}    \\ \hline
   $\tilde\mu_a^{(n)}[\bar\rho_a]\ \left(\hat\mu_a^{(n)}[\bar\rho_a]\right)$  &    \begin{tabular}{@{}c@{}}probability measure of scaled posterior distribution after $n$ rounds \\ approximated by underdamped Langevin Monte Carlo\end{tabular}     \\ \hline 
\end{tabular}
\end{table}

\subsection{Posterior Convergence Analysis}\label{section_posterior_convergence}

When the likelihood meets Assumption \ref{assumption_lipschitz_global_main}, complemented by the prior conforming to Assumption \ref{assumption_prior}, we can employ underdamped Langevin Monte Carlo for the sampling process, ensuring convergence within the 2-Wasserstein distance. Algorithm \ref{algorithm_langevin_mcmc2} indicates that for $n$-th round, the sample starts from the last step of the $(n-1)$-th round and ends on the last step (step $I$) of the $n$-th round. Based on the above analysis, we provide convergence guarantees among rounds from $1$ to $N$.

\begin{theorem}[Posterior Convergence of underdamped Langevin Monte Carlo with full gradient]
\label{theorem_ula_deterministic}
Suppose the log-likelihood function follows Assumption \ref{assumption_lipschitz_global_main}, the prior follows Assumption \ref{assumption_prior}, and the posterior distribution fulfills the concentration inequality $\mathbb E_{(x, v)\sim\mu_a^{(n)}}{\left[\|\left(x, v\right) - \left(x_*, v_*\right)\|_2^2\right]}^{\frac{1}{2}}\leq \frac{1}{\sqrt{n}} \tilde{D}_a$.

By selecting the step size $h^{(n)} = \frac{\tilde D_a}{80\kappa_a}\sqrt{\frac{m_a}{nd}} = \tilde O\left(\frac{1}{\sqrt{d}}\right)$ and the number of steps ${I}\geq \frac{2\kappa_a}{h^{(n)}}\log 24 =\tilde O\left(\sqrt{d}\right)$, we are able to bound the convergence of Algorithm \ref{algorithm_langevin_mcmc2} in 2-Wasserstein distance w.r.t. the posterior distribution $\mu_a^{(n)}$: 
$W_2\left(\tilde\mu_a^{(n)}, \mu_a^{(n)}\right) \leq \frac{2}{\sqrt{n}} \tilde{D}_a$, where $\tilde{D}_a\geq 8\sqrt{\frac{d}{m_a}}$.
\end{theorem}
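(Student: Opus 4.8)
The plan is to follow the coupling-and-discretization strategy of Theorem 5 in \cite{cheng2018underdamped}, adapted to the round-$n$ target $\mu_a^{(n)}$ and to the $1/\sqrt{n}$ scaling supplied by the concentration hypothesis. Throughout I suppress the arm subscript $a$ and keep the choices $\gamma = 2$, $u = 1/L$ already fixed in Lemma \ref{lemma_contraction}. First I would introduce the continuous-time underdamped diffusion \eqref{eq_diffusion2} whose invariant law is exactly $\mu^{(n)}$, and let $p_t$ denote the law of this process initialised at $\tilde\mu^{(0)}$, the sample carried over from round $n-1$.

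Step 1 (continuous contraction). Couple two copies of the diffusion driven by the same Brownian motion, one started in the stationary law $\mu^{(n)}$ and one at $\tilde\mu^{(0)}$, and write $(\bar x_t, \bar v_t)$ for their differences. Applying Itô's formula to the twisted quadratic form $\|(\bar x_t, \bar x_t + \bar v_t)\|_2^2$, the positive-definiteness and minimal-eigenvalue estimate $m/(2L) = 1/(2\kappa)$ established in Lemma \ref{lemma_contraction} give a Grönwall inequality showing that this twisted norm decays like $e^{-t/(2\kappa)}$. The Sandwich Inequality (Lemma \ref{lemma_sandwich}) then converts this into a bound on the genuine $W_2$ distance at the cost of a factor at most $4$, yielding $W_2(p_t, \mu^{(n)}) \le C\,e^{-t/(2\kappa)} W_2(\tilde\mu^{(0)}, \mu^{(n)})$ with $C$ a small absolute constant.

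Step 2 (one-step discretization). For a single step of length $h = h^{(n)}$ I would compare one iterate of \eqref{eq_sample_ula3} against the exact diffusion run for time $h$ from the same point, coupled synchronously. The discrepancy is controlled by the drift mismatch $\int_0^h \|\nabla U(x_s) - \nabla U(x_0)\|_2\,ds$, which by Assumption \ref{assumption_lipschitz_global} is Lipschitz in position; bounding the displacement $\|x_s - x_0\|_2$ through the velocity and the Gaussian increments produces a per-step $W_2$ error of order $h^2\sqrt{d/(nL\rho)}$, once the second moments of the iterates are controlled by the concentration hypothesis on $\mu^{(n)}$. Combining Step 1 and Step 2 through the standard discrete geometric-series argument gives $W_2(\tilde\mu^{(I)}, \mu^{(n)}) \le C\,e^{-Ih/(2\kappa)} W_2(\tilde\mu^{(0)}, \mu^{(n)}) + (\text{accumulated discretization})$. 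The choice $I \ge \tfrac{2\kappa}{h}\log 24$ forces $e^{-Ih/(2\kappa)} \le 1/24$, and the initial distance is bounded by the triangle inequality in $W_2$ and the concentration hypothesis via $W_2(\tilde\mu^{(0)}, \mu^{(n)}) \le \tfrac{2}{\sqrt{n}}\tilde D_a$ (both laws concentrate around $(x_*, v_*)$ at scale $\tilde D_a/\sqrt{n}$). Substituting $h^{(n)} = \tfrac{\tilde D_a}{80\kappa}\sqrt{m/(nd)}$ shows the discretization term is at most $\tfrac{1}{\sqrt n}\tilde D_a$ precisely when $\tilde D_a \ge 8\sqrt{d/m}$, while the extra Gaussian smoothing in the output line of Algorithm \ref{algorithm_langevin_mcmc2} contributes only $\sqrt{d/(nL\rho)}$; collecting the three contributions gives the claimed $\tfrac{2}{\sqrt n}\tilde D_a$.

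The hard part will be the discretization analysis of Step 2: obtaining the per-step error with the correct $h^2$ dependence requires uniform-in-$i$ second-moment bounds on the discrete iterates, and these must be shown to stay at the scale $\tilde D_a/\sqrt{n}$ dictated by the target rather than accumulating over the $I = \tilde O(\sqrt d)$ steps. Matching the absolute constants so that the sum of the three contributions lands at exactly $2\tilde D_a/\sqrt n$—and pinning down where the threshold $\tilde D_a \ge 8\sqrt{d/m}$ must enter to keep the discretization error subordinate to the target scale—is the most delicate bookkeeping.
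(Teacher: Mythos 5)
Your overall route---continuous-time contraction via the twisted norm from Lemma \ref{lemma_contraction}, a one-step discretization bound assembled through a geometric series, then a round-by-round application warm-started from round $n-1$---is exactly the paper's strategy (its Theorem \ref{theorem_diffusion}, Lemma \ref{lemma_langevin_discrete}, and Theorem \ref{theorem_ULA2}, combined as in the proof of Theorem \ref{theorem_ula_deterministic}). However, there is a genuine gap in how you bound the warm-start distance. You assert $W_2\left(\tilde\mu^{(0)}, \mu^{(n)}\right) \leq \frac{2}{\sqrt{n}}\tilde D_a$ on the grounds that ``both laws concentrate around $(x_*,v_*)$ at scale $\tilde D_a/\sqrt{n}$,'' but the concentration hypothesis of the theorem controls only the exact posteriors $\mu_a^{(n-1)}$ and $\mu_a^{(n)}$; it says nothing about the approximate law $\tilde\mu^{(0)} = \tilde\mu_a^{(n-1)}$ produced by the algorithm. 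Controlling that law requires the conclusion of the theorem at round $n-1$, i.e., the argument must be an explicit induction on $n$ with hypothesis $W_2\left(\tilde\mu_a^{(n-1)}, \mu_a^{(n-1)}\right) \leq \frac{2}{\sqrt{n-1}}\tilde D_a$. Chaining $\tilde\mu_a^{(n-1)} \to \mu_a^{(n-1)} \to \delta(x_*,v_*) \to \mu_a^{(n)}$ then yields roughly $\frac{6}{\sqrt{n}}\tilde D_a$ (the paper's $\frac{3}{\sqrt{n}}\tilde D_a + \frac{3}{\sqrt{n}}\tilde D_a$), not $\frac{2}{\sqrt{n}}\tilde D_a$; this factor of $6$ is precisely why the contraction prefactor becomes $4\cdot 6 = 24$ and why the step count must satisfy $I \geq \frac{2\kappa_a}{h^{(n)}}\log 24$. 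With your claimed $\frac{2}{\sqrt{n}}$ the constants would seem to close with room to spare, but the bound itself is unjustified as stated.

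The induction also needs a base case, which your plan omits entirely: at $n=1$ the chain starts from a Dirac $\delta(x_0,v_0)$, not from a previous round's output, so the initial distance must be bounded by different means---the paper uses Lemma \ref{lemma_posterior_concentrate} (the posterior lies within $6\sqrt{d/m_a}$ of its mode) together with Lemma \ref{lemma_initial_energy}, and this is one place where the threshold $\tilde D_a \geq 8\sqrt{d/m_a}$ genuinely enters, since it lets these $O\left(\sqrt{d/m_a}\right)$ quantities be absorbed into multiples of $\tilde D_a$. Note also that with the prescribed $h^{(n)}$ the accumulated discretization term equals $80 h^{(n)}\kappa_a\sqrt{d/m_a} = \frac{\tilde D_a}{\sqrt{n}}$ identically, so the threshold is not, as you suggest, what keeps the discretization error subordinate to the target scale; and the final Gaussian smoothing step of Algorithm \ref{algorithm_langevin_mcmc2} plays no role in this theorem (the paper accounts for it separately, in Theorem \ref{theorem_empirical_concentrate1}), so including its $\sqrt{d/(nL\rho)}$ contribution here only muddies the bookkeeping. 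Finally, the uniform second-moment control you flag as the hard part is resolved in the paper by the kinetic-energy bound of Lemma \ref{lemma_kinetic_bound}, stated for the continuous process, which suffices because the per-step comparison is made against synchronously coupled continuous dynamics rather than against the discrete iterates themselves.
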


\begin{proof}
To prove Theorem \ref{theorem_ula_deterministic}, we employ an inductive approach, with the starting point being $n=1$. When $n=1$, we invoke Theorem \ref{theorem_ULA2} with initial condition $\mu_0=\delta(x_0, v_0)$ to have the convergence of Algorithm \ref{algorithm_langevin_mcmc2} in the $i$-th iteration after the first pull to arm a:
\begin{equation}\label{eq_ula_coonverge1}
  \begin{split}
    W_2\left(\tilde\mu_a^{(1)}, \mu_a^{(1)}\right) & = W_2\left(\tilde\mu_{ih^{(1)}}, \mu_a^{(1)}\right) \\
    & \stackrel{(i)}{\leq} 4e^{-{ih^{(1)}}/2\kappa_a}W_2\left(\delta(x_0, v_0), \mu_a^{(1)}\right) + \frac{20\left(h^{(1)}\right)^2}{1-e^{-{h^{(1)}}/ 2\kappa_a}} \sqrt{\frac{d}{m_a}} \\
    & \stackrel{(ii)}{\leq} 4e^{-{ih^{(1)}}/2\kappa_a}\left[W_2\left(\delta(x_*, v_*), \mu_a^{(1)}\right)+W_2\left(\delta(x_0, v_0), \delta(x_*, v_*)\right)\right] + \frac{20\left(h^{(1)}\right)^2}{1-e^{-{h^{(1)}}/ 2\kappa_a}} \sqrt{\frac{d}{m_a}} \\
    & \stackrel{(iii)}{\leq} 4e^{-{ih^{(1)}}/2\kappa_a}\left(6\sqrt{\frac{d}{m_a}}+\sqrt{\frac{6d}{m_a}}\right) + \frac{20\left(h^{(1)}\right)^2}{1-e^{-{h^{(1)}}/ 2\kappa_a}} \sqrt{\frac{d}{m_a}} \\
    & \stackrel{(iv)}{\leq} 24\sqrt{2}e^{-{ih^{(1)}}/2\kappa_a}\sqrt{\frac{d}{m_a}} + 80h^{(1)}\kappa_a \sqrt{\frac{d}{m_a}} \\
  \end{split}
\end{equation}
where $(i)$ holds by the results of Theorem \ref{theorem_ULA2}, $(ii)$ is from triangle inequality, $(iii)$ is from Lemma \ref{lemma_posterior_concentrate} and Lemma \ref{lemma_initial_energy}, $(iv)$ is because $6+\sqrt{6}<6\sqrt{2}$ and $1/(1-e^{-{h^{(1)}}/ 2\kappa_a})\leq 4\kappa_a / h^{(1)}$ holds when $h^{(1)}/\kappa_a < 1$. With the selection of $h^{(1)}= \frac{\tilde D_a}{80\kappa_a}\sqrt{\frac{m_a}{d}}$ and ${I}\geq \frac{2\kappa_a}{h^{(1)}}\log\left(\frac{24}{\tilde D_a}\sqrt{\frac{2d}{m_a}}\right)$, we can guarantee that Algorithm \ref{algorithm_langevin_mcmc2} converge to $2\tilde D_a$.

After pulling arm $a$ for the $(n-1)$-th round and before the $n$-th round, following the result given by \eqref{eq_ula_coonverge1}, we know that $W_2 \left({\tilde \mu _a^{(n-1)}, \mu_a^{(n-1)}}\right) \leq \frac{2}{\sqrt{n-1}} \tilde{D}_a$ can be guaranteed. We now continue to prove that, post the $n$-th pull, the constraint further refines to $W_2 \left({\tilde \mu _a^{(n)}, \mu_a^{(n)}}\right) \leq \frac{2}{\sqrt{n}} \tilde{D}_a$. From the above analysis, we have:

\begin{equation}\label{eq_ula_coonverge2}
  \begin{split}
    W_2\left(\tilde\mu_a^{(n)}, \mu_a^{(n)}\right) & = W_2\left(\tilde\mu_{{i}h^{(n)}}, \mu_a^{(n)}\right) \\
    & \stackrel{}{\leq} 4e^{-{ih^{(n)}}/2\kappa_a}W_2\left(\tilde\mu_a^{(n-1)}, \mu_a^{(n)}\right) + \frac{20\left(h^{(n)}\right)^2}{1-e^{-{h^{(n)}}/ 2\kappa_a}} \sqrt{\frac{d}{m_a}} \\
    & \stackrel{}{\leq} 4e^{-{ih^{(n)}}/2\kappa_a}\left[W_2\left(\tilde\mu_a^{(n-1)}, \mu_a^{(n-1)}\right) + W_2\left(\mu_a^{(n-1)}, \mu_a^{(n)}\right)\right] + \frac{20\left(h^{(n)}\right)^2}{1-e^{-{h^{(n)}}/ 2\kappa_a}} \sqrt{\frac{d}{m_a}} \\
    & \stackrel{(i)}{\leq} 4e^{-{ih^{(n)}}/2\kappa_a}\left[\frac{3}{\sqrt{n}}\tilde D_a + W_2\left(\mu_a^{(n-1)}, \delta(x_*, v_*)\right) + W_2\left(\delta(x_*, v_*), \mu_a^{(n)}\right)\right] + \frac{20\left(h^{(n)}\right)^2}{1-e^{-{h^{(n)}}/ 2\kappa_a}} \sqrt{\frac{d}{m_a}} \\
    & \stackrel{(ii)}{\leq} 24e^{-{ih^{(n)}}/2\kappa_a}\frac{\tilde D_a}{\sqrt{n}}  + \frac{20\left(h^{(n)}\right)^2}{1-e^{-{h^{(n)}}/ 2\kappa_a}} \sqrt{\frac{d}{m_a}} \\
    & \stackrel{}{\leq} 24e^{-{ih^{(n)}}/2\kappa_a}\frac{\tilde D_a}{\sqrt{n}} + 80h^{(n)}\kappa_a \sqrt{\frac{d}{m_a}}, \\
  \end{split}
\end{equation}
where $(i)$ is because $W_2\left(\tilde\mu_a^{(n-1)}, \mu_a^{(n-1)}\right)\leq \frac{2}{\sqrt{n-1}}\tilde D_a \leq \frac{3}{\sqrt{n}}\tilde D_a $, $(ii)$ proceeds by our posterior assumption:
\begin{equation*}
	W_2\left(\mu_a^{(n-1)}, \delta(x_*)\right) + W_2\left(\delta(x_*), \mu_a^{(n)}\right) \leq \left(\frac{1}{\sqrt{n-1}} + \frac{1}{\sqrt{n}}\right)\tilde D_a \leq \frac{3}{\sqrt{n}} \tilde D_a.
\end{equation*}
With the selection of $h^{(n)} = \frac{\tilde D_a}{80\kappa_a}\sqrt{\frac{m_a}{nd}} = \tilde O\left(\frac{1}{\sqrt{d}}\right)$ and $I\geq \frac{2\kappa_a}{h^{(n)}}\log 24 =\tilde O\left(\sqrt{d}\right)$, we can guarantee that the approximate error after $(n-1)$-th round and before $n$-th round can be upper bounded by ${2}\tilde{D}_a / \sqrt{n}$.
\end{proof}

Given the log-likelihood function also satisfies Assumption \ref{assumption_sgld_lipschitz_main}, we observe similar convergence guarantees for the stochastic gradient version of the underdamped Langevin Monte Carlo.

\begin{theorem}[Posterior Convergence of underdamped Langevin Monte Carlo with stochastic gradient]
\label{theorem_ula_stochastic}
Given that the log-likelihood aligns with Assumptions \ref{assumption_lipschitz_global_main} and \ref{assumption_sgld_lipschitz_main}, and the prior adheres to Assumption \ref{assumption_prior}, we define the parameters for Algorithm \ref{algorithm_langevin_mcmc2} as follows: stochastic gradient samples are taken as $k=\tilde O\left(\kappa_a^2\right)$, step size as $h^{(n)} = \tilde O\left(\frac{1}{\sqrt{d}}\right)$, and the total steps count as $I=\tilde O\left(\sqrt{d}\right)$. Given the posterior distribution follows the prescribed concentration inequality $\mathbb E_{(x, v)\sim\mu_a^{(n)}}{\left[\|\left(x, v\right) - \left(x_*, v_*\right)\|_2^2\right]}^{\frac{1}{2}}\leq \frac{1}{\sqrt{n}} \hat{D}_a$, we have convergence of the underdamped Langevin Monte Carlo in 2-Wasserstein distance to the posterior $\mu_a^{(n)}$: 
$W_2\left(\hat \mu_a^{(n)}, \mu_a^{(n)}\right) \leq \frac{2}{\sqrt{n}} \hat{D}_a$, where $\hat{D}_a\geq 8\sqrt{\frac{d}{m_a}}$.
\end{theorem}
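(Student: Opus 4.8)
The plan is to replicate the inductive argument used for Theorem \ref{theorem_ula_deterministic}, substituting the full-gradient per-iteration bound of Theorem \ref{theorem_ULA2} with its stochastic-gradient analogue. The only structural change is that each discretization step now carries an extra additive error coming from the variance of the minibatch gradient estimate $\nabla\hat U$ defined in \eqref{eq_gradient_hatu}. First I would record the stochastic-gradient one-round contraction: starting from an initial law $\mu_0$, after $i$ inner steps of size $h^{(n)}$, one has $W_2(\hat\mu_{ih^{(n)}}, \mu_a^{(n)}) \le 4 e^{-ih^{(n)}/2\kappa_a} W_2(\mu_0, \mu_a^{(n)}) + \frac{20 (h^{(n)})^2}{1 - e^{-h^{(n)}/2\kappa_a}}\sqrt{d/m_a} + E_\sigma$, where $E_\sigma$ is proportional to $h^{(n)}\sqrt{\sigma^2/m_a}$ and $\sigma^2$ denotes the per-step gradient-noise variance. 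This is exactly the bound driving \eqref{eq_ula_coonverge1}-\eqref{eq_ula_coonverge2} with the single new term $E_\sigma$ appended.

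The key new step is to control $\sigma^2 = \mathbb E\|\nabla\hat U(x_i) - \nabla U(x_i)\|_2^2$. Since $\mathcal S$ is a uniform subsample of size $k$ and each summand's gradient is Lipschitz in the reward by Assumption \ref{assumption_sgld_lipschitz} (with $L'_a = L_a$ per the remark following it), averaging $k$ i.i.d. terms reduces the variance by a factor $1/k$; combined with the reward strong-convexity constant $\nu_a$ this gives a bound of the form $\sigma^2 \lesssim \frac{L_a^2 d}{k\,\nu_a}$, mirroring the norm-sub-Gaussian estimate of Proposition \ref{prop_likelihood}. Substituting the prescribed $k = 720\,L_a^2/(m_a\nu_a) = \tilde O(\kappa_a^2)$ forces $E_\sigma$ down to order $h^{(n)}\kappa_a\sqrt{d/m_a}$, i.e. the same order as the discretization term already present, so the two merge into a single $80\,h^{(n)}\kappa_a\sqrt{d/m_a}$-type bound with an adjusted universal constant.

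With this one-round bound in hand the induction proceeds verbatim as in the full-gradient case. For the base case $n=1$ I start from $\mu_0 = \delta(x_0, v_0)$ and use Lemma \ref{lemma_posterior_concentrate} and Lemma \ref{lemma_initial_energy} to bound the initial $W_2$ distance; for the inductive step I assume $W_2(\hat\mu_a^{(n-1)}, \mu_a^{(n-1)}) \le 2\hat D_a/\sqrt{n-1} \le 3\hat D_a/\sqrt{n}$ and bound $W_2(\mu_a^{(n-1)}, \mu_a^{(n)})$ by routing through $\delta(x_*, v_*)$ and invoking the posterior-concentration hypothesis $\mathbb E_{(x,v)\sim\mu_a^{(n)}}[\|(x,v)-(x_*,v_*)\|_2^2]^{1/2}\le \hat D_a/\sqrt{n}$. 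The transient factor $24\,e^{-ih^{(n)}/2\kappa_a}\hat D_a/\sqrt{n}$ is driven below $\hat D_a/\sqrt{n}$ by the choice $I \ge (2\kappa_a/h^{(n)})\log 24$, while the residual discretization-plus-variance term is kept below $\hat D_a/\sqrt{n}$ by $h^{(n)} = \tilde O(1/\sqrt d)$, delivering the claimed $W_2(\hat\mu_a^{(n)}, \mu_a^{(n)}) \le 2\hat D_a/\sqrt n$ and closing the induction.

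The main obstacle is precisely the variance-control step: I must verify that the minibatch gradient error, once propagated through the $i$ inner iterations and amplified by the $1/(1-e^{-h^{(n)}/2\kappa_a})$ factor, sums to order $h^{(n)}\kappa_a\sqrt{d/m_a}$ rather than accumulating over steps. This is exactly what forces $k$ to scale as $\kappa_a^2$, and pinning down the constant $720$ so that $E_\sigma$ is absorbed into the existing discretization constant is the delicate part; once that is established, everything else is a transcription of the proof of Theorem \ref{theorem_ula_deterministic}.
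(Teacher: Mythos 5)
Your scaffolding is the same as the paper's: an induction over rounds, base case via Lemma \ref{lemma_posterior_concentrate} and Lemma \ref{lemma_initial_energy}, inductive step routing $W_2\left(\mu_a^{(n-1)}, \mu_a^{(n)}\right)$ through $\delta(x_*, v_*)$, and a per-round contraction bound carrying an extra additive stochastic-gradient error (the paper's Theorem \ref{theorem_ULA3}, built from Lemma \ref{lemma_langevin_stochastic} and Lemma \ref{lemma_stochastic_grad_bound}). The gap lies in the variance-control step, which you yourself flag as the crux, and it fails on two counts.

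First, your bound $\sigma^2 \lesssim L_a^2 d/(k\nu_a)$ drops a factor of the dataset size: by \eqref{eq_gradient_hatu}, $\nabla \hat U$ is the subsample average rescaled by $\mathcal L_a(n)/|\mathcal S|$ so as to estimate the \emph{sum} $\nabla U$, hence the error is (up to the without-replacement correction) $n$ times the deviation of a $k$-sample average, and the paper's Lemma \ref{lemma_stochastic_grad_bound} accordingly gives $\mathbb E\left[\|\nabla \hat U - \nabla U\|_2^2\right] \leq 4ndL_a^2/(k\nu_a)$, not $dL_a^2/(k\nu_a)$. Second, and more structurally, the accumulated noise error over a round is \emph{not} of order $h^{(n)}\kappa_a\sqrt{d/m_a}$ and cannot be merged into the discretization term: the per-step noise error is $\propto u h^{(n)} L_a \sqrt{nd/(k\nu_a)}$, and the geometric-sum amplification $1/(1-e^{-h^{(n)}/2\kappa_a}) \leq 4\kappa_a/h^{(n)}$ cancels the factor $h^{(n)}$ exactly, leaving the step-size-independent term $64\kappa_a u L_a\sqrt{5nd/(k\nu_a)}$ appearing in \eqref{eq_langein_stochasticfinal} and as $T2$ in the paper's proof. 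Shrinking $h^{(n)}$ does nothing to this term; it is controlled \emph{only} by the batch size, and this is precisely where the constant $720$ comes from: with $k = 720 L_a^2/(m_a\nu_a)$ and $u = 1/L_a$ one gets $64\kappa_a u L_a \sqrt{5d/(k\nu_a)} = \frac{16}{3}\sqrt{d/m_a} \leq \frac{2}{3}\hat D_a$, using exactly the hypothesis $\hat D_a \geq 8\sqrt{d/m_a}$. So the induction must be closed with three separately-tuned terms (transient via $I$, discretization via $h^{(n)}$, noise via $k$), as the paper does, rather than two as you propose; with your step-size-based absorption the noise term would survive at constant order and the claimed bound $2\hat D_a/\sqrt n$ would fail. (Be aware, too, that the residual $\sqrt n$ from the correct variance bound is treated loosely even in the paper's own inductive step with $n$-independent $k$; but at minimum the base case and the role of $k$ require the separate treatment just described, which your merging argument eliminates.)
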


\begin{proof}

To demonstrate this theorem, we employ an inductive methodology as outlined in Theorem \ref{theorem_ula_deterministic}. Specifically, for $n=1$, we examine the convergence after $i$-th iterations to the first pull of arm $a$:
\begin{equation}\label{eq_pos_convergencer1}
  \begin{split}
    W_2\left(\hat\mu_a^{(1)}, \mu_a^{(1)}\right) & = W_2\left(\hat\mu_{ih^{(1)}}, \mu_a^{(1)}\right) \\
    & \stackrel{(i)}{\leq} 4e^{-{ih^{(1)}}/2\kappa_a}W_2\left(\delta(x_0, v_0), \mu_a^{(1)}\right) + \frac{4}{1-e^{-{h^{(1)}}/ 2\kappa_a}}\left(5\left(h^{(1)}\right)^2\sqrt{\frac{d}{m_a}} + 4u{h^{(1)}} {L}_a\sqrt{\frac{nd}{k_a\nu_a}}\right)\\   
    & \stackrel{(ii)}{\leq} 4e^{-{ih^{(1)}}/2\kappa_a}\left[W_2\left(\delta(x_*, v_*), \mu_a^{(1)}\right)+W_2\left(\delta(x_0, v_0), \delta(x_*, v_*)\right)\right]  \\
    & \ \ \ \ + \frac{4}{1-e^{-{h^{(1)}}/ 2\kappa_a}}\left(5\left(h^{(1)}\right)^2\sqrt{\frac{d}{m_a}} + 4u{h^{(1)}} {L}_a\sqrt{\frac{nd}{k_a\nu_a}}\right)\\
    & \stackrel{(iii)}{\leq} 4e^{-{ih^{(1)}}/2\kappa_a}\left(6\sqrt{\frac{d}{m_a}}+\sqrt{\frac{6d}{m_a}}\right) + \frac{4}{1-e^{-{h^{(1)}}/ 2\kappa_a}}\left(5\left(h^{(1)}\right)^2\sqrt{\frac{d}{m_a}} + 4u{h^{(1)}} {L}_a\sqrt{\frac{nd}{k_a\nu_a}}\right)\\
    & \stackrel{(iv)}{\leq} 24\sqrt{2}e^{-{ih^{(1)}}/2\kappa_a}\sqrt{\frac{d}{m_a}} + 80\kappa_a h^{(n)} \sqrt{\frac{d}{m_a}} + 64\kappa_a u{L}_a\sqrt{5 \frac{nd  }{k\nu_a} }, \\
  \end{split}
\end{equation}
where $(i)$ holds by the results of Theorem \ref{theorem_ULA3}, $(ii)$ is from triangle inequality, $(iii)$ follows Lemma \ref{lemma_posterior_concentrate} and Lemma \ref{lemma_initial_energy}, $(iv)$ is because $6+\sqrt{6}<6\sqrt{2}$ and $1/(1-e^{-{h^{(1)}}/ 2\kappa_a})\leq 4\kappa_a / h$ hold when $h^{(1)}/\kappa_a < 1$. With the selection of $h^{(1)} = \frac{\tilde D_a}{120\kappa_a}\sqrt{\frac{m_a}{nd}}$ and ${I}\geq \frac{2\kappa_a}{h^{(1)}}\log\left(\frac{36}{\tilde D_a}\sqrt{\frac{2d}{m_a}}\right)$, we can guarantee that the first two terms converges to $\frac{2\hat D_a}{3}$. If we select $k=720\frac{{L}_a^2}{m_a\nu_a}$, then we can upper bound this term by $\frac{2\hat D_a}{3}$:
\begin{equation*}
	 64\kappa_a u{L}_a\sqrt{\frac{5d}{k\nu_a}} = \frac{16}{3}\sqrt{\frac{d}{m_a}} \leq \frac{2\hat D_a}{3},
\end{equation*}
which means Algorithm \ref{algorithm_langevin_mcmc2} converge to $2\hat D_a$. With this upper bound, we continue to bound the distance between $\hat \mu_a^{(n)}$ and $\mu_a^{(n)}$ given $W_2\left(\hat \mu_a^{(n-1)}, \mu_a^{(n-1)}\right)\leq \frac{2}{\sqrt{n-1}} \hat{D}_a$:
\begin{equation*}
  \begin{split}
    W_2\left(\hat \mu_a^{(n)}, \mu_a^{(n)}\right) & = W_2\left(\hat \mu_{ih^{(n)}}, \mu_a^{(n)}\right) \\
    & \stackrel{(i)}{\leq} 4e^{-{ih^{(n)}}/2\kappa_a}W_2\left(\hat \mu_a^{(n-1)}, \mu_a^{(n)}\right) + \frac{4}{1-e^{-{h^{(n)}}/ 2\kappa_a}}\left(5\left(h^{(n)}\right)^2\sqrt{\frac{d}{m_a}} + 4u{h^{(n)}} {L}_a\sqrt{\frac{nd}{k_a\nu_a}}\right)\\   
    & \stackrel{}{\leq} 4e^{-{ih^{(n)}}/2\kappa_a}\left[W_2\left(\hat \mu_a^{(n-1)}, \mu_a^{(n-1)}\right) + W_2\left(\mu_a^{(n-1)}, \mu_a^{(n)}\right)\right] + \frac{4}{1-e^{-{h^{(n)}}/ 2\kappa_a}}\left(5\left(h^{(n)}\right)^2\sqrt{\frac{d}{m_a}} + 4u{h^{(n)}} {L}_a\sqrt{\frac{nd}{k_a\nu_a}}\right)\\
    & \stackrel{(ii)}{\leq} 4e^{-{ih^{(n)}}/2\kappa_a}\left[\frac{3}{\sqrt{n}}\hat D_a + W_2\left(\mu_a^{(n-1)}, \delta(x_*, v_*)\right) + W_2\left(\delta(x_*, v_*), \mu_a^{(n)}\right)\right] \\
    & \ \ \ \ \ \ \ \ + \frac{4}{1-e^{-{h^{(n)}}/ 2\kappa_a}}\left(5\left(h^{(n)}\right)^2\sqrt{\frac{d}{m_a}} + 4u{h^{(n)}} {L}_a\sqrt{\frac{nd}{k_a\nu_a}}\right) \\
    & \stackrel{(iii)}{\leq} 24e^{-{ih^{(n)}}/2\kappa_a}\frac{\hat D_a}{\sqrt{n}} + \frac{4}{1-e^{-{h^{(n)}}/ 2\kappa_a}}\left(5\left(h^{(n)}\right)^2\sqrt{\frac{d}{m_a}} + 4u{h^{(n)}} {L}_a\sqrt{\frac{nd}{k_a\nu_a}}\right)\\
    & \stackrel{}{\leq} 24e^{-{ih^{(n)}}/2\kappa_a}\frac{\hat D_a}{\sqrt{n}} + \underbrace{80\kappa_a h^{(n)} \sqrt{\frac{d}{m_a}}}_{T1} + \underbrace{64\kappa_a u{L}_a\sqrt{5 \frac{nd  }{k\nu_a} }}_{T2}, \\
  \end{split}
\end{equation*}
where $(i)$ is from Theorem \ref{theorem_ULA3}, $(ii)$ and $(iii)$ follow the same idea as $(i)$ and $(ii)$ in \eqref{eq_ula_coonverge2}. 

For terms $T1$ and $T2$, if we take step size $h^{(n)} = \frac{\hat D_a}{120\kappa_a}\sqrt{\frac{m_a}{nd}} = \tilde O\left(\frac{1}{\sqrt{d}}\right)$ and $k=720\frac{{L}_a^2}{m_a\nu_a}$, we will have $T1\leq \frac{2\hat D_a}{3\sqrt{n}}$ and $T2\leq \frac{2\hat D_a}{3\sqrt{n}}$. If the number of steps taken in the SGLD algorithm from $(n-1)$-th pull till $n$-th pull is selected as $I\geq \frac{2\kappa_a}{h^{(n)}}\log16=\tilde O\left(\sqrt{d}\right)$, we then have
\begin{equation*}
	\begin{split}
		W_2\left(\hat \mu_a^{(n)}, \mu_a^{(n)}\right) & \stackrel{}{\leq} 24e^{Ih^{(n)}/ 2\kappa_a}\frac{\hat D_a}{\sqrt{n}} + 8\kappa_a u{L}_a\sqrt{5 \frac{nd  }{k\nu_a} } + 16\kappa_a h^{(n)} \sqrt{\frac{104}{5}\frac{d}{m}} \\
		& \leq \frac{2}{\sqrt{n}} \hat{D}_a.
	\end{split}
\end{equation*}
As we move from the $(n-1)$-th pull to the $n$-th for arm $a$ and given that the first round has been proved to converge to $2 \hat{D}_a$, the number of steps for the $n$-th round can be determined as $\tilde O\left(\sqrt d\right)$.
\end{proof}

\subsection{Posterior Concentration Analysis}\label{section_approx_concentration}

Based on the convergence analysis of the 2-Wasserstein distance metrics in underdamped Langevin Monte Carlo, we continue to deduce the following posterior concentration bounds of the algorithm.

\begin{theorem}\label{theorem_empirical_concentrate1}
Under conditions where the log-likelihood and the prior are consistent with Assumptions \ref{assumption_prior} and \ref{assumption_lipschitz_global_main}, and considering that arm $a$ has been chosen $\mathcal L_a(n)$ times till the $n$th round of the Thompson sampling procedure. By setting the step size $h^{(n)} = \tilde O\left(\frac{1}{\sqrt{d}}\right)$ and number of steps $I = \tilde O\left(\sqrt{d}\right)$ in Algorithm \ref{algorithm_langevin_mcmc2}, then the following concentration inequality holds:

\begin{equation}
	\mathbb{P}_{x_{a,n} \sim \tilde \mu^{(n)}_{a} [\bar \rho_a]} \left(\|x_{a,n} - x_{*}\|_2 > 6\sqrt{\frac{e}{m_a n} \left( D_a + 2\Omega_a\log{1/\delta_1} + 2\tilde\Omega_a\log{1/\delta_2} \right)} \bigg| \mathcal Z_{n-1} \right)<\delta_2,
\end{equation}
where $D_a = 2\log B_a + 8d $, $\Omega_a = 256 +\frac{16d L_a^2}{m_a\nu_a}$, $\tilde\Omega_a = 256 + \frac{16d L_a^2}{m_a\nu_a} +\frac{m_a d}{18L_a \bar\rho_a}$, and we define $\mathcal Z_{n-1}$ as the following set:
\begin{equation*}
    \mathcal Z_{n-1} =\left\{ \left\| x_{a,n-1}-x_{*}\right\|_2 \leq \beta(n) \right\},\\ 
\end{equation*}
where $x_{a,n-1}$ is from the output of Algorithm \ref{algorithm_langevin_mcmc2} from the round $n-1$, and $\beta:\mathbb N\rightarrow \left(0,\infty\right)$ is a constructed confidence interval related to round $n$:
\begin{equation*}
    \beta(n) \coloneqq 3\sqrt{\frac{2e}{ m_a n} \left(D_a + 2\Omega_a \log1/\delta_1\right)}.
\end{equation*}
\end{theorem}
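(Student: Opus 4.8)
The plan is to decompose the deviation $\|x_{a,n}-x_*\|_2$ of an approximate sample into three sources—the concentration of the exact scaled posterior, the $W_2$ discretization error of the sampler, and the terminal Gaussian injection in Algorithm \ref{algorithm_langevin_mcmc2}—and to control each of them on its own high-probability event before combining everything through a union bound. Conditioning on $\mathcal Z_{n-1}$ is what makes the argument start cleanly: it guarantees that the initializer for the $n$-th round (the output $x_{n-1}$ of the previous round) satisfies $\|x_{n-1}-x_*\|_2\le\beta(n)$, so that the initial Wasserstein distance $W_2(\delta(x_{n-1},v_{n-1}),\mu_a^{(n)})$ is controlled and Theorem \ref{theorem_ula_deterministic} delivers $W_2(\tilde\mu_a^{(n)},\mu_a^{(n)})\le\tfrac{2}{\sqrt n}\tilde D_a$ for the current round.

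First I would invoke Theorem \ref{theorem_posterior_concentration2} to obtain, for the exact scaled posterior $\mu_a^{(n)}[\rho_a]$, the tail bound $\|x-x_*\|_2\le\sqrt{\tfrac{2e}{m_an}(D_a+2\Omega_a\log 1/\delta_1)}$ with probability at least $1-\delta_1$. Next I would pass from the exact measure to the approximate measure $\tilde\mu_a^{(n)}$ through an optimal coupling $\zeta$ realizing the $W_2$ bound: writing $\|\tilde x-x_*\|_2\le\|\tilde x-x\|_2+\|x-x_*\|_2$, the first summand is mean-square controlled by $W_2\le\tfrac{2}{\sqrt n}\tilde D_a$, which I would turn into a tail statement by Markov's inequality on the coupled pair. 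Finally I would account for the terminal step of Algorithm \ref{algorithm_langevin_mcmc2}, namely $x_{a,Ih^{(n)}}\sim\mathcal N(x_I,\tfrac{1}{nL_a\bar\rho_a}\mathbf I_{d\times d})$, which is exactly the operation turning the approximate posterior into the scaled approximate posterior $\tilde\mu_a^{(n)}[\bar\rho_a]$; its contribution is a $d$-dimensional Gaussian perturbation of per-coordinate variance $\tfrac{1}{nL_a\bar\rho_a}$, whose squared norm I would bound by a standard sub-Gaussian/$\chi^2$ tail inequality to produce the $\log 1/\delta_2$ term together with the extra summand $\tfrac{m_ad}{18L_a\bar\rho_a}$ appearing in $\tilde\Omega_a$.

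Combining the three events by a union bound—splitting the confidence budget so the posterior term carries $\delta_1$ while the coupling and injection terms carry $\delta_2$—and squaring the triangle inequality via $(a+b)^2\le 2a^2+2b^2$ to merge the summands, would yield a radius of the form $6\sqrt{\tfrac{e}{m_an}(D_a+2\Omega_a\log 1/\delta_1+2\tilde\Omega_a\log 1/\delta_2)}$ with overall failure probability at most $\delta_2$, as claimed; the halved coefficient $\tfrac{e}{m_an}$ and the factor $6$ are precisely the traces of that squaring step. The main obstacle I anticipate is the bookkeeping in this last combination: the $W_2$ estimate is an $L^2$ (mean-square) guarantee rather than a tail bound, so converting it into a high-probability statement and then fusing it with the posterior tail and the Gaussian-injection tail—without losing the clean factor of $6$ or the exact identity $\tilde\Omega_a=\Omega_a+\tfrac{m_ad}{18L_a\bar\rho_a}$—demands careful tracking of how $\bar\rho_a$ and the cross terms propagate. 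A secondary subtlety is verifying that the conditioning on $\mathcal Z_{n-1}$ is genuinely the hypothesis needed to validate the per-round application of Theorem \ref{theorem_ula_deterministic}, i.e.\ that the inductive Wasserstein bound and the event $\mathcal Z_{n-1}$ remain compatible at the stated confidence level.
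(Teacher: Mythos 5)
Your decomposition is in fact the same as the paper's---a triangle inequality splitting $\|x_{a,n}-x_*\|_2$ into the exact posterior concentration, the sampler's Wasserstein error, and the terminal Gaussian injection of Algorithm \ref{algorithm_langevin_mcmc2}---and you correctly read $\mathcal Z_{n-1}$ as the device that controls the initialization $W_2\left(\delta(x_{a,n-1},v_{a,n-1}),\mu_a^{(n)}\right)$. The gap is in how you combine the three terms. You propose to convert the coupling term into a tail statement ``by Markov's inequality on the coupled pair,'' but for that term you only hold a second-moment ($W_2$) guarantee, and second-moment Markov gives $\mathbb{P}\left(\|\tilde x-x\|_2>t\right)\leq W_2^2/t^2$. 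Driving this below a budget of order $\delta_2$ forces $t\propto W_2/\sqrt{\delta_2}$, i.e.\ a radius polynomial in $1/\delta_2$, whereas the theorem's radius scales as $\sqrt{\log 1/\delta_2}$. No union-bound bookkeeping can repair this: with only an $L^2$ bound on the coupling error, the stated inequality is unreachable. The paper never converts any single term to a tail bound in isolation. It keeps all three terms as $p$-th moment bounds valid for every $p$ (this is why $\tilde D_a$ and $\bar D_a$ are defined as functions of $p$ growing like $\sqrt{p}$, and why the posterior term is taken from the moment bound \eqref{eq_bound2} rather than from the tail statement of Theorem \ref{theorem_posterior_concentration2}), merges them by Minkowski's inequality into a single bound $\mathbb{E}\left[\|x_{a,n}-x_*\|_2^p\right]^{1/p}\leq\sqrt{\tfrac{36}{m_an}\left(d+\log B_a+2\Omega_a\log 1/\delta_1+\tilde\Omega_a p\right)}$, and only then applies Markov's inequality once with the optimized choice $p=2\log 1/\delta_2$, exactly replicating the mechanism of \eqref{eq_bound2}--\eqref{eq_posterior_derivation2}. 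That single, optimized-in-$p$ Markov step is the idea missing from your plan; your own closing remark flags the obstacle but the tool you offer for it does not overcome it.

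A second, related error is the confidence-budget split. You let the exact-posterior term ``carry $\delta_1$'' as a failure probability while claiming overall failure probability at most $\delta_2$; a union bound over events of probabilities $\delta_1$ and $\delta_2$ yields $\delta_1+\delta_2$, not $\delta_2$. In the theorem, $\delta_1$ is never a failure probability at all: conditional on $\mathcal Z_{n-1}$, the $\log 1/\delta_1$ term enters the radius \emph{deterministically}, because the initialization radius $\beta(n)$ propagates through the contraction estimates (Theorems \ref{theorem_diffusion} and \ref{theorem_ula_deterministic}) into the definition of $\bar D_a$. The exact-posterior contribution must itself be absorbed into the same $p=2\log 1/\delta_2$ moment computation, so that the entire conditional failure probability is $\delta_2$.
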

\begin{remark}\label{remark_posterior}
Using extensions from Theorems \ref{theorem_posterior_concentration2} and \ref{theorem_ula_deterministic}, we can establish the upper bound for $\beta(n)$: for arm $a$ in round $n$, Theorem \ref{theorem_posterior_concentration2} provides an upper bound $\sqrt{\frac{2e}{ m_a n} \left(D_a + 2\Omega_a \log1/\delta_1\right)}$ for the posterior concentration, while Theorem \ref{theorem_ula_deterministic} caps the distance between the posterior and its approximation as $\sqrt{\frac{8e}{ m_a n} \left(D_a + 2\Omega_a \log1/\delta_1\right)}$.
\end{remark}

\begin{proof}
We start by analyzing the upper bound of the 2-Wasserstein distance between $\tilde \mu_a^{(n)}[\bar\rho_a]$ and $\delta\left({x_{*}}, {v_{*}}\right)$, and then conclude the posterior concentration rates by taking the marginal posterior distribution. By triangle inequality, we can separate the distance into three terms and we will upper bound these terms one by one:

\begin{equation*}
  \begin{split}
       W_2 \left({\tilde \mu_a^{(n)}[\bar\rho_a], \delta\left({x_{*}}, {v_{*}}\right)}\right) & {\leq} \underbrace{W_2 \left({\tilde \mu_a^{(n)}[\bar\rho_a], {\tilde\mu_{Ih^{(n)}}}}\right)}_{T1} + \underbrace{W_2 \left({\tilde\mu_{Ih^{(n)}}}, \mu_a^{(n)}\right)}_{T2} + \underbrace{W_2 \left({\mu_a^{(n)}, \delta\left({x_{*}}, {v_{*}}\right)}\right)}_{T3} \\
  \end{split}
\end{equation*}

For term $T1$, since the sample returned by the Langevin Monte Carlo is given by: $\tilde x_{a}=\tilde x_{I h^{(n)}}+\mathcal X_a$, $\tilde v_{a}=\tilde v_{I h^{(n)}}$, where $\mathcal X_a\sim \mathcal{N}\left(0,\frac{1}{nL_a\bar\rho_a} \mathbf{I}_{d\times d} \right)$, it remains to bound the distance between the approximate posterior $\tilde \mu_a^{(n)}$ of ${x_a}$ and the distribution of $x_{I h^{(n)} }$. 
Since $\tilde x_{a} - \tilde x_{I h^{(n)}}$ follows a normal distribution, we can upper bound $T1$ by taking expectation of $\mathcal X_a$:
\begin{equation}\label{eq_approx_posterior_t1}
	\begin{split}
		T1 & = W_2 \left({\tilde \mu_a^{(n)}[\bar\rho_a], {\mu_{Ih^{(n)}}}}\right) = \left({ \inf_{ \bar\rho\in\Gamma{\left( \tilde \mu_a^{(n)}[\bar\rho_a], {\mu_{Ih^{(n)}}}\right)}} \int \left\|{{\tilde x_a} - \tilde x_{I h^{(n)}}}\right\|_2^2 d {\tilde x_a} d \tilde x_{I h^{(n)}}
		 }\right)^{1/2} \\
		& \leq 					\sqrt{\mathbb{E}\left[{\left\|\tilde x_{a} - \tilde x_{I h^{(n)}}\right\|_2^2}\right]}\\
		& \stackrel{(i)}{\leq} 	\mathbb{E}\left[{\left\|\tilde x_{a} - \tilde x_{I h^{(n)}}\right\|^p}\right]^{1/p}\\
		& \stackrel{(ii)}{\leq} \sqrt{\frac{d}{nL_a\bar\rho_a}} \left({ \frac{2^{p/2}\mathbf{\Gamma}\left(\frac{p+1}{2}\right)}{\sqrt{\pi}} }\right)^{1/p} \\
		& \stackrel{(iii)}{\leq}	\sqrt{\frac{dp}{nL_a\bar\rho_a}},
	\end{split}
\end{equation}
where $(i)$ follows a simple application of the H$\ddot{ \text{o}}$lder's inequality for any even integer $p\geq 2$, $(ii)$ proceeds by Stirling's approximation for the Gamma function \citep{boas2006mathematical}, and $(iii)$ holds because $\mathbf{\Gamma}\left(\frac{p+1}{2}\right)\leq \sqrt \pi \left(\frac{p}{2}\right)^{p/2}$. We next bound $T2$ following the idea from Theorem \ref{theorem_ula_deterministic}:
\begin{equation}\label{eq_approx_posterior_t2}
	\begin{split}
		T2 & = W_2 \left({\tilde\mu_{Ih^{(n)}}}, \mu_a^{(n)}\right) \\
		& \stackrel{(i)}{\leq}  4e^{-I{h^{(n)}}/ 2\kappa_a}W_2\left(\delta(x_{a, n-1}, v_{a, n-1}), \mu_a^{(n)}\right) + \frac{20\left(h^{(n)}\right)^2}{1-e^{-{h^{(n)}}/ 2\kappa_a}} \sqrt{\frac{d}{m_a}} \\
		& \stackrel{(ii)}{\leq} 4e^{-I{h^{(n)}}/ 2\kappa_a}\left[W_2\left(\delta(x_{a, n-1}, v_{a, n-1}), \delta(x_{*}, v_{*})\right) + W_2\left(\delta(x_{*}, v_{*}), \mu_a^{(n)}\right)\right] + \frac{20\left(h^{(n)}\right)^2}{1-e^{-{h^{(n)}}/ 2\kappa_a}} \sqrt{\frac{d}{m_a}} \\
		& \stackrel{(iii)}{\leq} 4e^{-I{h^{(n)}}/ 2\kappa_a}\left[W_2\left(\delta(x_{a, n-1}, v_{a, n-1}), \mu_a^{(n-1)}\right) + W_2\left(\mu_a^{(n-1)}, \delta(x_{*}, v_{*})\right) + \frac{\tilde D'_a}{\sqrt{n}}\right] + \frac{20\left(h^{(n)}\right)^2}{1-e^{-{h^{(n)}}/ 2\kappa_a}} \sqrt{\frac{d}{m_a}} \\
		& \stackrel{(iv)}{\leq} 4e^{-I{h^{(n)}}/ 2\kappa_a}\left[\frac{3}{\sqrt{n-1}}\tilde D_a + \frac{\tilde D'_a}{\sqrt{n}}\right] + \frac{20\left(h^{(n)}\right)^2}{1-e^{-{h^{(n)}}/ 2\kappa_a}} \sqrt{\frac{d}{m_a}} \\
		& \stackrel{(v)}{\leq} 4e^{-I{h^{(n)}}/ 2\kappa_a} \frac{7}{\sqrt n}\bar D_a + \frac{20\left(h^{(n)}\right)^2}{1-e^{-{h^{(n)}}/ 2\kappa_a}} \sqrt{\frac{d}{m_a}} \\
		& \stackrel{(vi)}{\leq}  \frac{2\bar D_a}{\sqrt n},
	\end{split}
\end{equation}
where $(i)$ is derived from Theorem \ref{theorem_ULA2}, $(ii)$ proceeds by triangle inequality, $(iii)$ arise from the concentration rate for arm $a$ at round $n$ as delineated in Theorem \ref{theorem_posterior_concentration2}: 
\begin{equation*}
	W_2\left(\delta(x_{*}, v_{*}), \mu_a^{(n)}\right) \leq \frac{\tilde D'_a}{\sqrt n} = \sqrt{\frac{2}{m_an}\left(D_a + \tilde\Omega_a p\right)}.
\end{equation*}
Subsequently, $(iv)$ proceeds by the facts in Theorem \ref{theorem_ula_deterministic} that $W_2\left(\delta(x_{a, n-1}, v_{a, n-1}), \mu_a^{(n-1)}\right)\leq \frac{2}{\sqrt{n-1}}\tilde D_a$ (posterior concentration results in Theorem \ref{theorem_posterior_concentration2}) and $W_2\left(\mu_a^{(n-1)}, \delta(x_{*}, v_{*})\right)\leq \frac{1}{\sqrt{n-1}}\tilde D_a$. For $(v)$, we define
\begin{equation*}
	\bar D_a = \sqrt{\frac{2}{m_a}\left(D_a + 2\Omega_a\log1/\delta_1+\tilde\Omega_a p\right)},
\end{equation*}
which can be easily validated that $\bar D_a \geq \max\left\{\sqrt{\frac{2}{m_a}\left(D_a + 2\Omega_a\log1/\delta_1\right)}, \sqrt{\frac{2}{m_a}\left(D_a+\tilde\Omega_a p\right)}\right\}$. Then we follow a similar procedure of proving Theorem \ref{theorem_ula_deterministic} to arrive at the fact in $(vi)$.

For $T3$, we can easily bound it with the result of posterior concentration inequality in Theorem \ref{theorem_ula_deterministic}:
\begin{equation}\label{eq_approx_posterior_t3}
		T3 = W_2 \left({\mu_a^{(n)}, \delta(x_{*}, v_{*})}\right) \leq \frac{\tilde D_a}{\sqrt n}.
\end{equation}

We now move our attention back to upper bound $W_2 \left({\tilde \mu_a^{(n)}[\bar\rho_a], \delta\left({x_{*}}, {v_{*}}\right)}\right)$ and derive the following results:
\begin{equation*}
	  \begin{split}
	       W_2 \left({\tilde \mu_a^{(n)}[\bar\rho_a], \delta\left({x_{*}}, {v_{*}}\right)}\right) & \stackrel{}{\leq} W_2 \left({\tilde \mu_a^{(n)}[\bar\rho_a], {\tilde\mu_{Ih^{(n)}}}}\right) + W_2 \left({\tilde\mu_{ih^{(n)}}}, \mu_a^{(n)}\right) + W_2 \left({\mu_a^{(n)}, \delta\left({x_{*}}, {v_{*}}\right)}\right) \\
	       & \stackrel{(i)}{\leq} \sqrt{\frac{dp}{nL_a\bar\rho_a}} + \frac{2\bar D_a}{\sqrt n} + \frac{\tilde D_a}{\sqrt n}\\
	       & \stackrel{(ii)}{\leq} \sqrt{\frac{dp}{nL_a\bar\rho_a}} + \frac{3\bar D_a}{\sqrt n} \\
	       & \stackrel{}{\leq} \sqrt{\frac{36}{m_an}\left(d+\log B_a + 2\Omega_a\log 1/\delta_1+\left(\Omega_a+\frac{d}{18L_a\bar\rho_a}\right)p\right)}
	  \end{split}
\end{equation*}
where $(i)$ proceeds by the results of \eqref{eq_approx_posterior_t1}-\eqref{eq_approx_posterior_t3}, $(ii)$ holds because $\tilde D_a\leq \bar D_a$. Following the same idea in \eqref{eq_bound2}-\eqref{eq_posterior_derivation2} and with the selection of $p=2\log1/\delta_2$ and $\tilde\Omega_a = \Omega_a+\frac{d}{18L_a\bar\rho_a}$, the derived upper bound finally indicates a marginal posterior concentration inequality:
\begin{equation*}
 	\mathbb{P}_{x_{a,n} \sim \tilde \mu^{(n)}_{a} [\bar\rho_a]} \left(\|x_{a,n} - x_{*}\|_2 > 6\sqrt{\frac{e}{m_a n} \left( D_a + 2\Omega_a\log{1/\delta_1} + 2\tilde\Omega_a\log{1/\delta_2} \right)} \bigg| \mathcal Z_{n-1} \right)<\delta_2. 
\end{equation*}

\end{proof}

Using a similar reasoning process, we can deduce that the next Theorem stays true.

\begin{theorem}
\label{theorem_empirical_concentrate2}
Under conditions where the log-likelihood is consistent with Assumptions \ref{assumption_lipschitz_global_main}, the prior aligns with Assumption \ref{assumption_prior}, and considering that arm $a$ has been chosen $\mathcal L_a(n)$ times till the $n$th round of the Thompson sampling. By setting the batch size for stochastic gradient estimates as $k=\tilde O\left(\kappa_a^2\right)$, the step size and number of steps in Algorithm \ref{algorithm_thompson} as $h^{(n)} =\hat O\left(\frac{1}{\sqrt{d}}\right)$ and $I = \hat O\left(\sqrt{d}\right)$ accordingly, we can derive the following posterior concentration bound:
\begin{equation*}
	\mathbb{P}_{x_{a,n} \sim \hat \mu^{(n)}_{a} [\bar\rho_a]} \left(\|x_{a,n}-x_*\|_2 > \sqrt{\frac{36e}{m_a n} \left( D_a +2\Omega_a\log{1/\delta_1}+2\hat\Omega_a\log{1/\delta_2} \right)} \bigg| \mathcal Z_{n-1} \right)<\delta_2,
\end{equation*}
where $D_a = 2\log B_a + 8d $, $\Omega_a = 256 +\frac{16d L_a^2}{m_a\nu_a} $, $\hat\Omega_a =256 +\frac{16d L_a^2}{m_a\nu_a}+\frac{m_a d}{18L_a \rho_a}$, and the definition of $\mathcal Z_{n}$ can refer to Theorem \ref{theorem_empirical_concentrate1} and Remark \ref{remark_posterior}:
\begin{equation*}
    \mathcal Z_{n} =\left\{ \left\| x_{a,n}-x_{*}\right\|_2 \leq 3\sqrt{\frac{2e}{ m_a n} \left(D_a + 2\Omega_a \log1/\delta_1\right)} \right\}.\\ 
\end{equation*}
\end{theorem}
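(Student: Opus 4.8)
The plan is to reuse the three-term $2$-Wasserstein decomposition from the proof of Theorem \ref{theorem_empirical_concentrate1}, substituting the stochastic-gradient convergence guarantee for the full-gradient one. By the triangle inequality I would write
\begin{equation*}
W_2\left(\hat\mu_a^{(n)}[\bar\rho_a], \delta(x_*)\right) \leq \underbrace{W_2\left(\hat\mu_a^{(n)}[\bar\rho_a], \hat\mu_{Ih^{(n)}}\right)}_{T1} + \underbrace{W_2\left(\hat\mu_{Ih^{(n)}}, \mu_a^{(n)}\right)}_{T2} + \underbrace{W_2\left(\mu_a^{(n)}, \delta(x_*)\right)}_{T3},
\end{equation*}
and bound each piece, the only genuinely new analysis occurring in $T2$.

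For $T1$ I would observe that the final line of Algorithm \ref{algorithm_langevin_mcmc2} adds the same Gaussian perturbation $\mathcal X_a \sim \mathcal N\left(0, \frac{1}{nL_a\bar\rho_a}\mathbf I_{d\times d}\right)$ irrespective of whether a full or a stochastic gradient drove the inner loop. Hence the computation in \eqref{eq_approx_posterior_t1}, via H\"older's inequality and Stirling's approximation for the Gamma function, carries over verbatim to give $T1 \leq \sqrt{\frac{dp}{nL_a\bar\rho_a}}$ for every even $p\geq 2$; this is exactly the term that produces the $\frac{d}{18L_a\bar\rho_a}p$ contribution appearing inside $\hat\Omega_a$. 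For $T3$ I would invoke the exact posterior concentration of Theorem \ref{theorem_posterior_concentration2} directly to get $T3 \leq \tilde D_a/\sqrt n$.

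The substantive step is $T2$, where I would feed in Theorem \ref{theorem_ula_stochastic} in place of Theorem \ref{theorem_ula_deterministic}. Following the inductive contraction argument of \eqref{eq_approx_posterior_t2} — routing $W_2(\hat\mu_{Ih^{(n)}}, \mu_a^{(n)})$ through the initial point $\delta(x_{a,n-1}, v_{a,n-1})$ and through $\delta(x_*, v_*)$, then combining the one-step geometric decay $4e^{-Ih^{(n)}/2\kappa_a}$ with the induction hypothesis $W_2(\hat\mu_a^{(n-1)}, \mu_a^{(n-1)})\leq \frac{2}{\sqrt{n-1}}\hat D_a$ — reproduces the full-gradient structure but leaves the additional stochastic-gradient error $64\kappa_a u L_a\sqrt{5nd/(k\nu_a)}$. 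The main obstacle is verifying that the batch size $k=\tilde O(\kappa_a^2)$ (concretely $k=720L_a^2/(m_a\nu_a)$) makes this variance term subdominant, of order $\hat D_a/\sqrt n$, so that together with the prescribed $h^{(n)}$ and $I=\frac{4\kappa_a}{h^{(n)}}\log 24$ one still obtains $T2 \leq 2\bar D_a/\sqrt n$, where $\bar D_a = \sqrt{\frac{2}{m_a}\left(D_a + 2\Omega_a\log 1/\delta_1 + \hat\Omega_a p\right)}$. This accounting of the altered constants (the $\sqrt{5/26}$ factor in $h^{(n)}$ and the $36e$ prefactor) is where care is needed.

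Finally I would combine the three bounds, use $\tilde D_a\leq \bar D_a$ to collapse $T2+T3$ into $3\bar D_a/\sqrt n$, and absorb $T1$ through the definition $\hat\Omega_a = \Omega_a + \frac{m_a d}{18L_a\rho_a}$, arriving at an $L^p$ moment bound of the form $\mathbb{E}[\|x_{a,n}-x_*\|^p]^{1/p}\leq \sqrt{\frac{36}{m_a n}\left(\cdots + \hat\Omega_a p\right)}$ conditioned on $\mathcal Z_{n-1}$. Replaying the Markov-inequality argument of \eqref{eq_bound2}–\eqref{eq_posterior_derivation2} with $p=2\log 1/\delta_2$ then converts this moment control into the stated tail bound, the factor $e$ entering precisely as in the exact-sampling case, and the conditioning on $\mathcal Z_{n-1}$ supplying the starting-point control used in the induction.
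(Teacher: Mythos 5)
Your proposal follows essentially the same route as the paper's own proof: the identical three-term Wasserstein decomposition, the same Gaussian re-sampling bound for $T1$ via Stirling's approximation, the same inductive contraction argument for $T2$ now fed by Theorem \ref{theorem_ula_stochastic} (with the batch size $k=720L_a^2/(m_a\nu_a)$ taming the stochastic-gradient variance term), the same definition of $\bar D_a$ and absorption of the re-sampling noise into $\hat\Omega_a$, and the same Markov-inequality conversion with $p=2\log 1/\delta_2$. No gaps; this matches the paper's argument.
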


\begin{proof}
	Similar to the proof in Theorem \ref{theorem_empirical_concentrate1}, we first upper bound the distance between $\hat \mu_a^{(n)}[\bar\rho_a]$ and $\delta\left({x_*}\right)$:
\begin{equation}\label{eq_approx_posterior_t5}
  \begin{split}
       W_2 \left({\tilde \mu_a^{(n)}[\bar\rho_a], \delta\left({x_{*}}, {v_{*}}\right)}\right) & \stackrel{}{\leq} W_2 \left({\hat \mu_a^{(n)}[\bar\rho_a], {\hat\mu_{Ih^{(n)}}}}\right) + W_2 \left({\hat\mu_{Ih^{(n)}}}, \mu_a^{(n)}\right) + W_2 \left({\mu_a^{(n)}, \delta\left({x_{*}}, {v_{*}}\right)}\right) \\
       & \stackrel{(i)}{\leq} \sqrt{\frac{dp}{nL_a\bar\rho_a}} + W_2 \left({\hat\mu_{Ih^{(n)}}}, \mu_a^{(n)}\right) + W_2 \left({\mu_a^{(n)}, \delta\left({x_{*}}, {v_{*}}\right)}\right)\\
       & \stackrel{(ii)}{\leq} \sqrt{\frac{dp}{nL_a\bar\rho_a}} + \underbrace{W_2 \left({\hat\mu_{Ih^{(n)}}}, \mu_a^{(n)}\right)}_{T1} + \frac{\hat D_a}{\sqrt n},\\
  \end{split}
\end{equation}
where in $(i)$, the distance between $\hat \mu_a^{(n)}[\bar\rho_a]$ and ${\hat\mu_{Ih^{(n)}}}$ parallels the re-sampling mechanism at the last step of Langevin Monte Carlo, characterized by $\hat x_{a}-{\hat x_{Ih^{(n)}}}\sim \mathcal{N}\left(0,\frac{1}{nL_a\bar\rho_a} \mathbf{I}_{d\times d} \right)$. Similar to the upper bound $W_2 \left({\tilde \mu_a^{(n)}[\bar\rho_a], {\tilde\mu_{Ih^{(n)}}}}\right)\leq \sqrt{\frac{dp}{nL_a\bar\rho_a}}$ by applying Stirling's approximation for the Gamma function in Theorem \ref{theorem_ula_stochastic}, we establish the same upper bound for $W_2 \left({\hat \mu_a^{(n)}[\bar\rho_a], {\hat\mu_{Ih^{(n)}}}}\right)$. Additionally, $(ii)$ derives its result by incorporating the insights from Theorem \ref{theorem_ula_stochastic} and an extension of \eqref{eq_approx_posterior_t3}. Following the idea in \eqref{eq_approx_posterior_t2}, we can subsequently bound term $T1$ as follows:
\begin{equation}\label{eq_approx_posterior_t4}
	\begin{split}
		T1 & = W_2 \left({\hat\mu_{Ih^{(n)}}}, \mu_a^{(n)}\right) \\
		& \stackrel{}{\leq}  4e^{I h^{(n)}/2\kappa_a}W_2\left(\delta(x_{a, n-1}, v_{a, n-1}), \mu_a^{(n)}\right) + \frac{4}{1-e^{-{h^{(n)}}/ 2\kappa_a}}\left(5\left(h^{(n)}\right)^2\sqrt{\frac{d}{m_a}} + 4u{h^{(n)}} {L}_a\sqrt{\frac{nd}{k_a\nu_a}}\right) \\
		& \stackrel{}{\leq} 4e^{I h^{(n)}/2\kappa_a}\left[W_2\left(\delta(x_{a, n-1}, v_{a, n-1}), \delta(x_{*}, v_{*})\right) + W_2\left(\delta(x_{*}, v_{*}), \mu_a^{(n)}\right)\right] \\
            & \ \ \ \ + \frac{4}{1-e^{-{h^{(n)}}/ 2\kappa_a}}\left(5\left(h^{(n)}\right)^2\sqrt{\frac{d}{m_a}} + 4u{h^{(n)}} {L}_a\sqrt{\frac{nd}{k_a\nu_a}}\right)\\
		& \stackrel{}{\leq} 4e^{I h^{(n)}/2\kappa_a}\left[W_2\left(\delta(x_{a, n-1}, v_{a, n-1}), \mu_a^{(n-1)}\right) + W_2\left(\mu_a^{(n-1)}, \delta(x_{*}, v_{*})\right) + W_2\left(\delta(x_{*}, v_{*}), \mu_a^{(n)}\right)\right] \\
		& \ \ \ \ + \frac{4}{1-e^{-{h^{(n)}}/ 2\kappa_a}}\left(5\left(h^{(n)}\right)^2\sqrt{\frac{d}{m_a}} + 4u{h^{(n)}} {L}_a\sqrt{\frac{nd}{k_a\nu_a}}\right) \\
		& \stackrel{(i)}{\leq} 4e^{I h^{(n)}/2\kappa_a}\left[\frac{3}{\sqrt{n-1}}\hat D_a + \frac{\hat D'_a}{\sqrt{n}}\right] + \frac{4}{1-e^{-{h^{(n)}}/ 2\kappa_a}}\left(5\left(h^{(n)}\right)^2\sqrt{\frac{d}{m_a}} + 4u{h^{(n)}} {L}_a\sqrt{\frac{nd}{k_a\nu_a}}\right) \\
            & \stackrel{}{\leq} 4e^{I h^{(n)}/2\kappa_a}\left[\frac{3}{\sqrt{n-1}}\hat D_a + \frac{\hat D'_a}{\sqrt{n}}\right] + 80\kappa_a h^{(n)} \sqrt{\frac{d}{m_a}} + 64\kappa_a u{L}_a\sqrt{5 \frac{nd  }{k\nu_a} }\\
		& \stackrel{}{\leq}  \frac{2\bar D_a}{\sqrt n}.
	\end{split}
\end{equation}
For $(i)$ we define the following concentration rate for arm $a$ at round $n$ as: 
\begin{equation*}
	\begin{split}
		W_2\left(\delta(x_{*}, v_{*}), \mu_a^{(n)}\right) \leq \frac{\hat D'_a}{\sqrt n} = \sqrt{\frac{2}{m_an}\left(D_a + \hat\Omega_a p\right)}\\
		W_2\left(\mu_a^{(n-1)}, \delta(x_{*}, v_{*})\right)\leq \frac{1}{\sqrt{n-1}}\hat D_a \leq \frac{2}{\sqrt{n}}\hat D_a.
	\end{split}
\end{equation*}
We further define $\bar D_a = \sqrt{\frac{2}{m_a}\left(D_a + 2\Omega_a\log1/\delta_1+\hat\Omega_a p\right)}$ and derive the upper bound for $T1$. Applying this finding to \eqref{eq_approx_posterior_t5}, we derive the inequality:
\begin{equation*}
  \begin{split}
       W_2 \left({\hat \mu_a^{(n)}[\bar\rho_a], \delta\left({x_{*}}, {v_{*}}\right)}\right) \leq \sqrt{\frac{36}{m_an}\left(d+\log B_a + 2\Omega_a\log 1/\delta_1+\hat \Omega_a p\right)}.
  \end{split}
\end{equation*}
Following the idea in \eqref{eq_bound2}-\eqref{eq_posterior_derivation2} and with the selection of $p=2\log1/\delta_2$, we derive the approximate marginal posterior concentration inequality:
\begin{equation*}
 	\mathbb{P}_{x_{a,n} \sim \hat \mu^{(n)}_{a} [\bar\rho_a]} \left(\|x_{a,n} - x_*\|_2 > 6\sqrt{\frac{e}{m_a n} \left( D_a + 2\Omega_a\log{1/\delta_1} + 2\hat\Omega_a\log{1/\delta_2} \right)} \bigg| \mathcal Z_{n-1} \right)<\delta_2. 
\end{equation*}

\end{proof}
\subsection{Regret Analysis of Approximate Thompson Sampling}\label{section_regret_approx_ts}
Employing either the full gradient or stochastic gradient estimates, we can obtain a consistent posterior concentration rate as highlighted in Theorems \ref{theorem_empirical_concentrate1}-\ref{theorem_empirical_concentrate2}. Building on this, we extend our study to the regrets considering the approximate Thompson sampling algorithm. While the proof strategy for Theorem \ref{theorem_approximate_ts_regret} is similar to that of Theorem \ref{theorem_exact_regret}, the regret analysis becomes more intricate due to our choice of sampling strategy. This complexity emerges because the generated samples lose their conditional independence when the filtration starts with the previous sample. To address this, it requires an additional related lemma and we start with the definition of the following event:
\begin{equation*}
    \mathcal Z_{a}(N)=\bigcap_{n=1}^{N-1} \mathcal Z_{a,n},
\end{equation*}
where the definition of $\mathcal Z_{a,n}$ utilized in Theorems \ref{theorem_empirical_concentrate1}-\ref{theorem_empirical_concentrate2} adheres to the formulation initially given in Lemma \ref{theorem_empirical_concentrate1}:
\begin{equation*}
		 \mathcal Z_{a,n}=\left\{\left\|x_{a,n} - x_* \right\|_2 < 6\sqrt{\frac{e}{ m_a n} \left(D_a + 2\Omega_a \log1/\delta_1\right)} \right \},
\end{equation*}
where $D_a = 2\log B_a+8d$ and $\Omega_a = \frac{16d L_a^2}{m_a\nu_a}+256$.

\begin{lemma}\label{lemma_bound_supoptimal}
Given that the likelihood, rewards, and priors adhere to Assumptions \ref{assumption_prior}-\ref{assumption_sgld_lipschitz_main}, and taking into account the settings from Theorems \ref{theorem_ula_deterministic}-\ref{theorem_ula_stochastic} for step size, number of steps, and stochastic gradient estimates, the regret from Thompson sampling using approximate posterior sampling can be expressed as:
\begin{equation*}\label{eq_regret_decompose2}
\begin{split}
  \mathbb{E}[{\mathfrak R}(N)] \leq \sum_{a>1} \Delta_a \mathbb{E}\left[\mathcal L_a(N)  \Bigg| \mathcal Z_{a}(N) \cap \mathcal Z_{1}(N)\right] + 2\Delta_a
\end{split}
\end{equation*}

\end{lemma}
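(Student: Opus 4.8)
The plan is to start from the exact regret identity $\mathbb{E}[{\mathfrak R}(N)] = \sum_{a>1}\Delta_a\,\mathbb{E}[\mathcal L_a(N)]$ and, for each suboptimal arm $a$, condition on the global ``good'' event $E_a \coloneqq \mathcal Z_a(N)\cap \mathcal Z_1(N)$ on which every approximate sample of arms $a$ and $1$ concentrates around $x_*$ across all rounds. By the law of total expectation, $\mathbb{P}(E_a)\le 1$, and the deterministic bound $\mathcal L_a(N)\le N$,
\begin{equation*}
\mathbb{E}[\mathcal L_a(N)] = \mathbb{E}[\mathcal L_a(N)\mid E_a]\,\mathbb{P}(E_a) + \mathbb{E}[\mathcal L_a(N)\mid E_a^c]\,\mathbb{P}(E_a^c) \le \mathbb{E}[\mathcal L_a(N)\mid E_a] + N\,\mathbb{P}(E_a^c).
\end{equation*}
Multiplying by $\Delta_a$ and summing over $a>1$, the lemma follows provided I can show $N\,\mathbb{P}(E_a^c)\le 2$ for each $a$.

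The core of the argument is therefore the tail bound $\mathbb{P}(E_a^c)\le 2/N$. First I would split $\mathbb{P}(E_a^c)\le \mathbb{P}(\mathcal Z_a(N)^c) + \mathbb{P}(\mathcal Z_1(N)^c)$ by a union bound. For a single arm $b\in\{a,1\}$, since $\mathcal Z_b(N)=\cap_{n=1}^{N-1}\mathcal Z_{b,n}$, I decompose its complement according to the first round at which concentration fails:
\begin{equation*}
\mathbb{P}(\mathcal Z_b(N)^c) = \sum_{n=1}^{N-1}\mathbb{P}\!\left(\mathcal Z_{b,n}^c \cap \textstyle\bigcap_{k<n}\mathcal Z_{b,k}\right) \le \mathbb{P}(\mathcal Z_{b,1}^c) + \sum_{n=2}^{N-1}\mathbb{P}\!\left(\mathcal Z_{b,n}^c \mid \mathcal Z_{b,n-1}\right),
\end{equation*}
where the inequality uses $\bigcap_{k<n}\mathcal Z_{b,k}\subseteq \mathcal Z_{b,n-1}$ together with $\mathbb{P}(\mathcal Z_{b,n-1})\le 1$. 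Each conditional term is precisely the quantity controlled by the approximate concentration Theorems \ref{theorem_empirical_concentrate1} and \ref{theorem_empirical_concentrate2}, whose conditioning set is exactly $\mathcal Z_{n-1}$; these give $\mathbb{P}(\mathcal Z_{b,n}^c\mid \mathcal Z_{b,n-1})<\delta_2$ for every $n\ge 2$, while the base case $\mathbb{P}(\mathcal Z_{b,1}^c)<\delta_2$ follows from the $n=1$ instance of the same theorems started from the deterministic initialization $\delta(x_0,v_0)$. Hence $\mathbb{P}(\mathcal Z_b(N)^c)\le (N-1)\delta_2$ and $\mathbb{P}(E_a^c)\le 2(N-1)\delta_2$. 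Choosing the approximation confidence level $\delta_2 = 1/N^2$ yields $N\,\mathbb{P}(E_a^c)\le 2N(N-1)/N^2\le 2$, which closes the bound.

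The step I expect to be the main obstacle is the dependence structure: because Algorithm \ref{algorithm_langevin_mcmc2} initializes the $n$-th round at the sample returned in round $n-1$, the per-round concentration events $\{\mathcal Z_{b,n}\}_n$ are not independent, so a naive product bound over rounds is unavailable. The first-failure decomposition above is what circumvents this, since it only ever invokes the one-step conditional bound $\mathbb{P}(\mathcal Z_{b,n}^c\mid \mathcal Z_{b,n-1})$, which is exactly the conditional statement established in Theorems \ref{theorem_empirical_concentrate1}--\ref{theorem_empirical_concentrate2} through the set $\mathcal Z_{n-1}$. The remaining care is bookkeeping: ensuring the radius defining $\mathcal Z_{b,n}$ is compatible with the slightly larger radius controlled at level $\delta_2$ in those theorems, and checking that the chaining absorbs the round-$1$ base case; both are routine once $\delta_2$ is taken polynomially small in $N$.
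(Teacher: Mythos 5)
Your proposal is correct and follows essentially the same route as the paper: condition on the good event $\mathcal Z_a(N)\cap\mathcal Z_1(N)$ via the law of total expectation, bound $\mathcal L_a(N)\le N$ on the complement, take a union bound over the two arms and over rounds, and choose $\delta_2=1/N^2$ so that the residual term is at most $2$. The one place you go beyond the paper is the chaining step: the paper simply asserts $\mathbb{P}(\mathcal Z_1(N)^c)+\mathbb{P}(\mathcal Z_a(N)^c)=2N\delta_2$, implicitly treating each per-round bound from Theorems \ref{theorem_empirical_concentrate1}--\ref{theorem_empirical_concentrate2} as unconditional, even though those theorems only control $\mathbb{P}(\mathcal Z_{b,n}^c\mid\mathcal Z_{b,n-1})$. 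Your first-failure decomposition, $\mathbb{P}(\mathcal Z_b(N)^c)\le\mathbb{P}(\mathcal Z_{b,1}^c)+\sum_{n\ge 2}\mathbb{P}(\mathcal Z_{b,n}^c\mid\mathcal Z_{b,n-1})$, is exactly what is needed to convert the conditional concentration statements into the unconditional tail bound, so your write-up repairs a step the paper glosses over rather than deviating from its argument.
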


\begin{proof}
The derivation follows the same idea presented in \citet{mazumdar2020approximate}, and we revisit this proof herein for comprehensive exposition. The general idea is because of Lemma \ref{lemma_exact_smallT}, the probability of each event in $\mathcal Z_{a}(N)^c$ and $\mathcal Z_{1}(N)^c$ is less than $\Delta_a$. Then we can construct ``$\mathbb{P}(\mathcal Z_{a}(N)^c  \cup \mathcal Z_{1}(N)^c )$'' and thus we can upper bound this term through a function of $\Delta_a$.
\begin{equation*}
	\begin{split}
		\mathbb{E}\left[\mathcal L_a(N)\right] & = \mathbb{E}\left[\mathcal L_a(N)  \Bigg|  \mathcal Z_{a}(N) \cap \mathcal Z_{1}(N)\right]\mathtt P\left(\mathcal Z_{a}(N) \cap \mathcal Z_{1}(N)\right) \\
		& \ \ \ \ \ \ \ \ +\mathbb{E}\left[\mathcal L_a(N)  \Bigg|  \mathcal Z_{a}(N)^c \cup \mathcal Z_{1}(N)^c\right] \mathtt P\left(\mathcal Z_{a}(N)^c \cup \mathcal Z_{1}(N)^c\right) \\
		& \stackrel{(i)}{\leq}  \mathbb{E}\left[\mathcal L_a(N)  \Bigg|  \mathcal Z_{a}(N) \cap \mathcal Z_{1}(N)\right] +\mathbb{E}\left[\mathcal L_a(N)  \Bigg|  (\mathcal Z_{a}(N)^c \cup \mathcal Z_{1}(N)^c)\right]\mathtt P\left(\mathcal Z_{a}(N)^c \cup \mathcal Z_{1}(N)^c\right)\\
		&\stackrel{(ii)}{\leq} \mathbb{E}\left[\mathcal L_a(N)  \Bigg| \mathcal Z_{a}(N) \cap \mathcal Z_{1}(N)\right] +2N\delta_2 \mathbb{E}\left[\mathcal L_a(N)  \Bigg|  (\mathcal Z_{a}(N)^c \cup \mathcal Z_{1}(N)^c)\right]\\
		& \stackrel{(iii)}{\leq} \mathbb{E}\left[\mathcal L_a(N)  \Bigg| \mathcal Z_{a}(N) \cap \mathcal Z_{1}(N)\right] + 2 ,
	\end{split}
\end{equation*}
where $(i)$ use the fact that $1-\mathtt P\left(\mathcal Z_{a}(N) \cap \mathcal Z_{1}(N)\right)\leq 1$, and $(ii)$ holds because from Lemma \ref{theorem_empirical_concentrate1} we know that for $a\in\mathcal A$ we have:
\begin{equation*}
	\mathbb{P}\left(\mathcal Z_{a}(N)^c  \cup \mathcal Z_{1}(N)^c \right) \leq \mathbb{P}(\mathcal Z_{1}(N)^c )+\mathbb{P}\left(\mathcal Z_{a}(N)^c \right) = 2N\delta_2.
\end{equation*}
For part $(iii)$, we invoke a straightforward observation $\mathcal L_a(N) \leq N$ and the choice of $\delta_2 = \frac{1}{N^2}$. By summing $\mathbb{E}\left[\Delta_a\mathcal L_a(N)\right]$ over the range of the second arm ($a=2$) to the last arm ($a=\left|\mathcal A\right|$), the proof for Lemma \ref{lemma_bound_supoptimal} is thereby concluded.
\end{proof}

Employing a similar method delineated in Lemma \ref{lemma_exact_smallT}, we can extend anti-concentration guarantees to the approximate posterior distributions.

\begin{lemma}
\label{lemma_approximate_smallT}
Suppose the likelihood, rewards, and priors satisfy Assumptions \ref{assumption_reward}-\ref{assumption_sgld_lipschitz_main}. We follow the sampling schemes given in Theorems \ref{theorem_ula_deterministic}-\ref{theorem_ula_stochastic}, where the step size is $h = \tilde O\left(\frac{1}{\sqrt{d}}\right)$, number of steps $I = \tilde O\left(\sqrt{d}\right)$, and batch size $k=\tilde O\left(\kappa_a^2\right)$. With the choice of letting $\bar\rho_1=\frac{m_1}{8L_1\Omega_1}$, it follows that the underdamped Langevin Monte Carlo yields approximate distributions that are capped by the following upper bound guarantee for each round $n\in\llbracket 1, N \rrbracket$:
\begin{equation*}
	\mathbb{E}\left[ \frac{1}{{\hat {\mathcal G}}_{1,n}} \right] \leq 33\sqrt{B_1}.
\end{equation*}
\end{lemma}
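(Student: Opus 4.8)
The plan is to transcribe the argument of Lemma~\ref{lemma_exact_smallT} for the exact posterior, with one decisive change: the last line of Algorithm~\ref{algorithm_langevin_mcmc2} returns the sample as $x = x_{I} + Z$, where $Z\sim\mathcal N\!\left(0,\tfrac{1}{nL_1\bar\rho_1}\mathbf I_{d\times d}\right)$ is an \emph{exactly} Gaussian increment. Hence the scalar $\langle\alpha_1,x\rangle$ is a convolution whose Gaussian component $\langle\alpha_1,Z\rangle$ has variance $\tau^2=\tfrac{\omega_1^2}{nL_1\bar\rho_1}$. The central point is that this genuine Gaussian smoothing allows a direct lower bound of the optimism probability by a Gaussian survival function, $\hat{\mathcal G}_{1,n}\gtrsim \bar\Phi(t/\tau)$ with $\bar\Phi=1-\Phi$, \emph{without} the log-concavity detour (Theorem~3.8 of \cite{saumard2014log}) that costs a factor $\sqrt{\kappa_1}$ in the exact case. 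This is exactly why the present target is $33\sqrt{B_1}$ rather than $16\sqrt{\kappa_1 B_1}$.

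Concretely, I would fix $\epsilon=\Delta_a/2$ and work conditionally on $\mathcal Z_{n-1}$, so that the $W_2$ guarantees of Theorems~\ref{theorem_ula_deterministic}--\ref{theorem_ula_stochastic} and the concentration of Theorems~\ref{theorem_empirical_concentrate1}--\ref{theorem_empirical_concentrate2} are in force (the full- and stochastic-gradient cases are handled identically, using their respective Wasserstein bounds). The effective threshold $t$ is read off from the mode-shift estimate of Lemma~\ref{lemma_exact_smallT}, namely $\|x_*-x_u\|_2^2\le 2\kappa_1\varepsilon+\tfrac{2u\kappa_1}{n}\log B_1+u^2\kappa_1^2\|\nabla f(x_*)\|_2^2$, together with the $O(1/\sqrt n)$ discrepancy between $\hat\mu_1^{(n)}[\bar\rho_1]$ and the true posterior; this yields $t^2=\omega_1^2\big(\tfrac{2L_1}{m_1}\varepsilon+\tfrac{2}{m_1 n}\log B_1+\tfrac{1}{m_1^2}\|\nabla f(x_*)\|_2^2\big)$ up to the approximation correction. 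Inverting the Gaussian tail then gives $\tfrac{1}{\hat{\mathcal G}_{1,n}}\le C\big(\tfrac{t}{\tau}+1\big)e^{t^2/(2\tau^2)}$ for an absolute constant $C$.

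The calibration $\bar\rho_1=\tfrac{m_1}{8L_1\Omega_1}$ is tailored precisely to this step. It inflates $\tau^2$ to $\tfrac{8\omega_1^2\Omega_1}{nm_1}$, so that the coefficient of $\|\nabla f(x_*)\|_2^2$ inside $t^2/(2\tau^2)$ equals $\tfrac{n}{16m_1\Omega_1}$; since $\Omega_1\ge\tfrac{16dL_1^2}{m_1\nu_1}$ this is at most $\tfrac{n\nu_1}{256\,dL_1^2}$, comfortably below the sub-exponential radius $\tfrac{n\nu_1}{4dL_1^2}$ of the squared norm-sub-Gaussian variable $\|\nabla f(x_*)\|_2$ supplied by Proposition~\ref{prop_likelihood}. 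Taking expectations over the rewards $\mathcal R_1,\dots,\mathcal R_n$ therefore leaves $\mathbb E\big[e^{t^2/(2\tau^2)}\big]\le e\,B_1^{1/(8\Omega_1)}=O(1)$, while the linear prefactor $\tfrac{t}{\tau}$ — controlled through $\mathbb E\|\nabla f(x_*)\|_2^2\le \tfrac{2L_1^2 d}{n\nu_1}$ and the constraint $\varepsilon\le\tfrac{\log B_1}{L_1 n}$ — contributes only an $O(\sqrt{\log B_1})$ term. Bounding $\sqrt{\log B_1}\le\sqrt{B_1}$ and collecting the absolute constants (exactly as in the closing lines of Lemma~\ref{lemma_exact_smallT}) delivers $\mathbb E[1/\hat{\mathcal G}_{1,n}]\le 33\sqrt{B_1}$.

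The hard part is the anti-concentration claim $\hat{\mathcal G}_{1,n}\gtrsim\bar\Phi(t/\tau)$: since the approximate law is no longer log-concave, one cannot cite \cite{saumard2014log}, and must argue directly from the convolution $\langle\alpha_1,x_I\rangle+\langle\alpha_1,Z\rangle$. The cleanest route is to note that, conditionally on $\mathcal Z_{n-1}$, a constant fraction of the mass of $\langle\alpha_1,x_I\rangle$ sits above $\bar{\mathcal R}_1-\epsilon-O(t)$ (because $x_I$ concentrates near the posterior mode $x_u$, which itself satisfies $\langle\alpha_1,x_u\rangle\ge\bar{\mathcal R}_1-O(t)$), so on that event the Gaussian spread of $Z$ forces a $\bar\Phi(t/\tau)$-fraction of probability above the threshold, and then to verify that the $W_2$ gap between $\hat\mu_1^{(n)}[\bar\rho_1]$ and the exact posterior perturbs $t$ by only $O(1/\sqrt n)$, which is absorbed into the constants. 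Once this lower bound is secured, the remaining moment computation is a routine transcription of the exact-case algebra.
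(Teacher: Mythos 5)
Your central insight matches the paper's: the final re-sampling step of Algorithm \ref{algorithm_langevin_mcmc2} adds \emph{exact} Gaussian noise $Z\sim\mathcal N\left(0,\tfrac{1}{nL_1\bar\rho_1}\mathbf I_{d\times d}\right)$, so the optimism probability is lower bounded by a genuine Gaussian survival function and the $\sqrt{\kappa_1}$ factor from the log-concavity argument of \cite{saumard2014log} never appears; your calibration of $\bar\rho_1$ against the sub-exponential radius of $\|\nabla f(x_*)\|_2^2$ is also the same mechanism as the paper's Lemma \ref{eq_mgf_bound}. The gap is in how you instantiate the threshold. You take $t$ from the mode-shift estimate of Lemma \ref{lemma_exact_smallT} (the distance from $x_*$ to the posterior mode $x_u$) and then need the claim that, conditionally on $\mathcal Z_{n-1}$, a constant fraction of the mass of $\langle\alpha_1,x_I\rangle$ lies above $\bar{\mathcal R}_1-\epsilon-O(t)$. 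This claim has a scale mismatch: the fluctuation of $x_I$ around $x_u$ is at the posterior-width scale, of order $\sqrt{(D_1+\Omega_1 p)/(m_1 n)}$, which is \emph{not} $O(t)$ in general — e.g.\ with a perfect prior ($B_1=1$, hence $\varepsilon=0$) and a realization where $\|\nabla f(x_*)\|_2$ is small, your $t$ is essentially zero while the width of the law of $x_I$ is not. Repairing the claim by enlarging $t$ to the width scale is possible, but then a ``constant fraction above the (shifted) mode'' statement for the projected law of $x_I$ is exactly the kind of anti-concentration that, absent extra structure, needs log-concavity plus smoothness — i.e.\ it would smuggle the $\kappa_1$-dependence back in, and $W_2$-closeness of $\hat\mu_1^{(n)}$ to the posterior does not transfer such pointwise mass statements.

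The paper's proof avoids any statement about the distribution of $x_I$ beyond moment bounds, and that is the fix you need. Condition on the trajectory: given $x_{1,Ih}$ (and $\mathcal F_{n-1}$), the only remaining randomness is $Z$, so pointwise
\begin{equation*}
\hat{\mathcal G}_{1,n}\;\ge\;\mathtt P\left(\langle\alpha_1,Z\rangle \ \ge\ \omega_1\left\|x_{1,Ih}-x_*\right\|_2\right),
\end{equation*}
i.e.\ the threshold is the \emph{realized} random distance $t=\omega_1\|x_{1,Ih}-x_*\|_2$, not a mode-based quantity, and no anti-concentration of $x_I$ is ever invoked. Inverting the Gaussian tail gives $\tfrac{1}{\hat{\mathcal G}_{1,n}}\le\sqrt{2\pi}\left(\tfrac{t}{\sigma}+1\right)e^{t^2/2\sigma^2}$ (plus the constant branch for small $t$), and the expectation over rewards and the LMC randomness of this quantity is controlled by the moment bounds $\mathbb E\left[\|x_{1,Ih}-x_*\|_2^{2p}\right]$ supplied by the approximate-posterior concentration (Theorems \ref{theorem_empirical_concentrate1}--\ref{theorem_empirical_concentrate2}), packaged as the moment-generating-function bound of Lemma \ref{eq_mgf_bound}; this is precisely where $\bar\rho_1\le\tfrac{m_1}{8L_1\Omega_1}$ is used. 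With that substitution, your closing moment computation goes through essentially as you wrote it and yields $33\sqrt{B_1}$; without it, the ``hard part'' of your proposal does not close.
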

\begin{remark}
Following the definition of ${\mathcal G}$ given in Section \ref{section_regret_general_ts}, here we denote $\hat {\mathcal G} = \mathtt P\left(\hat {\mathcal E}_a(n)|\mathcal F_{n-1}\right)$, where the event $\hat {\mathcal E}_a(n)=\left\{\hat{\mathcal R}_{a, n}\geq \bar{{\mathcal R}}_1-\epsilon\right\}$ indicates the estimated reward of arm $a$ at round $n$ (the reward is estimated by employing approximate Thompson sampling with underdamped Langevin Monte Carlo) exceeds the expected reward of optimal arm by at least a positive constant $\epsilon$.
\end{remark}
\begin{proof}
For any rounds $n$ encompassed within $\llbracket 1, N \rrbracket$, our initial analysis focuses on how samples generated from Algorithm \ref{algorithm_thompson} exhibit the requisite anti-concentration characteristics. Specifically, with $x_{1,n}$ following a normal distribution given by $x_{1,n}\sim \mathcal{N}(x_{1,Ih},\frac{1}{nL_1\bar\rho_1 }\mathbf{I}_{d \times d})$, we can infer a corresponding lower bound for $\hat {\mathcal G}_{1,n}$:

\begin{equation*}\begin{split}
\hat {\mathcal G}_{1,n}&={\mathtt P}\left(\langle \alpha_1, x_{1, n} - x_{1,Ih} \rangle \ge \langle \alpha_1, x_*-x_{1,Ih} \rangle -\mathbb \epsilon \right)\\
&\ge {\mathtt P}\left  (Z\ge \underbrace{\omega_1\|x_{1,Ih}-x_*\|_2}_{:=t}\right),
\end{split}\end{equation*}
where $Z \sim \mathcal{N}(0,\frac{\omega_1^2}{nL_1\bar\rho_1 }\mathbf{I}_{d \times d})$ by construction. Building upon the analysis presented in Lemma \ref{lemma_exact_smallT}, and considering $\sigma^2=\frac{\omega_1^2}{nL_1\bar\rho_1 }$, we arrive that the cumulative density function $\hat {\mathcal G}_{1,n}$ is bounded by:
\begin{equation*}
\hat {\mathcal G}_{1,n}\ge \sqrt{\frac{1}{2 \pi}}  \left\{
  \begin{aligned}
    \frac{\sigma t}{t^2+\sigma^2} e^{-\frac{t^2}{2\sigma^2}},\ \ \ \ \ \ \ \ & t>\frac{\omega_1}{\sqrt{nL_1\bar\rho_1 }}, \\ 
    0.34,  \ \ \ \ \ \ \ \ & t\leq \frac{\omega_1}{\sqrt{nL_1\bar\rho_1 }}.
  \end{aligned}\right.
\end{equation*}
We subsequently derive an upper bound for $\frac{1}{\hat {\mathcal G}_{1,n}}$:
\begin{equation*}
  \frac{1}{\hat {\mathcal G}_{1,n}} \leq \sqrt{2 \pi} \left\{
  \begin{aligned} 
    \left(\frac{t}{ \sigma}+1\right) e^{\frac{t^2}{2\sigma^2}},\ \ \ \ \ \ \ \ & t> \frac{\omega_1}{\sqrt{nL_1\bar\rho_1 }}, \\ 
    3,  \ \ \ \ \ \ \ \ & t\leq \frac{\omega_1}{\sqrt{nL_1\bar\rho_1 }} .
  \end{aligned} \right.
\end{equation*}

By setting the parameters in $\omega_1\|x_{1,Ih}-x_*\|_2$ as $\sigma=\frac{\omega_1}{\sqrt{nL_1\bar\rho_1 }}$, and subsequently taking the expectation relative to rewards ${\mathcal R}_1, {\mathcal R}_2, \dots, {\mathcal R}_n$, we can infer that:
\begin{equation*}
  \begin{split}
      \mathbb{E}\left[\frac{1}{\hat {\mathcal G}_{1,n}}\right]&\leq 3\sqrt{2\pi}+\sqrt{2\pi}\mathbb{E}\left[ \left(\sqrt{nL_1\rho_1 }\|x_{1,Ih}-x_*\|_2+1\right) e^{nL_1\rho_1  \|x_{1,Ih}-x_*\|_2^2} \right]\\
      & = 3\sqrt{2\pi} + \sqrt{2\pi}\mathbb{E}\left[e^{{nL_1\rho_1 }\|x_{1,Ih}-x_*\|_2^2} \right] \\
      &\ \ \ \ \ \ \ \ +\sqrt{2\pi nL_1\rho_1 }\sqrt{\mathbb{E}\left[\|x_{1,Ih}-x_*\|_2^2\right]}{\mathbb{E}\left[e^{nL_1\rho_1  \|x_{1,Ih}-x_*\|_2^2}\right]} \\
      & \stackrel{(i)}{\leq} 3\sqrt{2\pi} + {\sqrt{2\pi} \left(9\sqrt{\frac{ L_1\rho_1 }{2m_1}\left(D_1+2\Omega_1\right)} + \frac{3}{2}\right) \left(e^{\frac{4L_1 D_1}{m_1}\rho_1 } + 1\right)} \\
      & \stackrel{(ii)}{=} 3\sqrt{2\pi} + \sqrt{2\pi} \left(9\sqrt{\frac{ L_1\rho_1 }{m_1}\left(4d + \log B_1+\frac{16L_1^2d}{m_1\nu_1} + 256\right)} + \frac{3}{2}\right) \left(e^{\frac{4L_1}{m_1}\left(8d_1 + 2\log B_1\right)\rho_1 } + 1\right) \\
      & \stackrel{(iii)}{\leq} 3\sqrt{2\pi} + \frac{3}{2} \sqrt{2\pi} \left(\frac{3}{2}\sqrt{\frac{1}{136}\log B_1 + \frac{75}{34}} + 1\right) \left(e^{\frac{1}{68}+\frac{1}{272}\log B_1} + 1\right) \\
      & \stackrel{(iv)}{\leq} 3\sqrt{2\pi} + 25 \sqrt{B_1} \\
      & \stackrel{(v)}{\leq} 33 \sqrt{B_1}\\
  \end{split}
\end{equation*}

where $(i)$ is from Lemma \ref{eq_mgf_bound}, $(ii)$ is by having $D_1=8d+2\log B_1$ and $\Omega_1=\frac{16L_1^2d}{m_1\nu_1}+256$. $(iii)$ proceeds by the same selection of $\bar\rho_1   \leq \frac{m_1}{8L_1\Omega_1}$ in Lemma \ref{eq_mgf_bound} and because without loss of generality we know that $ \frac{m_1}{L_1}\leq1$, $ \frac{m_1^2\nu_1}{L_1^3}\leq1$ and $\frac{1}{d}\leq 1$ hold, we thus have $\bar\rho_1  \leq \frac{m_1}{8L_1\Omega_1}\leq\frac{m_1}{2176L_1d}\leq\frac{m_1}{2176L_1}$. $(iv)$ holds by $\frac{3\sqrt{2\pi}}{2} \left(\frac{3}{2}\sqrt{\frac{1}{136}\log x + \frac{75}{34}} + 1\right) \left(e^{\frac{1}{68}}x^{\frac{1}{272}} + 1\right) < 25 \sqrt{x}$ for $x \geq 1$, and finally $(v)$ is because of $3\sqrt{2\pi} + 25\sqrt{ x} < 33 \sqrt{x}$ for $x \geq 1$. 
\end{proof}

With the insights gained from Lemma \ref{lemma_exact_ts2}, we can extend it to finalize the proof of Lemma \ref{lemma_approximate_ts2} and then Theorem \ref{theorem_approximate_ts_regret}.

\begin{lemma}
\label{lemma_approximate_ts2}
   Suppose the likelihood, rewards, and priors satisfy Assumptions \ref{assumption_reward}-\ref{assumption_sgld_lipschitz_main} and given that the samples are drawn according to the sampling methods delineated in Theorems \ref{theorem_ula_deterministic} and \ref{theorem_ula_stochastic} $\left(h = \tilde O\left(\frac{1}{\sqrt{d}}\right),\ I = \tilde O\left(\sqrt{d}\right),\text{ and }k=\tilde O\left(\kappa_a^2\right)\right)$. With the choice of $\bar\rho_a=\frac{m_a}{8L_a \Omega_a}$, the following inequalities for approximate probability concentrations holds true:
   \begin{equation}\label{eq_approx_regret_1}
   \sum_{s=0}^{N-1}\mathbb{E}\left[ \frac{1}{\hat {\mathcal G}_{1,s}}-1 \bigg| \mathcal Z_{1}(N)\right] \leq  33\sqrt{B_1} \left\lceil \frac{144e\omega_1^2}{m_1\Delta_a^2}(8d + 2\log B_1 + 4\Omega_1\log N+3d\Omega_1 ) \right\rceil +1
   \end{equation}
   \begin{equation}
   \label{eq_approx_regret_2}
   \sum_{s=0}^{N-1} \mathbb{E}\left[ \mathbb{I}\left(\hat {\mathcal G}_{a,s}>\frac{1}{N}\right)\bigg| \mathcal Z_{a}(N)\right]  \leq \frac{144 e \omega_a^2}{m_a\Delta_a^2}\left(8d+2\log B_a+7d\Omega_a\log N \right),
   \end{equation}
where the approximate posterior $\hat \mu_a$ yields a distribution, $\hat {\mathcal G}_{a,s}$, after gathering $s$ samples. Additionally, for arm $a \in \mathcal{A}$, and the relation $\Omega_a = \frac{16d L_a^2}{m_a\nu_a}+256$ holds.
\end{lemma}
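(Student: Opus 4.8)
The plan is to mirror the proof of Lemma~\ref{lemma_exact_ts2} step for step, substituting the exact posterior concentration of Theorem~\ref{theorem_posterior_concentration2} with its approximate counterparts from Theorems~\ref{theorem_empirical_concentrate1}--\ref{theorem_empirical_concentrate2}, and the exact anti-concentration bound of Lemma~\ref{lemma_exact_smallT} with its approximate version Lemma~\ref{lemma_approximate_smallT}. As in \eqref{eq_pas_lower}, I would first lower bound the approximate optimism probability, now conditioned on the admissible event $\mathcal Z_{s-1}$, by
\[
\hat{\mathcal G}_{a,s} \geq 1 - \mathbb{P}_{x\sim\hat\mu_a^{(s)}[\bar\rho_a]}\left(\|x-x_*\|_2 \geq \tfrac{\epsilon}{\omega_a}\,\Big|\,\mathcal Z_{s-1}\right),
\]
using the $\omega_a$-Lipschitz continuity of the reward in the sampled position. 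The tail on the right is controlled by the approximate concentration inequality, which I would invert to obtain an exponential-in-$s$ bound analogous to \eqref{eq_concentration2}: fixing the confidence level of the conditioning event through $\log 1/\delta_1 = 2\log N$ (i.e.\ $\delta_1 = N^{-2}$) and solving the threshold relation for $\delta_2$ gives
\[
\mathbb{P}_{x\sim\hat\mu_a^{(s)}[\bar\rho_a]}\left(\|x-x_*\|_2 > \tfrac{\epsilon}{\omega_a}\,\Big|\,\mathcal Z_{s-1}\right) \leq \exp\left(-\frac{1}{2\tilde\Omega_a}\left(\frac{m_a s\epsilon^2}{36e\omega_a^2} - D_a - 2\Omega_a\log 1/\delta_1\right)\right).
\]

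For \eqref{eq_approx_regret_1} I would then follow \eqref{eq_g1s_inverse1}: split the sum at a threshold $\ell$ chosen as the ceiling appearing in the statement, bound the head $\sum_{s<\ell}\mathbb{E}[1/\hat{\mathcal G}_{1,s}]$ by $\ell\cdot 33\sqrt{B_1}$ via Lemma~\ref{lemma_approximate_smallT}, and control the tail $\sum_{s\geq\ell}$ by the same monotone-integral argument that produced the $+1$ term in the exact proof, now using the exponential bound above with $\epsilon = \Delta_a/2$. The factor $144 = 4\cdot 36$ arises from $\epsilon^2 = \Delta_a^2/4$ combined with the constant $36$ in the approximate concentration (versus $2e$ in the exact case); the $4\Omega_1\log N$ term is exactly $2\Omega_1\log 1/\delta_1$ with $\delta_1 = N^{-2}$; and the $6d\Omega_1\log 2$ term is the image of the exact $2\Omega_1\log 2$ after replacing $\Omega_1$ by $\tilde\Omega_1$ and using $2\tilde\Omega_1 \leq 6d\Omega_1$ (valid for $d\geq 1$), which follows from the choice $\bar\rho_1 = \tfrac{m_1}{8L_1\Omega_1}$ that makes $\tilde\Omega_1 = \Omega_1(1 + 4d/9)$.

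For \eqref{eq_approx_regret_2} the argument is shorter: exactly as in the second half of Lemma~\ref{lemma_exact_ts2}, I would bound $\mathbb{E}[\mathbb{I}(\hat{\mathcal G}_{a,s} > 1/N)]$ by counting the indices $s$ for which the inverted concentration fails to push the tail below $1/N$. Requiring the exponent above to exceed $\log N$ yields the threshold $s \geq \frac{144e\omega_a^2}{m_a\Delta_a^2}(D_a + 2\Omega_a\log 1/\delta_1 + 2\tilde\Omega_a\log N)$, and then using $D_a = 8d + 2\log B_a$ together with $2\tilde\Omega_a \leq 7d\Omega_a$ (which also absorbs the fixed $\delta_1$ contribution) gives the stated bound $\frac{144e\omega_a^2}{m_a\Delta_a^2}(8d + 2\log B_a + 7d\Omega_a\log N)$.

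I expect the main obstacle to be the bookkeeping around the conditioning on $\mathcal Z_{s-1}$ and the presence of two confidence parameters $\delta_1,\delta_2$ in the approximate concentration: unlike the single parameter in the exact case, here one parameter must be pinned to the event defining $\mathcal Z$ while the other is inverted as a function of $s$. Care is also needed to track how the choice $\bar\rho_a = \tfrac{m_a}{8L_a\Omega_a}$ inflates $\Omega_a$ into $\tilde\Omega_a$ with its extra dimension factor, since this is precisely what converts the exact $\Omega_a\log N$ contribution into the $d\Omega_a\log N$ terms of the statement; the remaining constant-chasing is routine.
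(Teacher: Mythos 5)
Your proposal is correct and follows essentially the same route as the paper: lower-bound $\hat{\mathcal G}_{a,s}$ via the Lipschitz reduction to the approximate posterior tail, invert the conditional concentration of Theorems \ref{theorem_empirical_concentrate1}--\ref{theorem_empirical_concentrate2} into an exponential-in-$s$ bound with one confidence parameter pinned to $N$ and the other solved for, split the first sum at the ceiling threshold $\ell$ using Lemma \ref{lemma_approximate_smallT} for the head and the integral argument for the tail, and count failing indices for the second sum. Your bookkeeping ($\delta_1 = N^{-2}$ giving the $4\Omega_1\log N$ term exactly, and $\tilde\Omega_a = \Omega_a(1+4d/9)$ under $\bar\rho_a = \frac{m_a}{8L_a\Omega_a}$ so that $2\tilde\Omega_a \leq 3d\Omega_a$ and the $\delta_1$ contribution is absorbed into $7d\Omega_a\log N$) is in fact more consistent with the stated constants than the paper's own proof, which uses $\delta_1 = \frac{1}{Te^2}$ and displays a $10d\Omega_a\log N$ term that silently becomes $7d\Omega_a\log N$ in the statement.
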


\begin{proof}

According to the definition of ${\mathcal G}_{a,s}$ and the derivation in \eqref{eq_pas_lower}, we can derive a similar lower bound for $\hat {\mathcal G}_{a,s}$ as follows:
\begin{equation}\label{eq_hatpas_lower}
\begin{split}
  \hat {\mathcal G}_{1,s}&=\mathbb{P}({{\mathcal R}}_{1,s}>\bar{{\mathcal R}}_1-\mathbb \epsilon |\mathcal{F}_{n-1})\\
  &=1-\mathbb{P}({{\mathcal R}}_{1,s}-\bar{{\mathcal R}}_1<-\mathbb \epsilon |\mathcal{F}_{n-1})\\
  &\ge 1-\mathbb{P}(|{{\mathcal R}}_{1,s}-\bar{{\mathcal R}}_1|>\mathbb \epsilon |\mathcal{F}_{n-1})\\
  &\ge 1-\mathbb{P}_{{x} \sim \hat \mu^{(s)}_{1}}\left(\|{x} -x_*\|> \frac{\mathbb \epsilon}{\omega_1} \right).
\end{split}
\end{equation}
Then the result given in \eqref{eq_approx_regret_1} can be derived following the steps given below:
\begin{equation*}
	\begin{split}
	    \sum_{s=0}^{N-1}\mathbb{E}\left[ \frac{1}{{\hat {\mathcal G}}_{1,s}}-1 \bigg| \mathcal Z_{1}(N)\right]
	    & = \sum_{s=0}^{\mathbb \ell-1}\mathbb{E}\left[ \frac{1}{{\hat {\mathcal G}}_{1,s}}-1 \bigg| \mathcal Z_{1}(N)\right] + \sum_{s=\ell}^{N-1}\mathbb{E}\left[ \frac{1}{{\hat {\mathcal G}}_{1,s}}-1 \bigg| \mathcal Z_{1}(N)\right]\\
	    & \stackrel{(i)}{\leq} \sum_{s=0}^{\mathbb \ell-1}\mathbb{E}\left[ \frac{1}{\hat {\mathcal G}_{1,s}}-1\bigg| \mathcal Z_{1}(N) \right] + \sum_{s=\mathbb \ell}^{N-1} \left[ \frac{1}{1-\mathbb{P}_{x \sim \hat\mu^{(s)}_{1}}\left(\|x - x_*\| \geq {\epsilon}/{\omega_1} \right)} -1 \right]\\
	    & \stackrel{}{\leq} \sum_{s=0}^{\mathbb \ell-1}\mathbb{E}\left[ \frac{1}{\hat {\mathcal G}_{1,s}}-1\bigg| \mathcal Z_{1}(N) \right] + \int_{s=\mathbb \ell}^{\infty} \left[\frac{1}{1-\mathbb{P}_{x \sim \hat\mu^{(s)}_{1}}\left(\|x - x_*\| \geq {\epsilon}/{\omega_1} \right)}-1\right] ds \\
	    & \stackrel{(ii)}{\leq} \sum_{s=0}^{\mathbb \ell-1}\mathbb{E}\left[ \frac{1}{{\hat {\mathcal G}}_{1,s}}-1 \bigg| \mathcal Z_{1}(N) \right] + \frac{144e\omega_1^2}{m_1\Delta_a^2}3d\Omega_1 \log{2} +1 \\
	    & \stackrel{(iii)}{\leq} 33\sqrt{B_1} \left\lceil \frac{144e\omega_1^2}{m_1\Delta_a^2}(8d + 2\log B_1 + 4\Omega_1\log N+3d\Omega_1 ) \right\rceil +1,\\
	\end{split}
\end{equation*}
where $(i)$ is from \eqref{eq_hatpas_lower}, and $(ii)$ is because from Lemmas \ref{theorem_empirical_concentrate1} and \ref{theorem_empirical_concentrate2}, with the selection of $\delta_1 = \frac{1}{Te^2}$ and $\bar\rho_a=\frac{m_a}{8L_a \Omega_a}$ we have:
\begin{equation}
	\mathbb{P}_{x \sim \tilde \mu^{(n)}_{a} [\bar \rho_a]} \left(\|x - x_*\|_2 > 6\sqrt{\frac{e}{m_1 n} \left( D_1 + 4\Omega_1\log N + 3d\Omega_1\log{1/\delta_2} \right)} \bigg| \mathcal Z_{n-1} \right)<\delta_2.
\end{equation}
Equivalently, we can have:
\begin{equation}\label{eq_concentration_approx}
\begin{split}
 \mathbb{P}_{{x} \sim \bar \mu^{(s)}_{1}[\rho_1]}\left(\|{x} -x_*\|> \frac{\mathbb \epsilon}{\omega_1} \right) \leq \exp\left(-\frac{1}{3d\Omega_1}\left( \frac{m_1 n \mathbb \epsilon^2}{36e \omega_1^2} - 8d - 2\log B_1 - 4\Omega_1\log N\right) \right),
\end{split}\end{equation}
which holds true when $n\geq\frac{36e\omega_1^2}{\mathbb \epsilon^2m_1} \left( 8d + 2\log B_1 + 4\Omega_1\log N\right)$. If we select the following $\mathbb \ell$:
\begin{equation*}
	\mathbb \ell=\left\lceil \frac{144 e\omega_1^2}{m_1\Delta_a^2}(8d + 2\log B_1 + 4\Omega_1\log N+3d\Omega_1 \log{2} ) \right\rceil
\end{equation*}
and let $\epsilon = \Delta_a / 2$, following the same idea in \eqref{eq_g1s_inverse1} we can get $(ii)$. Then $(iii)$ proceeds by the selection of $\ell$ and Lemma \ref{lemma_approximate_smallT}.

To show that \eqref{eq_approx_regret_2} holds, we have:
\begin{equation*}\begin{split}
  \sum_{n=1}^N \mathbb{E}\left[ \mathbb{I}\left({\hat {\mathcal G}}_{a,s}>\frac{1}{N}\right)\bigg| \mathcal Z_{1}(N)\right] &= \sum_{n=1}^N \mathbb{E}\left[ \mathbb{I}\left(\mathbb{P}\left({{\mathcal R}}_{a,s}-\bar{{\mathcal R}}_a>\Delta_a-\epsilon |\mathcal{F}_{n-1}\right) >\frac{1}{N}\right)\bigg| \mathcal Z_{a}(N)\right] \\
  & \stackrel{(i)}{\leq} \sum_{n=1}^N \mathbb{E}\left[ \mathbb{I}\left(\mathbb{P}_{x \sim \mu^{(s)}_{a}[\rho_a]}\left(\|x - x_*\|>\frac{\Delta_a}{2\omega_a}\right) >\frac{1}{N}\right)\bigg| \mathcal Z_{a}(N)\right] \\
   & \stackrel{(ii)}{\leq} \frac{144 e \omega_a^2}{m_a\Delta_a^2}\left(8d+2\log B_a+10d\Omega_a\log N \right),
\end{split}\end{equation*}
where $(i)$ is from our previous defined $\epsilon=\Delta_a/2$, and $(ii)$ proceeds by inducing \eqref{eq_concentration_approx} with the choice of 
\begin{equation*}
n\ge \left\lceil \frac{8e\omega_1^2}{m\Delta_a^2}(8d + 2\log B_a + 4\Omega_a\log N+3d\Omega_a \log N ) \right\rceil,
\end{equation*}
which completes the proof.
\end{proof}

Combining the conclusions drawn from Lemma \ref{lemma_approximate_smallT} and Lemma \ref{lemma_approximate_ts2}, we present our concluding theorem:
\begin{theorem}[Regret of approximate Thompson sampling with (stochastic gradient) underdamped Langevin Monte Carlo]
\label{theorem_approximate_ts_regret}
When the likelihood, rewards, and priors satisfy Assumptions \ref{assumption_reward}-\ref{assumption_sgld_lipschitz_main}, scaling parameter $\bar\rho_a=\frac{m_a}{8L_a\Omega_a}$, and the sampling schemes of Thompson sampling follow the setting in Theorems \ref{theorem_ula_deterministic}-\ref{theorem_ula_stochastic}, namely, the step size is $h = \tilde O\left(\frac{1}{\sqrt{d}}\right)$, number of steps $I = \tilde O\left(\sqrt{d}\right)$, and batch size $k=\tilde O\left(\kappa_a^2\right)$. We then have that the total expected regrets after $N>0$ rounds of Thompson sampling with the (stochastic gradient) underdamped Langevin Monte Carlo method satisfy:
\begin{equation*}\begin{split}
      \mathbb{E}[\mathfrak R(N)] & \leq \sum_{a>1} \frac{\hat C_a }{\Delta_a}\left( \log B_a + d + d^2\kappa_a^2\log N\right)\\ &\qquad \qquad  +\frac{\hat C_1 \sqrt{B_1}}{\Delta_a}\left( \log B_1 + d^2\kappa_1^2 + d\kappa_1\log N\right) +4\Delta_a.
\end{split}\end{equation*}    
  where $\hat C_a, \hat C_1 >0$ are universal constants that are independent of problem-dependent parameters and $\kappa_a=L_a/m_a$.
\end{theorem}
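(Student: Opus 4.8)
The plan is to mirror the regret decomposition used for exact Thompson sampling in Theorem \ref{theorem_exact_regret}, the essential new ingredient being that I must work throughout under the conditioning on the good concentration events. Because initializing each round's sampler from the previous round's output destroys the conditional independence of the arm samples, I cannot directly reuse the filtration arguments of Lemmas \ref{lemma_suboptimal1} and \ref{lemma_suboptimal2}. Instead I would first invoke Lemma \ref{lemma_bound_supoptimal}, which already absorbs the bad events $\mathcal Z_a(N)^c \cup \mathcal Z_1(N)^c$ into an additive $2\Delta_a$ penalty (using $\delta_2 = 1/N^2$ and $\mathcal L_a(N)\le N$), reducing the task to bounding $\sum_{a>1}\Delta_a\,\mathbb E\!\left[\mathcal L_a(N)\mid \mathcal Z_a(N)\cap\mathcal Z_1(N)\right]$.

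Conditioned on $\mathcal Z_a(N)\cap\mathcal Z_1(N)$, I would then split $\mathbb E[\mathcal L_a(N)\mid\cdot]$ exactly as in \eqref{equation_sub_play} into the contributions $\mathbb E[\sum_n \mathbb I(A_n=a,\hat{\mathcal E}_a^c(n))\mid\cdot]$ and $\mathbb E[\sum_n \mathbb I(A_n=a,\hat{\mathcal E}_a(n))\mid\cdot]$. The conditional analogues of Lemmas \ref{lemma_suboptimal1} and \ref{lemma_suboptimal2} bound these by $\sum_s \mathbb E[1/\hat{\mathcal G}_{1,s}-1\mid\mathcal Z_1(N)]$ and $1+\sum_s\mathbb E[\mathbb I(\hat{\mathcal G}_{a,s}>1/N)\mid\mathcal Z_a(N)]$ respectively. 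These are precisely the two quantities controlled in Lemma \ref{lemma_approximate_ts2}, so substituting \eqref{eq_approx_regret_1} and \eqref{eq_approx_regret_2} yields an explicit bound in which the suboptimal-arm contribution is a universal multiple of $\tfrac{e\omega_a^2}{m_a\Delta_a}(8d+2\log B_a+7d\Omega_a\log N)$ and the optimal-arm contribution is $33\sqrt{B_1}$ times a universal multiple of $\tfrac{e\omega_1^2}{m_1\Delta_a}(8d+2\log B_1+4\Omega_1\log N+6d\Omega_1\log 2)$.

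The remaining step is bookkeeping. I would substitute $\Omega_a = \frac{16 d L_a^2}{m_a\nu_a}+256$ and the scaling choice $\bar\rho_a = \frac{m_a}{8 L_a\Omega_a}$, note that the extra factor of $d$ entering the approximate concentration (through $\tilde\Omega_a=\Omega_a(1+\tfrac{4d}{9})$ under this $\bar\rho_a$) is what upgrades the exponents relative to the exact case, and crudely bound the reward-curvature factor $\frac{L_a^2}{m_a\nu_a}$ by the condition number $\kappa_a$ (respectively $\kappa_a^2$), collapsing $\omega_a^2/m_a$ and the constants into universal $\hat C_a,\hat C_1$. Grouping the $\log N$, $d^2\kappa_a^2$, and $\log B_a$ pieces then produces the stated $\frac{\hat C_a}{\Delta_a}(\log B_a + d + d^2\kappa_a^2\log N)$ for suboptimal arms and the $\sqrt{B_1}$-weighted $\frac{\hat C_1\sqrt{B_1}}{\Delta_a}(\log B_1 + d^2\kappa_1^2 + d\kappa_1\log N)$ for the optimal arm, with the residual $4\Delta_a$ accounting for the two $2\Delta_a$ penalties.

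I expect the main obstacle to be justifying the conditional analogues of Lemmas \ref{lemma_suboptimal1} and \ref{lemma_suboptimal2}: those derivations rely on $\mathcal F_{n-1}$-measurability of events and on applying the law of total expectation round by round, as in \eqref{eq_bound_optarm2} and \eqref{eq_suboptimal_lemma2}, whereas $\mathcal Z_a(N)\cap\mathcal Z_1(N)$ is a horizon-spanning event that is \emph{not} $\mathcal F_{n-1}$-measurable. Care is needed to check that conditioning on this event still preserves the optimism/independence structure exploited in those lemmas rather than merely the per-round good events $\mathcal Z_{a,n}$ guaranteed by Theorems \ref{theorem_empirical_concentrate1}--\ref{theorem_empirical_concentrate2}; this is exactly the complication created by warm-starting each sampler from the previous round's state, and it is the step where I would be most careful.
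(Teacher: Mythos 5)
Your proposal matches the paper's proof essentially step for step: it invokes Lemma \ref{lemma_bound_supoptimal} to reduce to the expectation conditioned on $\mathcal Z_a(N)\cap\mathcal Z_1(N)$, applies the (conditional) decomposition of Lemmas \ref{lemma_suboptimal1} and \ref{lemma_suboptimal2}, substitutes the bounds \eqref{eq_approx_regret_1}--\eqref{eq_approx_regret_2} from Lemma \ref{lemma_approximate_ts2}, and absorbs constants into $\hat C_a,\hat C_1$ exactly as the paper does. The obstacle you flag --- that Lemmas \ref{lemma_suboptimal1} and \ref{lemma_suboptimal2} rely on $\mathcal F_{n-1}$-measurability while $\mathcal Z_a(N)\cap\mathcal Z_1(N)$ is a horizon-spanning event --- is real, but the paper's own proof simply cites those lemmas conditionally without further justification, so your treatment is no less rigorous than the original.
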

\begin{proof}
Our approach to proving Theorem \ref{theorem_approximate_ts_regret} begins by taking the expected number of arm $a$ pulls over $N$ rounds, and multiplying it with the distance in expected rewards from the optimal and the chosen arm $a$. It is important to highlight that the expected number of arm $a$ selections over $N$ rounds is conditional upon $\mathcal Z_{1}(N) \cap \mathcal Z_{a}(N)$. This condition implies the concentration of the log-likelihood and the inclination of the approximate samples to be predominantly located in the high-probability areas of the posteriors. This upper bound is further detailed using the findings from Lemma \ref{lemma_approximate_ts2}. The detailed proof is given below:

\begin{equation*}
\begin{split}
      \mathbb{E}[\mathfrak R(N)] & \stackrel{(i)}{\leq} \sum_{a>1} \Delta_a \mathbb{E}\left[\mathcal L_a(N)  \Bigg| \mathcal Z_{a}(N) \cap \mathcal Z_{1}(N)\right] + 2\Delta_a \\
      & \stackrel{(ii)}{\leq} \sum_{a>1}\left\{\Delta_a\sum_{s=0}^{N-1}\mathbb{E}\left[ \frac{1}{{\mathcal G}_{1,s}}-1 \bigg|  \mathcal Z_{1}(N)\right] + \Delta_a\sum_{s=0}^{N-1} \mathbb{E}\left[ \mathbb{I}\left(1-{\mathcal G}_{a,s}>\frac{1}{N}\right)\bigg|  \mathcal Z_{a}(N)\right] + 3\Delta_a\right\} \\
      & \stackrel{(iii)}{\leq} \sum_{a>1} 33\sqrt{B_1}\left\lceil \frac{144e\omega_1^2}{m_1\Delta_a}\left( 8d+2\log B_1+4\Omega_1 \log N +3d\Omega_1 \right) \right\rceil \\
      & \ \ \ \ \ \ \ \ + \sum_{a>1}\frac{144 e \omega_a^2}{m_a\Delta_a}\left(8d + 2\log B_a+7d\Omega_a\log N \right) + 4\sum_{a>1}\Delta_a \\
      & \leq \sum_{a>1} \left[\frac{\hat C_{a}}{\Delta_a}\left(\log B_a + d + d^2\kappa_a^2\log N\right) + \frac{\hat C_1}{\Delta_a}\sqrt{B_1}\left(\log B_1 + d^2\kappa_1^2 + d\kappa_1\log N\right) + 4\Delta_a \right]. \\
\end{split}
\end{equation*} 
where $(i)$ is drawn from Lemma \ref{lemma_bound_supoptimal}, $(ii)$ is attributed to Lemmas \ref{lemma_suboptimal1} and \ref{lemma_suboptimal2}, and $(iii)$ can derived by applying Lemma \ref{lemma_approximate_ts2}. By selecting constants $\hat C_1$ and $\hat C_a$ to upper bound some problem-independent constants, we arrive at the final regret bound.

\end{proof} %

\subsection{Supporting Proofs for Approximate Thompson Sampling}
The following section provides additional theoretical support for the analysis of Langevin Monte Carlo and approximate Thompson sampling.
\subsubsection*{Continuous Time Analysis}

\begin{theorem}[Convergence for the Continuous-Time Process]\label{theorem_diffusion}
Suppose $(x_t, v_t)\sim\mu_t$ follows Langevin dynamics described by \eqref{eq_diffusion2}, $\mu_*$ the posterior distribution described by $(x_*, v_*)$. Then the following inequality holds:
\begin{equation}\label{eq_contration_diffusion}
    W_2(\mu_t, \mu_*) \leq 2e^{- t/2\kappa} W_2(\mu_0, \mu_*).
\end{equation}
\end{theorem}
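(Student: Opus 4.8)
The plan is to run a \emph{synchronous coupling} of two copies of the diffusion \eqref{eq_diffusion2} driven by the \emph{same} Brownian motion $B_t$: let $(x_t, v_t)$ start from $\mu_0$ and let $(y_t, w_t)$ start from the invariant measure $\mu_*$, so that $(y_t, w_t)\sim\mu_*$ for every $t$. Because both trajectories feel the identical noise, the difference $(z_t, \xi_t) := (x_t - y_t,\, v_t - w_t)$ solves the \emph{noise-free} ODE $\dot z_t = \xi_t$ and $\dot\xi_t = -\gamma \xi_t - u\bigl(\nabla U(x_t) - \nabla U(y_t)\bigr)$, the $\sqrt{2\gamma u}\,dB_t$ term having cancelled. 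First I would invoke the Mean-Value-Theorem-for-integrals representation already used in Lemma \ref{lemma_contraction} to write $\nabla U(x_t) - \nabla U(y_t) = \psi_t z_t$ with $m\mathbf I \preceq \psi_t \preceq L\mathbf I$, so that the difference dynamics becomes a linear time-varying system whose coefficient matrix is exactly the $\mathrm P_t$ analyzed in Lemma \ref{lemma_contraction}.

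Next I would introduce the twisted Lyapunov functional $\mathcal L_t := \|z_t\|_2^2 + \|z_t + \xi_t\|_2^2 = \|(z_t,\, z_t+\xi_t)\|_2^2$, matching the potential $V$ of the posterior-concentration proof. Differentiating along the flow gives $\tfrac12\dot{\mathcal L}_t = \langle z_t, \xi_t\rangle + \langle z_t + \xi_t,\, (1-\gamma)\xi_t - u\psi_t z_t\rangle$, which is precisely the negative of the contraction quantity bounded from below in Lemma \ref{lemma_contraction} (with $\bar x = z_t$, $\bar v = \xi_t$, and $y_t$ playing the role of $x_*$). Hence $\dot{\mathcal L}_t \le -\tfrac{m}{L}\mathcal L_t = -\tfrac1\kappa \mathcal L_t$, and Gr\"onwall's inequality yields $\mathcal L_t \le e^{-t/\kappa}\mathcal L_0$, i.e. $\sqrt{\mathcal L_t}\le e^{-t/2\kappa}\sqrt{\mathcal L_0}$.

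To return to the Euclidean metric I would apply the sandwich inequality of Lemma \ref{lemma_sandwich}, which controls $\|(z_t,\xi_t)\|_2$ by $\sqrt{\mathcal L_t}$ on one side and $\sqrt{\mathcal L_0}$ by $\|(z_0,\xi_0)\|_2$ on the other. Choosing the initial pair $(z_0,\xi_0)$ from the \emph{optimal} coupling of $\mu_0$ and $\mu_*$, and observing that the synchronous coupling furnishes an admissible (not necessarily optimal) coupling of $\mu_t$ and $\mu_*$ at each time, I obtain $W_2(\mu_t,\mu_*)^2 \le \mathbb E\|(z_t,\xi_t)\|_2^2 \le C\, e^{-t/\kappa}\,\mathbb E\|(z_0,\xi_0)\|_2^2 = C\,e^{-t/\kappa}\,W_2(\mu_0,\mu_*)^2$; taking square roots delivers the claimed contraction $W_2(\mu_t,\mu_*)\le 2e^{-t/2\kappa}W_2(\mu_0,\mu_*)$.

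I expect the main obstacle to be twofold. First, one must rigorously justify that the second copy started at $\mu_*$ stays distributed as $\mu_*$ (stationarity of the Gibbs-type invariant measure of \eqref{eq_diffusion2}) and that synchronous coupling genuinely cancels the martingale part so the stochastic problem collapses onto the purely deterministic algebra of Lemma \ref{lemma_contraction}. Second, the norm-equivalence constants must be tracked with care: the crude bounds of Lemma \ref{lemma_sandwich} give a prefactor of $4$, so attaining the stated value $2$ requires the sharper spectral comparison of the linear map $(x,v)\mapsto(x,x+v)$ underlying the twisted norm, exactly as in the contraction argument of \cite{cheng2018underdamped}.
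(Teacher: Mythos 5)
Your proposal is, in essence, the paper's own argument: the paper packages your synchronous-coupling and Gr\"onwall step as Lemma \ref{lemma_contraction2} (two copies of \eqref{eq_sde} driven by the same Brownian motion, so the noise cancels and the time derivative of $\|z_t\|_2^2+\|z_t+\xi_t\|_2^2$ is controlled by the matrix bound of Lemma \ref{lemma_contraction}), and then proves Theorem \ref{theorem_diffusion} exactly as you do, by taking the optimal coupling of the twisted variables at time $0$, observing that the synchronous coupling is an admissible coupling at time $t$, and converting between the twisted and Euclidean metrics via Lemma \ref{lemma_sandwich}.

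The only substantive point is the prefactor, which you flag yourself, and your suspicion is well founded: the paper does not legitimately reach the constant $2$ either. In the paper's chain, step $(i)$ asserts $W_2^2(\mu_t,\mu_*)\le W_2^2(\eta_t,\eta_*)$ with constant $1$, but no such comparison holds pointwise: $\|x\|_2^2+\|v\|_2^2\le\|x\|_2^2+\|x+v\|_2^2$ fails (take $v=-2x$), and Lemma \ref{lemma_sandwich} only supplies a factor $2$ in each direction, hence a prefactor $4$ when chained honestly, exactly as you say. Moreover, the sharper spectral comparison you allude to does not rescue the stated constant: the linear map $(x,v)\mapsto(x,x+v)$ and its inverse both have operator norm $\frac{1+\sqrt{5}}{2}$, so the best prefactor obtainable by this route is $\frac{3+\sqrt{5}}{2}\approx 2.62$. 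Thus your argument proves the theorem with a slightly larger universal constant; since this constant is ultimately absorbed into the unspecified constants $C_a$, $\hat C_a$ of the regret bounds, nothing downstream is affected, but the constant $2$ in \eqref{eq_contration_diffusion} should not be regarded as established by either your proof or the paper's.
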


\begin{proof}
We start by defining $\eta_t$ to be the distribution of $(x_t, x_t+v_t)$, $\eta_*$ the posterior distribution of $(x'_t, x'_t+v'_t)$, and let $\zeta_t\in \Gamma(\eta_t,\eta_*)$ be the optimal coupling between $\eta_t$ and $\eta_*$ such that $\mathbb{E}_{\zeta_0}\left[\left\| x_0 - x'_0 \right\|_2^2 + \| x_0 - x'_0 + v_0 -v'_0 \|_2^2 \right] = W_2^2(\eta_0,\eta_*)$. Then we have:
\begin{equation*}
  \begin{split}
      W_2^2\left(\mu_t, \mu_*\right) & \stackrel{(i)}{\leq} W_2^2\left(\eta_t, \eta_*\right) \\
      & \stackrel{(ii)}{\leq} {\mathbb{E}_{\zeta_0}\left[\mathbb{E}_{ \zeta_t}\left[\left\|x_t-x'_t\right\|_2^2+\left\|x_t-x'_t+v_t-v'_t\right\|_2^2 | x_0, x'_0, v_0, v'_0\right]\right]} \\
      & \stackrel{(iii)}{\leq} {\mathbb{E}_{\zeta_0}\left[e^{-t / \kappa}\left(\left\|x_0-x'_0\right\|_2^2+\left\|x_0-x'_0+v_0-v'_0\right\|_2^2\right)\right]} \\
      & \stackrel{(iv)}{=} e^{-t / \kappa} W_2^2\left(\eta_0, \eta_*\right)\\
      &  \stackrel{(v)}{=} 4e^{-t / \kappa} W_2^2\left(\mu_0, \mu_*\right),\\
  \end{split}
\end{equation*}
where $(i)$ and $(v)$ proceed by Lemma \ref{lemma_sandwich}. $(ii)$ is a direct consequence of the definition of 2-Wasserstein distance, which is based on optimal coupling, and the tower property of expectation. Subsequently, $(iii)$ is derived from Lemma \ref{lemma_contraction2}, while $(iv)$ is from the specific choice of $\zeta_0$ as the optimal coupling. Taking square roots from both sides completes the proof.
\end{proof}

\begin{lemma}\label{lemma_contraction2}
Let diffusion $(x_t, x_t+v_t)\sim \eta_t$ and posterior distribution $(x'_t, x'_t+v'_t)\sim \eta_*$ are both in $\mathbb R^{2d}$ and follow the dynamics dictated by SDEs \eqref{eq_sde}. Suppose $f(\cdot)$ fulfills Assumption \ref{assumption_lipschitz_global_main}, then the following inequality holds:
\begin{equation}\label{eq_contration_diffusion2}
    {\mathbb{E}\left[\left\|x_t-x'_t\right\|_2^2+\left\|x_t-x'_t+v_t-v'_t\right\|_2^2\right]}  {\leq} e^{-t / \kappa}{\mathbb{E}\left[\left(\left\|x_0-x'_0\right\|_2^2+\left\|x_0-x'_0+v_0-v'_0\right\|_2^2\right)\right]} \\
\end{equation}
\end{lemma}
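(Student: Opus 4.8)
The plan is to establish this contraction through a \emph{synchronous coupling} of the two diffusions. I would let both $(x_t,v_t)$ and $(x'_t,v'_t)$ solve \eqref{eq_sde} driven by the \emph{same} Brownian motion $B_t$, initialized from the optimal coupling of the two laws, with $(x'_0,v'_0)$ drawn from the stationary law (so that $(x'_t,v'_t)$ remains stationary for all $t$). The reason for the synchronous coupling is that the diffusion terms $2\sqrt{\gamma u/(n\rho)}\,dB_t$ are identical for the two copies and therefore cancel in the difference. Setting $z_t = x_t-x'_t$ and $w_t = v_t-v'_t$, the difference then obeys a \emph{pathwise, noise-free} ODE, $\dot z_t = w_t$ and $\dot w_t = -\gamma w_t - u\big(\nabla f(x_t)-\nabla f(x'_t)\big)$ (with the prior contribution folded into the potential), reducing the stochastic problem to a deterministic differential inequality along each coupled trajectory.

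Next I would introduce the Lyapunov functional matching the transformed coordinates $(x,x+v)$ used throughout, namely $\mathcal E_t = \|z_t\|_2^2 + \|z_t+w_t\|_2^2$, and differentiate it in time:
\begin{equation*}
\tfrac12\tfrac{d}{dt}\mathcal E_t = \langle z_t, w_t\rangle + \big\langle z_t+w_t,\ (1-\gamma)w_t - u(\nabla f(x_t)-\nabla f(x'_t))\big\rangle .
\end{equation*}
The crucial observation is that the right-hand side is exactly the negative of the quantity that Lemma \ref{lemma_contraction} bounds from below, now evaluated at the two running points $x_t,x'_t$ in place of a point and the fixed $x_*$. Because Lemma \ref{lemma_contraction} is proved by averaging $\nabla^2 f$ between its two arguments and uses only the global bounds $m\mathbf{I}_{d\times d}\preceq \nabla^2 f \preceq L\mathbf{I}_{d\times d}$ supplied by Assumptions \ref{assumption_lipschitz_global} and \ref{assumption_strongconvex_global}, it applies verbatim with $x_*$ replaced by $x'_t$. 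This yields the differential inequality $\tfrac{d}{dt}\mathcal E_t \le -\tfrac{m}{L}\mathcal E_t = -\tfrac1\kappa\,\mathcal E_t$, and I would close the argument by Grönwall's inequality to get $\mathcal E_t \le e^{-t/\kappa}\mathcal E_0$ pathwise, then take expectations over the coupled initial pair to obtain \eqref{eq_contration_diffusion2} precisely.

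The main obstacle I anticipate is twofold. First, I must make the reduction from the SDE to the deterministic differential inequality rigorous: the cancellation of the diffusion terms is clean under synchronous coupling, but to differentiate $\mathbb E[\mathcal E_t]$ and interchange $\tfrac{d}{dt}$ with the expectation I need integrability and regularity of the coupled flow, which I would justify from the global Lipschitz smoothness of the drift. Second, and more delicate, is the treatment of the prior term $\tfrac{u}{n}\nabla\log\pi$ present in \eqref{eq_sde} but absent from Lemma \ref{lemma_contraction}: I would argue that under Assumption \ref{assumption_prior_lipschitz2} its gradient-difference contribution is absorbed into the effective potential so that the same Hessian-sandwich argument still delivers the coefficient $1/\kappa$. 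Verifying that incorporating the prior does not degrade the contraction rate below $e^{-t/\kappa}$ is the step that requires the most care.
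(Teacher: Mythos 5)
Your proposal follows essentially the same route as the paper's proof: differentiate the Lyapunov functional $\|z_t\|_2^2+\|z_t+w_t\|_2^2$ along a synchronous coupling, bound the time derivative by invoking Lemma \ref{lemma_contraction} with $x_*$ replaced by the second running point $x'_t$ (valid since that lemma only uses the global Hessian sandwich $m\mathbf{I}_{d\times d}\preceq\nabla^2 f\preceq L\mathbf{I}_{d\times d}$), and close with Gr\"onwall's inequality. You are in fact more explicit than the paper about the two points it glosses over---the cancellation of the Brownian terms under synchronous coupling and the handling of the prior term in \eqref{eq_sde}---so the argument is sound and matches the paper's.
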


\begin{proof}
This proof substantially relies on the results of Lemma \ref{lemma_contraction}, and we consequently exclude some repetitive parts. The proof starts with the derivation of an upper bound for the time-dependent derivative of $\left\|x_t - x'_t\right\|^2_2 + \left\|(x_t+v_t) - (x'_t+v'_t)\right\|^2_2$ w.r.t. time $t$:

\begin{equation*}
  \begin{split}
      &\frac{d}{dt}\left(\left\|x_t - x'_t\right\|^2_2 + \left\|(x_t+v_t) - (x'_t+v'_t)\right\|^2_2\right) \\
      \stackrel{(i)}{=} &  -2\left\langle x_t-x'_t, v_t-v'_t\right\rangle + 2\left\langle (x_t-x'_t)+(v_t-v'_t), \left(\gamma-1\right)\left(v_t-v'_t\right)+u\left(\nabla f(x_t)-\nabla f(x'_t)\right)\right\rangle \\
      \stackrel{(ii)}{\leq} & -\frac{m}{L}\left(\left\|x_t - x'_t\right\|^2_2 + \left\|(x_t+v_t) - (x'_t+v'_t)\right\|^2_2\right) \\
  \end{split}
\end{equation*}
where $(i)$ proceed by taking derivatives of $x_t$, $x'_t$, $v_t$, and $v'_t$ w.r.t. time, $(ii)$ applies the conclusion draw from Lemma \ref{lemma_contraction}. The convergence rate in Lemma \ref{lemma_contraction2} can be derived as a direct consequence of Gr{\"o}nwall's inequality, as detailed in Corollary 3 from \citet{dragomir2003some}.
\end{proof}

\subsubsection*{Discretization Analysis}
We turn our attention to a single step within the underdamped Langevin diffusion in a discrete version, characterized by the following stochastic differential equations:
\begin{equation}\label{eq_langevin_discrete}
  \begin{split}
  d \tilde{v}_t & =-\gamma \tilde{v}_t d t-u \nabla f\left(\tilde{x}_0\right) d t + \left(\sqrt{2 \gamma u}\right) d B_t \\
  d \tilde{x}_t & =\tilde{v}_t d t,
  \end{split}
\end{equation}
where $t\in\left[0, {h}\right)$ for some small ${h}$, and ${h}$ represents a single step in the underdamped Langevin Monte Carlo. The solution $(\tilde{x}_t,\tilde{v}_t)$ of \eqref{eq_langevin_discrete} can be expressed as follows
\begin{equation}\label{eq_langevin_discrete_sol}
  \begin{split}
  \tilde v_t &= \tilde v_0 e^{-\gamma t} - u \left(\int_0^t e^{-\gamma(t-s)} \tilde{\nabla} f( \tilde x_0) ds \right) + \sqrt{2\gamma u} \int_0^t e^{-\gamma (t-s)} dB_s\\
  \tilde x_t &= \tilde x_0 + \int_0^t  v_s ds.
  \end{split}
\end{equation}
To get the first term in \eqref{eq_langevin_discrete_sol}, one may multiply the velocity term in \eqref{eq_langevin_discrete} by $e^{-\gamma t}$ on both sides and then compute the definite integral from 0 to $t$. Then the second term in \eqref{eq_langevin_discrete_sol} can be derived by taking the definite integral of the position term. We skip the detailed derivation here. 

Our focus in the following lemma is to quantify the discretization error between the continuous and discrete processes, as represented by \eqref{eq_sde} and \eqref{eq_langevin_discrete} respectively, both initiating from identical distributions. Specifically, our bound pertains to $W_2 (\mu_t ,\tilde{\mu}_t )$ over the interval $t\in \left[i{h}, (i+1){h}\right)$.  While $\mu_t$ is a continuous-time diffusion detailed in Theorem \ref{theorem_diffusion}, $\tilde{\mu}_t$ is a measure corresponding to \eqref{eq_langevin_discrete}.Our findings here illuminate the convergence rate defined in Lemma \ref{lemma_langevin_discrete}.

Throughout this analysis, we work under the assumption that the kinetic energy, representing the second moment of velocity in a continuous-time process, adheres to the boundary defined as follows: $\mathbb{E}_{\mu_t} {\|v_t\|_2^2}\leq 26(d/m +  \mathcal{D}^2)$. A comprehensive explanation of this bound can be found in Lemma \ref{lemma_kinetic_bound}. Following this, we provide a detailed exploration of our discretization analysis findings.
\begin{lemma}[Discretization Analysis of underdamped Langevin Monte Carlo with full gradient]\label{lemma_langevin_discrete}
Consider $\left(x_t, v_t\right)$ distributed as $\mu_t$ and $\left(\tilde x_t, \tilde v_t\right)$ as $\tilde{\mu}_t$, representing the measures from SDEs \eqref{eq_sde} and \eqref{eq_langevin_discrete}, respectively.
 Given an initial distribution $\mu_0$ and a step size $h<1$, we select $u = 1/L$ and $\gamma = 2$. Consequently, the difference between continuous-time and discrete-time processes within:
\begin{equation}\label{eq_langevin_discrete2}
  W_2\left(\mu_t,\tilde{\mu}_t\right) \leq 2h^2\sqrt{\frac{26}{5}\frac{d}{m}}.
\end{equation}

\end{lemma}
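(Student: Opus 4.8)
The plan is to control the single-step discretization error through a synchronous coupling of the continuous diffusion \eqref{eq_sde} with its frozen-gradient discretization \eqref{eq_langevin_discrete}, driving both by the same Brownian motion and launching them from a common draw $(x_0,v_0)\sim\mu_0$. Under such a coupling the martingale parts cancel in the difference, so writing $\bar x_t=x_t-\tilde x_t$ and $\bar v_t=v_t-\tilde v_t$ the pair evolves along each path by $\dot{\bar x}_t=\bar v_t$ and $\dot{\bar v}_t=-\gamma\bar v_t-u\left(\nabla f(x_t)-\nabla f(\tilde x_0)\right)$, with $\bar x_0=\bar v_0=0$. Mirroring the coordinate change in Lemma \ref{lemma_contraction2}, I would track the Lyapunov quantity $\mathcal E_t=\|\bar x_t\|_2^2+\|\bar x_t+\bar v_t\|_2^2$ in the transformed variables $(\bar x,\bar x+\bar v)$ and translate back to the $(x,v)$ geometry only at the very end through the Sandwich Lemma \ref{lemma_sandwich}, which costs a factor of $2$.

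The crucial move is to split the drift discrepancy into a contractive part and a genuine discretization error, $\nabla f(x_t)-\nabla f(\tilde x_0)=\bigl(\nabla f(x_t)-\nabla f(\tilde x_t)\bigr)+\bigl(\nabla f(\tilde x_t)-\nabla f(\tilde x_0)\bigr)$. The first bracket is exactly of the form handled by Lemma \ref{lemma_contraction}, with $\tilde x_t$ playing the role of the reference point, so it contributes a dissipative term $-\tfrac{m}{L}\mathcal E_t$ to $\tfrac{d}{dt}\mathcal E_t$, just as in the derivation of Lemma \ref{lemma_contraction2}. The second bracket $e_t\coloneqq\nabla f(\tilde x_t)-\nabla f(\tilde x_0)$ is the forcing term, and the global Lipschitz-gradient Assumption \ref{assumption_lipschitz_global} bounds it by $\|e_t\|_2\le L\|\tilde x_t-\tilde x_0\|_2=L\bigl\|\int_0^t\tilde v_s\,ds\bigr\|_2$, reducing the whole estimate to controlling the position displacement of the discrete process over one step.

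The main obstacle is recovering the sharp $h^2$ rate rather than a crude $h^{3/2}$: since $\|e_t\|_2$ is only $O(t)$ across the step, applying Young's inequality directly to the squared Lyapunov $\mathcal E_t$ would integrate the forcing to $O(t^3)$ and lose half a power. I would instead keep the forcing linear, passing via $\|\bar x_t+\bar v_t\|_2\le\sqrt{\mathcal E_t}$ to the scalar inequality $\tfrac{d}{dt}\sqrt{\mathcal E_t}\le-\tfrac{1}{2\kappa}\sqrt{\mathcal E_t}+u\|e_t\|_2$, and then invoke Gr\"onwall so that $\sqrt{\mathcal E_t}\le u\int_0^t\|e_s\|_2\,ds$. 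Squaring, using Cauchy--Schwarz in time as $\bigl(\int_0^t\|e_s\|_2\,ds\bigr)^2\le t\int_0^t\|e_s\|_2^2\,ds$, and taking expectations reduces the problem to $\mathbb E\|\tilde x_s-\tilde x_0\|_2^2\le s\int_0^s\mathbb E\|\tilde v_r\|_2^2\,dr$, which the kinetic-energy bound \eqref{eq_energy_bound} of Lemma \ref{lemma_kinetic_bound} caps at order $s^2\,(d/m)$. Integrating this $O(s^2)$ forcing yields $\mathbb E[\mathcal E_t]=O\!\left(t^4\,d/m\right)$, so with $u=1/L$ and $\gamma=2$ the transformed $W_2$ distance is $O(h^2\sqrt{d/m})$; the Sandwich Lemma's factor of $2$ then delivers the stated bound $2h^2\sqrt{\tfrac{26}{5}\tfrac{d}{m}}$, with the numerical constant $\tfrac{26}{5}$ tracking the kinetic-energy constant together with the time-integration factors.
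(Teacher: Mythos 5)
Your architecture (synchronous coupling, frozen-gradient forcing term, $O(h^2)$ rate) is reasonable, but it takes a different route from the paper and two concrete steps do not go through as written. First, your forcing term $e_t=\nabla f(\tilde x_t)-\nabla f(\tilde x_0)$ lives on the \emph{discrete} path, so your final reduction needs $\mathbb E\|\tilde v_r\|_2^2$, the kinetic energy of the discrete process; but the bound \eqref{eq_energy_bound} in Lemma \ref{lemma_kinetic_bound} is stated and proved only for the continuous process $(x_t,v_t)\sim\mu_t$ of \eqref{eq_diffusion2}, so citing it for $\tilde v_r$ is a gap. (It could be patched by a bootstrap $\mathbb E\|\tilde v_r\|_2^2\le 2\mathbb E\|v_r\|_2^2+2\mathbb E\|v_r-\tilde v_r\|_2^2$, since the second term is the higher-order quantity you are bounding anyway, but you have not done this.) The cleaner fix is to notice that under your own coupling $\tilde x_0=x_0$, so the drift discrepancy is simply $\nabla f(x_t)-\nabla f(x_0)$, controlled by the \emph{continuous} velocities via $\|x_t-x_0\|_2\le\int_0^t\|v_s\|_2\,ds$ --- and then the contraction split is unnecessary. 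That is exactly what the paper does: it never invokes Lemma \ref{lemma_contraction} here, but writes the difference of the two variation-of-constants solutions of \eqref{eq_sde} and \eqref{eq_langevin_discrete}, applies Jensen's inequality, $L$-smoothness, and \eqref{eq_energy_bound}, obtaining $\mathbb E\|v_t-\tilde v_t\|_2^2\le t^4u^2L^2\mathcal E/3$ and $\mathbb E\|x_t-\tilde x_t\|_2^2\le t^6u^2L^2\mathcal E/15$, and sums these directly in the $(x,v)$ metric. The contraction buys you nothing over a single step of length $h<1$; it is precisely what forces the forcing term onto the discrete path.

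Second, your constants do not close, so the stated inequality is not actually delivered. Working in the transformed coordinates $(\bar x,\bar x+\bar v)$ makes you pay the factor $2$ of Lemma \ref{lemma_sandwich} (a factor $4$ on squared distances), and your Gr\"onwall plus Cauchy--Schwarz chain gives at best $\mathbb E[\mathcal E_h]\le u^2L^2\mathcal E^*h^4/3$ with $\mathcal E^*=26(d/m+\mathcal D^2)=52d/m$ (by Lemma \ref{lemma_bound_position}), hence $W_2\le 2h^2\sqrt{52d/(3m)}$. This exceeds the claimed $2h^2\sqrt{26d/(5m)}$: the ratio of squared constants is $10/3$. The paper avoids this loss because it bounds the velocity and position errors separately in the original coordinates, where the position error is higher order, so the total is $\mathcal E h^4\left(1/3+h^2/15\right)\le \tfrac{2}{5}\mathcal E h^4$, which yields exactly $2h^2\sqrt{\tfrac{26}{5}\tfrac{d}{m}}$. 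So your proposal proves a bound of the correct form $O\left(h^2\sqrt{d/m}\right)$, but not the lemma as stated, and the closing claim that the sandwich factor ``delivers the stated bound'' with constant $\tfrac{26}{5}$ is incorrect.
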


\begin{proof}
Considering the following synchronous coupling $\zeta_t\in \Gamma(\mu_t, \tilde{\mu}_t)$, where $\mu_t$ and $\tilde{\mu}_t$ are both anchored to the initial distribution $\mu_0$ and are governed by the same Brownian motion, $B_t$.  Our focus initially settles on bounding the velocity deviation, and referencing the descriptions of $v_t$ and $\tilde{v}_t$ in \eqref{eq_langevin_discrete}, we deduce that:
\begin{equation}\label{eq_bound_velocity}
  \begin{split}
  \mathbb{E}_{\zeta_t}{\left[\left\|{v_t -\tilde{v}_t }\right\|_2^2 \right]} & \overset{(i)}{=} \mathbb{E} \left[ \left\| u \int_0^s e^{-2(t-s)}\left( \nabla f(x_s) - \nabla f(x_0) \right)ds\right\|_2^2\right]\\
  & = u^2 \mathbb{E} \left[ \left\|  \int_0^s e^{-2(t-s)}\left( \nabla f(x_s) - \nabla f(x_0) ds\right)\right\|_2^2\right]\\
  & \overset{(ii)}{\le} t u^2 \int_0^s \mathbb{E} \left[ \left\| e^{-2(t-s)}\left( \nabla f(x_s) - \nabla f(x_0) \right)\right\|_2^2\right]ds\\
  & \overset{(iii)}{\le} t u^2 \int_0^s \mathbb{E} \left[ \left\| \left( \nabla f(x_s) - \nabla f(x_0) \right)\right\|_2^2\right]ds\\
  & \overset{(iv)}{\le} t u^2 L^2 \int_0^s \mathbb{E} \left[\left\| x_s - x_0 \right \|_2^2\right]ds\\
  & \overset{(v)}{=} t u^2 L^2 \int_0^s  \mathbb{E} \left[\left\| \int_0^r v_w dw \right \|_2^2\right]ds\\
  & \overset{(vi)}{\le} t u^2 L^2\int_0^s r \left(\int_0^r \mathbb{E}\left[ \left\| v_w \right\|_2^2\right] dw\right)ds\\
  & \overset{(vii)}{\le} 26t u^2 L^2 \left(\frac{d}{m}+  \mathcal{D}^2\right) \int_0^s r \left(\int_0^r dw\right) ds\\
  & = \frac{26t^4 u^2 L^2}{3}\left(\frac{d}{m}+  \mathcal{D}^2\right),
  \end{split}
\end{equation}
where $(i)$ follows from \eqref{eq_langevin_discrete} and $v_0=\tilde{v}_0$, $(ii)$ proceeds by an application of Jensen's inequality, $(iii)$ follows as $\left\| e^{-4(t-s)}\right\| \leq 1$, $(iv)$ is by Lipschitz smooth property of $f(x)$, $(v)$ arises from the explicit definition of $x_r$. The inferences in $(vi)$ and $(vii)$ stem from Jensen's inequality and the uniform kinetic energy bound, where an in-depth exploration of this bound is offered in Lemma \ref{lemma_kinetic_bound}. 

Having delineated the bounds for the velocity aspect, our subsequent task is to determine the discretization error's bound within the position variable.
\begin{equation*}
  \begin{split}
      \mathbb E_{\zeta_t}{\left[\left\|{x_t - \hat x_t}\right\|_2^2\right]} & \stackrel{(i)}{=} \mathbb{E} {\left[\left\|{\int_0^t \left(v_s - \hat v_s\right) ds}\right\|_2^2\right]}\\
      & \stackrel{(ii)}{\leq} t\int_0^t \mathbb{E} \left[\left\| v_s - \tilde{v}_s \right\|_2^2\right]ds\\
      & \stackrel{(iii)}{\leq} t \int_0^t \left(\frac{d}{m}+  \mathcal{D}^2\right)\frac{26u^2L^2s^4}{3} ds\\
      & = \left(\frac{d}{m}+  \mathcal{D}^2\right) \frac{26 u^2 L^2 }{15} t^6, \\
  \end{split}
\end{equation*}
where $(i)$ relies on the coupling associated with the initial distribution $\mu_0$, $(ii)$ is by the following Jensen's inequality:
\begin{equation*}
	\left\|{\int_0^t v_s ds}\right\|_2^2  =  \left\|{\frac{1}{N}\int_0^t t \cdot v_s ds}\right\|_2^2 \leq t\int_0^t \left\|v_s\right\|_2^2 ds,
\end{equation*}
 $(iii)$ uses the derived bound in \eqref{eq_bound_velocity}. With the selection of $t=h$ and $u = \frac{1}{L}$, we identify an upper bound for the 2-Wasserstein distance between $\mu_h$ and $\tilde{\mu}_h$:
\begin{equation*}
	\begin{split}
		W_2(\mu_h ,\tilde{\mu}_h) & \leq \sqrt{26 \left(\frac{h^4}{3}+ \frac{h^6}{15}\right)\left(\frac{d}{m}+  \mathcal{D}^2\right)}\\
		& \stackrel{(i)}{\leq} h^2\sqrt{\frac{52}{5}\left(\frac{d}{m}+  \mathcal{D}^2\right)} \\
		& \stackrel{(ii)}{\leq} 2h^2\sqrt{\frac{26}{5}\frac{d}{m}},
	\end{split}
\end{equation*}

where $(i)$ holds because without loss of generality $h$ is assumed to be less than 1, and $(ii)$ proceeds by ${\mathcal D}^2=d/m$ from Lemma \ref{lemma_bound_position}, which completes the proof.

\end{proof}

With the convergence rate of the continuous-time SDE captured in \eqref{eq_contration_diffusion} and a discretization error bound delineated in \eqref{eq_langevin_discrete}, our next step is to formulate the following conclusion relative to underdamped Langevin Monte Carlo with full gradient.

\begin{theorem}[Convergence of the discrete underdamped Langevin Monte Carlo with full gradient]\label{theorem_ULA2}
Suppose $(x_t, v_t)\sim\mu_t$ and $(x_t, x_t+v_t)\sim \eta_t$ are corresponding to the measures of distribution described by SDEs \eqref{eq_sde}, $(\tilde x_t, \tilde v_t)\sim \tilde{\mu}_t$ and $(\tilde x_t, \tilde x_t + \tilde v_t)\sim \tilde \eta_t$ by SDEs \eqref{eq_langevin_discrete} respectively. Suppose $\mu_*$ and $\eta_*$ the posterior distribution described by $(x_*, v_*)$ and $(x_*, x_*+v_*)$. By defining $u=\frac{1}{L}$ and setting $\gamma=2$, we can establish an upper bound on the 2-Wasserstein distance differentiating the continuous-time from the discrete-time process as follows:
\begin{equation}\label{eq_langein_discretefinal}
	W_2(\mu_{i{h}}, \mu_*) \leq 4e^{-{i{h}}/2\kappa}W_2(\mu_{0}, \mu_*) + \frac{20{h}^2}{1-e^{-{h}/ 2\kappa}} \sqrt{\frac{d}{m}},
\end{equation}
where we denote $t\in\left[ih,(i+1)h\right)$, $i\in \llbracket 1, I \rrbracket$.
\end{theorem}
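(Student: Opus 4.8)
The plan is to run the standard one-step coupling argument for underdamped Langevin Monte Carlo, but to carry out the accumulation of errors in the lifted coordinates $(x,\,x+v)$ rather than in $(x,v)$. The reason is that the continuous-time contraction of Theorem \ref{theorem_diffusion} carries a prefactor $2$ (inherited from the sandwich inequality of Lemma \ref{lemma_sandwich}), so a naive per-step bound in the $(x,v)$ coordinates would yield a contraction factor $2e^{-h/2\kappa}$, which exceeds $1$ for small $h$ and makes the geometric sum of discretization errors diverge. Working instead with the measures $\eta_{kh}$ of $(\tilde x_{kh},\,\tilde x_{kh}+\tilde v_{kh})$, the underlying contraction of Lemma \ref{lemma_contraction2} is the clean exponential $W_2(\eta_t,\eta_*)\leq e^{-t/2\kappa}W_2(\eta_0,\eta_*)$, whose per-step factor $e^{-h/2\kappa}<1$ keeps the series convergent.

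First I would set up the one-step recursion. For each $k$, introduce a fresh continuous-time diffusion \eqref{eq_sde} on $[kh,(k+1)h)$ initialized at the current discrete law $\tilde\mu_{kh}$, and write $\mu^{kh}_{(k+1)h}$ for its law at the end of the step, with $\eta^{kh}_{(k+1)h}$ the corresponding lifted law. The triangle inequality then splits the error as
\[
W_2\!\left(\eta_{(k+1)h},\eta_*\right)\leq W_2\!\left(\eta_{(k+1)h},\eta^{kh}_{(k+1)h}\right)+W_2\!\left(\eta^{kh}_{(k+1)h},\eta_*\right).
\]
The second term is bounded by $e^{-h/2\kappa}W_2(\eta_{kh},\eta_*)$ through Lemma \ref{lemma_contraction2}. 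The first term is the one-step discretization error: Lemma \ref{lemma_langevin_discrete} controls it in the $(x,v)$ coordinates by $2h^2\sqrt{\tfrac{26}{5}\tfrac{d}{m}}$, and transporting the coupling to the lifted coordinates via Lemma \ref{lemma_sandwich} costs a factor $2$, giving $4h^2\sqrt{\tfrac{26}{5}\tfrac{d}{m}}$. Setting $A_k=W_2(\eta_{kh},\eta_*)$, $c=e^{-h/2\kappa}$ and $b=4h^2\sqrt{\tfrac{26}{5}\tfrac{d}{m}}$ produces the affine recursion $A_{k+1}\leq cA_k+b$.

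Next I would unroll over $i$ steps and sum the geometric series $\sum_{j=0}^{i-1}c^j\leq \tfrac{1}{1-c}$ to get $A_i\leq c^iA_0+\tfrac{b}{1-c}$. Finally I would convert back to the original coordinates with Lemma \ref{lemma_sandwich} twice, once at each endpoint: $A_0=W_2(\eta_0,\eta_*)\leq 2W_2(\mu_0,\mu_*)$ and $W_2(\tilde\mu_{ih},\mu_*)\leq 2A_i$. Collecting the factors,
\[
W_2(\tilde\mu_{ih},\mu_*)\leq 4e^{-ih/2\kappa}W_2(\mu_0,\mu_*)+\frac{8h^2\sqrt{\tfrac{26}{5}\tfrac{d}{m}}}{1-e^{-h/2\kappa}},
\]
and since $8\sqrt{26/5}<20$ the second term is at most $\tfrac{20h^2}{1-e^{-h/2\kappa}}\sqrt{d/m}$, which is exactly the claimed bound.

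The main obstacle is the prefactor-$2$ issue above: the contraction must be applied in the lifted coordinates so the per-step factor is genuinely below $1$, and the conversions between $(x,v)$ and $(x,\,x+v)$ must be confined to the endpoints (plus one per step for the discretization term) so that the $2$'s do not compound across the $i$ iterations. A secondary technical point is that invoking Lemma \ref{lemma_langevin_discrete} at each step requires the uniform kinetic-energy bound \eqref{eq_energy_bound} (Lemma \ref{lemma_kinetic_bound}) to hold for the continuous diffusion restarted from every discrete iterate, which I would note must be checked to hold uniformly along the trajectory rather than only for the process started at $\mu_0$.
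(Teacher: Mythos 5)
Your proposal is correct and follows essentially the same route as the paper's proof: a one-step restart coupling with the contraction applied in the lifted coordinates $(x,\,x+v)$ via Lemma \ref{lemma_contraction2}, the per-step discretization error $4h^2\sqrt{\tfrac{26}{5}\tfrac{d}{m}}$ obtained from Lemma \ref{lemma_langevin_discrete} lifted through Lemma \ref{lemma_sandwich}, a geometric-series unrolling of the resulting affine recursion, and sandwich conversions only at the endpoints, yielding exactly the constants $4$ and $20$ in \eqref{eq_langein_discretefinal}. Your closing remark that the kinetic-energy bound \eqref{eq_energy_bound} must hold uniformly along the trajectory (not just from $\mu_0$) is a legitimate point of care that the paper's own write-up glosses over, but it does not alter the argument.
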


\begin{proof}
From Theorem \ref{theorem_diffusion} we have that for any $i \in \llbracket 1, I \rrbracket$:
\begin{equation*}
	W_2(\eta_{{i{h}}}, \eta_*)\leq e^{- {h}/2\kappa}W_2(\eta_{(i-1){h}}, \eta_*),
\end{equation*}
Through the application of the discretization error bound provided in Lemma \ref{lemma_langevin_discrete} and Lemma \ref{lemma_sandwich}, we can obtain:
\begin{equation}\label{eq_discrete_sandwich}
	W_2(\eta_{i{h}}, \tilde \eta_{i{h}})\leq 2W_2(\mu_{i{h}}, \tilde \mu_{i{h}})\leq 4h^2\sqrt{\frac{26}{5}\frac{d}{m}}.
\end{equation}

Invoking the triangle inequality yields the following results:

\begin{equation}\label{eq_single_discrete}
\begin{split}
W_2(\tilde\eta_{{i{h}}}, \eta_*) & \leq W_2(\eta_{{i{h}}}, \eta_*) + W_2(\eta_{{i{h}}}, \tilde \eta_{{i{h}}}) \\
& \leq e^{-{h}/2\kappa}W_2(\eta_{(i-1){h}}, \eta_*) + 4h^2\sqrt{\frac{26}{5}\frac{d}{m}}. \\
\end{split}\end{equation}

With the above results, we derive the upper bound for the distance between $\tilde\mu_{i{h}}$ and $\mu_*$ shown in \eqref{eq_langein_discretefinal}:
\begin{equation*}
	\begin{split}
		W_2(\tilde\mu_{i{h}}, \mu_*) & \stackrel{(i)}{\leq} 2W_2(\tilde\eta_{i{h}}, \eta_*) \\
		& \stackrel{(ii)}{\leq}  2e^{-{ih}/ 2\kappa}W_2(\eta_{0}, \eta_*)+ \left(1+ e^{-{h}/ 2\kappa} + \cdots
		+ e^{-{(i-1)h}/ 2\kappa} \right)8h^2\sqrt{\frac{26}{5}\frac{d}{m}}\\ 
		& \stackrel{(iii)}{\leq} 4e^{-{ih}/ 2\kappa}W_2(\mu_{0}, \mu_*) + \frac{8{h}^2}{1-e^{-{h}/ 2\kappa}}\sqrt{\frac{26}{5}\frac{d}{m}} \\
		& \leq 4e^{-{i{h}}/2\kappa}W_2(\mu_{0}, \mu_*) + \frac{20{h}^2}{1-e^{-{h}/ 2\kappa}} \sqrt{\frac{d}{m}}.
	\end{split}
\end{equation*}

In this derivation, step $(i)$ is from Lemma \ref{lemma_sandwich}, $(ii)$ holds by applying \eqref{eq_single_discrete} iteratively $i$ times, $(iii)$ is concluded by summing up the geometric series and by another application of Lemma \ref{lemma_sandwich}. Hence, we deduce the desired final bound.
\end{proof}

\subsubsection*{Stochasticity Analysis}\label{section_stochastic}

Taking into account stochastic gradients, we commence by outlining the underdamped Langevin Monte Carlo (Algorithm \ref{algorithm_langevin_mcmc2} considering stochastic noise in gradient estimate). 

Suppose $(\hat{x}_t,\hat{v}_t)$ is the pair of position and velocity at time $t$ considering the stochastic gradient underdamped Langevin diffusion:
\begin{equation}\label{eq_langevin_stochastic}
  \begin{split}
  d \hat{v}_t & =-\gamma \hat{v}_t d t-u \nabla \hat f\left(\hat{x}_0\right) d t + \left(\sqrt{2 \gamma u}\right) d B_t \\
  d \hat{x}_t & =\hat{v}_t d t,
  \end{split}
\end{equation}
Then the solution $(\hat{x}_t,\hat{v}_t)$ of \eqref{eq_langevin_stochastic} is
\begin{equation}
  \begin{split}
  \hat v_t &= \hat v_0 e^{-\gamma t} - u \left(\int_0^t e^{-\gamma(t-s)} \nabla \hat f( \hat x_0) ds \right) + \sqrt{2\gamma u} \int_0^t e^{-\gamma (t-s)} dB_s\\
  \hat x_t &= \hat x_0 + \int_0^t  v_s ds.
  \end{split}
\end{equation}

It is noteworthy that by setting $t=0$ and adopting the same initial conditions as in \eqref{eq_langevin_discrete}, the resultant differential equations are almost identical to \eqref{eq_langevin_discrete}, except the stochastic gradient estimation. Therefore, we opt to omit detailed proof in this context.

Drawing on the notational framework and Assumptions outlined in \ref{assumption_lipschitz_global_main}, our next step is to delineate the discretization discrepancies. These discrepancies arise when comparing discrete process \eqref{eq_langevin_discrete} with full gradient and another discrete process in \eqref{eq_langevin_stochastic} with stochastic gradient, where both are from the identical initial distribution.

\begin{lemma}[Convergence of underdamped Langevin Monte Carlo with stochastic gradient]\label{lemma_langevin_stochastic}
Consider $\left(\tilde x_t, \tilde v_t\right)\sim \tilde{\mu}_t$ and $\left(\hat x_t, \hat v_t\right)\sim \hat{\mu}_t$ be measures corresponding to the distributions as defined in \eqref{eq_langevin_discrete} and \eqref{eq_langevin_stochastic}. Then for any $h$ satisfying $0<{h}<1$, the subsequent relationship is valid:
\begin{equation*}
	W_2(\tilde{\mu}_h, \hat{\mu}_h) \leq u{h} L\sqrt{\frac{15 nd}{k\nu}},
\end{equation*}
where $k$ is the number of data points for stochastic gradient estimates, $n$ the total number of data points we have for gradient estimate ($n\geq k$), and $L$ is a Lipschitz constant for the stochastic estimate of gradients, which follows Assumption \ref{assumption_sgld_lipschitz_main}.
\end{lemma}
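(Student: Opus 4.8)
The plan is to bound $W_2(\tilde\mu_h, \hat\mu_h)$ by constructing a synchronous coupling $\zeta_t\in\Gamma(\tilde\mu_t,\hat\mu_t)$ in which both discrete processes start from the identical initial distribution $\mu_0$ and are driven by the same Brownian motion $B_t$. Under this coupling the stochastic integral terms in the two solutions cancel exactly, so the entire discrepancy is driven by the difference between the full gradient $\nabla f(\tilde x_0)$ and the stochastic gradient $\nabla \hat f(\hat x_0)$, both frozen at the initial point. Since $\tilde x_0=\hat x_0$ and $\tilde v_0 = \hat v_0$ under the coupling, the velocity difference at time $t$ is
\begin{equation*}
\hat v_t - \tilde v_t = -u\int_0^t e^{-\gamma(t-s)}\left(\nabla\hat f(\hat x_0) - \nabla f(\tilde x_0)\right)ds,
\end{equation*}
and the gradient difference $\nabla\hat f(\hat x_0)-\nabla f(\tilde x_0)$ is a single fixed random vector (the minibatch estimation error), independent of $s$.

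The next step mirrors exactly the estimates in Lemma \ref{lemma_langevin_discrete}. First I would bound the velocity deviation: pulling the constant gradient-difference vector out of the time integral, using $\|e^{-\gamma(t-s)}\|\leq 1$ and Jensen's inequality, I expect to obtain
\begin{equation*}
\mathbb E_{\zeta_t}\left[\left\|\hat v_t - \tilde v_t\right\|_2^2\right] \leq t^2 u^2\, \mathbb E\left[\left\|\nabla\hat f(\hat x_0)-\nabla f(\tilde x_0)\right\|_2^2\right].
\end{equation*}
Then the position deviation follows by integrating the velocity bound once more with Jensen's inequality, $\mathbb E_{\zeta_t}[\|\hat x_t-\tilde x_t\|_2^2]\leq t\int_0^t \mathbb E[\|\hat v_s-\tilde v_s\|_2^2]ds$, producing an extra factor polynomial in $t$. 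Summing the squared position and velocity deviations and taking $t=h<1$ should collapse the time-dependent prefactors into a clean $h^2$ scaling after taking square roots.

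The crux of the argument, and the step I expect to be the main obstacle, is controlling the mean-squared minibatch error $\mathbb E[\|\nabla\hat f(\hat x_0)-\nabla f(\tilde x_0)\|_2^2]$. Here I would use the definitions in \eqref{eq_gradient_u} and \eqref{eq_gradient_hatu}: the stochastic gradient subsamples $k$ of the $n$ reward terms, so the estimation error is a scaled sum of mean-zero terms of the form $\nabla\log\mathtt P_a({\mathcal R}_{a,j}|x_0)$ evaluated at the common point. Using the joint Lipschitz Assumption \ref{assumption_sgld_lipschitz} together with the sub-Gaussian/bounded-variance structure of the reward gradients (the same ingredients as Proposition \ref{prop_likelihood}, where the per-sample gradient variance scales like $L^2/\nu$), the variance of a $k$-sample average over the dataset of size $n$ should yield a bound of order $\frac{n d L^2}{k\nu}$. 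Combining this with the velocity and position estimates and tracking the numerical constants (the $15$ inside the square root) gives the claimed inequality $W_2(\tilde\mu_h,\hat\mu_h)\leq uhL\sqrt{\tfrac{15nd}{k\nu}}$. The delicate points will be getting the variance-reduction factor $n/k$ correct — reflecting that the full gradient sums $n$ terms while the estimate averages only $k$ — and carefully combining the position and velocity contributions so that the constant under the radical comes out as stated.
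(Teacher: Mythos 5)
Your proposal is correct and follows essentially the same route as the paper: a synchronous coupling under which the Brownian terms cancel, explicit integration of the constant gradient-error vector into velocity ($O(uth)$) and position ($O(uth^2)$) deviations, and a minibatch variance bound of order $\frac{ndL^2}{k\nu}$, which is exactly the paper's Lemma \ref{lemma_stochastic_grad_bound} (proved via the sub-Gaussian/Azuma--Hoeffding argument you sketch). The only cosmetic difference is that you sum the $(x,v)$ deviations directly while the paper works in the $(x, x+v)$ coordinates with the inequality $\left\|a\right\|^2+\left\|a+b\right\|^2\leq 3\left(\left\|a\right\|^2+\left\|b\right\|^2\right)$, which is where its constant $15$ arises; your accounting yields an even slightly smaller constant, so the stated bound follows.
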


\begin{proof}
Drawing from the dynamics outlined in \eqref{eq_langevin_discrete} and \eqref{eq_langevin_stochastic}, and taking into account the definition of ${\nabla} \hat f(x)$, we can deduce the following relationship:
\begin{equation}\label{eq_langevin_diff1}
\begin{split}
v_h &= \hat v_h + u\left({\int_0^h e^{-\gamma(s-h)} ds}\right) \xi\\
x_h &= \hat x_h + u{\left(\int_0^h {\left(\int_0^r e^{-\gamma(s-r)} ds\right)} dr\right)} \xi
\end{split}
\end{equation}
where $\xi$ is a zero-mean random variance to represent the difference between full gradient and stochastic gradient estimate, and it is independent of the Brownian motion. Let $ \zeta$ be the optimal coupling between $\tilde{\mu}_{h} $ and $\hat \mu_{h}$. Then we can upper bound the distance between $\hat{\mu}_{h}$ and $\tilde{\mu}_{h}$ as follows:
\begin{equation*}
  \begin{split}
  W_2^2(\tilde{\mu}_h, \hat{\mu}_h) & = \mathbb E_{\zeta}\left[\left\|{\left(\hat x, \hat v\right) - \left(\tilde x, \tilde v\right)}\right\|_2^2\right] \\
  & \overset{}{=} \mathbb E_{\zeta}\left[\left\|
		\begin{array}{*{20}{c}}
			u{\left( {\int_0^h  {\left( {\int_0^r {{e^{ - \gamma (s - r)}}} ds} \right)} dr} \right)\xi }\\
			u{\left( {\int_0^h  {\left( {\int_0^r {{e^{ - \gamma (s - r)}}} ds} \right)} dr + \int_0^h  {{e^{ - \gamma (s - h )}}} ds} \right)\xi }
		\end{array}
	\right\|_2^2\right]\\
  & \overset{(i)}{\leq} 3u^2\left[{\left(\int_0^h {\left(\int_0^r e^{-\gamma(s-r)} ds\right)} dr\right)}^2 + {\left(\int_0^h e^{-\gamma(s-h)} ds\right)}^2\right]\mathbb E{\left\|\xi\right\|_2^2}\\
  & \overset{(ii)}{\leq} 3u^2{\left(\frac{h^4}{4} + h^2\right)}\mathbb E{\left\|\xi\right\|_2^2}\\
  & \overset{(iii)}{<} \frac{15}{4}u^2h^2\mathbb E{\left\|\xi\right\|_2^2}\\
  & \overset{(iv)}{<} \frac{15 nd}{k\nu} u^2{h}^2 L^2,
  \end{split}
\end{equation*}
where $(i)$ is by independence and unbiasedness of $\xi$ and the inequality $a^2+(a+b)^2\leq 3(a^2+b^2)$, $(ii)$ uses the upper bound $e^{-\gamma(s-r)}\leq 1$ and $e^{-\gamma(s-t)}\leq 1$, $(iii)$ proceeds by the assumption, made without loss of generality, that $h< 1$, and finally $(iv)$ is derived from a result in Lemma \ref{lemma_stochastic_grad_bound}. Taking square roots from both sides completes the proof.

\end{proof}

By integrating Theorem \ref{theorem_diffusion}, Lemma \ref{lemma_langevin_discrete}, and Lemma \ref{lemma_langevin_stochastic},  we establish a convergence bound between $\hat \mu_{n{h}}$ and $\mu_*$ as follows:

\begin{theorem}[Convergence of the underdamped Langevin Monte Carlo with stochastic gradient]\label{theorem_ULA3}
Let $(x_t, v_t)\sim\mu_t$ and $(x_t, x_t+v_t)\sim \eta_t$ are corresponding to the measures of distribution described by SDEs \eqref{eq_sde}, $(\tilde x_t, \tilde v_t)\sim\hat{\mu}_t$ and $(\tilde x_t, \tilde x_t + \tilde v_t)\sim \tilde \eta_t$ by SDEs \eqref{eq_langevin_stochastic} respectively. Let $\mu_*$ and $\eta_*$ the posterior distribution described by $(x_*, v_*)$ and $(x_*, x_*+v_*)$. Following the previous choice of $u = 1/L$ and $\gamma = 2$, and let $t\in\left[ih,(i+1)h\right)$, an upper bound for the 2-Wasserstein distance between the continuous-time and discrete-time processes, with stochastic gradient, is given by:
\begin{equation}\label{eq_langein_stochasticfinal}
W_2(\hat \mu_{{ih}}, \mu_*) \leq 4e^{-{ih}/ 2\kappa}W_2(\mu_{0}, \mu_*) + \frac{4}{1-e^{-{h}/ 2\kappa}}\left(5h^2\sqrt{\frac{d}{m}} + 4u{h} L\sqrt{\frac{nd}{k\nu}}\right).
\end{equation}
\end{theorem}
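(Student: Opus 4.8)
The plan is to mirror the argument of Theorem~\ref{theorem_ULA2}, inserting one additional error term to account for the stochastic gradient. All contractions are cleanest in the $(x, x+v)$ coordinates, so I would work primarily with the measures $\eta_t,\tilde\eta_t,\hat\eta_t$ (continuous, full-gradient-discrete, and stochastic-gradient-discrete), converting to and from the $(x,v)$ measures $\mu,\tilde\mu,\hat\mu$ only at the start and at the end via the Sandwich inequality (Lemma~\ref{lemma_sandwich}), which is what produces the various factors of $2$.

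First I would set up a one-step recursion. Fix a step index $i$ and let all three processes restart from the stochastic-discrete iterate $\hat\eta_{(i-1)h}$ over a time interval of length $h$ under a synchronous coupling. Theorem~\ref{theorem_diffusion} in its $\eta$-form gives the contraction $W_2(\eta_{ih},\eta_*)\leq e^{-h/2\kappa}W_2(\hat\eta_{(i-1)h},\eta_*)$. For the combined discretization-plus-stochasticity error of one step I would chain the two single-step lemmas with the triangle inequality: Lemma~\ref{lemma_langevin_discrete} bounds $W_2(\mu_h,\tilde\mu_h)$ and Lemma~\ref{lemma_langevin_stochastic} bounds $W_2(\tilde\mu_h,\hat\mu_h)$, so
\begin{equation*}
W_2(\eta_{ih},\hat\eta_{ih}) \leq 2\left(W_2(\mu_h,\tilde\mu_h)+W_2(\tilde\mu_h,\hat\mu_h)\right) \leq 4h^2\sqrt{\tfrac{26}{5}\tfrac{d}{m}} + 2uhL\sqrt{\tfrac{15nd}{k\nu}},
\end{equation*}
where the leading factor $2$ again comes from Lemma~\ref{lemma_sandwich}. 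A final triangle inequality then yields $b_i \leq \lambda b_{i-1} + c'$, with $b_i := W_2(\hat\eta_{ih},\eta_*)$, $\lambda := e^{-h/2\kappa}$, and $c'$ the per-step error above; the base case $b_0 = W_2(\eta_0,\eta_*)$ holds since all processes coincide at $t=0$.

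Next I would unroll this recursion $i$ times and sum the resulting geometric series, $\sum_{j=0}^{i-1}\lambda^j \leq (1-\lambda)^{-1}$, to obtain $b_i \leq \lambda^i W_2(\eta_0,\eta_*) + c'/(1-\lambda)$. Converting back to the $(x,v)$ coordinates with Lemma~\ref{lemma_sandwich} (a factor $2$ on the left and a further $W_2(\eta_0,\eta_*)\leq 2W_2(\mu_0,\mu_*)$ on the initial term) produces the $4e^{-ih/2\kappa}W_2(\mu_0,\mu_*)$ leading term, and then absorbing the numerical constants through $8\sqrt{26/5}\leq 20$ and $4\sqrt{15}\leq 16$ recovers exactly the additive term $\frac{4}{1-\lambda}\bigl(5h^2\sqrt{d/m}+4uhL\sqrt{nd/(k\nu)}\bigr)$ claimed in \eqref{eq_langein_stochasticfinal}.

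The genuinely new analytic content relative to Theorem~\ref{theorem_ULA2} is entirely packaged inside Lemma~\ref{lemma_langevin_stochastic}, so the main obstacle here is bookkeeping rather than analysis: I must keep the contraction in $\eta$-coordinates while the two error lemmas are stated in $\mu$-coordinates, apply the Sandwich inequality with the correct multiplicity at each conversion, and—most importantly—justify that the single-step error lemmas apply at every step because the continuous process and both discrete processes are re-initialized from the common iterate $\hat\eta_{(i-1)h}$. Some care is also needed to confirm that the geometric-sum bound is compatible with the regime $h/\kappa<1$ used downstream, where $(1-\lambda)^{-1}\leq 4\kappa/h$ is invoked to obtain the explicit $\tilde O(\sqrt d)$ step count in Theorems~\ref{theorem_ula_deterministic} and~\ref{theorem_ula_stochastic}.
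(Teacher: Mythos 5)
Your proposal is correct and follows essentially the same route as the paper's proof: a per-step triangle inequality combining the contraction of Theorem~\ref{theorem_diffusion} with the one-step error bounds of Lemmas~\ref{lemma_langevin_discrete} and~\ref{lemma_langevin_stochastic}, conversions between $\mu$- and $\eta$-coordinates via Lemma~\ref{lemma_sandwich}, and a geometric-series unrolling, with the same constant bookkeeping ($8\sqrt{26/5}\leq 20$, $4\sqrt{15}\leq 16$). If anything, your explicit re-initialization of all three processes from the common iterate $\hat\eta_{(i-1)h}$ makes the repeated application of the one-step lemmas cleaner than the paper's implicit treatment.
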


\begin{proof}
We follow the proof framework in Theorem \ref{theorem_ULA2} to derive this bound. Suppose $i \in \llbracket 1, I \rrbracket$, then by the triangle inequality we will have: 

\begin{equation}\label{eq_single_stochastic}
\begin{split}
  W_2(\hat\eta_{ih}, \eta_*) & \leq  W_2(\eta_{ih}, \eta_*) + W_2(\eta_{ih}, \tilde \eta_{ih}) + W_2\left(\tilde \eta_{ih}, \hat\eta_{ih}\right)\\
  & \stackrel{(i)}{\leq} W_2(\eta_{ih}, \eta_*) + 2W_2(\mu_{ih}, \tilde \mu_{ih}) + 2W_2\left(\tilde \mu_{ih}, \hat \mu_{ih}\right) \\
  & \stackrel{(ii)}{\leq} e^{-{h}/2\kappa}W_2(\eta_{(n-1){h}}, \eta_*) + 4h^2\sqrt{\frac{26}{5}\frac{d}{m}} + 2u{h} L\sqrt{\frac{15 nd}{k\nu}},
\end{split}\end{equation}
where $(i)$ follows the same idea as \eqref{eq_discrete_sandwich}, $(ii)$ proceeds by Lemma \ref{lemma_langevin_stochastic} and \eqref{eq_single_discrete}. Finally, a repeated application of \eqref{eq_single_stochastic} $i$ times yields the following results:
\begin{equation*}
\begin{split}
W_2(\hat\eta_{{ih}}, \eta_*) & \leq e^{-{ih}/ 2\kappa}W_2(\eta_{0}, \eta_*)+ \left(1+ e^{-{h}/ 2\kappa} + \cdots
+ e^{-{(i-1)h}/ 2\kappa} \right)\left(4h^2\sqrt{\frac{26}{5}\frac{d}{m}} + 2u{h} {L}\sqrt{\frac{15 nd}{k\nu}}\right)\\ 
& \stackrel{}{<} 2e^{-{ih}/ 2\kappa}W_2(\mu_{0}, \mu_*) + \frac{2}{1-e^{-{h}/ 2\kappa}}\left(5h^2\sqrt{\frac{d}{m}} + 4uhL\sqrt{\frac{ nd}{k\nu}}\right).
\end{split}
\end{equation*}
By leveraging Lemma \ref{lemma_sandwich} again we derive the results as depicted in \eqref{eq_langein_stochasticfinal}.

\end{proof}

\subsubsection*{Concentration on the Posterior Distribution}
\begin{lemma}[Lemma 10 in \citet{mazumdar2020approximate}]\label{lemma_posterior_concentrate}
When the marginal posterior $\mu'_*$ ($x\sim\mu'_*$) is characterized by ${m}$-strong log-concavity, the following result holds for $x_U=\text{argmax}_x \mu'_*(x)$:
\begin{equation*}
	W_2\left(\mu'_*, \delta\left(x_U\right)\right) \leq 6\sqrt{\frac{d}{{m}}}.
\end{equation*}

\end{lemma}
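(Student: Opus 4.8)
The plan is to reduce this Wasserstein bound to a second-moment estimate about the mode, and then control that second moment by combining strong convexity with an integration-by-parts identity. First I would note that coupling any probability measure to a point mass leaves no freedom: the unique element of $\Gamma(\mu_*,\delta(x_U))$ is the product $\mu_*\otimes\delta_{x_U}$, so
\[
  W_2^2\bigl(\mu_*,\delta(x_U)\bigr)=\mathbb{E}_{x\sim\mu_*}\bigl[\|x-x_U\|_2^2\bigr].
\]
Thus the claim is equivalent to showing that the second moment of $\mu_*$ about its mode is at most $36\,d/m$; in fact I expect to establish the sharper bound $d/m$, which comfortably implies the stated inequality and explains the generous constant.

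I would write $\mu_*\propto e^{-U}$ with $U$ an $m$-strongly convex potential, so that $x_U=\arg\min_x U(x)$ and $\nabla U(x_U)=0$. The argument then rests on two inputs. The algebraic input is strong convexity, which (using $\nabla U(x_U)=0$) gives the pointwise bound
\[
  \langle \nabla U(x),\,x-x_U\rangle=\langle \nabla U(x)-\nabla U(x_U),\,x-x_U\rangle\ \ge\ m\,\|x-x_U\|_2^2 .
\]
The probabilistic input is the divergence (Stein-type) identity for log-concave measures: since $\int_{\mathbb{R}^d}\nabla\!\cdot\!\bigl(F\,e^{-U}\bigr)\,dx=0$ for a smooth, adequately decaying vector field $F$, expanding the divergence yields $\mathbb{E}_{\mu_*}[\langle\nabla U(x),F(x)\rangle]=\mathbb{E}_{\mu_*}[\nabla\!\cdot\!F(x)]$. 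Taking $F(x)=x-x_U$, for which $\nabla\!\cdot\!F\equiv d$, produces the exact identity $\mathbb{E}_{\mu_*}[\langle\nabla U(x),\,x-x_U\rangle]=d$.

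Combining the two, I would take expectations in the strong-convexity inequality and substitute the identity:
\[
  d=\mathbb{E}_{\mu_*}\bigl[\langle\nabla U(x),\,x-x_U\rangle\bigr]\ \ge\ m\,\mathbb{E}_{\mu_*}\bigl[\|x-x_U\|_2^2\bigr],
\]
whence $\mathbb{E}_{\mu_*}[\|x-x_U\|_2^2]\le d/m$ and therefore $W_2(\mu_*,\delta(x_U))\le\sqrt{d/m}\le 6\sqrt{d/m}$, as required.

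The main obstacle is purely analytic: justifying the integration by parts, namely that the surface term vanishes at infinity and that $\langle\nabla U(x),x-x_U\rangle$ is $\mu_*$-integrable. Both follow from $m$-strong log-concavity, which forces the tails of $\mu_*$ to decay at least as fast as a Gaussian of covariance $\tfrac1m I$, so every polynomial-times-$e^{-U}$ integral converges and the boundary flux disappears. If one prefers not to assume $U$ differentiable, I would mollify $U$ (or Gaussian-smooth $\mu_*$), run the identical argument on the smoothed potential, and pass to the limit; $m$-strong convexity is preserved under such smoothing, and the slack in the constant $6$ absorbs any approximation error.
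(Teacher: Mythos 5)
Your proposal is correct, and it takes a genuinely different route from the paper. The paper proves Lemma \ref{lemma_posterior_concentrate} by passing through the mean: it applies the triangle inequality to write $W_2(\mu_*,\delta(x_U)) \leq W_2(\mu_*,\delta(\mathbb{E}_{x\sim\mu_*}[x])) + \|x_U - \mathbb{E}_{x\sim\mu_*}[x]\|_2$, bounds the mode-to-mean distance by $\sqrt{3/m}$ via a mode--mean comparison theorem (Theorem 7 in \cite{basu1997mean}), and controls the second moment about the mean by the Herbst argument \cite{ledoux2006concentration}, using that an $m$-strongly log-concave measure has log-Sobolev constant $m$ and is therefore norm-sub-Gaussian with parameter $2\sqrt{d/m}$; combining the pieces gives the constant $6$. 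You instead observe that the coupling to a point mass is unique, so $W_2^2(\mu_*,\delta(x_U)) = \mathbb{E}_{\mu_*}\left[\|x-x_U\|_2^2\right]$ exactly, and then bound this second moment about the \emph{mode} by pairing the strong-monotonicity inequality $\langle \nabla U(x), x-x_U\rangle \geq m\|x-x_U\|_2^2$ with the Stein/divergence identity $\mathbb{E}_{\mu_*}[\langle \nabla U(x), x-x_U\rangle] = d$. This yields $\mathbb{E}_{\mu_*}[\|x-x_U\|_2^2] \leq d/m$, i.e. the constant $1$ rather than $6$, which is tight (equality for the Gaussian $\mathcal{N}(x_U, \tfrac{1}{m}\mathbf{I}_{d\times d})$). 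Your argument is in fact essentially the proof of the paper's own Lemma \ref{lemma_bound_position} (Theorem 1 in \cite{durmus:hal-01304430}), applied at the mode --- which is legitimate here since the mode of $\mu_*$ coincides with the minimizer of the potential $U$. What the paper's route buys is avoidance of any smoothness assumption on $U$ and of the integrability/boundary-term justification your integration by parts requires; what your route buys is a sharper, dimension-correct constant, a self-contained argument that needs neither the log-Sobolev machinery nor the mode--mean comparison result, and you have handled the analytic caveats (Gaussian tail domination for the boundary term, mollification for non-smooth $U$) adequately.
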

\begin{proof}
We start by segmenting $W_2\left(\mu'_*, \delta\left(x_U\right)\right)$ into two distinct terms and subsequently establish an upper bound for both.
\begin{equation*}
\begin{split}
	W_2\left(\mu'_*, \delta\left(x_U\right)\right) &\leq W_2\left(\mu'_*, \delta\left(\mathbb{E}_{x \sim \mu'_*}[x]\right)\right)+\left\|x_U-\mathbb{E}_{x \sim \mu'_*}[x]\right\|_2\\
	& \stackrel{(i)}{\leq} W_2\left(\mu'_*, \delta\left(\mathbb{E}_{x \sim \mu'_*}[x]\right)\right) + \sqrt{\frac{3}{{m}}} \\
	& \stackrel{(ii)}{\leq} \left[\int\left\|x-\mathbb{E}_{x \sim \mu'_*}[x]\right\|^2_2 \mathrm{~d} \mu'_*(x)\right]^{1/2} + \sqrt{\frac{3}{{m}}} \\
	& \stackrel{(iii)}{\leq} e^{1 / e} \sqrt{\frac{8d}{{m}}} + \sqrt{\frac{3}{{m}}} \\ 
	& \stackrel{(iv)}{<} 6\sqrt{\frac{d}{{m}}},
\end{split}
\end{equation*}
where $(i)$ builds upon an application of Theorem 7 in \citet{basu1997mean}, whereby the selection of $\alpha=1$, the relationship between the expectation and mode in unimodal distributions can be easily derived. In $(ii)$, the coupling relationship between $\mu'_*$ and $\delta\left(\mathbb{E}_{x \sim \mu'_*}[x]\right)$ is from the adoption of the product measure.  In $(iii)$, we use the Herbst argument as detailed in \citep{ledoux2006concentration}, yielding bounds on the second moment. Notably, an ${m}$-strongly log-concave distribution reveals a log Sobolev constant equivalent to ${m}$. Thus, by application of the Herbst argument, $x \sim \mu'_*$ can be described as a sub-Gaussian random vector characterized by the parameter $\sigma^2 = \frac{1}{2 {m}}$ with $\|\mathtt u\|_2=1$:
\begin{equation*}
	\int e^{\lambda \langle \mathtt u, x-\mathrm{E}_{x \sim \mu'_*}[x]\rangle} d \mu'_* \leq e^{\lambda^2/4 {m}}.
\end{equation*}
Therefore, $x$ is $2 \sqrt{\frac{d}{{m}}}$ norm-sub-Gaussian and the following inequality holds:
\begin{equation*}
	\mathbb{E}_{x \sim \mu'_*}\left[\left\|x-\mathbb{E}_{x \sim \mu'_*}[x]\right\|_2^2\right] \leq \frac{8d}{{m}}e^{2/e}.
\end{equation*}
Given that $d\geq 1$ and $e^{1 / e} \sqrt{8} + \sqrt{3} < 6$, the conclusion specified in $(iv)$ naturally follows and thus finalize the proof.

\end{proof}

\begin{remark}
	It is important to highlight that Lemma \ref{lemma_posterior_concentrate} provides an upper bound for the 2-Wasserstein distance between the marginal distribution $\mu'_*$ w.r.t. $x$ and $\delta (x_U)$. After this, based on the relationship $\mu_* \propto \exp\left(-f(x)-\|v\|_2^2/2u\right)$, we can readily establish that the 2-Wasserstein distance between the marginal posterior distribution relative to $v$ and $\delta (v_U)$ has an upper bound of $6\sqrt{d{u}}$. By adopting ${u} = \frac{1}{{L}}$ and, without loss of generality, knowing that $m \leq {L}$, it follows that the 2-Wasserstein distance between the joint posterior distribution of $(x, v)$ and $\delta(x_U, v_U)$ is upper-bounded by $6\sqrt{\frac{d}{{m}}}$.
\end{remark}

\subsubsection*{Controlling the Kinetic Energy}\label{subsubset_kinetic_energy}
This subsection presents a definitive bound for the kinetic energy, which helps in managing the discretization error at every step.
\begin{lemma}[Kinetic Energy Bound]\label{lemma_kinetic_bound}
	For $(x_t,v_t)\sim\mu_t$ governed by the dynamics in \eqref{eq_diffusion2}, and given the delta measure $\delta\left(x_0,0\right)$ when $t=0$. With the established distance criterion $\left\| x_{0}-x_* \right\|_2^2 \leq \mathcal{D}^2$ and defining $u=1/L$, then for $i=1,\ldots I$ and $t\in\left[ih,(i+1)h\right)$, we have the bound:
	\begin{equation*}
		\mathbb E_{v_{t}\sim\mu_{t}}{\left[\|v\|_2^2\right]}\leq 26(d/m +  \mathcal{D}^2).
	\end{equation*}
\end{lemma}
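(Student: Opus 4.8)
The plan is to avoid re-deriving an energy estimate from scratch and instead leverage the exponential contraction of the continuous-time diffusion toward its stationary law $\mu_*$, which has already been established, together with the fact that $\mu_*$ itself has a well-controlled kinetic energy. First I would observe that, writing the velocity coordinate as a $1$-Lipschitz projection of the joint variable $(x,v)$, the triangle inequality in $L^2$ gives $\mathbb{E}_{\mu_t}[\|v\|_2^2]^{1/2}\le \mathbb{E}_{\mu_*}[\|v\|_2^2]^{1/2}+W_2(\mu_t,\mu_*)$, since any coupling of $\mu_t$ and $\mu_*$ dominates the discrepancy between the two velocity marginals. This reduces the lemma to two ingredients: a uniform-in-time bound on $W_2(\mu_t,\mu_*)$ and a bound on the stationary kinetic energy $\mathbb{E}_{\mu_*}[\|v\|_2^2]$.

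For the first ingredient I would invoke Theorem \ref{theorem_diffusion} (itself a consequence of the synchronous-coupling contraction in Lemma \ref{lemma_contraction2}), which yields $W_2(\mu_t,\mu_*)\le 2e^{-t/2\kappa}W_2(\mu_0,\mu_*)\le 2W_2(\mu_0,\mu_*)$ for every $t\ge 0$; the decaying exponential is precisely what makes the resulting bound hold uniformly over all $t\in[ih,(i+1)h)$. Since $\mu_0=\delta(x_0,0)$, I would expand $W_2(\mu_0,\mu_*)^2=\mathbb{E}_{\mu_*}[\|x-x_0\|_2^2]+\mathbb{E}_{\mu_*}[\|v\|_2^2]$ and split $\|x-x_0\|_2^2\le 2\|x-x_*\|_2^2+2\|x_*-x_0\|_2^2$, using the hypothesis $\|x_0-x_*\|_2^2\le \mathcal{D}^2$ for the second piece.

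For the second ingredient, and for the stationary position moment appearing above, I would use Lemma \ref{lemma_posterior_concentrate} and its accompanying remark: because $\mu_*$ is $m$-strongly log-concave with mode $(x_U,0)$ (the stationary velocity law is a centered Gaussian, so its mode and mean vanish), one has $W_2(\mu_*,\delta(x_U,0))\le 6\sqrt{d/m}$, which controls both $\mathbb{E}_{\mu_*}[\|v\|_2^2]$ and $\mathbb{E}_{\mu_*}[\|x-x_*\|_2^2]$ up to the (already bounded) mode-to-$x_*$ offset. Substituting these estimates into the $L^2$ triangle inequality and squaring gives $\mathbb{E}_{\mu_t}[\|v\|_2^2]\le C_1\,d/m+C_2\,\mathcal{D}^2$ for explicit numerical constants, which I would then consolidate into the stated value $26(d/m+\mathcal{D}^2)$.

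The main obstacle is purely the bookkeeping of constants: each application of the triangle inequality and of $\|a\|_2^2\le 2\|b\|_2^2+2\|c\|_2^2$ inflates the numeric prefactors, so I would have to track the $6\sqrt{d/m}$ from Lemma \ref{lemma_posterior_concentrate}, the factor $2$ from the contraction, and the mode-to-$x_*$ displacement carefully enough that they collapse to $26$ rather than a larger constant. An alternative, self-contained route—should the constant prove too loose—is to take the Lyapunov functional $\mathcal{E}(t)=\mathbb{E}[\tfrac12\|v_t\|_2^2+u(f(x_t)-f(x_*))+\lambda\langle x_t-x_*,v_t\rangle]$, apply It\^o's lemma (the $\pm u\langle\nabla f,v\rangle$ terms cancel in the pure-energy part, while the cross term supplies $-\lambda u m\|x_t-x_*\|_2^2$ dissipation via strong convexity), and choose $\lambda$ small enough that the quadratic form is positive definite and $\tfrac{d}{dt}\mathbb{E}[\mathcal{E}]\le -c\,\mathbb{E}[\mathcal{E}]+c'\gamma u d$; Gr\"onwall then yields a uniform bound with $\mathcal{E}(0)\le\tfrac12\mathcal{D}^2$, the delicate point being the choice of $\lambda$ guaranteeing the negative feedback coefficient.
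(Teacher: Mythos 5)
Your overall skeleton matches the paper's: split the kinetic energy into a stationary part plus a Wasserstein term, kill the time dependence with the contraction of Theorem~\ref{theorem_diffusion}, and bound the initial distance $W_2(\mu_0,\mu_*)$ using $\|x_0-x_*\|_2^2\le\mathcal D^2$. Your $L^2$ triangle inequality $\mathbb E_{\mu_t}[\|v\|_2^2]^{1/2}\le \mathbb E_{\mu_*}[\|v\|_2^2]^{1/2}+W_2(\mu_t,\mu_*)$ is valid and even slightly tighter than the paper's Young-inequality splitting. However, there is a genuine gap in your treatment of the stationary moments. You propose to control both $\mathbb E_{\mu_*}[\|v\|_2^2]$ and $\mathbb E_{\mu_*}[\|x-x_*\|_2^2]$ via Lemma~\ref{lemma_posterior_concentrate}, i.e.\ through $W_2(\mu_*,\delta(x_U,0))\le 6\sqrt{d/m}$ plus a ``mode-to-$x_*$ offset'' that you assert is ``already bounded.'' It is not: nothing in the paper bounds $\|x_U-x_*\|_2$ by a clean, deterministic quantity of the form $d/m$ or $\mathcal D^2$. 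Bounding it requires strong convexity together with $\|\nabla f(x_*)\|_2$ (cf.\ the computation inside Lemma~\ref{lemma_exact_smallT}), which is a data-dependent, merely sub-Gaussian quantity involving $\log B$, $n$, $\nu$, etc.; such terms cannot be absorbed into $26(d/m+\mathcal D^2)$. Moreover, even granting $x_U=x_*$ for free, the $6\sqrt{d/m}$ bound squared gives $36\,d/m$ per moment, and after the factor $2$ from Theorem~\ref{theorem_diffusion} (which is itself squared) and the final squaring of your triangle inequality, the resulting constant is in the several hundreds --- this is not ``bookkeeping,'' the route is off by more than an order of magnitude and cannot recover $26$.

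The fix is exactly what the paper does for these two moments: use the explicit product form $\mu_*\propto\exp\bigl(-f(x)-\tfrac{L}{2}\|v\|_2^2\bigr)$ (with $u=1/L$) to get the \emph{exact} stationary kinetic energy $\mathbb E_{\mu_*}[\|v\|_2^2]=d/L\le d/m$, and use Lemma~\ref{lemma_bound_position} to get $\mathbb E_{\mu_*}[\|x-x_*\|_2^2]\le d/m$ directly, bypassing the mode entirely. With these two inputs your own route closes: $W_2^2(\mu_0,\mu_*)\le 2\,d/m+2\mathcal D^2+d/m\le 3(d/m+\mathcal D^2)$, hence
\begin{equation*}
  \mathbb E_{\mu_t}\left[\|v\|_2^2\right]\le \left(\sqrt{d/m}+2\sqrt{3\left(d/m+\mathcal D^2\right)}\right)^2\le \left(1+2\sqrt{3}\right)^2\left(d/m+\mathcal D^2\right)<20\left(d/m+\mathcal D^2\right),
\end{equation*}
which is within the stated $26(d/m+\mathcal D^2)$. (The paper avoids even the factor $2$ of Theorem~\ref{theorem_diffusion} by running the contraction in the $(x,x+v)$ coordinates of Lemma~\ref{lemma_contraction2}.) Your alternative Lyapunov-functional route is plausible in principle but is only sketched; as written it does not substitute for the missing moment bounds.
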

\begin{proof} 
We first denote $\mu_*$ the posterior described by $(x, v)$. We further define $(x, x+v)\sim\eta_*$, $(x_t, x_t+v_t)\sim\eta_t$, and define $\zeta_t\in \Gamma(\eta_t,\eta_*)$ be the optimal coupling between $\eta_t$ and $\eta_*$. Then we can upper bound the expected kinetic energy as:
\begin{equation}\label{eq_kinetic_sep}
	\begin{split}
		\mathbb E_{\mu_t}{\|v_t\|_2^2} & = \mathbb{E}_{\zeta_t}\left[\lVert v_t - v + v\rVert^2_2\right]\\
		& \stackrel{(i)}{\leq} 2\mathbb E_{\mu_*}{\|v \|_2^2} + 2  \mathbb{E}_{\zeta_t}\left[ \left\| v_t- v \right\|^2_2\right]\\
		& \stackrel{(ii)}{\leq} 2\mathbb E_{\mu_*}{\|v \|_2^2} + 4 \mathbb{E}_{\zeta_t}\left[ \left\| (x_t+v_t) - (x+v) \right\|^2_2 + \left\| x_t - x \right\|_2^2 \right]\\
		& \stackrel{(iii)}{=} 2 \underbrace{\mathbb E_{\mu_*}{\|v \|_2^2}}_{T1} + 4 \underbrace{W_2^2 (\eta_t, \eta_*)}_{T2},
	\end{split}
\end{equation}
where $(i)$ and $(ii)$ are applications of Young's inequality, and $(iii)$ holds by optimality of $\zeta_t$. Given that $\mu_*\propto \exp\left(-f(x) - \frac{L}{2}\|v\|_2^2\right)$, it follows that $T1$ conforms to the subsequent result:
\begin{equation*}
		T1 = \mathbb E_{\mu_*}{\|v\|_2^2} {=} \frac{d}{L}.\\
\end{equation*}
Next, we proceed to bound term $T2$:
\begin{equation*}
	\begin{split}
		T2 & = W_2^2 (\eta_t,\eta_*)\\
		& \stackrel{(i)}{\leq} W_2^2 (\eta_0,\eta_*) \\
		& \stackrel{(ii)}{\leq} 2\mathbb{E}_{\eta_*} \left[\left\|v \right\|_2^2\right] + 2\mathbb{E}_{x_0 \sim \mu_0, x \sim \eta_*} \left[\left\| x_0-x \right\|_2^2 \right] \\
		& \stackrel{(iii)}{\leq} \frac{2d}{L} + 2\mathbb{E}_{x_0 \sim \mu_0, x \sim \eta_*} \left[\left\| x_0-x \right\|_2^2 \right] \\
		& \stackrel{}{=} \frac{2d}{L} + 2 \mathbb{E}_{x\sim\eta_*} \left[ \left\| x - x_0\right\|_2^2\right] \\
		& \stackrel{(iv)}{\leq} \frac{2d}{L} + 4\left(\mathbb{E}_{x\sim\eta_*} \left[ \lVert x - x_* \rVert_2^2\right] + \left\| x_{0} - x_* \right\|_2^2 \right)\\
		& \stackrel{(v)}{\leq} \frac{2d}{L} + 4\left(\frac{d}{m} + {\mathcal D}^2 \right),
	\end{split}
\end{equation*}
where the derivation of $(i)$ is based on the repeated application of Theorem \ref{theorem_diffusion}. For a more detailed explanation, readers are referred to Step 4 in Lemma 12 \citep{cheng2018underdamped}. Both $(ii)$ and $(iv)$ derive from Young's inequality. The combination of $T1$ and Theorem \ref{theorem_diffusion} informs $(iii)$. Finally, $(v)$ aligns with our predefined assumption $\left\| x_{0}-x_* \right\|_2^2 \leq \mathcal{D}^2$ and is supported by Lemma \ref{lemma_bound_position}.

By incorporating the established upper bounds for $T1$ and $T2$ into \eqref{eq_kinetic_sep}, we successfully derive the upper bound as mentioned in Lemma \ref{lemma_kinetic_bound}:
\begin{equation}\label{eq_kinetic_results}
	\begin{split}
		\mathbb E_{v\sim \mu_t}{\|v\|_2^2} & \leq 2 \mathbb E_{\mu_*}{\|v \|_2^2} + 4 W_2^2 (\eta_t, \eta_*)\\
		& \leq \frac{2d}{L} + \frac{8d}{L} + \frac{16d}{m} + 16{\mathcal D}^2 \\
		& \leq 26\left(\frac{d}{m} + {\mathcal D}^2 \right).
	\end{split}
\end{equation}
\end{proof}
Subsequently, we establish a bound for the distance between $\mu_{0}$ and $\mu_*$.

\begin{lemma}[Initial Kinetic Energy Bound]\label{lemma_initial_energy} 
Assume $(x_t,v_t) \sim \mu_t$ which adheres to the dynamic defined in \eqref{eq_diffusion2}, the posterior distribution $(x, v)\sim\mu_*$ that satisfies $\exp\left(-f(x) - \frac{L}{2}\|v\|_2^2\right)$, a delta distribution $\delta(x_0,0)$ at $t=0$, and the starting distance from the optimal being $\lVert x_0 - x_* \rVert_2^2 \leq \mathcal{D}^2$ with the choice of $u = 1/L$. Then with $t=0$, the following result holds:
\begin{equation}
	\begin{split}
	W_2^2(\mu_0,\mu_*) \leq 3\left( \frac{d}{m} + \mathcal{D}^2 \right).
	\end{split}
\end{equation}
\end{lemma}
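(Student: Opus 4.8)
The plan is to exploit the fact that $\mu_0 = \delta(x_0, 0)$ is a Dirac mass, which forces the coupling in the definition of $W_2$ and collapses the Wasserstein distance into a single expectation against $\mu_*$. Concretely, every $\zeta \in \Gamma(\mu_0, \mu_*)$ must place its first marginal at the deterministic point $(x_0, 0)$, so
\[
W_2^2(\mu_0, \mu_*) = \mathbb{E}_{(x, v)\sim\mu_*}\left[\left\|x - x_0\right\|_2^2 + \left\|v\right\|_2^2\right],
\]
and it then remains to control the position and velocity contributions separately.

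For the velocity term I would use that, under the choice $u = 1/L$, the joint posterior factorizes as $\mu_* \propto \exp\left(-f(x) - \frac{L}{2}\|v\|_2^2\right)$, so the velocity marginal is exactly $\mathcal{N}\left(0, \frac{1}{L}\mathbf{I}_{d\times d}\right)$ and hence $\mathbb{E}_{\mu_*}\left[\|v\|_2^2\right] = d/L$. For the position term I would apply Young's inequality to route the displacement through the fixed point $x_*$,
\[
\left\|x - x_0\right\|_2^2 \leq 2\left\|x - x_*\right\|_2^2 + 2\left\|x_* - x_0\right\|_2^2,
\]
then bound $\mathbb{E}_{\mu_*}\left[\|x - x_*\|_2^2\right] \leq d/m$ via Lemma \ref{lemma_bound_position} (the second-moment bound inherited from the $m$-strongly log-concave position marginal $\propto \exp(-f(x))$), and invoke the standing hypothesis $\|x_0 - x_*\|_2^2 \leq \mathcal{D}^2$.

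Combining these three estimates yields $W_2^2(\mu_0, \mu_*) \leq 2d/m + 2\mathcal{D}^2 + d/L$, and the final step is simply to absorb the velocity term using $m \leq L$ (so $d/L \leq d/m$), which gives $3d/m + 2\mathcal{D}^2 \leq 3\left(d/m + \mathcal{D}^2\right)$ as claimed. The argument is essentially bookkeeping once the two moments are in hand; the only substantive input is the second-moment bound $\mathbb{E}_{\mu_*}[\|x-x_*\|_2^2] \leq d/m$, so the sole (mild) obstacle is ensuring that bound is stated relative to the fixed point $x_*$ rather than the posterior mean or mode, which is precisely what Lemma \ref{lemma_bound_position} supplies.
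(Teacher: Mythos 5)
Your proposal is correct and follows essentially the same route as the paper's proof: force the coupling via the Dirac initial measure, split the position term through $x_*$ with Young's inequality, compute the velocity second moment $d/L$ from the Gaussian marginal, bound $\mathbb{E}_{\mu_*}[\|x-x_*\|_2^2] \leq d/m$ via Lemma \ref{lemma_bound_position}, and absorb $d/L \leq d/m$ into the final constant. If anything, you make explicit the last absorption step ($m \leq L$) that the paper leaves implicit.
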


\begin{proof}
As $\mu_t$ at time $t=0$ can be represented by a delta distribution $\delta(x_0,v_0)$, the distance between measures $\mu_0$ and $\mu_*$ can be upper bounded by:
\begin{equation}
	\begin{split}
		W_2^2(\mu_0,\mu_*) & = W_2^2(\delta(x_0,v_0),\mu_*) \\
		& = \mathbb{E}_{(x,v) \sim \mu_*}\left[\lVert x-x_{0}\rVert_2^2 + \lVert v \rVert_2^2 \right] \\ 
		& =\mathbb{E}_{(x,v) \sim \mu_*}\left[\lVert x-x_* + x_* - x_{0}\rVert_2^2 + \lVert v \rVert_2^2 \right] \\
		& \stackrel{(i)}{\leq} 2\mathbb{E}_{x \sim \mu_*}\left[\lVert x-x_*\rVert_2^2 \right] + 2\mathcal{D}^2 + \mathbb{E}_{v \sim \mu_*}\left[\lVert v \rVert_2^2 \right]\\
		& \stackrel{(ii)}{\leq} 2\mathbb{E}_{x \sim \mu_*}\left[\lVert x-x_*\rVert_2^2 \right] + \frac{d}{L} + 2\mathcal{D}^2 \\
		& \stackrel{(iii)}{\leq} 3\left(\frac{d}{m} + \mathcal{D}^2 \right),
	\end{split}
\end{equation}
where we note that $(i)$ is from Young's inequality and the definition of $\mathcal{D}^2$. $(ii)$ is built on the observation that the marginal distribution of $\mu_*$ w.r.t. $v$ correlates with $\exp\left(-\frac{L}{2}\left\| v\right\|_2^2\right)$, thereby deducing  $\mathbb{E}_{v \sim \mu_*}\left\| v \right\|_2^2 = d/L$, and the conclusion in $(iii)$ is grounded in Lemma \ref{lemma_bound_position}, which completes the proof.

\end{proof}

\begin{lemma}[Bounding the Position, Theorem 1 in \citet{durmus:hal-01304430}]\label{lemma_bound_position}
For all $t>0$ and $x\sim\mu_t$ follows \eqref{eq_diffusion2} we have:
\begin{equation*}
	\mathbb{E}_{\mu_t}\left[\lVert x-x_* \rVert_2^2\right] \leq \mathcal D^2 =  \frac{d}{m}.
\end{equation*}
\end{lemma}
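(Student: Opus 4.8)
The plan is to control the second moment by a drift (Lyapunov) argument rather than by relaxation to stationarity, so that the advertised bound $d/m$ holds uniformly in $t$ under the stated initialization. First I would record the stationary benchmark: the position marginal of the invariant measure of \eqref{eq_diffusion2} is proportional to $\exp(-U(x))$, and since $U$ is $m$-strongly convex with minimizer $x_*$, the Brascamp--Lieb inequality (equivalently, $m$-strong log-concavity) yields $\mathbb{E}_{\mu_*}[\|x-x_*\|_2^2]\le d/m$. This pins down $d/m$ as the correct target constant and shows the bound is saturated at equilibrium; the remaining work is purely to show that no intermediate time can exceed this level when one starts inside the equilibrium ball.

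For the uniform-in-$t$ statement I would introduce the lifted Lyapunov function
\begin{equation*}
  \Phi(t)=\mathbb{E}_{\mu_t}\!\left[\left\|x_t-x_*\right\|_2^2+\left\|(x_t-x_*)+(v_t-v_*)\right\|_2^2\right],\qquad v_*=0,
\end{equation*}
expressed in exactly the $(x,x+v)$ coordinates used in Lemma \ref{lemma_contraction} and Lemma \ref{lemma_contraction2}. Applying It\^o's formula to $\Phi$ along \eqref{eq_diffusion2} (with $\gamma=2$, $u=1/L$), the drift term is precisely the inner-product quadratic form bounded in Lemma \ref{lemma_contraction}, hence is dominated by $-\tfrac{m}{L}\,\Phi(t)$, while the only surviving non-negative contribution is the It\^o correction from the Brownian term acting on the $x+v$ coordinate, which is a constant proportional to $d$. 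This produces a scalar differential inequality of the shape $\tfrac{d}{dt}\Phi(t)\le -\tfrac{m}{L}\Phi(t)+C d$ for an explicit constant $C$.

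I would then integrate this inequality by Gr\"onwall's inequality to get $\Phi(t)\le \max\{\Phi(0),\,CdL/m\}$, and peel off the position component using the sandwich estimate of Lemma \ref{lemma_sandwich}, which gives $\mathbb{E}_{\mu_t}[\|x_t-x_*\|_2^2]\le \Phi(t)$. Since the diffusion is started at $\delta(x_0,0)$ with $\|x_0-x_*\|_2^2\le\mathcal{D}^2=d/m$, the initial value $\Phi(0)$ already lies inside the equilibrium ball, so the Gr\"onwall fixed point controls $\Phi(t)$ by the equilibrium level and the claimed bound follows. The main obstacle is that the position second moment does \emph{not} close on its own: $\tfrac{d}{dt}\mathbb{E}\|x_t-x_*\|_2^2=2\,\mathbb{E}\langle x_t-x_*,v_t\rangle$ is not sign-definite, so one cannot run Gr\"onwall directly on $\|x-x_*\|_2^2$. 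The delicate part is choosing the augmented quadratic Lyapunov function and verifying, via the eigenvalue computation already done in Lemma \ref{lemma_contraction}, that its drift matrix is negative definite with the sharp rate $m/L$; a secondary bookkeeping difficulty is tracking the additive constant $Cd$ (together with the factor lost in Lemma \ref{lemma_sandwich}) carefully enough that the equilibrium level collapses to exactly $d/m$ rather than a larger constant multiple.
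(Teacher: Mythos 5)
You should first be aware that the paper contains no proof of this lemma: it is imported verbatim as Theorem 1 of \cite{durmus:hal-01304430}, where the bound is proved for the \emph{overdamped} Langevin diffusion, whose position coordinate has drift $-\nabla U(x)$ and therefore admits the one-line argument $\frac{d}{dt}\mathbb{E}\|x_t-x_*\|_2^2 \le -2m\,\mathbb{E}\|x_t-x_*\|_2^2+2d$, whose equilibrium level is exactly $d/m$. Your attempt is thus a from-scratch proof, and your diagnosis of the difficulty is correct: for the underdamped dynamics \eqref{eq_diffusion2} the position second moment does not close on its own, so some hypocoercive (twisted-norm) Lyapunov function is needed. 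The gap is that this route provably cannot reach the stated constant, and it is exactly the point you wave off as ``secondary bookkeeping.'' Run the numbers: with $\Phi(t)=\mathbb{E}\left[\|\bar x_t\|_2^2+\|\bar x_t+\bar v_t\|_2^2\right]$, Lemma \ref{lemma_contraction} gives the drift bound $-\frac{m}{L}\Phi(t)$, while the It\^o correction from the noise $\sqrt{2\gamma u}\,dB_t$ acting on the $x+v$ coordinate contributes $2\gamma u d=\frac{4d}{L}$ per unit time (with the paper's $\gamma=2$, $u=1/L$). The resulting inequality $\frac{d}{dt}\Phi\le-\frac{m}{L}\Phi+\frac{4d}{L}$ has equilibrium level $\frac{4d}{m}$, not $\frac{d}{m}$; moreover $\Phi(0)=2\|x_0-x_*\|_2^2$ can already equal $\frac{2d}{m}$ under the stated initialization $\delta(x_0,0)$, $\|x_0-x_*\|_2^2\le d/m$. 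Gr\"onwall therefore delivers $\mathbb{E}\|x_t-x_*\|_2^2\le\Phi(t)\le\frac{4d}{m}$ and nothing smaller: the constant is pinned jointly by the hypocoercive rate $m/L$ (not $m$) and the noise constant, and no bookkeeping inside this framework collapses $4d/m$ to $d/m$. Trading a sign-definite drift for the twisted norm is exactly what costs the sharp constant, so your plan proves a lemma of the form $\mathbb{E}_{\mu_t}\|x-x_*\|_2^2\le Cd/m$ with $C=4$, but not the lemma as written.

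Two smaller points. First, your stationary benchmark is slightly off: the Brascamp--Lieb inequality controls the variance about the \emph{mean}, whereas the lemma measures distance to the mode $x_*$; the standard route to $\mathbb{E}_{\mu_*}\|x-x_*\|_2^2\le d/m$ is integration by parts, $\mathbb{E}_{\mu_*}\langle\nabla U(x),x-x_*\rangle=d$, combined with $\langle\nabla U(x),x-x_*\rangle\ge m\|x-x_*\|_2^2$ (strong convexity together with $\nabla U(x_*)=0$). Second, if a constant multiple of $d/m$ suffices downstream --- and Lemmas \ref{lemma_kinetic_bound} and \ref{lemma_initial_energy} would only propagate that constant --- your argument could be salvaged by restating the lemma with $\mathcal{D}^2=4d/m$; but as a proof of the statement as given, it does not go through.
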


\subsubsection*{Bounding the Stochastic Gradient Estimates}

\begin{lemma}[Lemma 5 in \citet{mazumdar2020approximate}]\label{lemma_stochastic_grad_bound}
Consider $f: \mathbb{R}^d \rightarrow \mathbb{R}$ as the function representing the mean of $\log \mathbb{P}(\mathcal{R}_i|x)$ for $i = 1, 2, \ldots, n$, and we also consider $\hat{f}: \mathbb{R}^d \rightarrow \mathbb{R}$ as the stochastic estimator of $f$, which is derived by randomly sampling $\log \mathbb{P}(\mathcal{R}_j|x)$ $k$ times without replacement from $\llbracket 1, \mathcal L(n) \rrbracket$. We further denote $\mathcal S$ as the set of $k$ samples $\log \mathbb{P}(\mathcal{R}_j|x)$ we have,  and it follows trivially that $k=|\mathcal S|$. Provided that Assumptions \ref{assumption_reward} and \ref{assumption_sgld_lipschitz_main} hold for $\log \mathbb{P}(\mathcal{R}_i|x)$ for all $i$, an upper bound can be derived for the expected difference between the full gradient and its stochastic estimate computed over $k$ samples:
\begin{equation*}\begin{split}
\mathbb E{\left[{\left\| \nabla \hat f(x) - \nabla f(x) \right\|}^2_2 \big| x\right]} \leq \frac{4 nd{L}^2}{k\nu} .
\end{split}\end{equation*}
\end{lemma}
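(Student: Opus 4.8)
The plan is to read $\nabla\hat f(x)-\nabla f(x)$ as the sampling error of a subsample mean and then control it through the Lipschitz and strong-convexity hypotheses. Conditioning on $x$, write $g_i \coloneqq \nabla_x\log\mathtt P(\mathcal R_i\mid x)$ and set $\bar g\coloneqq\frac1n\sum_{i=1}^n g_i=\nabla f(x)$ and $\hat g\coloneqq\frac1k\sum_{j\in\mathcal S}g_j=\nabla\hat f(x)$, where $\mathcal S$ is uniform over size-$k$ subsets of $\llbracket 1,n\rrbracket$. With $x$ held fixed the vectors $g_i$ are deterministic, so the only randomness (beyond the rewards themselves) is the draw of $\mathcal S$, and the quantity to bound is the mean-squared deviation of a without-replacement subsample mean from the full average.

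First I would invoke the exact second-moment identity for sampling $k$ indices without replacement,
\begin{equation*}
\mathbb E_{\mathcal S}\bigl[\lVert \hat g-\bar g\rVert_2^2\bigr]=\frac{n-k}{k(n-1)}\cdot\frac1n\sum_{i=1}^n\lVert g_i-\bar g\rVert_2^2,
\end{equation*}
and bound the finite-population-correction factor crudely. I would then rewrite the empirical gradient variance through pairwise differences, $\frac1n\sum_i\lVert g_i-\bar g\rVert_2^2=\frac1{2n^2}\sum_{i,j}\lVert g_i-g_j\rVert_2^2$, which is the step that exposes the per-datum structure the assumptions are designed to control.

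Next I would apply Assumption \ref{assumption_sgld_lipschitz} under the convention $L'_a=L_a$: evaluated at a common $x$ it yields $\lVert g_i-g_j\rVert_2\le L\lVert\mathcal R_i-\mathcal R_j\rVert_2$, converting gradient spread into reward spread so that $\sum_{i,j}\lVert g_i-g_j\rVert_2^2\le L^2\sum_{i,j}\lVert\mathcal R_i-\mathcal R_j\rVert_2^2$. Taking expectation over the rewards, the residual task is a second-moment estimate on $\mathcal R_i-\mathcal R_j$; here I would use Assumption \ref{assumption_reward_convex2}, which makes $-\log\mathtt P(\mathcal R\mid x)$ $\nu$-strongly convex in $\mathcal R$ and hence the reward law $\nu$-strongly log-concave, so that the rewards concentrate with variance $O(1/\nu)$ and $\mathbb E\lVert\mathcal R_i-\mathcal R_j\rVert_2^2\lesssim 1/\nu$ for the independent draws.

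The final step is to assemble these estimates and carry the normalization carefully so that the dimension $d$ (entering via the norm-sub-Gaussianity of the gradient vectors, in the spirit of Proposition \ref{prop_likelihood}) and the data count $n$ appear with the stated powers, giving $\mathbb E[\lVert\nabla\hat f-\nabla f\rVert_2^2\mid x]\le \frac{4ndL^2}{k\nu}$. I expect the main obstacle to be this last bookkeeping together with the reward-variance step: turning the $\nu$-strong convexity in the reward argument into a clean quantitative variance bound, and then propagating the $d$ and $n$ factors through the without-replacement variance identity and the pairwise-difference rewriting so as to land on exactly the claimed constant rather than a loose multiple of it.
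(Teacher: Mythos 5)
Your core technique is sound, but it is a genuinely different argument from the paper's, and the difference is instructive. Writing, as you do, $g_i=\nabla_x\log\mathtt P(\mathcal R_i|x)$, $\bar g=\frac1n\sum_i g_i$, $\hat g=\frac1k\sum_{j\in\mathcal S}g_j$: the paper never conditions on the rewards. It inserts a factor $n^2$ in the first display of its proof (so it is really bounding the sum-normalized gradients, matching $\nabla U$, $\nabla\hat U$ in Algorithm \ref{algorithm_langevin_mcmc2}), decomposes each $g_j-\bar g$ into the $n-1$ pairwise differences $g_j-g_i$, argues via Theorem 3.16 of \cite{wainwright2019high} that each difference is $\frac{2L}{\sqrt\nu}$-sub-Gaussian (Lipschitz image of a strongly log-concave reward), applies Azuma--Hoeffding twice to the two sums, and only then converts vector sub-Gaussianity into a second moment, which is the sole place the dimension $d$ enters. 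You instead condition on the rewards and use the exact without-replacement variance identity, the pairwise rewriting of the empirical variance, the joint-Lipschitz assumption at a common $x$, and the Poincar\'e/Brascamp--Lieb bound $\mathbb V(\mathcal R)\le 1/\nu$ for the $\nu$-strongly log-concave scalar reward. All four of these steps are correct, and the first buys you something the paper's argument lacks: it is an identity, so nothing is given away to dependence among summands.

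Where you must adjust your plan is the final step: your route will not --- and should not --- produce the stated powers of $n$ and $d$. Chaining your own estimates under the lemma's literal normalization ($f$ and $\hat f$ are averages) gives
\begin{equation*}
\mathbb E\left[\left\|\nabla\hat f(x)-\nabla f(x)\right\|_2^2\,\Big|\,x\right]
\le\frac{n-k}{k(n-1)}\cdot\frac{L^2}{2n^2}\sum_{i\ne j}\mathbb E\left[\left|\mathcal R_i-\mathcal R_j\right|^2\right]
\le\frac{(n-k)L^2}{nk\nu}\le\frac{L^2}{k\nu},
\end{equation*}
with no $d$ anywhere: the rewards are scalar, so the Lipschitz step bounds the gradient-difference norms directly, and the $d$ in the stated bound is purely an artifact of the paper's detour through $d$-dimensional sub-Gaussian vectors (do not try to import Proposition \ref{prop_likelihood}; it plays no role in your argument). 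Since this is stronger than $\frac{4ndL^2}{k\nu}$, the lemma as stated follows a fortiori and your proof is complete once the display above is written out. Be aware, though, that you cannot reconcile your computation with the paper's line by line: applied to the sum-normalized gradients that the paper actually manipulates, your identity yields $\frac{n(n-k)L^2}{k\nu}$, not $\frac{4ndL^2}{k\nu}$, and the discrepancy is not on your side. The paper's first Azuma--Hoeffding application treats the $n-1$ differences $g_j-g_i$ ($i\ne j$) as a martingale difference sequence even though they all share the common term $g_j$; your exact identity shows that $g_j-\bar g$ has fluctuations of order $L/\sqrt\nu$, not the $\frac{2L}{n}\sqrt{(n-1)/\nu}$ the paper assigns it. So present the mean-normalized statement with your constant, and do not chase the paper's bookkeeping.
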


\begin{proof}

The proof outline is as follows: our first task involves the analysis of the Lipschitz smoothness of the gradient estimates corresponding to the likelihood function. Given this Lipschitz smoothness and the convexity assumption, a sub-Gaussian bound is achieved via the application of logarithmic Sobolev inequalities. We finally derive a more restrictive sub-Gaussian bound on the sum of the gradient of the likelihood, which serves to upper-bound the expected value of the gradient estimate. We start by formulating the expression as the gradient of $\log \mathtt P(\cdot|\cdot)$, denoted as:
\begin{equation}\label{eq_stochastic_estimate}
\begin{split}
\mathbb E{\left[{\left\|\nabla \hat f\left(x\right) - \nabla f\left(x\right)\right\|}^2_2\right]}
&= n^2  \mathbb E{\left[{\left\| \frac{1}{k} \sum_{j=1}^k \nabla \log \mathtt P\left({\mathcal R}_j|x\right) - \frac{1}{n} \sum_{i=1}^n \nabla \log \mathtt P\left({\mathcal R}_i|x\right) \right\|}^2_2\right]} \\
&= \frac{n^2  }{k^2} \mathbb E{\left[{\left\| \sum_{j=1}^k \left(\nabla \log \mathtt P\left({\mathcal R}_j|x\right) - \frac{1}{n} \sum_{i=1}^n \nabla \log \mathtt P\left({\mathcal R}_i|x\right)\right) \right\|}^2_2\right]}.
\end{split}\end{equation}
It should be noted that one term in $\nabla \log \mathtt P ({\mathcal R}_j|x) - \frac{1}{n} \sum_{i=1}^n \nabla \log \mathtt P ({\mathcal R}_i|x) $ has been canceled out:
\begin{equation*}
\nabla \log \mathtt P ({\mathcal R}_j|x) - \frac{1}{n} \sum_{i=1}^n \nabla \log \mathtt P ({\mathcal R}_i|x) 
= \nabla \log \mathtt P ({\mathcal R}_j|x) - \frac{1}{n} \sum_{i\neq j} {\nabla \log \mathtt P ({\mathcal R}_i|x)},
\end{equation*}
which indicates that the sum of $\nabla \log \mathtt P ({\mathcal R}_i|x)$ should have $n-1$ terms remaining. We now turn our attention to analyzing the term $\nabla \log \mathtt P ({\mathcal R}_j|x) - \nabla \log \mathtt P ({\mathcal R}_i|x) $. Given the joint Lipschitz smoothness and the strong convexity outlined in Assumption \ref{assumption_sgld_lipschitz_main}, the application of Theorem 3.16 from \citet{wainwright2019high} is employed to establish that $\nabla \log \mathtt P ({\mathcal R}_j|x) - \nabla \log \mathtt P ({\mathcal R}_i|x)$ is $\frac{2 {L} }{\sqrt{\nu }}$-sub-Gaussian, where ${L} $ is the Lipschitz smooth constant and $\nu $ strongly convex constant. 

As $\nabla \log \mathtt P ({\mathcal R}_j|x) - \nabla \log \mathtt P ({\mathcal R}_i|x)$ is an i.i.d random variable, we can use the Azuma-Hoeffding inequality for martingale difference sequences \citep{wainwright2019high} to characterize the summation of such variables across $n-1$ as $\frac{2{L} }{n} \sqrt{\frac{n-1}{\nu }}$-sub-Gaussian. Building upon this, another application of the Azuma-Hoeffding inequality allows us to have that $\sum_{j=1}^k \left(\nabla \log \mathtt P \left({\mathcal R}_j|x\right) - \frac{1}{n} \sum_{i=1}^n \nabla \log \mathtt P \left({\mathcal R}_i|x\right)\right)$ is $\frac{2{L} }{n}\sqrt{\frac{k (n-1)}{\nu }}$-sub-Gaussian. Therefore, we can derive the following upper bound through the sub-Gaussian property:
\begin{equation*}\begin{split}
\mathbb E{\left[{\left\| \sum_{j=1}^k \left(\nabla \log \mathtt P \left({\mathcal R}_j|x\right) - \frac{1}{n} \sum_{i=1}^n \nabla \log \mathtt P \left({\mathcal R}_i|x\right)\right) \right\|}^2_2\right]}
\leq 4{ \frac{{d k (n-1)} {L}^2 }{n^2 \nu } }\leq 4{ \frac{dk{L}^2}{n\nu } }.
\end{split}\end{equation*}
The proof is finalized by integrating the above results into \eqref{eq_stochastic_estimate}:
\begin{equation*}\begin{split}
\mathbb E{\left[{\left\| \nabla f(x)  - \nabla \hat f(x) \right\|}^2_2\right]}
&= \frac{n^2}{k^2} \mathbb E{\left[{\left\| \sum_{j=1}^k { \frac{1}{n} \sum_{i=1}^n \nabla \log \mathtt P({\mathcal R}_i|x ) - \nabla \log \mathtt P({\mathcal R}_j|x ) } \right\|}^2_2\right]} \\
& \leq 4 \frac{nd {L}^2}{k\nu }.
\end{split}\end{equation*}

\end{proof}

\subsubsection*{Controlling the Moment Generating Function}

\begin{lemma}\label{eq_mgf_bound}
Suppose for samples $x \in \mathbb R^{d}$ given by Algorithm \ref{algorithm_langevin_mcmc2} and fixed point $x_* \in \mathbb R^{d}$ satisfy the following bound:
\begin{equation}\label{eq_sample_bound}
  \mathbb{E}\left[\|x-x_*\|_2^2 \right]\leq {\frac{18}{ m n} \left(D + \Omega p \right)},
\end{equation}
where $D=8d+2\log B$, $\Omega = \frac{16L^2d}{m\nu}+256$. Then for $\bar\rho\leq \frac{m}{8 L\Omega}$ we will have the following bound for its moment generating function:
\begin{equation*}\begin{split}
  \mathbb{E}\left[e^{nL\bar\rho\|x-x_*\|_2^2}\right] {\leq} \frac{3}{2}\left(e^{\frac{4L D}{m}\bar\rho} + 1\right).
\end{split}\end{equation*}
\end{lemma}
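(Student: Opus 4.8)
The plan is to treat $W \coloneqq \|x-x_*\|_2^2$ as a nonnegative sub-exponential random variable and to bound its moment generating function evaluated at $\lambda \coloneqq n L_1 \bar\rho_1$ by separating the expectation into a \emph{bulk} contribution and a \emph{tail} contribution. The hypothesis \eqref{eq_sample_bound} is exactly the type of $p$-th moment control on $\|x-x_*\|_2$ produced by Theorem \ref{theorem_posterior_concentration2}, so by Markov's inequality (following the passage that yields \eqref{eq_concentration2}) it delivers a sub-exponential tail of the form $\mathbb{P}(W > s) \le \exp\!\left(-\tfrac{m_1 n}{c\,\Omega_1}\,s + \tfrac{D_1}{c'\,\Omega_1}\right)$ valid once $s$ exceeds an $O(D_1/(m_1 n))$ threshold. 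The first step is to record this tail bound cleanly, since everything downstream rests on it.

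Next I would fix the split threshold $r \coloneqq \tfrac{4 D_1}{m_1 n}$ and write $\mathbb{E}\big[e^{\lambda W}\big] = \mathbb{E}\big[e^{\lambda W}\mathbb{I}(W \le r)\big] + \mathbb{E}\big[e^{\lambda W}\mathbb{I}(W > r)\big]$. The bulk term is immediately $\le e^{\lambda r}$, and the key observation is that $\lambda r = n L_1 \bar\rho_1 \cdot \tfrac{4 D_1}{m_1 n} = \tfrac{4 L_1 D_1}{m_1}\bar\rho_1$, which is precisely the exponent appearing in the target. Hence the bulk already contributes the $e^{\frac{4 L_1 D_1}{m_1}\bar\rho_1}$ summand, and the choice of $r$ is dictated by matching this exponent.

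For the tail term I would use the layer-cake representation $\mathbb{E}\big[e^{\lambda W}\mathbb{I}(W > r)\big] = e^{\lambda r}\,\mathbb{P}(W > r) + \lambda \int_r^\infty e^{\lambda s}\,\mathbb{P}(W > s)\,ds$ and substitute the sub-exponential tail from the first step. The integrand behaves like $\exp\!\big((\lambda - \tfrac{m_1 n}{c\,\Omega_1})\,s\big)$ up to constants, and this is exactly where the smallness condition $\bar\rho_1 \le \tfrac{m_1}{8 L_1 \Omega_1}$ is used: it forces $\lambda = n L_1 \bar\rho_1$ to sit strictly below the tail's decay rate $\tfrac{m_1 n}{c\,\Omega_1}$, so the exponent is negative, the integral converges, and the whole tail piece is bounded by a constant multiple of the bulk term. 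Collecting the two pieces and absorbing the numerical constants then yields $\tfrac{3}{2}\big(e^{\frac{4 L_1 D_1}{m_1}\bar\rho_1} + 1\big)$.

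The main obstacle is the tail integral: both its convergence and its numerical value hinge entirely on the gap between $\lambda$ and the sub-exponential decay rate, which is available only through the explicit cap $\bar\rho_1 \le \tfrac{m_1}{8 L_1 \Omega_1}$. Arranging the split threshold $r = 4 D_1/(m_1 n)$ so that $\lambda r$ reproduces the claimed exponent, while simultaneously keeping the remainder integrable \emph{and} numerically below the factor needed to close out the constant $\tfrac{3}{2}$, is the delicate bookkeeping that the proof must carry out carefully.
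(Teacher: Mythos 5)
Your scaffolding is right in several places: the hypothesis is indeed a $p$-th-moment family, the layer-cake identity is correct, the choice $r=4D_1/(m_1n)$ does make $\lambda r$ (with $\lambda\coloneqq nL_1\bar\rho_1$) equal the target exponent, and the cap $\bar\rho_1\le\frac{m_1}{8L_1\Omega_1}$ does put $\lambda\le\frac{m_1n}{8\Omega_1}$ strictly below the sub-exponential decay rate. The genuine gap is the last step, ``absorbing the numerical constants'': the constant you must absorb is \emph{not} numerical. Write $W\coloneqq\|x-x_*\|_2^2$ and $\theta\coloneqq D_1/\Omega_1$, and note that $\theta$ is unbounded over the problem class (take $\log B_1$ large, since $\Omega_1$ does not depend on $B_1$). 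The best fixed-form tail bound Markov gives from $\mathbb{E}[W^p]\le 3\bigl[\tfrac{2}{m_1n}(D_1+\Omega_1p)\bigr]^p$ is $\mathbb{P}(W>s)\le 3\,e^{\theta-\kappa s}$ with $\kappa=\tfrac{m_1n}{2e\Omega_1}$, i.e.\ your prefactor $e^{D_1/(c'\Omega_1)}$ is really $e^{\theta}$. Since $\kappa r=\tfrac{2\theta}{e}$, your integral term becomes
\begin{equation*}
\lambda\int_r^\infty e^{\lambda s}\,\mathbb{P}(W>s)\,ds\;\le\;\frac{3\lambda}{\kappa-\lambda}\,e^{\theta-(\kappa-\lambda)r}\;=\;\frac{3\lambda}{\kappa-\lambda}\,e^{\lambda r}\,e^{(1-2/e)\theta},
\end{equation*}
and $1-2/e\approx0.26>0$, so the tail piece exceeds the bulk $e^{\lambda r}$ (hence the claimed bound $\tfrac32(e^{\lambda r}+1)$, whose exponent satisfies $\lambda r\le\theta/2$) by the unbounded factor $e^{0.26\,\theta}$. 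Moving the split does not help: to make $\theta-(\kappa-\lambda)r'\le0$ you need $r'\ge\theta/(\kappa-\lambda)$, and then the bulk exponent is $\lambda r'\ge \tfrac{e}{4}\beta\theta\approx0.68\,\beta\theta$ (writing $\beta=8L_1\Omega_1\bar\rho_1/m_1\le1$), which already overshoots the target exponent $\tfrac{\beta\theta}{2}$ by another unbounded factor. The boundary term has the same disease: at $s=r$ the exponent $\theta-\kappa r=(1-2/e)\theta$ is still positive, so your fixed-form tail bound has not yet ``kicked in'' at the split point. This is a structural loss of the moments$\to$tail$\to$MGF conversion with a single fixed-form tail, not delicate bookkeeping that can be closed.

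The paper's proof never converts to a tail. It expands $\mathbb{E}[e^{\lambda W}]=\sum_{p\ge0}\tfrac{\lambda^p}{p!}\mathbb{E}[W^p]$, inserts the moment bound, and splits \emph{inside each term} via $(D_1+\Omega_1p)^p\le 2^{p-1}\bigl(D_1^p+(\Omega_1p)^p\bigr)$. The $D_1$-part then re-sums exactly to $\tfrac32\,e^{4L_1D_1\bar\rho_1/m_1}$ --- this is how the $D_1$-dependence lands precisely in the exponent with no spurious $e^{c\theta}$ factor --- while the $(\Omega_1p)^p$-part, after bounding $\tfrac1{p!}\le(\tfrac2p)^p$, becomes a geometric series with ratio $\tfrac{4L_1\Omega_1\bar\rho_1}{m_1}\le\tfrac12$ under the cap, hence bounded by $1$. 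If you wanted to rescue your route, you would have to abandon the fixed-form tail and re-optimize the Markov exponent $p$ separately at each $s$ inside the layer-cake integral, separating the $D_1$ and $\Omega_1p$ contributions before integrating; that is the series argument in disguise, and the direct expansion is both shorter and the only version that delivers the stated constant.
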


\begin{proof}
From Theorem \ref{theorem_posterior_concentration2}, Lemma \ref{theorem_empirical_concentrate1}, and Lemma \ref{theorem_empirical_concentrate2} we can get the following upper bound from \eqref{eq_sample_bound}:
\begin{equation*}
  \mathbb{E}\left[\|x-x_*\|_2^{2p} \right]\leq 3 \left[\frac{2}{mn}\left(D+\Omega p\right)\right]^{p}.
\end{equation*}

Given the term $\|x-x_*\|_2^2$, we identify it as a sub-exponential random variable. This characterization is supported when its moment generating function is unfolded via Taylor expansion $\mathbb{E}[e^{nL\bar\rho\|x-x_*\|_2^2}]= 1+\sum_{p=1}^\infty  \mathbb{E}\left[\frac{(nL\bar\rho)^p\|x-x_*\|_2^{2p}}{p!}\right]$, which yields:
\begin{equation*}
	\begin{split}
		\mathbb{E}\left[e^{nL\bar\rho\|x-x_*\|_2^2}\right] & = \sum_{p=0}^\infty  \mathbb{E}\left[\frac{(nL\bar\rho)^p}{p!}\left\|x-x_*\right\|_2^{2p}\right] \\
		& \stackrel{}{\leq} 1+3\sum_{p=1}^\infty  \frac{(nL\bar\rho)^p}{p!}\left(\frac{2}{mn}\right)^{p}\left(D+\Omega p\right)^p\\
		& \stackrel{(i)}{\leq}  1+\frac{3}{2}\sum_{p=1}^\infty\frac{1}{{p!}}\left(\frac{2L D}{m}\bar\rho \right)^p +\frac{3}{2}\sum_{p=1}^\infty \frac{1}{p!}\left(\frac{2L\Omega}{m}\bar\rho p\right)^p \\
		& \stackrel{(ii)}{\leq} \frac{3}{2} e^{\frac{4L D}{m}\bar\rho}+\frac{3}{2}\sum_{p=1}^\infty \left( \frac{4L\Omega }{m}\bar\rho\right)^p \\
		& \stackrel{(iii)}{\leq} \frac{3}{2}\left(e^{\frac{4L D}{m}\bar\rho} + 1\right)
	\end{split}
\end{equation*}
where for $(i)$, we use a specific case of Young's inequality for products: $(x+y)^p\leq 2^{p-1}(x^p+y^p)$, which holds for all $p\geq 1$ and $x,y\geq 0$, $(ii)$ proceeds by Taylor expansion and $\frac{1}{p!} \leq \left(\frac{2}{p}\right)^p$, and $(iii)$ holds by the selection of $\bar\rho\leq \frac{m}{8 L\Omega}$.

\end{proof}

\clearpage
\section{EXPERIMENTAL DETAILS}\label{section_experiments}

In our experimental setup designed to evaluate Thompson sampling's efficacy, we adopted a systematic hyper-parameter configuration as delineated in Table \ref{table_hyper_parameter}. The table encapsulates the consideration for hyper-parameters among simulations of Langevin Monte Carlo and general Thompson sampling algorithms. A crucial distinction between our approach and the baseline is the inclusion of the friction coefficient and noise amplitude, specifically tailored for the underdamped Langevin Monte Carlo. It is pertinent to highlight that when $\mathcal L_a(n)\leq k$, we denote the batch size directly as $\mathcal L_a(n)$. 

\begin{table}[htbp]\centering
\caption{Hyper-parameters used in Thompson sampling algorithms.}\label{table_hyper_parameter}
	\begin{tabular}{cc}
	\toprule
		Names & Ranges \\ \midrule
		Parameter dimension ($d$) 			& [$10$, $30$, $100$, $300$, $1000$] \\ 
		Number of arms ($|\mathcal A|$) 	& $10 $ \\ 
		Time horizon ($N$) 		& $1000$ \\ 
		Step size ($h$)			& [$10^{-2}$, $10^{-3}$, $10^{-4}$, $10^{-5}$, $10^{-6}$] \\ 
		Number of steps ($I$) 	& [$10^0$, $10^1$, $10^2$, $10^3$, $10^4$] \\
		Batch size ($k$)		& [$2$, $5$, $10$, $20$] \\ \midrule
		Friction coefficient ($\gamma$) & [$0.1$, $1.0$, $2.0$, $5.0$, $10.0$] \\
		Noise amplitude ($u$)	& $1.0$ \\
		\bottomrule
	\end{tabular}
\end{table}

\subsection{Additional experiments}

Here we further provide a practical exploration task aimed at investigating the highest-rated restaurants, with the use of real Google Maps ratings as our foundational dataset. The average ratings of the selected restaurants span from 3.8 to 4.8, which serves as a real-world benchmark for our statistical investigation. To simulate a realistic decision-making process, we consider an agent to visit restaurants sequentially. At each visit, the agent assigns a rating on an integer scale from 1 to 5, where these ratings are sampled from true customer ratings. This methodology allows us to dynamically evaluate each restaurant and identify the restaurant with the highest average ratings.

We apply approximate Thompson Sampling with ULMC and LMC for the restaurant recommendations. We record the expected regrets associated with these restaurants and the regrets are illustrated in Figure \ref{fig_example}, where the red solid line represents regrets obtained from the underdamped algorithm, and the black dashed line is from the overdamped algorithm. The comparative analysis of these regrets further indicates the superiority of our proposed algorithm for real-world applications.

\begin{figure}[htbp]
\begin{center}
\includegraphics[width=0.5\textwidth]{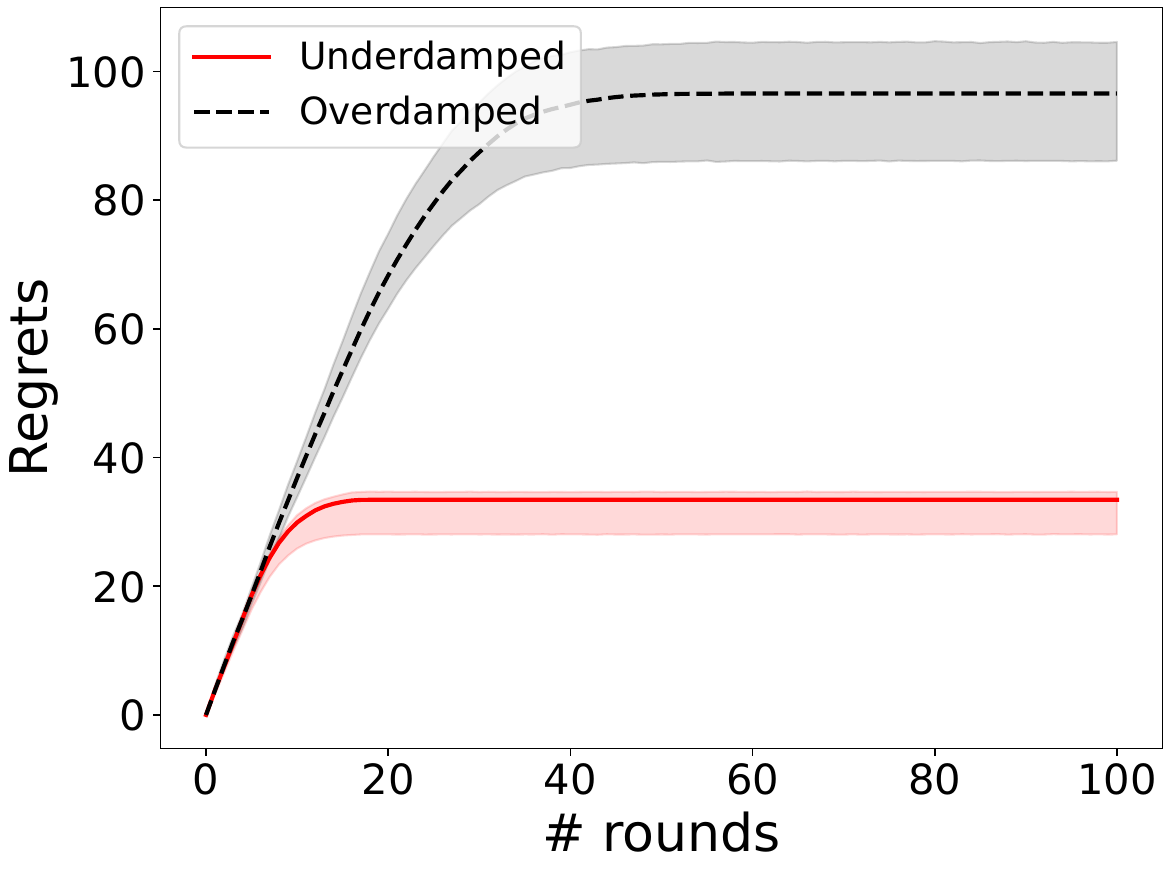}
\end{center}
\caption{Restaurant example}
\label{fig_example}
\end{figure}

\end{document}